\documentclass{article}
\usepackage[margin=1in]{geometry} 
\usepackage[T1]{fontenc}
\usepackage[utf8]{inputenc}
\usepackage{amsmath,amssymb,amsthm,mathtools}
\allowdisplaybreaks
\usepackage[cal=euler]{mathalpha}
\usepackage{cochineal}
\usepackage{microtype}
\usepackage{bm}
\usepackage{booktabs,longtable,array}
\usepackage{enumitem}
\usepackage{xcolor}
\usepackage{tikz}
\usetikzlibrary{positioning,arrows.meta,fit}
\usepackage{mdframed}
\usepackage{graphicx}
\usepackage[numbers,sort&compress]{natbib}
\usepackage{hyperref}

\newtheorem{theorem}{Theorem}[section]
\newtheorem{proposition}[theorem]{Proposition}
\newtheorem{corollary}[theorem]{Corollary}
\newtheorem{lemma}[theorem]{Lemma}
\newtheorem{conjecture}[theorem]{Conjecture}
\theoremstyle{definition}
\newtheorem{game}[theorem]{Game}
\surroundwithmdframed[linewidth=0.6pt,linecolor=black!70,innertopmargin=6pt,innerbottommargin=8pt,innerleftmargin=8pt,innerrightmargin=8pt,skipabove=10pt,skipbelow=10pt]{game}
\theoremstyle{remark}
\newtheorem{question}[theorem]{Open question}

\newcommand{\R}{\mathbb{R}}

\newcommand{\E}{\mathbb{E}}
\newcommand{\KL}{\mathrm{KL}}
\newcommand{\Var}{\mathrm{Var}}

\newcommand{\Val}{\mathrm{Val}}
\newcommand{\supp}{\mathrm{supp}}
\newcommand{\range}{\mathrm{range}}
\newcommand{\1}{\mathbf{1}}
\newcommand{\DeltaK}{\Delta([K])}
\newcommand{\inner}[2]{\left\langle #1,#2\right\rangle}
\newcommand{\cC}{\mathcal{C}}
\newcommand{\cG}{\mathcal{G}}

\newcommand{\cZ}{\mathcal{Z}}

\newcommand{\cE}{\mathcal{E}}

\newcommand{\argmin}{\mathop{\mathrm{arg\,min}}}
\newcommand{\argmax}{\mathop{\mathrm{arg\,max}}}

\numberwithin{equation}{section}

\title{The concentration game: Bayesian updating, regret, and information}
\author{Akshay Balsubramani \\ {\small \texttt{akshay@vac.bio}}}
\date{}

\newcommand{\eps}{\varepsilon}

\newcommand{\CCE}{\ensuremath{\mathrm{CCE}}}

\newcommand{\dd}{\mathrm{d}}

\providecommand{\one}{\mathbf{1}}
\providecommand{\simplex}{\Delta([K])}
\providecommand{\wt}[1]{\widetilde{#1}}

\newcommand{\akshay}[1]{}
\newcommand{\vac}[1]{}

\providecommand{\dd}{\mathrm{d}}

\providecommand{\simplex}{\Delta}
\providecommand{\wt}[1]{\widetilde{#1}}
\providecommand{\KL}{\mathrm{KL}}
\providecommand{\DeltaK}{\Delta_{K}}
\providecommand{\bm}[1]{\boldsymbol{#1}}

\makeatletter
\@ifundefined{argmax}{\DeclareMathOperator*{\argmax}{arg\,max}}{}
\@ifundefined{argmin}{\DeclareMathOperator*{\argmin}{arg\,min}}{}
\@ifundefined{diam}{}{}
\@ifundefined{rank}{}{}
\@ifundefined{supp}{\DeclareMathOperator{\supp}{supp}}{}
\@ifundefined{tr}{\DeclareMathOperator{\tr}{tr}}{}
\makeatother
 
\providecommand{\cC}{\mathcal{C}} 
\providecommand{\cE}{\mathcal{E}} 
\providecommand{\cG}{\mathcal{G}}

 \providecommand{\cZ}{\mathcal{Z}}

\providecommand{\E}{\mathbb{E}} \providecommand{\R}{\mathbb{R}}

\makeatletter
\@ifundefined{conjecture}{}{}
\@ifundefined{claim}{}{}
\@ifundefined{problem}{}{}
\@ifundefined{question}{}{}
\@ifundefined{fact}{}{}
\makeatother

\makeatletter
\g@addto@macro\normalsize{%
 \setlength{\abovedisplayskip}{3pt plus 1pt minus 1pt}%
 \setlength{\belowdisplayskip}{3pt plus 1pt minus 1pt}%
 \setlength{\abovedisplayshortskip}{0pt plus 1pt}%
 \setlength{\belowdisplayshortskip}{2pt plus 1pt minus 1pt}%
}
\def\thm@space@setup{%
 \thm@preskip=5pt plus 1pt minus 1pt
 \thm@postskip=\thm@preskip
}
\makeatother
\makeatletter
\def\bvapx@title{}
\AtBeginDocument{\providecommand{\phantomsection}{}}
\let\bvapx@maketitle\maketitle
\def\maketitle{\global\let\bvapx@title\@title\bvapx@maketitle}
\newcommand{\appendixtitle}{%
  \clearpage
  \suppressfloats[t]%
  \phantomsection
  \begin{center}
    {\LARGE\bfseries Appendices\ifx\bvapx@title\@empty\else\space to ``\bvapx@title''\fi\par}%
  \end{center}
  \vspace{1.5\baselineskip}%
}
\makeatother

\providecommand{\akshay}[1]{}\renewcommand{\akshay}[1]{}%
\providecommand{\vac}[1]{}\renewcommand{\vac}[1]{}%
\begin{document}
\maketitle
\vspace{-2.2em}

\begin{abstract}
We give a two-player zero-sum repeated game between a learner and nature whose value identity generates Bayesian updating and an exact accounting of exponential-weights regret at once, and supplies the comparator-class variational form that a wide class of concentration phenomena share.
The terminal payoff is the most a comparator can gain at fixed relative entropy from the prior, and the one-step constraint is an information budget on nature's move under the learner's mixed action. With the learner's move otherwise unrestricted, Gibbs/Bayes weights emerge as its unique Bellman equalizer --- the mixed action that makes the per-round loss independent of which direction nature moves --- with log-partition functions playing the role of value functions.
The regret decomposes exactly into three parts: a per-round information loss reflecting the variation in observed outcomes, an additive \emph{retempering drift} that accounts exactly for any change of measurement scale between rounds, and the information the comparator carries relative to the prior. The variance and bounded-range proxies that drive standard regret bounds are looser relaxations of this decomposition, which holds generally and governs them all. Both players' strategies are read off from the decomposition term by term, and repeated play yields an information-theoretic ledger of self-play in place of the usual quadratic-variation surrogate.
The same comparator-class geometry accounts for the classical large-deviation bounds, and methods across bandits, posterior sampling, aggregation, and boosting are specializations of the one regret decomposition.
\end{abstract}

\section{Introduction}
\label{sec:intro}
We develop the idea that Bayesian updating and exponential-weights regret are two faces of one identity: the value of a two-player zero-sum game played over several ($T$) rounds.
Here we call it the concentration game, because a wide class of concentration phenomena against comparators draws its form from that same identity.
This unifies several research communities' separate formalisms for a cluster of interrelated information accounting problems.
Online learning derives exponential-weights regret from a learning-rate tradeoff; Bayesian inference derives the posterior from a likelihood-ratio update; concentration of measure and large-deviation theory derive tail bounds from a moment-generating-function inequality; repeated game theory derives equilibrium convergence from no-regret dynamics; and boosting derives margin guarantees from a weighted-error potential.

The concentration game's value identity accounts for all of these.
Bayes/exponential-weights updating is its equilibrium strategy, the moment-generating function is its per-round loss, and the log-partition function is its value. Repeated-game equilibria, Thompson sampling, log-pooling, and AdaBoost are specializations of it, and concentration of measure and Sanov large deviations draw their comparator-class variational form from it.
The game is sequential by nature, because the phenomena it unifies are progressive: each round, belief is updated and the information budget that drives concentration is used up.
Game~\ref{game:central} below states the basic framework.
Retempering, budget allocation, and partial information are then layered onto the same rounds in Sections~\ref{sec:variable}--\ref{sec:partial}.

\subsection{The game and the central identity}
\label{sec:repeated-game}

Fix a strictly positive prior $\pi$ on a finite set of actions $[K]:=\{1,\dots,K\}$, a comparator complexity budget $\Gamma\ge 0$, and a schedule $(\eta_t,q_t)_{t=1}^T$ of measurement scales and per-round information budgets, fixed in advance or chosen adaptively from history.
A \emph{comparator} is a fixed mixture over the actions against which the learner's cumulative loss is compared, and $\Gamma$ caps its relative entropy from the prior.
The state entering round $t$ is the cumulative centered score $S_{t-1}:=\sum_{s<t}z_s$ of nature's past moves ($S_0=0$; the moves $z_s$ are set out in the game box below).
This has zero mean under the mixture $p_t$ the learner plays that round, $\inner{p_t}{z_t}=0$:
nature's move redistributes among the actions under the learner's own weighting, without shifting them all together,
so a loss shared equally by every action is invisible to the game (Section~\ref{sec:primitive}).
Each round the learner sets a mixture over the actions, nature then moves the running state $S_t$ under a single one-step budget, and one potential --- the log-partition function --- controls the whole run (Section~\ref{sec:fixed}).
Nature's one-round budget is measured by the \emph{rescaled centered cumulant generating function} (RCGF) $Q_\eta(p,z):=\eta^{-2}\log\sum_i p(i)e^{\eta z(i)}$ of its centered move $z$ under the learner's mixture $p$; the unrescaled $\eta^2 Q_\eta=\log\E_p[e^{\eta z}]$ is the centered CGF, and the $\eta^{-2}$ factor makes the small-scale limit half the variance (Section~\ref{sec:primitive}).
The RCGF measures how large nature's move is at the scale $\eta$,
and the scale sets which feature of the move that size responds to:
the variance of $z$ at small $\eta$, its worst tail at large $\eta$.

\begin{game}[The concentration game]
\label{game:central}
Fix a strictly positive prior $\pi\in\DeltaK$, a comparator complexity budget $\Gamma\ge 0$, and a schedule $(\eta_t,q_t)_{t=1}^T$ of measurement scales and per-round RCGF budgets (predictable: $(\eta_t,q_t)$ may depend on $z_1,\dots,z_{t-1}$). Start from $S_0=0$. For each round $t=1,\dots,T$:
\begin{enumerate}[label=\arabic*.,leftmargin=1.6em,topsep=2pt,itemsep=1pt]
\item Learner picks a mixture $p_t\in\DeltaK$ over the actions;
\item Nature, observing $p_t$, picks a centered $z_t\in\R^K$ with $\inner{p_t}{z_t}=0$ and $Q_{\eta_t}(p_t,z_t)\le q_t$;
\item State advances by $S_t:=S_{t-1}+z_t$.
\end{enumerate}
The terminal payoff to nature is $L_\Gamma(S_T)$, the support function of a relative-entropy ball around the prior (Section~\ref{sec:terminal}):
$$ L_\Gamma(S):=\sup\{\inner{\nu}{S}:\nu\in\DeltaK,\ \KL(\nu\|\pi)\le\Gamma\} $$
\end{game}

The game leaves the learner free to play any action distribution $p_{t}$.
We focus on the \emph{Gibbs tilt} $p_t=G_{\eta_t}(S_{t-1})$, that is $p_t(i)\propto\pi(i)e^{\eta_t S_{t-1}(i)}$
(the map $G_\eta(S)\colon i\mapsto \pi(i)e^{\eta S(i)}/\sum_j\pi(j)e^{\eta S(j)}$ is used throughout).
Playing this form parameterizes the learner's move without restricting it: at any scale $\eta>0$, a mixture $p$ giving every action positive weight is reached from the state $S(i):=\eta^{-1}\log\bigl(p(i)/\pi(i)\bigr)$, since $\pi(i)e^{\eta S(i)}=p(i)$ already sums to one and so $G_\eta(S)=p$.

The cumulative regret against any comparator $\nu\in\DeltaK$ is $R_T^c(\nu):=\sum_{t=1}^T\inner{p_t-\nu}{c_t}$, the learner's cumulative loss minus the comparator's, where $c_t$ is the round-$t$ loss in the standard sign convention ($z_t=\inner{p_t}{c_t}\1-c_t$ is the centered version, so that $R_T^c(\nu)=\inner{\nu}{S_T}$).

The main identity (Theorem~\ref{thm:variable}, Section~\ref{sec:variable}) is that this cumulative regret decomposes exactly into three structurally distinct terms:
\begin{equation}\label{eq:intro-exact}
R_T^c(\nu)
=
\sum_{t=1}^T \eta_t Q_t(c)
+
D_T
+
B_T(\nu)
\end{equation}
where the leading sum is the \emph{intrinsic-time loss} \cite{balsubramani2026adaptive}, built from the round-$t$ centered RCGF $Q_t(c)\ge 0$ (the intrinsic-time density) weighted by the scale $\eta_t$.
Intrinsic time measures the run's length in information.
Rounds elapsed do not enter.
Write $W_\eta(S):=\eta^{-1}\log\sum_i\pi(i)e^{\eta S(i)}$ for the scale-$\eta$ log-partition of the state, the potential of Section~\ref{sec:fixed}.
The \emph{retempering drift} $D_T=\sum_{t=1}^{T-1}\bigl(W_{\eta_{t+1}}(S_t)-W_{\eta_t}(S_t)\bigr)$ is the exact loss from changing the measurement scale---retuning the inverse temperature---between rounds, each summand the change in $W_\eta$ at a fixed state, so $D_T=0$ at constant scale.
The \emph{terminal relative-entropy transport} $B_T(\nu)=\eta_T^{-1}\bigl(\KL(\nu\|\pi)-\KL(\nu\|\rho_{T,\eta_T})\bigr)$, with $\rho_{T,\eta_T}:=G_{\eta_T}(S_T)$ the terminal Gibbs posterior, is the comparator information carried from the prior $\pi$ to that posterior.
The difference of the two relative entropies collapses to a single expectation, $\eta_TB_T(\nu)=\E_\nu\bigl[\log(\rho_{T,\eta_T}/\pi)\bigr]$,
the log-likelihood ratio of posterior to prior averaged under the comparator: how far the run has moved the belief in the comparator's own direction (Section~\ref{sec:confidence}).
Taken together the three terms are the run's \emph{ledger}: a closed account of the regret, one entry per mechanism, which sums to it exactly and leaves no remainder.
The word is used in that sense throughout, and each later setting has its own ledger, the same three entries evaluated on whatever score sequence that setting scores.
Because \eqref{eq:intro-exact} is an identity rather than an inequality, both the learner's and nature's strategies are read off term by term: the learner plays the Bellman equalizer (Gibbs at every round), and nature saturates the round budget $Q_t(c)=q_t$ in the worst comparator direction.
Figure~\ref{fig:game-trajectory} follows one realized run, with the three terms accumulating along it.

The same basic structure then survives changing the sufficient statistic (the score sequence the learner actually applies Bayes to, which need not be the raw loss), the comparator class, the observation structure, the geometry, or the action rule (Appendix~\ref{sec:other-geometries}). Each modification changes surface features while the Bellman backbone stays fixed.

The tilt is the exponential-weights / Bayes update (multiplicative weights, Hedge), and equally the entropic Follow-the-Regularized-Leader (FTRL) iterate $\argmax_p\{\inner{p}{S_{t-1}}-\eta_t^{-1}\KL(p\|\pi)\}$ on the cumulative centered scores, the two coinciding by the Gibbs variational identity \eqref{eq:gibbs-var}.
Section~\ref{sec:fixed} derives it as the game's Bellman equalizer: the play that makes the round's loss a deterministic function of nature's budget alone, whichever feasible move nature selects.
Equalizing is weaker than round-value minimaxity, since the minimax play minimizes the round's worst-case value where the equalizer flattens the loss across nature's replies, and the two need not agree (Section~\ref{sec:fixed-k2-exact}).

Of the four ingredients, only the prior $\pi$ is a modeling decision, encoding the prior on action quality.
The measurement scale $\eta_t$ gives ``the size of a move'' its meaning, dual to $\pi$ through the exponential-family identification $p\propto\pi\,e^{\eta S}$; it is an inverse temperature, large $\eta$ cold and concentrating the tilt, small $\eta$ hot and diffuse.
The constraint $Q_{\eta_t}(p_t,z_t)\le q_t$ is an information budget on nature's one-round move: by the identity $\eta^2 Q_\eta(p,z)=\KL(p\|p^+)$ (with $p^+(i)\propto p(i)e^{\eta z(i)}$ the within-round update), $q_t$ bounds the prior-to-posterior relative-entropy transfer at scale $\eta_t$. And $\Gamma$ is a matching budget on the comparator class: $L_\Gamma(S_T)$ is the largest terminal gain any distribution within relative entropy $\Gamma$ of the prior can extract.
An uncapped nature would play unstructured noise; the cap is the only constraint the game imposes (Section~\ref{sec:primitive} reads it as a \emph{coincidence budget}).

The schedule and the horizon are both data of the game, given to both players and optimized by neither, until Section~\ref{sec:variable} promotes the scale to a choice the learner makes (Game~\ref{game:cgf}).
Fixing $T$ changes nothing in the accounting: the decomposition of Theorem~\ref{thm:variable} holds with equality at every prefix, so stopping at any $t\le T$ leaves the same three entries, evaluated at horizon $t$, and Section~\ref{sec:confidence} quantifies the anytime-versus-fixed-horizon difference.

\begin{figure}[tb]
\centering
\begin{tikzpicture}[
 font=\small,
 hub/.style={rectangle, rounded corners=3pt, draw, fill=black!5, align=center, inner sep=7pt},
 dom/.style={rectangle, rounded corners=2pt, draw, align=center, inner sep=4pt, text width=3.4cm},
 link/.style={draw=black!45, line width=0.5pt}
]
\node[hub] (hub) at (0,0) {%
 {\normalsize\bfseries The concentration game}\\[1pt]
 {\footnotesize two-player zero-sum, $T$ rounds}\\[4pt]
 $R_T^c(\nu)=\underbrace{\sum_t \eta_t\,Q_t(c)}_{\substack{\text{intrinsic-time}\\\text{loss}}}\;+\;\underbrace{D_T}_{\substack{\text{retempering}\\\text{drift}}}\;+\;\underbrace{B_T(\nu)}_{\substack{\text{relative-entropy}\\\text{transport}}}$\\[6pt]
 {\footnotesize Bayes\,/\,exp-weights $=$ equilibrium}\\
 {\footnotesize $\log Z =$ value}};
\node[dom] (ol) at (-5.9,2.5) {{\bfseries Online learning}\\{\footnotesize Hedge, FTRL, adaptive regret}};
\node[dom] (cc) at (-5.9,0) {{\bfseries Concentration}\\{\footnotesize Hoeffding, Chernoff, Sanov}};
\node[dom] (av) at (-5.9,-2.5) {{\bfseries Anytime-valid inference}\\{\footnotesize confidence sequences, prior-posterior ratios}};
\node[dom] (ba) at (5.9,2.5) {{\bfseries Bayes \& aggregation}\\{\footnotesize robust Bayes, log-pooling}};
\node[dom] (bp) at (5.9,0) {{\bfseries Bandits \& partial info}\\{\footnotesize feedback graphs, contextual, Thompson}};
\node[dom] (bg) at (5.9,-2.5) {{\bfseries Boosting \& games}\\{\footnotesize AdaBoost, self-play ledger}};
\draw[link] (hub) -- (ol);
\draw[link] (hub) -- (cc);
\draw[link] (hub) -- (av);
\draw[link] (hub) -- (ba);
\draw[link] (hub) -- (bp);
\draw[link] (hub) -- (bg);
\end{tikzpicture}
\caption{The concentration game and its specializations. A two-player zero-sum game (center) has Bayes\,/\,exponential-weights updating as its Bellman equilibrium and the log-partition function as its value; its regret decomposes exactly as $R_T^c(\nu)=\sum_t\eta_t Q_t(c)+D_T+B_T(\nu)$ --- an intrinsic-time loss, a retempering drift, and terminal relative-entropy transport (Theorem~\ref{thm:variable}, terms as in the text). Each surrounding family's regret \emph{decomposition} is recovered by specializing the prior, the measurement scale, the per-round budget, the comparator class, and the feedback structure---one paragraph of bookkeeping around the same equilibrium; the sharp rate in each family still needs a family-specific bound on the estimated per-round RCGF.}
\label{fig:overview}
\end{figure}

\begin{figure}[t]
\centering
\includegraphics[width=\linewidth]{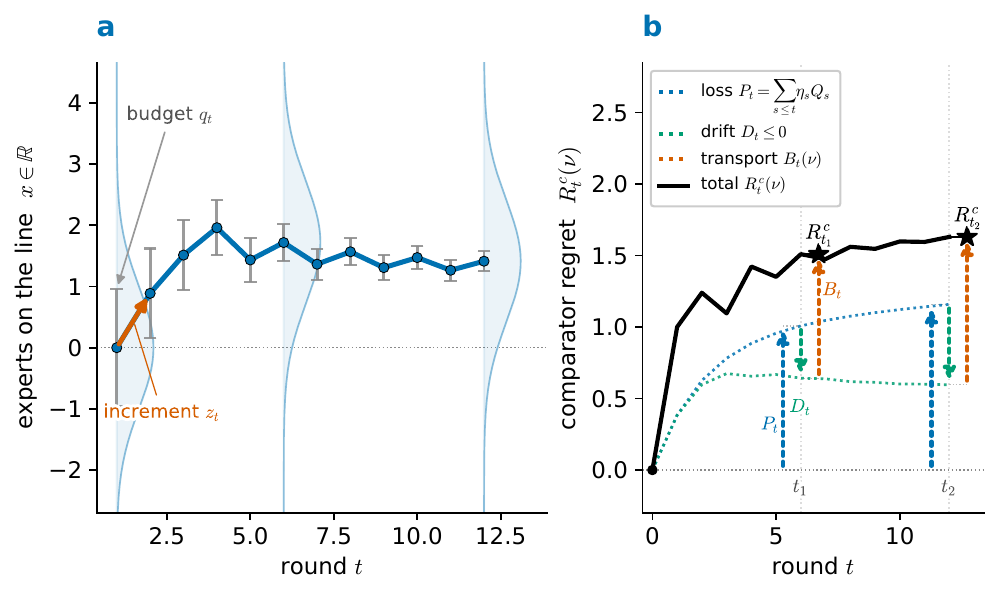}
\caption{\textbf{One trajectory of the concentration game, and its information decomposition.} A representative realized path, with a continuum of experts on the line $x\in\R$ under a Gaussian reference $\pi$ and a schedule $(\eta_t,q_t)$ that grants nature more of its per-round budget in the early rounds. \emph{(a)}~Each round the learner plays the Bellman-equalizer Gibbs tilt $p_t=G_{\eta_t}(S_{t-1})\propto\pi\,e^{\eta_t S_{t-1}}$ (faint densities), whose mean $m_t=\inner{p_t}{x}$ is the realized walk (blue); nature answers with a centered increment $z_t$. The whiskers mark how far nature may move the state in one round under the RCGF budget $q_t$. This reach scales as $\sqrt{q_t}$ (the centered RCGF is second order in the step, so in the small-step regime the budget-saturating move obeys $\delta\approx\sqrt{2q}$; exactly, $\delta=\eta^{-1}\operatorname{arcosh}(e^{\eta^2 q})$ for two symmetric experts, Proposition~\ref{prop:k2-exact}). As the budget is used up---most of it early---the reach shrinks and the state concentrates. \emph{(b)}~The comparator regret decomposes at every horizon, $R_t^c(\nu)=P_t+D_t+B_t(\nu)$ with loss $P_t=\sum_{s\le t}\eta_s Q_s$, starting from the origin ($R_0^c=0$: before any round every term is zero). The solid curve is the total. The dotted waterfall at horizons $t_1$ and $t_2$ splits it into the intrinsic-time loss $P_t$ (up), the retempering drift $D_t\le 0$---the loss from changing scale between rounds, a gain to the learner here (down)---and the terminal relative-entropy transport $B_t(\nu)=\big[\KL(\nu\|\pi)-\KL(\nu\|\rho_{t,\eta_t})\big]/\eta_t$ (up), with $\rho_{t,\eta_t}$ the horizon-$t$ Gibbs posterior. The comparator is $\nu=\rho_{t_2,\eta_{t_2}}$, the posterior the game settles on.}
\label{fig:game-trajectory}
\end{figure}

\subsection{Main results}
\label{sec:headline}

The main result is that the decomposition \eqref{eq:intro-exact} holds with equality at every horizon, for every comparator, and for every predictable schedule: the Bellman equalizer composes across rounds, and each of the three terms is a sum of per-round contributions (Theorem~\ref{thm:variable}).
The drift is the closed-form loss from changing measurement scale between rounds, its per-step form a prior-to-posterior relative entropy (Proposition~\ref{prop:drift-continuous}); at a fixed scale it vanishes and the decomposition reduces to the entropic mix-loss account carried by the log-partition potential, classical in its inequality-relaxed form.

The variance and bounded-range quantities that drive standard regret bounds are derived relaxations of that per-round primitive. The RCGF tends to $\tfrac12\Var_p(z)$ as $\eta\downarrow 0$ (Corollary~\ref{cor:variance-shadow}) and at fixed scale is bounded above by a variance proxy and by a range proxy, neither refining the other (Section~\ref{sec:hierarchy}). Substituting either for the centered RCGF yields a looser regret upper bound, and quadratic-variation bounds appear as its small-$\eta$ limit.

Under a single cumulative budget $V$ on nature's total information, in place of the per-round caps, the horizon drops out of the value.
At constant scale it equals $\Gamma/\eta+\eta V$ from a finite horizon on, so the square-root envelope $2\sqrt{\Gamma V}$ is attained and the shortfall the one-round game exhibits belongs to that single round (Proposition~\ref{prop:horizon-free}).
Fixing the horizon therefore concedes nothing to nature.
A known horizon instead settles the scale, and it releases the slack an anytime guarantee holds in reserve for the times it is not consulted --- the drawdown of the comparator's transport trajectory, how far it has ever fallen below its current level (Section~\ref{sec:confidence}).
Those two savings separate an anytime-valid confidence sequence from a fixed-horizon interval read once.

The same identity gives the regret decomposition of a wide family of methods (Figure~\ref{fig:overview}): Gibbs conditioning, Sanov's theorem, low-noise luckiness, confidence sets, repeated-game equilibria, bandits, feedback graphs, contextual bandits, Thompson sampling, and AdaBoost (Sections~\ref{sec:concentration}--\ref{sec:boosting}), and logarithmic pooling (Appendix~\ref{app:log-pooling}).
Each is obtained by specializing the prior, the measurement scale, the per-round budget, the comparator class, and the feedback structure --- one paragraph of bookkeeping at the level of the \emph{decomposition}.
The contribution is the accounting; the rates themselves are not new, and the sharp rate in each family still needs a family-specific bound on the per-round RCGF.
Converting a decomposition into a sharp rate, such as the minimax bandit $O(\sqrt{KT})$ dependence, is settled by the estimated per-round RCGF, itself a closed form in the true losses and observation geometry (Proposition~\ref{prop:bandit-exact-value}).
That closed form recovers the classical variance- and range-based envelopes as small-scale relaxations the worst case never improves on.
The self-play ledger of Section~\ref{sec:games} instead reads the per-round loss off the realized path, giving equilibrium rates faster than the $1/\sqrt{T}$ worst case whenever intrinsic time is sublinear.

The specific contribution is that these separate literatures share one primitive --- the terminal relative-entropy-ball payoff under a one-scale RCGF constraint --- from which each draws its comparator-class variational form.
Earlier work gets variance surrogates because it starts downstream; starting at the RCGF recovers those surrogates as relaxations and makes the identities visible.

\section{Coordinates, the terminal payoff, and the per-round primitive}
\label{sec:primitive}

The game runs on two primitives --- the terminal payoff $L_\Gamma$ and the per-round RCGF --- both expressed in one set of centered coordinates. We work on a finite set $[K]=\{1,\dots,K\}$ with a strictly positive prior $\pi\in\DeltaK$. The learner's move each round is an \emph{unrestricted} mixed action $p_t\in\DeltaK$---the game constrains nature alone---recorded in exponential-family coordinates as the Bayes/exponential-weights update of a composite score sequence $c_t\in\R^K$:
\begin{equation}\label{eq:bayes-update}
p_t(i)
=
\frac{\pi(i)e^{-\eta_t C_{t-1}(i)}}{\sum_{j=1}^K \pi(j)e^{-\eta_t C_{t-1}(j)}},
\qquad
C_t(i):=\sum_{s=1}^t c_s(i),
\qquad
C_0\equiv 0
\end{equation}
The correspondence is with the \emph{cumulative} score: distinct interior plays come from distinct score classes, while $c_t=C_t-C_{t-1}$ is pinned, again up to an additive constant, only once the next play and the next scale are given as well. Only the product $\eta_t C_{t-1}$ enters $p_t$, so $\eta_t$ is a choice of scale that couples the move to nature's information budget. This same form is the game's Bellman equalizer (Section~\ref{sec:fixed}), the play the analysis singles out. The scores may be literal losses, predictable offsets, importance-weighted estimates, optimistic residuals, signed margins, or any statistic to which the learner applies Bayes---the game is defined relative to the chosen sufficient statistic, with no ambient ``true'' loss behind it.

This costs no generality: \eqref{eq:bayes-update} is a bijection from $\R^K/\R\1$ onto the interior of $\DeltaK$, so it \emph{reparameterizes} arbitrary play instead of restricting it, and its inverse reads the state as the round's posterior-to-prior log-ratio at the round's own scale (Appendix~\ref{app:reparameterization}).

\subsection{Centered excess-score coordinates}

For round $t$, define the learner mean $\mu_t:=\inner{p_t}{c_t}$, the centered move $z_t(i):=\mu_t-c_t(i)$, and the intrinsic state $S_t:=\sum_{s=1}^t z_s$ with $S_0:=0$.
Each move is centered under the learner's play: $\inner{p_t}{z_t}=0$.

\begin{proposition}[Centered-coordinate form of Bayes updating]
\label{prop:centered}
For every $t\ge 1$ and $i\in[K]$,
\begin{equation}\label{eq:centered-state}
S_{t-1}(i)=\sum_{s=1}^{t-1}\mu_s-C_{t-1}(i),
\qquad
p_t(i)
=
\frac{\pi(i)e^{\eta_t S_{t-1}(i)}}{\sum_{j=1}^K \pi(j)e^{\eta_t S_{t-1}(j)}}
\end{equation}
Moreover, for every comparator $\nu\in\DeltaK$,
$ R_T^c(\nu)
:=
\sum_{t=1}^T \bigl(\inner{p_t}{c_t}-\inner{\nu}{c_t}\bigr)
=
\inner{\nu}{S_T}
=
\sum_{t=1}^T \inner{\nu}{z_t}
$.
\end{proposition}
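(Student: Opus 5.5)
The plan is to unwind the definitions directly, since every claim in Proposition~\ref{prop:centered} is a finite telescoping or shift-invariance statement.

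\textbf{Step 1: the state formula.} From $z_s(i)=\mu_s-c_s(i)$ and $S_{t-1}=\sum_{s<t}z_s$, summing over $s=1,\dots,t-1$ gives
\[
S_{t-1}(i)=\sum_{s=1}^{t-1}\mu_s-\sum_{s=1}^{t-1}c_s(i)=\sum_{s=1}^{t-1}\mu_s-C_{t-1}(i).
\]
For $t=1$ both sides are zero, since $S_0=0$ and $C_0\equiv0$.

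\textbf{Step 2: the Gibbs form of $p_t$.} Write $M_{t-1}:=\sum_{s<t}\mu_s$. This scalar depends only on $i$-independent quantities, since each $\mu_s=\inner{p_s}{c_s}$ is determined by earlier plays and scores. By Step 1,
\[
e^{\eta_t S_{t-1}(i)}=e^{\eta_t M_{t-1}}\,e^{-\eta_t C_{t-1}(i)}.
\]
The common factor $e^{\eta_t M_{t-1}}$ appears in both the numerator and the normalizer of \eqref{eq:bayes-update}, so it cancels. This gives the displayed formula for $p_t$, i.e.\ $p_t=G_{\eta_t}(S_{t-1})$. The only point needing care is that $M_{t-1}$ is well defined before $p_t$ is formed. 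This holds because it involves only $p_1,\dots,p_{t-1}$, so there is no circularity; the argument is an induction on $t$, with each $p_s$ already fixed by \eqref{eq:bayes-update} when $\mu_s$ is formed.

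\textbf{Step 3: the regret identity.} For any $\nu\in\DeltaK$, use $\inner{\nu}{\1}=1$ to compute
\[
\inner{\nu}{z_t}=\mu_t\inner{\nu}{\1}-\inner{\nu}{c_t}=\inner{p_t}{c_t}-\inner{\nu}{c_t}.
\]
Summing over $t=1,\dots,T$ and using linearity of $\inner{\nu}{\cdot}$ gives
\[
R_T^c(\nu)=\sum_{t=1}^T\inner{\nu}{z_t}=\inner{\nu}{S_T}.
\]

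No step is a real obstacle. The one place to be careful is the normalization $\inner{\nu}{\1}=1$ in Step 3, which is exactly why constant shifts of the scores are invisible. As a consistency check, centering $\inner{p_t}{z_t}=\mu_t-\inner{p_t}{c_t}=0$ follows from the same computation with $\nu=p_t$.
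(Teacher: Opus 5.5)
Your proposal is correct and matches the paper's proof essentially step for step. Both telescope the centered moves to get the state formula. Both then cancel the $i$-independent factor $e^{\eta_t\sum_{s<t}\mu_s}$ between numerator and normalizer to get the Gibbs form, and both use $\inner{\nu}{\1}=1$ to turn each round's regret into $\inner{\nu}{z_t}$.
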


Only relative performance matters.
Replacing $c_t$ by $c_t+b_t\1$ leaves every $z_t$ unchanged: centered coordinates quotient out the global-bias direction from the start, and the game lives on the orthogonal complement of~$\1$.

\subsection{The terminal concentration functional}
\label{sec:terminal}

The game ends with nature gaining whatever the best admissible comparator can extract from the final state.
The one remaining modeling choice is which comparators count as admissible.
We focus on a relative-entropy ball: a comparator may reweight the actions within an information budget $\Gamma$ of the prior.

For $\Gamma\ge 0$ and a state $S\in\R^K$, define
\begin{equation}\label{eq:terminal-support}
L_\Gamma(S)
:=
\sup\left\{\inner{\nu}{S}:\nu\in\DeltaK,\ \KL(\nu\|\pi)\le \Gamma\right\}
\end{equation}

This number can be read in multiple ways.

\begin{enumerate}[nosep,leftmargin=1.6em]
\item \textbf{Benchmark}: $\inner{\nu}{S_T}$ is the cumulative regret against the comparator $\nu$ (Proposition~\ref{prop:centered}), so $L_\Gamma(S_T)$ is the worst regret over the admissible class.
\item \textbf{Support function}: it is how far the relative-entropy ball $\{\nu:\KL(\nu\|\pi)\le\Gamma\}$ extends in the direction $S$.
\item \textbf{Statistical displacement}: it is the largest mean shift attained by a change of measure costing $\Gamma$ in relative entropy, the quantity Chernoff and Sanov bounds compute (Section~\ref{sec:concentration}).
\end{enumerate}
At $\Gamma=0$ no reweighting is permitted and $L_0(S)=\inner{\pi}{S}$, the prior mean; once $\Gamma$ is large enough for the ball to fill the simplex, $L_\Gamma(S)=\max_i S(i)$, the best single action (Proposition~\ref{prop:saturated-reduction}).
In between, $\Gamma$ interpolates between the two.

Fix $\eta>0$.
The dual description of $L_\Gamma$ rests on two quantities built from the prior and the state: the Gibbs tilt $\rho_{\eta,S}$ of $\pi$ toward $S$ at scale $\eta$, and the associated scale-$\eta$ log-partition $W_\eta(S)$.
For $S\in\R^K$ define
\begin{equation}\label{eq:gibbs-map}
[\rho_{\eta,S}]_i
:=
\frac{\pi(i)e^{\eta S(i)}}{\sum_j \pi(j)e^{\eta S(j)}},
\qquad
W_\eta(S)
:=
\frac{1}{\eta}\log\sum_{i=1}^K \pi(i)e^{\eta S(i)}
\end{equation}

\begin{proposition}[Dual form of the terminal concentration functional]
\label{prop:terminal-dual}
For every $S\in\R^K$ and $\Gamma\ge 0$,
\begin{equation}\label{eq:terminal-dual}
L_\Gamma(S)
=
\inf_{\eta>0}
\left\{
\frac{\Gamma}{\eta}+W_\eta(S)
\right\}
\end{equation}
Equivalently, the Gibbs variational identity gives
\begin{equation}\label{eq:gibbs-var}
W_\eta(S)
=
\sup_{\nu\in\DeltaK}
\left\{
\inner{\nu}{S}-\frac{1}{\eta}\KL(\nu\|\pi)
\right\}
\end{equation}
and the optimizer is $q=\rho_{\eta,S}$.
\end{proposition}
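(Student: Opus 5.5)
I would prove the Gibbs variational identity \eqref{eq:gibbs-var} first and then derive \eqref{eq:terminal-dual} from it by Lagrangian duality.

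\textbf{The Gibbs variational identity.} Fix $\eta>0$ and $S$, and write $Z:=\sum_j\pi(j)e^{\eta S(j)}$, so that $\eta W_\eta(S)=\log Z$ and $\rho_{\eta,S}(i)=\pi(i)e^{\eta S(i)}/Z$. For any $\nu\in\DeltaK$, substitute $\log\pi(i)=\log\rho_{\eta,S}(i)-\eta S(i)+\log Z$ into $\KL(\nu\|\pi)$. This gives the exact identity
\[
\inner{\nu}{S}-\tfrac1\eta\KL(\nu\|\pi)=W_\eta(S)-\tfrac1\eta\KL(\nu\|\rho_{\eta,S}).
\]
Since $\KL\ge 0$ with equality iff $\nu=\rho_{\eta,S}$, the supremum equals $W_\eta(S)$ and is attained uniquely at $\rho_{\eta,S}$. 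This step is routine.

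\textbf{Weak duality, $\le$ in \eqref{eq:terminal-dual}.} Take any feasible $\nu$, so $\KL(\nu\|\pi)\le\Gamma$, and any $\eta>0$. By \eqref{eq:gibbs-var},
\[
\inner{\nu}{S}\le W_\eta(S)+\tfrac1\eta\KL(\nu\|\pi)\le W_\eta(S)+\Gamma/\eta.
\]
Taking the supremum over $\nu$ and the infimum over $\eta$ gives $L_\Gamma(S)\le\inf_{\eta>0}\{\Gamma/\eta+W_\eta(S)\}$.

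\textbf{The reverse inequality: the main obstacle.} Here I would argue directly on the one-parameter Gibbs family rather than invoke an abstract minimax theorem, so that boundary cases stay visible. Let $f(\eta):=\Gamma/\eta+W_\eta(S)$ and $\nu_\eta:=\rho_{\eta,S}$.

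Differentiating gives $\frac{d}{d\eta}\bigl(\eta W_\eta\bigr)=\inner{\nu_\eta}{S}$. From the identity above at $\nu=\nu_\eta$, $W_\eta=\inner{\nu_\eta}{S}-\tfrac1\eta\KL(\nu_\eta\|\pi)$. Combining the two,
\[
f'(\eta)=\eta^{-2}\bigl(\KL(\nu_\eta\|\pi)-\Gamma\bigr).
\]
The map $\eta\mapsto\KL(\nu_\eta\|\pi)$ is continuous and nondecreasing; its derivative is $\eta\Var_{\nu_\eta}(S)\ge0$. It starts at $0$ as $\eta\downarrow0$ and tends to $\KL(\pi_{M}\|\pi)=-\log\pi(M)$ as $\eta\to\infty$, where $M=\argmax S$ and $\pi_M$ is $\pi$ conditioned on $M$.

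I would then split into three cases.

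\emph{Interior case.} Suppose $0<\Gamma<-\log\pi(M)$ and $S$ is not constant. Then some $\eta^*$ satisfies $\KL(\nu_{\eta^*}\|\pi)=\Gamma$. Because $f'$ changes sign there, $\eta^*$ minimizes $f$. Moreover $\nu_{\eta^*}$ is feasible and
\[
f(\eta^*)=\inner{\nu_{\eta^*}}{S}-\tfrac{\Gamma}{\eta^*}+\tfrac{\Gamma}{\eta^*}=\inner{\nu_{\eta^*}}{S}\le L_\Gamma(S),
\]
which closes the gap.

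\emph{Large-budget case.} Suppose $\Gamma\ge-\log\pi(M)$. Then $\pi_M$ is feasible, so $L_\Gamma(S)=\max_i S(i)$. Also $f$ is decreasing, and
\[
f(\eta)\to\max_i S(i)+\lim_{\eta\to\infty}\bigl(\Gamma+\log\pi(M)+o(1)\bigr)/\eta=\max_i S(i).
\]
So the infimum, approached as $\eta\to\infty$, matches $L_\Gamma(S)$.

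\emph{Small-budget and degenerate cases.} At $\Gamma=0$, $f(\eta)=W_\eta(S)$ decreases to $\inner{\pi}{S}=L_0(S)$ as $\eta\downarrow0$. If $S$ is constant, both sides are that constant.

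The care in this step lies in these limits, where the infimum is not attained at a finite $\eta$.
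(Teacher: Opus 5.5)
Your proof is correct and complete, but it reaches \eqref{eq:terminal-dual} by a different route from the paper.

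The paper takes the abstract route. It proves the Gibbs identity by differentiation. It then gets the dual form in one step by invoking Lagrangian strong duality under Slater's condition and substituting $\eta=1/\lambda$. Attainment at the root of $\KL(\rho_{\eta,S}\|\pi)=\Gamma$, and the two limiting regimes, are recorded only afterwards as remarks.

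You instead work constructively:
\begin{itemize}
\item You obtain weak duality directly from the identity $\inner{\nu}{S}-\eta^{-1}\KL(\nu\|\pi)=W_\eta(S)-\eta^{-1}\KL(\nu\|\rho_{\eta,S})$. This identity also gives uniqueness of the Gibbs optimizer immediately, with no concavity argument.
\item You close the gap using $f'(\eta)=\eta^{-2}\bigl(\KL(\nu_\eta\|\pi)-\Gamma\bigr)$ and the monotonicity of $\eta\mapsto\KL(\nu_\eta\|\pi)$.
\item You exhibit the feasible comparator $\nu_{\eta^*}$ whose value equals $f(\eta^*)$.
\end{itemize}

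What your route buys:
\begin{itemize}
\item It is self-contained, with no appeal to a convex-duality theorem.
\item It covers $\Gamma=0$, where Slater's condition fails and the paper's duality step does not literally apply.
\item It turns the paper's after-the-fact attainment remarks into the proof itself.
\item It shows why excluding the endpoint $\lambda=0$ in the paper's Lagrangian is harmless. That endpoint's value is $\max_i S(i)$, not $+\infty$ as the paper's aside states, and this is exactly your $\eta\to\infty$ limit.
\end{itemize}

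What the paper's route buys: it is shorter, and it does not depend on the one-parameter structure of the Gibbs family. It therefore transfers unchanged to other convex constraint sets, where no monotone path of penalized optimizers is available to trace.
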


The function $W_\eta(S)$ is the scaled log-moment generating function of the score~$S$ under the prior~$\pi$, while $L_\Gamma(S)$ says how much terminal mass a comparator of complexity budget~$\Gamma$ can concentrate in the direction~$S$.
Identity \eqref{eq:terminal-dual} is the Cram\'er--Chernoff mechanism written as an equality. With $\Lambda(\eta):=\eta W_\eta(S)=\log\E_{i\sim\pi}\bigl[e^{\eta S(i)}\bigr]$ the cumulant generating function of the state under the prior and $\Lambda^*$ its Legendre conjugate, \eqref{eq:terminal-dual} reads $L_\Gamma(S)=\inf_{\eta>0}(\Gamma+\Lambda(\eta))/\eta=\sup\{s\in\R:\Lambda^*(s)\le\Gamma\}$, the inverse Cram\'er transform: the largest displacement attainable at rate $\Gamma$.
A classical tail bound performs the same optimization over $\eta$ and keeps an inequality; here the optimized value is equal to the benchmark. Any looseness downstream therefore comes from committing to one scale in advance --- the fixed $\eta$ need not be the minimizer at the realized state --- while the mechanism itself contributes none, a distinction the fixed-scale analysis of Section~\ref{sec:fixed} makes quantitative.

\subsection{Robust Bayes, maximum entropy, and exponential-family structure}
\label{sec:robust-bayes}

The optimizer $\rho_{\eta,S}$ of the Gibbs variational identity has a second characterization, as the maximum-entropy distribution matching the exposed moment---the one feature of a distribution $\nu$ that a linear terminal payoff can see, its mean score $\inner{\nu}{S}$.
That second characterization is the robust-Bayes reading of the terminal functional.

\begin{proposition}[Robust Bayes / maximum entropy]
\label{prop:robust-bayes}
Fix $S\in\R^K$ and $\eta>0$. Then:
\begin{enumerate}[label=(\alph*)]
\item $\rho_{\eta,S}$ uniquely maximizes $\inner{\nu}{S}-\eta^{-1}\KL(\nu\|\pi)$ over $\nu\in\DeltaK$.
\item If $m_\eta:=\inner{\rho_{\eta,S}}{S}$, then $\rho_{\eta,S}$ is the unique minimizer of $\KL(\nu\|\pi)$ subject to the linear moment constraint $\inner{\nu}{S}=m_\eta$.
\end{enumerate}
\end{proposition}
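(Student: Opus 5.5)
For part (a) I would use the standard compensation identity for relative entropy. For any $\nu\in\DeltaK$, write $\log\bigl(\nu(i)/\pi(i)\bigr)=\log\bigl(\nu(i)/\rho_{\eta,S}(i)\bigr)+\log\bigl(\rho_{\eta,S}(i)/\pi(i)\bigr)$. The definition \eqref{eq:gibbs-map} gives $\log\bigl(\rho_{\eta,S}(i)/\pi(i)\bigr)=\eta S(i)-\eta W_\eta(S)$. Averaging under $\nu$ and dividing by $\eta$ yields, for every $\nu\in\DeltaK$,
\[
\inner{\nu}{S}-\frac{1}{\eta}\KL(\nu\|\pi)=W_\eta(S)-\frac{1}{\eta}\KL(\nu\|\rho_{\eta,S}).
\]
Because $\pi$ is strictly positive, $\rho_{\eta,S}$ is strictly positive too. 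Hence every term is finite, including the entries with $\nu(i)=0$, which contribute zero on both sides under the convention $0\log 0=0$. Gibbs' inequality says $\KL(\nu\|\rho_{\eta,S})\ge 0$, with equality iff $\nu=\rho_{\eta,S}$. This gives both the supremum in \eqref{eq:gibbs-var} and the uniqueness of the maximizer, so (a) is essentially Proposition~\ref{prop:terminal-dual} with uniqueness made explicit.

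For part (b) I would deduce the result from (a) by a Lagrangian-style restriction argument. Take any $\nu$ with $\inner{\nu}{S}=m_\eta=\inner{\rho_{\eta,S}}{S}$. Applying the inequality from (a) to $\nu$ and to $\rho_{\eta,S}$ gives
\[
m_\eta-\eta^{-1}\KL(\nu\|\pi)\le m_\eta-\eta^{-1}\KL(\rho_{\eta,S}\|\pi).
\]
Cancelling $m_\eta$ gives $\KL(\nu\|\pi)\ge\KL(\rho_{\eta,S}\|\pi)$. The inequality is strict unless $\nu=\rho_{\eta,S}$, by the strict uniqueness in (a). Since $\rho_{\eta,S}$ itself satisfies the moment constraint, it is the unique constrained minimizer.

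I expect no real obstacle. The only point needing care is that uniqueness must come from the equality case of Gibbs' inequality, i.e.\ strict convexity of $x\log x$, together with strict positivity of $\rho_{\eta,S}$. This ensures the identity holds on the whole closed simplex, including boundary $\nu$, rather than only on its interior. An alternative for (b) would be the Lagrange condition $\log(\nu/\pi)=\lambda S+\text{const}$ plus convexity, but the restriction argument above avoids any constraint-qualification issues, so I would use it.
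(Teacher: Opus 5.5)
Your proof is correct, and it takes a different route from the paper's.

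\textbf{The paper's route.} The paper argues by calculus in both parts.
\begin{itemize}
\item For (a), it notes that the objective is strictly concave. Differentiating under $\sum_i\nu(i)=1$ gives the stationary point $\nu(i)\propto\pi(i)e^{\eta S(i)}$. This point is interior because $\rho_{\eta,S}$ has positive entries, so it is the unique maximizer.
\item For (b), it forms the Lagrangian $\KL(\nu\|\pi)+\lambda(\inner{\nu}{S}-m_\eta)$ and reads the exponential-family form off stationarity. It then uses the constraint to pin $\lambda=-\eta$ and invokes strict convexity for uniqueness.
\end{itemize}

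\textbf{Your route.} You verify the answer rather than derive it. The compensation identity $\inner{\nu}{S}-\eta^{-1}\KL(\nu\|\pi)=W_\eta(S)-\eta^{-1}\KL(\nu\|\rho_{\eta,S})$ makes the suboptimality gap exactly $\eta^{-1}\KL(\nu\|\rho_{\eta,S})$. You then obtain (b) from (a) by restricting to the moment-constraint set. On that set the same identity becomes the Pythagorean equality $\KL(\nu\|\pi)=\KL(\rho_{\eta,S}\|\pi)+\KL(\nu\|\rho_{\eta,S})$.

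\textbf{What yours buys.} Your argument gains exactness and economy:
\begin{itemize}
\item It covers boundary $\nu$ with no separate interior or KKT-sufficiency argument.
\item It needs no constraint qualification.
\item It sidesteps two points the paper's Lagrangian step leaves implicit. One is that a stationary point of this convex program is a global minimizer. The other is that the moment constraint actually forces $\lambda=-\eta$. That uses strict monotonicity of the tilted mean in $\lambda$, so $S$ must not be $\pi$-a.s.\ constant, and the constant case needs a separate word.
\end{itemize}
It is also the same algebra the paper itself uses elsewhere, in the alignment identity \eqref{eq:pointwise-align} and the retempering identity \eqref{eq:retempering}.

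\textbf{What the paper's buys.} It derives the Gibbs form from first-order conditions instead of guessing and checking it. Its Lagrangian template also extends to other constraint sets where the optimizer is not known in advance.
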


Thus the posterior is simultaneously a best response, a Bellman equalizer, and the maximum-entropy distribution consistent with the currently exposed sufficient statistic.
The exponential family appears because the concentration game is linear in the state, so the Gibbs variational identity \eqref{eq:gibbs-var} applies at every scale, and varying $\eta$ traces out a one-dimensional exponential family in which the intrinsic-time density $Q_t(c)$ is the one-step analogue of the Fisher information (Appendix~\ref{app:exp-family}).

\subsection{The one-scale centered RCGF}

For $p\in\DeltaK$ and a centered move $z\in\R^K$ with $\inner{p}{z}=0$, define
\begin{equation}\label{eq:Q-centered}
Q_\eta(p,z)
:=
\frac{1}{\eta^2}\log\sum_{i=1}^K p(i)e^{\eta z(i)}
\end{equation}
Equivalently, in loss coordinates,
\begin{equation}\label{eq:Q-loss}
Q_t(c)
=
\frac{1}{\eta_t^2}
\psi_{p_t}(-\eta_t;c_t),
\qquad
\psi_p(\lambda;c)
:=
\log\sum_i p(i)e^{\lambda(c(i)-\inner{p}{c})}
\end{equation}
The argument of $Q_t$ names the loss sequence it is evaluated on, while the play and scale are those of the round: $Q_t(c)$ abbreviates $Q_{\eta_t}(p_t,z_t)$, and later sections write $Q_t(g)$ and $Q_t(\ell)$ for the same round-$t$ functional on other score sequences.

Here $\eta^2 Q_\eta(p,z)=\log\E_p[e^{\eta z}]$ is the centered cumulant generating function (CGF) of nature's score $z$ under the learner's mixed action $p$ at scale $\eta$. The quantity carried through the paper is not that CGF but its $\eta^{-2}$ rescaling $Q_\eta$, the normalization under which $Q_\eta\to\tfrac12\Var_p(z)$ as $\eta\downarrow0$; it is called the \emph{rescaled centered cumulant generating function}, abbreviated \emph{RCGF}, or equivalently the \emph{intrinsic-time density}. The distinction is not cosmetic: the two differ by a factor that itself depends on the scale, so a statement about $Q_\eta$ across scales is not a statement about the CGF across scales. ``CGF'' below always means the unrescaled $\eta^2Q_\eta$.
It is the round's primitive information-theoretic quantity in the framework: it measures how big nature's move is, at the learner's chosen scale~$\eta$, in units that make the Bellman update an algebraic identity, and the per-round constraint $Q_\eta(p,z)\le q$ caps how concentrated nature's centered tilt can be at that scale.
It is not a variance surrogate: at small $\eta$ it reduces to half the variance $\tfrac12\Var_p(z)$ (Corollary~\ref{cor:variance-shadow}), at large $\eta$ it captures the worst-tail behavior of $z$ under $p$, and for any~$\eta$ it is the unique non-negative quantity for which $W_\eta(S+z)-W_\eta(S)=\eta Q_\eta(\rho_{\eta,S},z)$ holds (Theorem~\ref{thm:fixed-bellman}).
Bounding $Q_\eta(p,z)\le q$ therefore caps nature's coincidence budget at the chosen scale, and is the only constraint the game requires.

Three readings of the same functional --- a coincidence budget over the moments $\E_p[z^n]$, the constraint half of a robust-Bayes min-max value, and a prior-to-posterior relative-entropy increment --- are developed in Appendix~\ref{app:moment-expansion}.
There the moment expansion also exhibits variance ($n=2$) and range ($n=\infty$) as truncations, which is why they appear later as derived relaxations (Section~\ref{sec:hierarchy}).
The same primitive recurs at every level of the framework: nature's one-round move (here), the comparator's tilt of the prior (Proposition~\ref{prop:terminal-dual}), the bandit estimator's importance-weighted score (Theorem~\ref{thm:bandit}), and the Thompson-sampled posterior's likelihood ratio (Section~\ref{sec:ppr}).

The centered RCGF also admits an integral representation as a weighted average of tilted variances.

\begin{proposition}[Tilted-variance representation]
\label{prop:tilted-var}
Let $p_s^{(\eta,z)}(i)\propto p(i)e^{s\eta z(i)}$ for $s\in[0,1]$.
Then
\begin{equation}\label{eq:tilted-var}
Q_\eta(p,z)
=
\int_0^1 (1-s)\Var_{i\sim p_s^{(\eta,z)}}(z(i))\,ds
\end{equation}
Consequently $Q_\eta(p,z)\ge 0$, with equality iff $z$ is $p$-a.s.\ constant.
\end{proposition}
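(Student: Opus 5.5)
The plan is to apply Taylor's formula with integral remainder to the scalar function $f(s):=\log\sum_i p(i)e^{s\eta z(i)}$ on $[0,1]$. Since $\eta^2 Q_\eta(p,z)=f(1)$, the representation \eqref{eq:tilted-var} amounts to the one-dimensional identity
\[
f(1)=f(0)+f'(0)+\int_0^1(1-s)f''(s)\,ds ,
\]
together with closed forms for $f(0)$, $f'(0)$ and $f''$. This identity holds because $f$ is smooth: it is the logarithm of a finite positive sum of exponentials.

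First, I will compute the derivatives. Clearly $f(0)=\log\sum_i p(i)=0$. Differentiating gives $f'(s)=\sum_i p_s^{(\eta,z)}(i)\,\eta z(i)=\eta\,\E_{p_s}[z]$, which is the standard cumulant-generating-function fact that the first derivative is the tilted mean. Hence $f'(0)=\eta\inner{p}{z}=0$ by the centering assumption. Differentiating once more, $f''(s)=\eta^2\bigl(\E_{p_s}[z^2]-\E_{p_s}[z]^2\bigr)=\eta^2\Var_{i\sim p_s^{(\eta,z)}}(z(i))$, because the derivative of the tilted mean is the tilted variance. Substituting these into the Taylor identity and dividing by $\eta^2$ yields \eqref{eq:tilted-var}. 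The whole argument is a computation, but the one point that needs care is checking that the $\eta$-scaling in the tilt $p_s$ matches the $\eta^{-2}$ normalization. It does, since each $s$-derivative brings down exactly one factor of $\eta$.

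For the consequence, the integrand $(1-s)\Var_{p_s}(z)$ is nonnegative on $[0,1]$, so $Q_\eta(p,z)\ge0$. For the equality case, I will use that every $p_s$ has the same support as $p$, since the tilting weights $e^{s\eta z(i)}$ are positive. Thus $\Var_{p_s}(z)=0$ holds iff $z$ is constant on $\supp(p)$, that is, iff $z$ is $p$-a.s.\ constant, and this condition does not depend on $s$.

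If $z$ is not $p$-a.s.\ constant, the integrand is strictly positive and continuous on $[0,1)$, so the integral is strictly positive. Conversely, if $z$ is $p$-a.s.\ constant, centering forces that constant to be $0$ on $\supp(p)$. Then $f\equiv0$, so $Q_\eta(p,z)=0$, which agrees with the integral. No step is a real obstacle; the only bookkeeping is the support argument in the equality case.
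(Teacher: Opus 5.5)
Your proposal is correct and follows essentially the same route as the paper: Taylor's theorem with integral remainder applied to the log-MGF $s\mapsto\log\sum_i p(i)e^{s\eta z(i)}$, using that it vanishes at $0$, that its derivative vanishes at $0$ by centering, and that its second derivative is $\eta^2\Var_{p_s^{(\eta,z)}}(z)$. Your equality-case argument, that every tilt $p_s$ has the same support as $p$, is a slightly more explicit version of the paper's appeal to continuity in $s$.
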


The tilt family $p_s^{(\eta,z)}$ interpolates from the learner's play $p$ at $s=0$ to the within-round update at $s=1$, and the representation makes the non-negativity of $Q_\eta$ manifest and exhibits both the small-$\eta$ variance limit and the bounded-range bound as immediate corollaries.

\begin{corollary}[Variance and bounded-range relaxations]
\label{cor:variance-shadow}
If $\eta\downarrow 0$ then $Q_\eta(p,z)=\frac12 \Var_p(z)+o(1)$.
If $z(i)\in[a,b]$ for all~$i$, then $Q_\eta(p,z)\le (b-a)^2/8$.
\end{corollary}

The same representation also controls how the primitive responds to a perturbation of the mixed action.
Contaminating the play with a point mass moves the centered RCGF by an influence that is bounded below by a linear term and above by the exponential of the largest centered score: linear in the depth of the score range, exponential in its height (Corollary~\ref{cor:rcgf-influence}, Appendix~\ref{app:further-primitive}).
An estimate of the round's centered RCGF from draws of $p$ therefore has bounded per-draw influence whenever the centered scores are bounded, with the bound set by the endpoints of their range and the scale alone; the number of experts does not enter.
The primitive is estimable at moderate scale in exactly this sense.

\section{The fixed-scale concentration game}
\label{sec:fixed}

The fixed-scale game holds the measurement scale $\eta$ fixed across all rounds. A single log-partition potential then governs the entire run: its one-step increment is exactly the centered RCGF, and the transport identity, the dual regularized form (Appendix~\ref{app:fenchel}), and the variance and range relaxations all read off that one potential. The exact one-round value closes the fixed-scale picture before the scale is allowed to vary in Section~\ref{sec:variable}.

\subsection{Nature's feasible set and the Bellman equalizer}

The fixed-scale game confines nature's per-round move to a one-round information budget on the centered score.
For $p\in\DeltaK$, $q\ge 0$, and $\eta>0$, set
\begin{equation}\label{eq:feasible-moves}
\cZ_{\eta,q}(p)
:=
\left\{
z\in\R^K:\inner{p}{z}=0,\ Q_\eta(p,z)\le q
\right\}
\end{equation}

\begin{game}[Fixed-scale concentration game]
\label{game:fixed}
Game~\ref{game:central} with one datum frozen: the scale is held constant, $\eta_t\equiv\eta$,
confining nature's round-$t$ move to $\cZ_{\eta,q_t}(p_t)$.
Move order, state, and the terminal payoff $L_\Gamma(S_T)$ at time $T+1$ are unchanged.
\end{game}

A single potential runs the game: the log-partition function $W_\eta(S)=\eta^{-1}\log\sum_i\pi(i)e^{\eta S(i)}$, the entropic counterpart of a drifting-game potential --- the potential-function device of boosting-style min-max analyses \cite{luo2014driftinggames}. When the learner plays the Gibbs weights $p=G_\eta(S)=\nabla W_\eta(S)$, one round changes the potential by the amount $W_\eta(S+z)-W_\eta(S)=\eta\,Q_\eta(p,z)$---the round's centered RCGF and nothing more---which nature's budget caps at $\eta q_t$.
The equalizer simplifies the solution: the round's loss is capped by nature's budget whatever the state and whichever feasible direction nature moves, so the rounds do not interact and the potential telescopes from $W_\eta(0)=0$ to the closed form of Theorem~\ref{thm:fixed-bellman}.
A sequential game is otherwise solved by a backward recursion from the horizon; against the equalizer that recursion collapses to a forward sum.
The game is constrained: there is no entropy penalty in the primitive description, and regularization appears only after dualizing the terminal comparator class, which turns the game into the entropic FTRL update on the cumulative scores (Appendix~\ref{app:fenchel}).

The value-to-go defined next is an \emph{admissible Bellman relaxation}: it dominates the game's value from every state, and at the equalizer play its round decrement is nonnegative on the feasible set and vanishes at nature's budget-saturating reply.
Verifying it therefore takes one round's inequality, which every round inherits, where computing the value outright would take the backward recursion (Appendix~\ref{app:bellman-certificate}).

\begin{theorem}[Bellman relaxation and equalizer]
\label{thm:fixed-bellman}
Define
\begin{equation}\label{eq:fixed-bellman}
U_t(S)
:=
\frac{\Gamma}{\eta}+W_\eta(S)+\eta\sum_{s=t}^T q_s,
\qquad
1\le t\le T+1
\end{equation}
Then $U_t$ is an admissible Bellman relaxation of Game~\ref{game:fixed}.
In particular,
$\Val_t^{\eta,\Gamma}(S)\le U_t(S)$,
and the Bellman equalizer is the Gibbs distribution $p=\rho_{\eta,S}=\nabla W_\eta(S)$.
Moreover, for that choice of~$p$ and any feasible~$z$,
\begin{equation}\label{eq:bellman-increment}
W_\eta(S+z)-W_\eta(S)=\eta Q_\eta(p,z)\le \eta q_t
\end{equation}
\end{theorem}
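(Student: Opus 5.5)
I will prove the statement by checking the three properties that make $U_t$ an admissible Bellman relaxation, in order: the terminal condition, the one-round increment identity \eqref{eq:bellman-increment} at the Gibbs play, and the one-step inequality. The bound $\Val_t^{\eta,\Gamma}\le U_t$ then follows by backward induction on $t$.

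\textbf{Terminal condition.} At $t=T+1$ the sum in \eqref{eq:fixed-bellman} is empty, so $U_{T+1}(S)=\Gamma/\eta+W_\eta(S)$. By Proposition~\ref{prop:terminal-dual}, $L_\Gamma(S)$ is the infimum over $\eta'>0$ of $\Gamma/\eta'+W_{\eta'}(S)$, so $L_\Gamma(S)\le U_{T+1}(S)$ at the fixed $\eta$. Hence $U_{T+1}$ dominates the terminal payoff $\Val_{T+1}=L_\Gamma$.

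\textbf{Increment identity.} This is a direct computation, which I expect to be routine. Set $p=\rho_{\eta,S}$, so that $p(i)=\pi(i)e^{\eta S(i)}/\sum_j\pi(j)e^{\eta S(j)}$. Then
\[
\eta\bigl(W_\eta(S+z)-W_\eta(S)\bigr)=\log\frac{\sum_i\pi(i)e^{\eta S(i)}e^{\eta z(i)}}{\sum_j\pi(j)e^{\eta S(j)}}=\log\sum_i p(i)e^{\eta z(i)}=\eta^2Q_\eta(p,z).
\]
Dividing by $\eta$ gives the equality in \eqref{eq:bellman-increment}. The inequality $\le\eta q_t$ is exactly the feasibility constraint $z\in\cZ_{\eta,q_t}(p)$. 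The identity $\rho_{\eta,S}=\nabla W_\eta(S)$ is differentiation of the log-sum-exp.

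\textbf{One-step inequality and induction.} The induction hypothesis is $\Val_{t+1}\le U_{t+1}$. Since $\Val_t(S)=\inf_p\sup_{z\in\cZ_{\eta,q_t}(p)}\Val_{t+1}(S+z)$, it suffices to exhibit one play $p$ with $\sup_z U_{t+1}(S+z)\le U_t(S)$. I take $p=\rho_{\eta,S}$. By the increment identity,
\[
U_{t+1}(S+z)=\Gamma/\eta+W_\eta(S)+\eta Q_\eta(p,z)+\eta\sum_{s>t}q_s\le U_t(S),
\]
with decrement $U_t(S)-U_{t+1}(S+z)=\eta(q_t-Q_\eta(p,z))\ge0$ on the feasible set. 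This decrement vanishes when nature saturates the budget; such a move exists by continuity of $Q_\eta$ in the magnitude of a centered $z$, and is needed only for the vanishing clause. Inducting from $T+1$ down gives $\Val_t\le U_t$.

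\textbf{Equalizer claim.} This is the main subtlety. The Gibbs play makes the loss $W_\eta(S+z)-W_\eta(S)$ a function of $Q_\eta(p,z)$ alone, that is, of nature's spent budget and not of its direction. For uniqueness I would argue as follows. If $p\ne\rho_{\eta,S}$, then $\log\sum_i\rho_{\eta,S}(i)e^{\eta z(i)}$ and $\log\sum_i p(i)e^{\eta z(i)}$ are not related by a fixed function. Indeed, their first-order expansions in $z$ differ by $\inner{\rho_{\eta,S}-p}{z}$, which is nonzero for some $p$-centered $z$ (any $z$ not in the span of $\1$ that is also orthogonal to $p$, chosen along $\rho_{\eta,S}-p$ projected). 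Meanwhile $Q_\eta(p,\pm\epsilon z)$ agree to first order, so opposite directions with equal budget incur different losses.
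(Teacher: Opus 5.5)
Your argument is correct and is essentially the paper's proof: you compute the increment identity directly at $p=\rho_{\eta,S}$, get terminal domination $L_\Gamma(S)\le\Gamma/\eta+W_\eta(S)$ from Proposition~\ref{prop:terminal-dual}, and run backward induction through the nonnegative decrement $\eta\bigl(q_t-Q_\eta(p,z)\bigr)$, and your induction is stated more cleanly than the paper's re-indexed chain. Your uniqueness sketch goes beyond what this proof establishes, and it is sound once tightened in three places. First, take $z$ to be the projection of $\rho_{\eta,S}-p$ onto $\{z:\inner{p}{z}=0\}$, since an arbitrary centered $z$ may have $\inner{\rho_{\eta,S}}{z}=0$. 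Second, assume $q_t>0$. Third, compare the $\pm$ moves at exactly equal budget; this costs only a relative $O(\epsilon)$ rescaling because $Q_\eta(p,\pm\epsilon z)=\tfrac{\epsilon^2}{2}\Var_p(z)+O(\epsilon^3)$, while the increments $\pm\epsilon\inner{\rho_{\eta,S}}{z}+O(\epsilon^2)$ differ at first order.
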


The inequality $\eta Q_\eta(p,z)\le \eta q_t$ in \eqref{eq:bellman-increment} is by hypothesis (nature's per-round budget); the equality $W_\eta(S+z)-W_\eta(S)=\eta Q_\eta(p,z)$ is exact and is the content of the theorem.
At nature's best response $Q_\eta(p,z)=q_t$ the increment inequality is tight, so the Bellman relaxation $U_t$ closes at the Bellman increment.
The relaxation is an equality against the game value only when the committed scale also happens to be the dual optimizer of $L_\Gamma$ at the realized terminal state, which the learner cannot guarantee in advance (Appendix~\ref{app:bellman-certificate}).
In particular, the Bellman bound $\Gamma/\eta+\eta V$ on the cumulative budget regret is strict in general: the exact minimax value can be smaller, since equalizing is weaker than round-value minimaxity and the closed form is a consequence of equalizing.

Nature's best response to the equalizer follows from the same increment. It is the centered move that saturates the budget, $Q_\eta(p,z)=q_t$, and aligns with the worst comparator direction: by the Gibbs variational identity (Propositions~\ref{prop:terminal-dual} and~\ref{prop:robust-bayes}) that comparator is the Gibbs tilt $\rho_{\eta,S}$ of the prior by $\eta S$, and the saturating $z$ is the exponential-family tilt of $p$ toward it. This is the robust-Bayes picture of Section~\ref{sec:robust-bayes} read from nature's side: the worst comparator is a posterior at inverse temperature $\eta$ over the comparator class, and the conditional law the equalizer tracks runs along the same exponential family.

\begin{corollary}[Bellman bound on the budgeted fixed-scale game]
\label{cor:fixed-budget}
If nature is constrained only by a cumulative budget $\sum_{t=1}^T q_t\le V$, then for every $\eta>0$
\begin{equation}\label{eq:fixed-budget-value}
\Val_1^{\eta,\Gamma}(0)\le \frac{\Gamma}{\eta}+\eta V,
\qquad\text{so}\qquad
\inf_{\eta>0}\Val_1^{\eta,\Gamma}(0)\le \inf_{\eta>0}\!\left[\frac{\Gamma}{\eta}+\eta V\right]=2\sqrt{\Gamma V}
\end{equation}
the infimum attained at $\eta=\sqrt{\Gamma/V}$ by AM--GM ($\Gamma/\eta+\eta V\ge 2\sqrt{\Gamma V}$).
At the horizon $T=1$ the same statement reads $\Gamma/\eta+\eta q$, minimized at $\eta=\sqrt{\Gamma/q}$, where it equals $2\sqrt{\Gamma q}$: a sub-Gaussian concentration profile in the budget pair $(\Gamma,q)$.
\end{corollary}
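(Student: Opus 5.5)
The plan is to read the corollary off Theorem~\ref{thm:fixed-bellman}, evaluated at the initial state $S=0$, $t=1$. Before doing so, I need to settle how the cumulative-budget version of the game relates to the per-round one.

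\textbf{Reduction to per-round budgets.} I would treat a cumulative-budget nature as choosing an allocation $(q_t)$ with $\sum_t q_t\le V$. This allocation may be chosen adaptively: nature may spend $q_t:=Q_\eta(p_t,z_t)$ at each round, subject only to the running total staying within $V$. Theorem~\ref{thm:fixed-bellman} is stated for a predictable schedule, so for this adaptive case I would rerun its one-round argument with a budget-tracking potential
$$\widetilde U(S,v):=\frac{\Gamma}{\eta}+W_\eta(S)+\eta v,$$
where $v$ is the remaining budget. At the Gibbs play $p=\rho_{\eta,S}$, the exact increment \eqref{eq:bellman-increment} gives, for any move $z$ with $Q_\eta(p,z)\le v$,
$$W_\eta(S+z)+\eta\bigl(v-Q_\eta(p,z)\bigr)=W_\eta(S)+\eta v.$$
So $\widetilde U$ is exactly conserved along every feasible reply, and the backward-induction inequality $\Val\le\widetilde U$ propagates one round at a time. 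This is the same certificate argument as in Appendix~\ref{app:bellman-certificate}.

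\textbf{Terminal step and the first bound.} At $T+1$ I need $L_\Gamma(S)\le \Gamma/\eta+W_\eta(S)\le\widetilde U(S,v)$ for $v\ge 0$. The first inequality is immediate from Proposition~\ref{prop:terminal-dual}, since the infimum over scales is at most its value at the committed $\eta$. Chaining from the start gives
$$\Val_1^{\eta,\Gamma}(0)\le \frac{\Gamma}{\eta}+W_\eta(0)+\eta V.$$
Because $\pi$ sums to one, $W_\eta(0)=\eta^{-1}\log 1=0$, which yields the first display.

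\textbf{Optimizing over $\eta$.} The second display follows by taking the infimum over $\eta$ on both sides, which preserves the inequality. The right side is computed by AM--GM:
$$\frac{\Gamma}{\eta}+\eta V\ge 2\sqrt{\Gamma V},$$
with equality iff $\Gamma/\eta=\eta V$, i.e.\ $\eta=\sqrt{\Gamma/V}$. For this I assume $\Gamma,V>0$. In the degenerate cases the infimum $0=2\sqrt{\Gamma V}$ is approached as $\eta\to\infty$ or $\eta\to 0$ but is not attained, and I would note this in passing. The $T=1$ statement is the special case $V=q$.

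\textbf{Main obstacle.} The only step needing care is the first one: making sure an adaptively allocated cumulative budget is covered by the per-round theorem. The exact conservation identity handles this, because nature's choice of how much budget to spend in a round cancels out of $\widetilde U$.
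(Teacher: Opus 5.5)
Your proof is correct and follows the paper's route: the paper reads the first display off the certificate of Theorem~\ref{thm:fixed-bellman}, $\Val_1^{\eta,\Gamma}(0)\le U_1(0)=\Gamma/\eta+\eta\sum_t q_t\le\Gamma/\eta+\eta V$ (using $W_\eta(0)=0$), and then applies AM--GM. Your remaining-budget potential $\widetilde U(S,v)$ is $U_t$ with $\sum_{s\ge t}q_s$ made a state variable, so it is the same argument, and it also makes explicit something the paper's one-line proof leaves implicit: because the spent budget cancels exactly at the Gibbs play, the bound covers an allocation that nature chooses adaptively.
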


The first inequality is Theorem~\ref{thm:fixed-bellman} summed across rounds; the second line records the standard AM--GM optimization.
The bound $2\sqrt{\Gamma V}$ is the Bellman upper envelope; the exact minimax value can be strictly smaller (cf.\ the remark following Theorem~\ref{thm:fixed-bellman}), by an amount computed in closed form for the two-expert instance (Proposition~\ref{prop:k2-exact}, Appendix~\ref{app:one-round-exact}). That slack is a short-horizon effect: under the cumulative budget the two sides meet from a finite horizon on, so \eqref{eq:fixed-budget-value} holds with equality at constant scale (Proposition~\ref{prop:horizon-free}).

\subsection{One-step transport identity}

The fixed-scale value bound of Theorem~\ref{thm:fixed-bellman} came from telescoping the potential; tracking the same telescoping against a fixed comparator~$\nu$ instead gives a per-round accounting.
It is the constant-scale case of \eqref{eq:intro-exact}, in which the retempering drift vanishes and only the intrinsic-time loss and the terminal transport remain; the general form, with the scale free to move, is Theorem~\ref{thm:variable}.

\begin{proposition}[One-step transport identity]
\label{prop:fixed-transport}
Fix $\eta>0$, $p\in\DeltaK$, and centered~$z$.
Define $p^+(i):=p(i)e^{\eta z(i)}/\sum_j p(j)e^{\eta z(j)}$.
Then for every comparator $\nu\in\DeltaK$,
\begin{equation}\label{eq:fixed-transport}
\inner{\nu}{z}
=
\eta Q_\eta(p,z)
+
\frac{\KL(\nu\|p)-\KL(\nu\|p^+)}{\eta}
\end{equation}
Summing over a fixed-scale trajectory gives
\begin{equation}\label{eq:fixed-cumulative}
\inner{\nu}{S_T}
=
\eta\sum_{t=1}^T Q_\eta(p_t,z_t)
+
\frac{\KL(\nu\|\pi)-\KL(\nu\|p_{T+1})}{\eta}
\end{equation}
\end{proposition}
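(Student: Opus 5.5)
The one-step identity \eqref{eq:fixed-transport} is a direct computation of the log-likelihood ratio between $p^+$ and $p$, averaged under the comparator. I would write $Z:=\sum_j p(j)e^{\eta z(j)}$, so that by definition \eqref{eq:Q-centered} we have $\log Z=\eta^2 Q_\eta(p,z)$. Then for every $i$ with $p(i)>0$,
\[
\log\frac{p^+(i)}{p(i)}=\eta z(i)-\eta^2 Q_\eta(p,z).
\]
Next I would use $\KL(\nu\|p)-\KL(\nu\|p^+)=\E_\nu[\log(p^+/p)]$. This holds when $\nu\ll p$, and then also $\nu\ll p^+$ because $p^+$ and $p$ have the same support. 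The right-hand side equals $\eta\inner{\nu}{z}-\eta^2Q_\eta(p,z)$, using $\sum_i\nu(i)=1$. Dividing by $\eta$ and rearranging gives \eqref{eq:fixed-transport}.

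The main thing to handle is the degenerate case where $\nu$ is not absolutely continuous with respect to $p$. Both KL terms are then $+\infty$ and the difference is undefined. I would read the identity under the convention that it is stated for $\nu\ll p$. In the trajectory application $p=p_t$ is a Gibbs tilt of a strictly positive $\pi$, so it is strictly positive and the issue never arises. The centering $\inner{p}{z}=0$ is not needed for the algebra. It only ensures $Q_\eta\ge0$ (Proposition~\ref{prop:tilted-var}), which matters for the interpretation but not for the identity.

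For the cumulative identity \eqref{eq:fixed-cumulative}, I would apply \eqref{eq:fixed-transport} at each round with $p=p_t=G_\eta(S_{t-1})$ and $z=z_t$. The key observation is that at constant scale the within-round update satisfies $p_t^+=p_{t+1}$:
\[
p_t^+(i)\propto \pi(i)e^{\eta S_{t-1}(i)}e^{\eta z_t(i)}=\pi(i)e^{\eta S_t(i)},
\]
so $p_t^+=G_\eta(S_t)=p_{t+1}$, by Proposition~\ref{prop:centered}. Summing over $t=1,\dots,T$, the KL terms telescope from $\KL(\nu\|p_1)$ to $-\KL(\nu\|p_{T+1})$, and $p_1=G_\eta(0)=\pi$. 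The left-hand side sums to $\inner{\nu}{S_T}$. Every $p_t$ is strictly positive, so all KL terms are finite and the telescoping is legitimate for every $\nu\in\DeltaK$.

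No step is hard. The only points needing care are the identification $p_t^+=p_{t+1}$, which depends on the scale being constant (otherwise a retempering correction appears, as in Theorem~\ref{thm:variable}), and the support convention in the one-step statement.
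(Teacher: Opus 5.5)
Your proposal is correct and follows the paper's proof essentially step for step: compute the pointwise log-ratio $\log(p^+/p)=\eta z-\eta^2 Q_\eta(p,z)$, average it under $\nu$ to obtain the relative-entropy difference, and telescope at constant scale. Your two additions are sound and make explicit what the paper leaves implicit: the support convention for $\nu$ not absolutely continuous with respect to $p$, and the check that $p_t^+=G_\eta(S_t)=p_{t+1}$.
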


This is the fixed-scale information balance.
The immediate loss is the one-scale centered RCGF~$\eta Q_\eta$; the remainder is transported as terminal relative entropy.
Read \eqref{eq:fixed-transport} from the left: the comparator's gain on the round is the learner's information loss, plus the ground the comparator gains on the learner's belief, measured as the drop in relative entropy from the comparator to it.

\subsection{Variance and range as upper bounds on the centered RCGF}
\label{sec:hierarchy}

The centered RCGF has the tilted-variance representation of Proposition~\ref{prop:tilted-var}, and its two classical proxies are read off from it directly. Retained as pointwise upper bounds they are
\begin{equation}\label{eq:hierarchy}
\begin{aligned}
Q_{\eta_t}(p_t,c) &\;\le\; (e-2)\,\Var_{p_t}(c)
 \qquad\text{(for $\eta_t\le 1$ and $c\in[0,1]^K$),}\\
Q_{\eta_t}(p_t,c) &\;\le\; \range(c)^2/8
\end{aligned}
\end{equation}
the first the non-asymptotic entropy-version second-order bound \cite{cesa2006prediction} (its sharper $\eta_t\downarrow0$ form is the variance-limit equality $Q_{\eta_t}\to\tfrac12\Var$ of Corollary~\ref{cor:variance-shadow}), the second Hoeffding's lemma.
Replacing the exact $Q_t$ in the regret decomposition \eqref{eq:exact-variable} by either proxy gives a looser upper bound on regret, recovering the classical second-order and Hoeffding-style envelopes. Each trades information about $c$'s tail shape under $p_t$ for an easier-to-evaluate quantity, and the inequality is strict whenever $c$ is non-Gaussian under $p_t$.
The two proxies do not refine each other: as constraint sets on nature, $\{(e-2)\Var_{p_t}(c)\le\beta\}$ and $\{\range(c)^2\le 8\beta\}$ both sit inside $\{Q_{\eta_t}(p_t,c)\le\beta\}$, but neither contains the other, and which is smaller depends on how concentrated $p$ is (Appendix~\ref{app:proxy-order}).
Retaining the remainder turns the range relaxation into an identity: at a fixed rate and bounded range the classical Hedge bound is the range-relaxed game plus an explicit nonnegative slack, the terminal posterior mismatch together with the cumulative gap between the Hoeffding proxy and the true RCGF increment (Appendix~\ref{app:proxy-order}).

The learner chooses $\eta_t$ \emph{before} seeing $c_t$ and observes only $Q_\eta(p_t,z_t)$ at that single scale, leaving the rest of the profile $\eta\mapsto Q_\eta(p_t,z_t)$ unobserved.
Different inverse temperatures reveal different structure --- variance-like at small $\eta$, tail-like at large $\eta$ --- so intrinsic time is algorithm-dependent, and the RCGF constraint set \eqref{eq:local-cgf} moves with $\eta_t$ where the range constraint is scale-free.
The tradeoff the schedule must equalize is visible in one instance: for a binary centered move on $\{-1,+1\}$ under $p_t=(\tfrac12,\tfrac12)$, $Q_t(\eta)=\log\cosh(\eta)/\eta^2$ falls from $\tfrac12$ at $\eta\to0^+$ to $0$ as $\eta\to\infty$ even as the round's intrinsic-time loss $\eta Q_t=\log\cosh(\eta)/\eta$ rises.

\subsection{The exact one-round value}
\label{sec:fixed-k2-exact}

The Bellman certificate of Corollary~\ref{cor:fixed-budget} is an upper bound, and it is strict already at a single round.
The one-round value is known exactly on two slices --- two experts at every comparator budget (Proposition~\ref{prop:k2-exact}), and every $K$ at a comparator budget large enough that the relative-entropy ball is the whole simplex (Proposition~\ref{prop:saturated-exact}) --- and on both slices it sits strictly below the Bellman envelope $\Gamma/\eta+\eta q$.
The Gibbs play equalizes at every state but need not be minimax: under a non-uniform prior the two part already in the first round, and from a skewed state the exact minimax play over-tilts relative to the Gibbs iterate, so no elementary closed form survives past $T=1$.
Appendix~\ref{app:one-round-exact} develops the one-round game in full: the nature-side certificate, the two exact values, a pointwise alignment identity confining the Bellman gap to a single bounded terminal functional, and the exact two-round computation.

\section{Changing scale: retempering, allocation, and active constraints}
\label{sec:variable}

The fixed-scale game is clean because one Bellman potential governs all rounds.
Variable scale means switching Bellman potentials.
The loss from switching them is the drift term.

\subsection{From per-round caps to a cumulative budget}

The scale changes roles here: data of Game~\ref{game:central}, it becomes the learner's own move, selected round by round with the play pinned to the Gibbs parameterization of \eqref{eq:bayes-update}, so the scale is the learner's only remaining choice. Alongside that promotion, nature's difficulty is capped once over the whole run instead of round by round, which lets it choose where to concentrate.

{\mdfsetup{nobreak=true}%
\begin{game}[The self-tuned concentration game $\cG^{\mathrm{st}}(\Gamma,V)$]
\label{game:cgf}
Game~\ref{game:central} with two parts changed.
Learner's move: the learner also selects the scale $\eta_t>0$ each round (predictably, from $c_1,\dots,c_{t-1}$),
its play pinned to the Gibbs parameterization $p_t(i)\propto\pi(i)e^{-\eta_t C_{t-1}(i)}$ of \eqref{eq:bayes-update}.
Nature's constraint: the per-round caps $q_t$ are replaced by a single budget over the run,
\begin{align}
\sum_{t=1}^T Q_t(c) &\le V
 \qquad\text{(intrinsic-time budget)}, \label{eq:it-budget}\\
\KL(\nu\|\pi) &\le \Gamma
 \qquad\text{(comparator budget, for $\nu\in\DeltaK$)}, \label{eq:comp-budget}
\end{align}
with $\pi$ of full support and payoff $R_T^c(\nu)=\sum_{t=1}^T\inner{p_t-\nu}{c_t}$ to nature.
The payoff rule is unchanged: $R_T^c(\nu)=\inner{\nu}{S_T}$ (Proposition~\ref{prop:centered}), so its supremum over the comparator budget \eqref{eq:comp-budget} is the terminal payoff $L_\Gamma(S_T)$.
The game's minimax value is
\begin{equation}\label{eq:minimax-value}
\Val\bigl(\cG^{\mathrm{st}}(\Gamma,V)\bigr)
=
\inf_{(\eta_t)\text{ predictable}}
\;\sup_{\substack{\nu:\,\KL(\nu\|\pi)\le\Gamma \\
 c_{1:T}:\,\sum_t Q_t \le V}}
R_T^c(\nu)
\end{equation}
\end{game}}

Written round-by-round, Game~\ref{game:cgf} is the alternating min-max $\inf_{\eta_1,p_1}\sup_{c_1}\cdots\inf_{\eta_T,p_T}\sup_{c_T}\sup_{\nu}\sum_t\inner{p_t-\nu}{c_t}$ over predictable $(\eta_t,p_t)$, per-round moves constrained by an allocation $(q_t)$ summing to at most $V$, and a terminal comparator-selection step inside the budget $\Gamma$.

The fixed-scale analysis already bounds this chain with no horizon dependence. At any constant scale $\eta_t\equiv\eta$ the drift vanishes; at the Gibbs response the round's potential increment is exactly $\eta_t Q_t$, capped at $\eta q_t$ by the budget (Theorem~\ref{thm:fixed-bellman}); and the terminal transport costs at most $\Gamma/\eta$. Hence
\begin{equation}\label{eq:horizon-free-bound}
\Val\bigl(\cG^{\mathrm{st}}(\Gamma,V)\bigr)\;\le\;\inf_{\eta>0}\Bigl\{\frac{\Gamma}{\eta}+\eta V\Bigr\}=2\sqrt{\Gamma V}
\end{equation}
a bound depending only on the budget pair $(\Gamma,V)$: doubling $T$ at fixed $V$ leaves it unchanged. The global constraint $\sum_t Q_t\le V$ lets nature allocate difficulty unevenly across rounds; a per-round cap $Q_t\le\beta$ is the case $V=T\beta$, and the per-round maximum $Q_T^*(c)=\max_t Q_t(c)$ appears in Theorem~\ref{thm:sqrt-envelope} as an edge term for budget concentrated in one round.

The value is horizon-free in the same sense from a finite horizon on, so fixing $T$ alongside the budgets constrains nature less than it appears to.

\begin{proposition}[Horizon-freeness of the constant-scale value]
\label{prop:horizon-free}
Fix $\eta>0$, $\Gamma\in(0,\Gamma_{\max})$ with $\Gamma_{\max}:=\max_i\log(1/\pi(i))$ the budget at which the relative-entropy ball fills the simplex, and $V>0$, and write $\mathrm{Val}_T^\eta(\Gamma,V)$ for the value of Game~\ref{game:cgf} from $S_0=0$ at the constant scale $\eta_t\equiv\eta$.
Then there is a finite $T_0(\eta,V,\Gamma)$ with
\begin{equation}\label{eq:horizon-free-value}
\mathrm{Val}_T^\eta(\Gamma,V)=\frac{\Gamma}{\eta}+\eta V
\qquad\text{for every }T\ge T_0
\end{equation}
\end{proposition}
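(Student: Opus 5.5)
The proof has two halves. The upper bound $\mathrm{Val}_T^\eta(\Gamma,V)\le\Gamma/\eta+\eta V$ holds at every $T$ by \eqref{eq:horizon-free-bound}, that is, by Theorem~\ref{thm:fixed-bellman} together with Corollary~\ref{cor:fixed-budget}. What remains is the matching lower bound for $T\ge T_0$. For it I will exhibit a nature strategy against the pinned Gibbs play $p_t=G_\eta(S_{t-1})$ and a comparator $\nu^\star$ with $\KL(\nu^\star\|\pi)\le\Gamma$ such that $\inner{\nu^\star}{S_T}=\Gamma/\eta+\eta V$.

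The lower-bound computation rests on the cumulative transport identity \eqref{eq:fixed-cumulative}. At constant scale $p_{T+1}=G_\eta(S_T)$, so
\[
\inner{\nu}{S_T}=\eta\sum_{t}Q_\eta(p_t,z_t)+\frac{\KL(\nu\|\pi)-\KL(\nu\|G_\eta(S_T))}{\eta}.
\]
Since $\Gamma\in(0,\Gamma_{\max})$, I can fix a full-support $\nu^\star$ with $\KL(\nu^\star\|\pi)=\Gamma$ exactly; continuity of $\KL(\cdot\|\pi)$ along the segment from $\pi$ toward an interior point near a vertex $e_i$ with $\log(1/\pi(i))=\Gamma_{\max}$ gives one. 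Nature then needs two things.
\begin{enumerate}[label=(\roman*),nosep,leftmargin=2em]
\item The terminal Gibbs posterior equals $\nu^\star$. Equivalently, $S_T\equiv S^\star:=\eta^{-1}\log(\nu^\star/\pi)$ modulo $\R\1$; the additive constant is irrelevant to $G_\eta$.
\item The budget is spent exactly, $\sum_t Q_\eta(p_t,z_t)=V$.
\end{enumerate}
These give $\inner{\nu^\star}{S_T}=\eta V+\Gamma/\eta$, and hence $L_\Gamma(S_T)\ge\Gamma/\eta+\eta V$.

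By the Bellman increment \eqref{eq:bellman-increment}, the telescoped cost is $\eta\sum_tQ_t=W_\eta(S_T)$. So (ii) is the same as the terminal constant being the one with $W_\eta(S_T)=\eta V$. Centering fixes the constant component of each $z_t$ given its direction: any desired increment $d_t$ modulo $\R\1$ is realized by $z_t=d_t-\inner{p_t}{d_t}\1$. The constant is therefore determined by the path, and the free parameter is how the path travels to $S^\star$ modulo $\R\1$. I use the path in two phases.
\begin{enumerate}[label=(\alph*),nosep,leftmargin=2em]
\item \textbf{Approach.} Move in $n$ equal steps $d_t=S^\star/n$. By Corollary~\ref{cor:variance-shadow} and Proposition~\ref{prop:tilted-var}, each step costs $O(\|S^\star\|^2/n^2)$, with the range bound uniform along the path. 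The total approach cost is therefore $O(1/n)$, below $V$ for $n$ large.
\item \textbf{Burn.} Spend the leftover budget on an excursion that returns to the same class modulo $\R\1$: a step $+\epsilon e$ followed by $-\epsilon e$ for a non-constant direction $e$, padded with null rounds $z_t=0$ (which cost nothing) if fewer rounds are needed.
\end{enumerate}
The total cost $\sum_tQ_t$ then depends continuously on $\epsilon$. At $\epsilon=0$ it equals the approach cost, which is below $V$. It grows without bound as $\epsilon\to\infty$ by the large-$\eta$ tail behavior of $Q_\eta$, or, if a single excursion saturates, after repeating it a bounded number of times. The intermediate value theorem then picks an $\epsilon$ with total exactly $V$. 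This gives $T_0=n+2m$, where $m$ is the number of excursions needed, and the construction depends only on $(\eta,V,\Gamma)$.

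The main obstacle is step (b): exact attainment of $V$ while still landing on $S^\star$ modulo $\R\1$. The issue is that the $+\epsilon e$ and $-\epsilon e$ steps are each re-centered under a different $p_t$, so the excursion does not return exactly to its starting class unless only the non-constant component is tracked. I would argue this modulo $\R\1$ throughout, where the centering correction is invisible. I would then check continuity and unboundedness of the excursion cost via the closed form $\eta^2Q_\eta=\log\E_p e^{\eta z}$. Unboundedness holds because a centered $z$ with a coordinate tending to $+\infty$ under full-support $p$ has CGF tending to $\infty$.
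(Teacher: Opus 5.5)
Your proof is correct and is essentially the paper's argument: the paper also caps the value at $\Gamma/\eta+\eta V$ (there via $A_\Gamma\le\Gamma$ in the chain form \eqref{eq:chain-value}), and attains it by a Gibbs chain that ends on the sphere $\{\KL(p\|\pi)=\Gamma\}$ with the budget spent exactly, using cheap small-step approaches ($c_T\to0$) and an intermediate-value argument. Your two-phase path is an explicit instance of that chain, and since $\nu^\star$ has full support the first excursion step alone already has cost unbounded in $\epsilon$, so a single excursion suffices and $T_0=n+2$.
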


The mechanism is that nature must do two things at once: exhaust the budget, and finish on the sphere where the Gibbs posterior meets the worst comparator.
One round couples them rigidly and the value falls short by the mismatch; a second round decouples them, and from there on both can be met together.
Appendix~\ref{app:further-variable} carries the chain characterization this rests on, together with the terminal alignment functional $A_\Gamma(p):=\sup_{\nu:\,\KL(\nu\|\pi)\le\Gamma}\bigl[\KL(\nu\|\pi)-\KL(\nu\|p)\bigr]$ that it optimizes.
It also carries the closed form for the two-step gate, the numerical margin by which a varying scale improves on a constant one, and the unbounded-adaptation example that forces the clip at $\eta_t=1$ in the schedule below.

Past $T_0$ the envelope $2\sqrt{\Gamma V}$ is the constant-scale value exactly, with $T_0=2$ in every instance computed at $K\le5$, uniform and skewed priors alike, and $T_0=3$ only within a few percent of saturation.
The strictness at $T=1$ is thus a one-round phenomenon and does not persist.
What remains open at $T\ge2$ is the per-round-capped game, whose reachable set still grows with the horizon through $V=T\beta$ (Section~\ref{sec:discussion}).

The constant-scale argument supplies no guarantee for a learner that must set its scale online, without using $V$; the decomposition proved next quantifies that plug-in.

\subsection{The exact variable-temperature decomposition}

Let $\rho_{t,\eta}(i)\propto \pi(i)e^{-\eta C_t(i)}$.
Define
\begin{equation}\label{eq:A-potential}
A_t(\eta):=
-\frac{1}{\eta}\log\sum_i \pi(i)e^{-\eta C_t(i)},
\qquad
D_T:=\sum_{t=1}^{T-1}\bigl(A_t(\eta_t)-A_t(\eta_{t+1})\bigr)
\end{equation}
Equivalently, in centered coordinates, $D_T = \sum_{t=1}^{T-1}\bigl(W_{\eta_{t+1}}(S_t)-W_{\eta_t}(S_t)\bigr)$.

The drift $D_T$ is the algebraic loss from switching potentials between rounds.
Telescoping the per-round transport identity along the trajectory and grouping the switching points produces three independent terms: a centered RCGF per round (from each measurement-scale-fixed step), a drift (from round-to-round retempering), and a terminal relative-entropy transport (from comparing the comparator with the final posterior).
Each term is a clean Bregman / relative-entropy gap; no quadratic-variation surrogate is needed.
The decomposition and its sign rule are stated in~\cite{balsubramani2026adaptive}; the game coordinates here, and the drift's continuous-time limit (Proposition~\ref{prop:drift-continuous}), are the present development.
The sign rule $D_T\le 0$ for non-increasing schedules follows from $A_t(\eta)$ being nonincreasing in $\eta$ at fixed $C_t$.
This is visible from the variational form $A_t(\eta)=\min_{\nu\in\DeltaK}\{\inner{\nu}{C_t}+\eta^{-1}\KL(\nu\|\pi)\}$: increasing $\eta$ relaxes the relative-entropy penalty, so the minimum value cannot increase.

\begin{theorem}[Exact variable-temperature decomposition]
\label{thm:variable}
For the prior-retempered update \eqref{eq:bayes-update}, define
\begin{equation}\label{eq:Qt-def}
Q_t(c)
=
\frac{1}{\eta_t^2}\psi_{p_t}(-\eta_t;c_t),
\qquad
V_T(c):=\sum_{t=1}^T Q_t(c),
\qquad
B_T(\nu):=\frac{\KL(\nu\|\pi)-\KL(\nu\|\rho_{T,\eta_T})}{\eta_T}
\end{equation}
Then for every comparator $\nu\in\DeltaK$,
\begin{equation}\label{eq:exact-variable}
R_T^c(\nu)
=
\sum_{t=1}^T \eta_t Q_t(c)
+
D_T
+
B_T(\nu)
\end{equation}
If $(\eta_t)$ is nonincreasing, then $D_T\le 0$.
\end{theorem}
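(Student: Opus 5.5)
Work in the centered coordinates of Proposition~\ref{prop:centered}, where $R_T^c(\nu)=\inner{\nu}{S_T}$ and $p_t=G_{\eta_t}(S_{t-1})=\rho_{\eta_t,S_{t-1}}$. Since $S_t(i)=\sum_{s\le t}\mu_s-C_t(i)$, we have $W_\eta(S_t)=\sum_{s\le t}\mu_s+A_t(\eta)$. The additive constant $\sum_{s\le t}\mu_s$ is independent of $\eta$, so differences at a fixed state agree: $W_{\eta_{t+1}}(S_t)-W_{\eta_t}(S_t)=A_t(\eta_{t+1})-A_t(\eta_t)$.

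The first step is the per-round potential increment at the round's own scale. By Theorem~\ref{thm:fixed-bellman}, applied with $\eta=\eta_t$, $S=S_{t-1}$ and $p=p_t$, we get $W_{\eta_t}(S_t)-W_{\eta_t}(S_{t-1})=\eta_tQ_{\eta_t}(p_t,z_t)$. Directly, $\log\sum_i\pi(i)e^{\eta_t(S_{t-1}+z_t)(i)}-\log\sum_i\pi(i)e^{\eta_tS_{t-1}(i)}=\log\sum_ip_t(i)e^{\eta_tz_t(i)}$. Since $z_t=\mu_t-c_t$, this quantity equals $\eta_t^2Q_t(c)$ by \eqref{eq:Q-loss}. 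Writing $W_{\eta_{t+1}}(S_t)-W_{\eta_t}(S_{t-1})$ as this increment plus the retempering term $W_{\eta_{t+1}}(S_t)-W_{\eta_t}(S_t)$, summing over $t=1,\dots,T-1$, and adding the last increment at $t=T$ telescopes to
\[
W_{\eta_T}(S_T)-W_{\eta_1}(S_0)=\sum_{t=1}^T\eta_tQ_t(c)+D_T .
\]
Here $W_{\eta_1}(0)=0$ because $\pi$ is a probability vector.

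The second step supplies the terminal term through the Gibbs variational identity \eqref{eq:gibbs-var} in its exact, non-optimized form. For every $\nu$, $\inner{\nu}{S}-\eta^{-1}\KL(\nu\|\pi)=W_\eta(S)-\eta^{-1}\KL(\nu\|\rho_{\eta,S})$. To verify this, expand $\log(\rho_{\eta,S}/\pi)=\eta S-\eta W_\eta(S)$ and take the $\nu$-expectation. Applying it at $S=S_T$ and $\eta=\eta_T$, and noting $\rho_{\eta_T,S_T}=\rho_{T,\eta_T}$ (the constant shift cancels in normalization), gives $\inner{\nu}{S_T}=W_{\eta_T}(S_T)+B_T(\nu)$. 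Combining this with the telescoped sum yields \eqref{eq:exact-variable}. Equivalently, one could chain Proposition~\ref{prop:fixed-transport} across rounds. That route has to handle the mismatch $p_t^+\neq p_{t+1}$ when $\eta_{t+1}\neq\eta_t$, so the potential route above is cleaner.

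For the sign rule, use the variational form $A_t(\eta)=\min_\nu\{\inner{\nu}{C_t}+\eta^{-1}\KL(\nu\|\pi)\}$. Each objective is nonincreasing in $\eta$ because $\KL\ge0$, so $A_t$ is nonincreasing in $\eta$. Then $\eta_{t+1}\le\eta_t$ gives $A_t(\eta_t)\le A_t(\eta_{t+1})$, so each summand of $D_T$ is $\le0$. I expect no real obstacle. The only delicate point is bookkeeping: checking that the $\eta$-independent shift $\sum_s\mu_s$ between $W$ and $-A$ cancels in every drift and terminal term, and that $Q_t(c)$ in loss coordinates matches $Q_{\eta_t}(p_t,z_t)$ with the sign $\lambda=-\eta_t$.
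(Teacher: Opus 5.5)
Your argument is correct, up to one sign slip noted below, but it is organized differently from the paper's proof.

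\textbf{The paper's route.} The paper applies Proposition~\ref{prop:fixed-transport} round by round with $p^+=\rho_{t,\eta_t}$. This keeps the per-round comparator transport $\eta_t^{-1}(\KL(\nu\|\rho_{t-1,\eta_t})-\KL(\nu\|\rho_{t,\eta_t}))$ explicit. It then resolves the scale mismatch with the retempering identity \eqref{eq:retempering}, which converts each such term into $A_t(\eta_t)-A_{t-1}(\eta_t)-\inner{\nu}{c_t}$. The $A$'s telescope, and the leftovers from switching scale form $D_T$. A last use of \eqref{eq:retempering} at $(T,\eta_T)$ produces $B_T(\nu)$.

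\textbf{Your route.} You instead telescope the comparator-free potential $W$ in centered coordinates. That is exactly the paper's proof of \eqref{eq:loss-drift-telescope} in Proposition~\ref{prop:terminal-collapse}. You bring in $\nu$ only once, through the non-optimized Gibbs identity at the terminal state. That identity is \eqref{eq:retempering} written in centered coordinates, so the algebra is the same; you apply it once rather than every round.

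\textbf{What each ordering buys.} Yours gives the terminal collapse for free: $\sum_t\eta_tQ_t+D_T=W_{\eta_T}(S_T)$ holds independently of $\nu$. It also matches the exact-one-form reading of Theorem~\ref{thm:ito-unification}. The paper's ordering keeps a per-round comparator ledger, which is the form it reuses for the pressure game (Theorem~\ref{thm:pressure-identity}) and for the prior--posterior-ratio transport \eqref{eq:kl-transport-ppr}.

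\textbf{The sign slip.} Since $A_t(\eta)=-\eta^{-1}\log\sum_i\pi(i)e^{-\eta C_t(i)}$, one has $W_\eta(S_t)=\sum_{s\le t}\mu_s-A_t(\eta)$, not $+A_t(\eta)$. Your closing remark about the shift between $W$ and $-A$ already has it right. Hence $W_{\eta_{t+1}}(S_t)-W_{\eta_t}(S_t)=A_t(\eta_t)-A_t(\eta_{t+1})$, which is exactly the summand of $D_T$ in \eqref{eq:A-potential}.

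This is not cosmetic. Your decomposition uses the $W$-form of $D_T$, while your sign rule uses the $A$-form. With the relation as you first wrote it, the two forms would be negatives of each other, and the sign rule would come out reversed. With the correct sign they coincide and the proof closes.

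Alternatively, you can prove the sign rule directly in the $W$-form. The map $\eta\mapsto W_\eta(S)=\sup_\nu\{\inner{\nu}{S}-\eta^{-1}\KL(\nu\|\pi)\}$ is nondecreasing, because each objective is. So $\eta_{t+1}\le\eta_t$ gives $W_{\eta_{t+1}}(S_t)\le W_{\eta_t}(S_t)$ without any conversion.
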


This identity replaces the backward induction for the Gibbs family: the realized regret factors, pathwise and for every predictable schedule, into three terms that are then bounded separately, with no recursive Bellman equation to solve.

At fixed temperature the sequential structure collapses to a single log-partition: $D_T=0$, and the cumulative \emph{mix loss} $A_T(\eta_T)$---the cumulative loss of the Bayesian/Hedge mixture forecaster \cite{derooij2014adahedge}---is the terminal log-partition function, the negative log Bayes marginal likelihood of the data under prior $\pi$ and inverse temperature $\eta$.

The identity \eqref{eq:exact-variable} reads two ways.
The \emph{primal} reading is a path-information accounting, $\sum_t\eta_t Q_t$, $D_T$, and $B_T(\nu)$ the predictable book-keeping quantities incurred along the realized trajectory (Figure~\ref{fig:game-trajectory}).
The \emph{dual} reading evaluates the same quantities inside the Bellman relaxation of Game~\ref{game:cgf}, where at each round the Gibbs equalizer and nature's budget-saturating move form a saddle point of the relaxed one-step value worth $\eta_t q_t$ (Theorem~\ref{thm:fixed-bellman}), with $W_\eta$ acting as both primal potential and dual value function.
Section~\ref{sec:pressure} takes the symmetric dual from the other side; the retempering drift's continuous-time It\^o form subsumes the two as the traversal orders of one exact potential (Appendix~\ref{app:continuous-time}).

\subsection{The second-order equalizer and nature's allocation}

The constant-scale slice fixes the benchmark and requires no assumption on the schedule: each constant $\eta$ guarantees $R_T^c(\nu)\le\Gamma/\eta+\eta V$, and the oracle scale $\eta^\star(V)=\sqrt{\Gamma/V}$ equalizes the two costs at the AM--GM value $2\sqrt{\Gamma V}$ of \eqref{eq:horizon-free-bound}.
The oracle scale consumes one input the online learner does not have, the total intrinsic time.
The \emph{second-order equalizer} is the plug-in rule that removes that input, substituting the running total $V_{t-1}(c)$---the part of the intrinsic time already revealed---for $V$ in the oracle scale: fix $\Gamma>0$ and $C>0$ and let
\begin{equation}\label{eq:schedule}
\eta_t
=
\begin{cases}
1, & V_{t-1}(c)=0,\\[0.4em]
\min\left\{1,\ C\sqrt{\Gamma/V_{t-1}(c)}\right\}, & V_{t-1}(c)>0.
\end{cases}
\end{equation}
The clip at $1$ handles the cold start, and the schedule is computable from realized data with no knowledge of the horizon or of $V$.

Neither of the two properties of \eqref{eq:schedule} is assumed; both are derived.
First, $V_{t-1}(c)$ is nondecreasing in $t$ and $v\mapsto\min\{1,C\sqrt{\Gamma/v}\}$ is nonincreasing, so the scale is nonincreasing along every loss sequence---in temperature terms, the schedule warms monotonically.
Second, the sign rule of Theorem~\ref{thm:variable} then makes the retempering drift a gain to the learner, $D_T\le 0$, so the regret is bounded by the intrinsic-time loss plus the transport.
The schedule earns its name by pinning the ratio of marginal loss to marginal transport at $2C^2$ along the whole path, equal round by round at $C=1/\sqrt2$ (Appendix~\ref{app:riemann-loss}).

The intrinsic-time loss is then a Riemann sum of a falling scale, evaluated at each interval's left endpoint, so it sits above the corresponding integral and exceeds it by at most the largest single increment $Q_T^*(c)$.
That bracketing is the two-sided envelope below, whose two sides differ by the discretization cost of sum against integral.
$V_T(c)$ is the cumulative centered RCGF along the realized trajectory, which is a different quantity from a sum of variances. The small-$\eta$ approximation $V_T\approx\tfrac12\sum_t \Var_{p_t}(c_t)$ recovers classical second-order rates. Away from that approximation, when nature plays heavy-tailed centered moves, the variance-relaxed game is loose while the exact RCGF schedule remains tight.

\begin{theorem}[Two-sided square-root envelope]
\label{thm:sqrt-envelope}
Let $Q_T^*(c):=\max_{1\le t\le T}Q_t(c)$.
Under \eqref{eq:schedule},
\begin{equation}\label{eq:two-sided-envelope}
2C\sqrt{\Gamma V_T(c)}-C^2\Gamma
\;\le\;
\sum_{t=1}^T \eta_t Q_t(c)
\;\le\;
Q_T^*(c)+2C\sqrt{\Gamma V_T(c)}
\end{equation}
Consequently, if $\KL(\nu\|\pi)\le \Gamma$,
\begin{equation}\label{eq:pathwise-regret}
R_T^c(\nu)
\le
\Gamma
+
Q_T^*(c)
+
(2C+C^{-1})\sqrt{\Gamma V_T(c)}
\end{equation}
At $C=1/\sqrt{2}$ the leading coefficient is $2C+C^{-1}=2\sqrt2$: every feasible sequence of Game~\ref{game:cgf} incurs
\[
R_T^c(\nu)
\le
2\sqrt{2}\,\sqrt{\Gamma V_T(c)}+O\bigl(\Gamma+Q_T^*(c)\bigr).
\]
\end{theorem}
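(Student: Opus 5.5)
The plan is to treat the intrinsic-time loss as a left-endpoint Riemann sum of a nonincreasing function. Then plug the resulting envelope into the exact decomposition of Theorem~\ref{thm:variable}.

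\emph{Setup.} Write $V_t:=V_t(c)=\sum_{s\le t}Q_s(c)$ with $V_0=0$, and set $f(v):=\min\{1,C\sqrt{\Gamma/v}\}$ with $f(0)=1$. Then \eqref{eq:schedule} reads $\eta_t=f(V_{t-1})$. Since $Q_t\ge0$ (Proposition~\ref{prop:tilted-var}), the sequence $V_t$ is nondecreasing, and
\[
\sum_t\eta_tQ_t=\sum_t f(V_{t-1})(V_t-V_{t-1}).
\]
Let $F(v):=\int_0^v f$. Computing directly, $F(v)=v$ for $v\le C^2\Gamma$, and $F(v)=2C\sqrt{\Gamma v}-C^2\Gamma$ for $v\ge C^2\Gamma$.

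\emph{Lower bound.} Because $f$ is nonincreasing, each summand satisfies $f(V_{t-1})(V_t-V_{t-1})\ge\int_{V_{t-1}}^{V_t}f$. Summing gives $\sum_t\eta_tQ_t\ge F(V_T)$. On the branch $v\le C^2\Gamma$ we have $v\ge 2C\sqrt{\Gamma v}-C^2\Gamma$, since the difference is $(\sqrt v-C\sqrt\Gamma)^2\ge0$. So $F(v)\ge 2C\sqrt{\Gamma v}-C^2\Gamma$ for all $v$, which is the left side of \eqref{eq:two-sided-envelope}.

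\emph{Upper bound.} This is the step needing care, since it is where the edge term $Q_T^*(c)$ must appear. Each summand splits as
\[
f(V_{t-1})\Delta_t=f(V_t)\Delta_t+\bigl(f(V_{t-1})-f(V_t)\bigr)\Delta_t,
\qquad \Delta_t:=V_t-V_{t-1}.
\]
For the first piece, $f(V_t)\Delta_t\le\int_{V_{t-1}}^{V_t}f$, again by monotonicity. For the second piece, $\Delta_t=Q_t\le Q_T^*(c)$, and the bracket is nonnegative and telescopes to at most $f(0)-f(V_T)\le1$. Hence
\[
\sum_t\eta_tQ_t\le F(V_T)+Q_T^*(c).
\]
It remains to show $F(v)\le 2C\sqrt{\Gamma v}$. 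On the second branch this is immediate. On the first branch $v\le C^2\Gamma$, so $\sqrt v\le C\sqrt\Gamma$ and $v\le C\sqrt{\Gamma v}\le 2C\sqrt{\Gamma v}$. This gives the right side of \eqref{eq:two-sided-envelope}.

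\emph{Regret.} Since $f$ is nonincreasing and $V_{t-1}$ is nondecreasing, the schedule is nonincreasing, so Theorem~\ref{thm:variable} gives $D_T\le0$. Dropping $\KL(\nu\|\rho_{T,\eta_T})\ge0$ and using $\KL(\nu\|\pi)\le\Gamma$ gives $B_T(\nu)\le\Gamma/\eta_T$. Moreover
\[
\frac{1}{\eta_T}=\max\Bigl\{1,\frac{\sqrt{V_{T-1}}}{C\sqrt\Gamma}\Bigr\}\le 1+\frac{\sqrt{V_T}}{C\sqrt\Gamma},
\]
so $B_T(\nu)\le\Gamma+C^{-1}\sqrt{\Gamma V_T}$. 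Adding this to the upper envelope inside \eqref{eq:exact-variable} yields \eqref{eq:pathwise-regret}. Setting $C=1/\sqrt2$ gives the coefficient $2C+C^{-1}=2\sqrt2$.
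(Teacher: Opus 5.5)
Your proposal is correct and follows the paper's proof essentially step for step. It treats the loss as a left-endpoint Riemann sum of the nonincreasing scale against $\int_0^{V_T}\min\{1,C\sqrt{\Gamma/v}\}\,dv$, obtains the $Q_T^*(c)$ edge term from the telescoping scale differences, and then uses $D_T\le 0$ together with $B_T(\nu)\le\Gamma/\eta_T\le\Gamma+C^{-1}\sqrt{\Gamma V_T(c)}$. The only difference is cosmetic and on the lower side: the paper splits the run into a clipped phase and an unclipped phase and completes the square in $V_{t_0}$, whereas your single comparison $\sum_t\eta_tQ_t\ge\int_0^{V_T}f$, plus the pointwise bound on the first branch, reaches the same inequality more directly.
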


The unit coefficient on $Q_T^*(c)$ meets the necessity floor of Lemma~\ref{rem:minimax} with equality; the same envelope, with an additional $C^2\Gamma$ slack on the upper side, is established in~\cite{balsubramani2026adaptive}.
This envelope is the bound form of the exact identity \eqref{eq:exact-variable}: the left and right sides sandwich the cumulative loss $\sum_t\eta_t Q_t(c)$ that the identity isolates, and the upper side, added to $D_T\le 0$ and the terminal transport $B_T(\nu)\le\Gamma/\eta_T$, yields \eqref{eq:pathwise-regret}.
The gap between the two sides---the edge term $Q_T^*(c)$ and the $C^2\Gamma$ initialization overhead---is the discretization cost identified above.
Against the known-$V$ benchmark \eqref{eq:horizon-free-bound}, the constant oracle scale attains $2\sqrt{\Gamma V}$ using $V$ as an input; the plug-in schedule attains the same square-root rate \emph{pathwise}, at the realized $V_T(c)\le V$ and without $V$ as an input, up to the edge terms.
A regularized variant $\eta_t=C\sqrt{\Gamma/(V_{t-1}(c)+\Gamma)}$ replaces the clipping at $\eta_t=1$ by the offset $\Gamma$ and satisfies the same envelope with $V_T(c)$ replaced by $V_T(c)+\Gamma$ (Appendix~\ref{app:proofs-variable}).
It is the form used in the self-play ledger of Section~\ref{sec:games}.

Nature's side of the schedule is the allocation problem $\max\{\sum_t\eta_t Q_t: Q_t\ge0,\ \sum_t Q_t\le V\}$.
Against a pre-committed nonincreasing sequence this is a linear program solved by front-loading, filling the largest-scale early rounds first.
Against the self-tuned schedule \eqref{eq:schedule} the Riemann-sum bracketing makes the loss allocation-insensitive up to the edge terms --- the equalizer property in time --- so nature's remaining leverage sits exactly at the edges, placing its mass inside the clipped cold start or concentrating in a single round.
Both attacks are quantified by the one-round witnesses below.

\begin{lemma}[Edge-term necessity]
\label{rem:minimax}
Both edge terms in Theorem~\ref{thm:sqrt-envelope} are forced by discrete effects, each by a one-round witness: the initialization constant $C^2\Gamma$ is attained exactly, and the single-jump witness shows the $Q_T^\star$ coefficient cannot fall below~$1$.
\end{lemma}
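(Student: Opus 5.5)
The plan is to build both witnesses at the horizon $T=1$. There the schedule \eqref{eq:schedule} is forced to its cold start, $\eta_1=1$ because $V_0(c)=0$. Hence every quantity in \eqref{eq:two-sided-envelope} collapses to a function of the single number $x:=Q_1(c)$: we have $\sum_t\eta_tQ_t(c)=x$ and $V_1(c)=Q_1^*(c)=x$. The first step is therefore a \emph{realizability lemma}: at scale $\eta=1$ and any interior play $p$, every value $x\in[0,\infty)$ is attained as $Q_1(p,z)$ by some centered move $z$.

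For the realizability lemma, I fix a non-constant centered direction $w$ with $\inner{p}{w}=0$ and consider $z=\lambda w$ for $\lambda\ge0$. The map $\lambda\mapsto Q_1(p,\lambda w)=\log\sum_ip(i)e^{\lambda w(i)}$ is continuous and vanishes at $\lambda=0$. It is strictly positive for $\lambda>0$ by Proposition~\ref{prop:tilted-var}. It grows like $\lambda\max_iw(i)\to\infty$, since $\max_i w(i)>0$ for a non-constant centered $w$. By the intermediate value theorem, the map is onto $[0,\infty)$. Translating back to loss coordinates, $c_1=-z$ realizes $Q_1(c)=x$.

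\emph{Initialization constant.} With $T=1$ the lower side of \eqref{eq:two-sided-envelope} reads $2C\sqrt{\Gamma x}-C^2\Gamma\le x$. The slack is
\[
x-2C\sqrt{\Gamma x}+C^2\Gamma=\bigl(\sqrt x-C\sqrt\Gamma\bigr)^2,
\]
which vanishes exactly at $x=C^2\Gamma$. Choosing the move from the realizability lemma with $Q_1(c)=C^2\Gamma$ makes the lower bound an equality. So the additive constant $C^2\Gamma$ cannot be reduced: any smaller constant $\kappa<C^2\Gamma$ would give a lower bound $2C\sqrt{\Gamma x}-\kappa$ that exceeds the realized loss at this witness.

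\emph{Single jump.} Suppose the upper side held with coefficient $a<1$ on the edge term, i.e.\ $\sum_t\eta_tQ_t\le aQ_T^*+2C\sqrt{\Gamma V_T}$. At $T=1$ this reads $(1-a)x\le2C\sqrt{\Gamma x}$, which fails once
\[
x>\frac{4C^2\Gamma}{(1-a)^2}.
\]
The realizability lemma supplies such a move, so the coefficient cannot fall below $1$. Hence the unit coefficient in Theorem~\ref{thm:sqrt-envelope} is tight.

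The only step needing care is the realizability lemma: both witnesses require nature to place an \emph{arbitrary} amount of intrinsic time into one round at the clipped scale. Unboundedness of the RCGF along a ray is what makes the single-jump witness work, so I would verify that limit explicitly. The rest is the one-line algebra above.

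One could also place the jump after a warm-up phase, so that $\eta_t<1$ at the jump. That variant is unnecessary: the statement asks only for one-round witnesses, and at $T=1$ the clip at $\eta_1=1$ is exactly what removes any dependence on the schedule.
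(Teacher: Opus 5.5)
Your proposal is correct and follows the paper's proof. Both arguments use $T=1$ witnesses at the clipped scale $\eta_1=1$: the choice $Q_1=C^2\Gamma$ makes the lower envelope tight, and a single jump $Q_1=q\gg\Gamma$ defeats any coefficient below $1$ because $2C\sqrt{\Gamma q}=o(q)$. Your realizability lemma and the explicit threshold $x>4C^2\Gamma/(1-a)^2$ only make explicit what the paper leaves implicit, namely that nature can place any prescribed amount of intrinsic time in one round.
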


\subsection{The active-constraint game and pressure targets}
\label{sec:pressure}

A dual sequential game reverses the round order and drops nature's cap:
nature moves first, unconstrained,
and the learner answers by choosing the scale that makes the one-step constraint active --- holding with equality --- at a pressure target of its choosing.

The target is the round's loss itself.
For a centered move that is not $p$-a.s.\ constant and any target $b\in(0,\max_{i\in\supp(p)}z(i))$, there is a unique scale $\eta>0$ solving $\log\sum_i p(i)e^{\eta z(i)}=\eta b$ (Proposition~\ref{prop:active-scale}); the left side is $\eta^2Q_\eta(p,z)$ by the definition \eqref{eq:Q-centered}, so the equation asks for $b=\eta\,Q_\eta(p,z)$ and the ratio $b/\eta$ is the round's centered RCGF exactly.
The learner names the loss it will take this round, and the equation returns the scale that delivers it.

Game~\ref{game:pressure} is Game~\ref{game:central} with three parts changed: nature reveals its centered move first, no cap is imposed, and the learner answers with the target-induced scale, its update pinned to the Gibbs iterate $p_{t+1}(i)\propto p_t(i)e^{\eta_t z_t(i)}$.
The terminal payoff is unchanged, the comparator-worst loss along any trajectory again $L_\Gamma(S_T)$ (Proposition~\ref{prop:terminal-collapse}).

\begin{theorem}[Exact active-constraint identity]
\label{thm:pressure-identity}
Under Game~\ref{game:pressure}, for every comparator $\nu\in\DeltaK$,
\begin{equation}\label{eq:pressure-identity}
\sum_{t=1}^T \eta_t\bigl(\inner{\nu}{z_t}-b_t\bigr)
=
\KL(\nu\|p_1)-\KL(\nu\|p_{T+1})
\end{equation}
Equivalently, with $\kappa_t:=b_t/\eta_t$,
\begin{equation}\label{eq:pressure-quadratic}
\sum_{t=1}^T \eta_t\inner{\nu}{z_t}
=
\sum_{t=1}^T \kappa_t\eta_t^2
+
\KL(\nu\|p_1)-\KL(\nu\|p_{T+1})
\end{equation}
Moreover, $\kappa_t=\frac{1}{2}\Var_{p_t}(z_t)+O(\eta_t)$ as $\eta_t\to 0$, so the usual quadratic-variation penalties are the small-$\eta$ limit of the exact pressure-target line search.
\end{theorem}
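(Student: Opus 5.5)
The plan is to reduce the identity to a single round by a per-round relative-entropy telescoping, in the spirit of Proposition~\ref{prop:fixed-transport}, but with the round's own scale $\eta_t$ and the Gibbs iterate $p_{t+1}(i)\propto p_t(i)e^{\eta_t z_t(i)}$ of Game~\ref{game:pressure}. For a fixed round, write $Z_t:=\sum_i p_t(i)e^{\eta_t z_t(i)}$, so that $\log(p_{t+1}(i)/p_t(i))=\eta_t z_t(i)-\log Z_t$. Averaging under $\nu$ gives
\[
\KL(\nu\|p_t)-\KL(\nu\|p_{t+1})=\E_\nu\bigl[\log(p_{t+1}/p_t)\bigr]=\eta_t\inner{\nu}{z_t}-\log Z_t .
\]
This is a purely algebraic step, valid for any $\nu\in\DeltaK$, and it uses the full support of $p_t$, which is inherited inductively from $p_1$ under the multiplicative update.

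Next I would invoke the active-constraint choice of scale. By Proposition~\ref{prop:active-scale}, $\eta_t$ is the unique positive root of $\log Z_t=\eta_t b_t$. Substituting into the display turns the per-round identity into $\eta_t(\inner{\nu}{z_t}-b_t)=\KL(\nu\|p_t)-\KL(\nu\|p_{t+1})$. Summing over $t=1,\dots,T$ telescopes the right side to $\KL(\nu\|p_1)-\KL(\nu\|p_{T+1})$, which is \eqref{eq:pressure-identity}. For \eqref{eq:pressure-quadratic} I only need to rewrite $\eta_t b_t=\kappa_t\eta_t^2$ using $\kappa_t:=b_t/\eta_t$. Since $\log Z_t=\eta_t^2Q_{\eta_t}(p_t,z_t)$ by \eqref{eq:Q-centered}, this also identifies $\kappa_t=Q_{\eta_t}(p_t,z_t)$, which is the remark in the text that $b/\eta$ is the round's RCGF exactly.

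For the asymptotic claim, the identification $\kappa_t=Q_{\eta_t}(p_t,z_t)$ reduces matters to the behaviour of the RCGF at small scale. I would use the tilted-variance representation of Proposition~\ref{prop:tilted-var}: $Q_\eta=\int_0^1(1-s)\Var_{p_s}(z)\,ds$. The map $s\eta\mapsto\Var_{p_s}(z)$ is smooth on a finite set, with derivative in the tilt parameter equal to the third central moment under $p_s$, which is bounded by $\range(z)^3$. Hence $\Var_{p_s}(z)=\Var_p(z)+O(s\eta)$ uniformly in $s\in[0,1]$. Integrating against $(1-s)$, whose integral is $1/2$, gives $\kappa_t=\tfrac12\Var_{p_t}(z_t)+O(\eta_t)$, sharpening Corollary~\ref{cor:variance-shadow} to a rate.

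The main obstacle is making sense of the limit $\eta_t\to0$. In the game, $\eta_t$ is not a free parameter but is determined by the target $b_t$; as the root of $\log Z_t=\eta_t b_t$, it tends to $0$ exactly when $b_t\downarrow0$. The statement is therefore best read at fixed $(p_t,z_t)$, with the scale (equivalently the target) driven to zero. The $O(\eta_t)$ constant then depends only on $\range(z_t)$, so no uniformity in the trajectory is needed beyond that.
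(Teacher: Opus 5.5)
Your proposal is correct and follows the paper's proof. The per-round log-ratio computation is the one-step transport identity of Proposition~\ref{prop:fixed-transport} at scale $\eta_t$; you substitute the active-scale equation $\log Z_t=\eta_t b_t$ and telescope, and \eqref{eq:pressure-quadratic} is then only a rewriting. The one difference is in the small-scale claim: you use the tilted-variance representation of Proposition~\ref{prop:tilted-var} with a third-cumulant bound where the paper Taylor-expands the log-MGF to second order, and your route has the minor benefit of an explicit $O(\eta_t)$ constant depending only on $\range(z_t)$.
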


The small-scale expansion of $\kappa_t$ is Corollary~\ref{cor:variance-shadow} read in these coordinates, since $\kappa_t$ is the round's centered RCGF.

Two dual games therefore share the one-step transport identity. 
In \emph{Game~A}, the retempered second-order game, the learner commits to $\eta_t$ first and benefits from warming ($D_T\le0$ for nonincreasing $\eta_t$).
In \emph{Game~B}, the pressure game, nature reveals its move first and the learner solves a local line search, favored by cooling.
The pressure target $b_t$ is the operational variable and the scale $\eta_t$ the dual variable that opens up to satisfy it, which puts boosting (pressure $=$ per-round margin) and concentration (pressure $=$ per-round RCGF) on the same footing.
The two readings exchange which quantity is primal and which the Lagrange multiplier, leaving the equilibrium unchanged.

\begin{proposition}[Terminal collapse of the variable-temperature ledger]
\label{prop:terminal-collapse}
Run the variable-temperature game (Game~\ref{game:central}) with any predictable schedule $(\eta_t)_{t=1}^T$ and let $S_T=\sum_{t=1}^T z_t$ be the realized terminal score, $\rho_{T,\eta_T}(i)\propto\pi(i)e^{\eta_T S_T(i)}$ the terminal posterior.
With $\sum_t\eta_t Q_t$, $D_T$, and $B_T(\nu)$ as in \eqref{eq:exact-variable},
\begin{equation}\label{eq:loss-drift-telescope}
\sum_{t=1}^T \eta_t Q_t + D_T = W_{\eta_T}(S_T)
\end{equation}
and consequently the comparator-worst regret depends on the realized state alone,
\begin{equation}\label{eq:regret-collapse}
\sup_{\nu:\,\KL(\nu\|\pi)\le\Gamma} R_T^c(\nu) = L_\Gamma(S_T)
\end{equation}
\end{proposition}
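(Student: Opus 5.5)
The plan is to prove \eqref{eq:loss-drift-telescope} by telescoping the log-partition potential along the trajectory, and then obtain \eqref{eq:regret-collapse} from Theorem~\ref{thm:variable} together with the dual form of Proposition~\ref{prop:terminal-dual}.

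\textbf{Step 1: the loss telescope.} Under the Gibbs play, $p_t=\rho_{\eta_t,S_{t-1}}$ by Proposition~\ref{prop:centered}. Theorem~\ref{thm:fixed-bellman}, applied at the single scale $\eta_t$ and state $S_{t-1}$, gives the per-round increment
\[
W_{\eta_t}(S_t)-W_{\eta_t}(S_{t-1})=\eta_t Q_{\eta_t}(p_t,z_t)=\eta_t Q_t .
\]
This step is scale-local, so it needs no constant-scale assumption. The drift in centered form is $D_T=\sum_{t=1}^{T-1}\bigl(W_{\eta_{t+1}}(S_t)-W_{\eta_t}(S_t)\bigr)$. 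Adding the two sums, every intermediate term cancels in the chain
\[
W_{\eta_1}(S_0)\to W_{\eta_1}(S_1)\to W_{\eta_2}(S_1)\to\cdots\to W_{\eta_T}(S_T).
\]
Since $W_{\eta_1}(S_0)=W_{\eta_1}(0)=\eta_1^{-1}\log\sum_i\pi(i)=0$, the chain leaves exactly $W_{\eta_T}(S_T)$. Predictability of the schedule plays no role here: the identity is pathwise.

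\textbf{Step 2: regret in terms of the terminal state.} Substitute \eqref{eq:loss-drift-telescope} into \eqref{eq:exact-variable}, which gives $R_T^c(\nu)=W_{\eta_T}(S_T)+B_T(\nu)$. As a consistency check, I would use $\eta_TB_T(\nu)=\E_\nu[\log(\rho_{T,\eta_T}/\pi)]=\eta_T\inner{\nu}{S_T}-\eta_TW_{\eta_T}(S_T)$. This reproduces $R_T^c(\nu)=\inner{\nu}{S_T}$, matching Proposition~\ref{prop:centered}. One sign convention needs checking: the terminal posterior here is written $\pi e^{\eta_TS_T}$, while \eqref{eq:Qt-def} writes $\pi e^{-\eta_TC_T}$. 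These agree because $S_T=\sum_t\mu_t\1-C_T$ and the constant cancels under normalization.

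\textbf{Step 3: the comparator supremum.} Because $R_T^c(\nu)=\inner{\nu}{S_T}$ depends on the play only through $S_T$, the supremum over the ball $\KL(\nu\|\pi)\le\Gamma$ is by definition \eqref{eq:terminal-support} equal to $L_\Gamma(S_T)$. The point worth making explicit is that the decomposition's dependence on the schedule $(\eta_t)$, through $D_T$ and $\sum_t\eta_tQ_t$, is entirely absorbed into $W_{\eta_T}(S_T)$.

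Optionally, the same value can be read through the dual form. We have $\sup_\nu B_T(\nu)\le\Gamma/\eta_T-\eta_T^{-1}\inf_\nu\KL(\nu\|\rho_{T,\eta_T})$, and Proposition~\ref{prop:terminal-dual} says that $\Gamma/\eta+W_\eta(S_T)$ is minimized in $\eta$ at $L_\Gamma(S_T)$. This confirms the direct computation and shows the looseness of a fixed $\eta_T$.

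\textbf{Main obstacle.} The main care point is bookkeeping the indices in the interleaved telescope: the drift runs only to $T-1$, and the first potential must be evaluated at $\eta_1$ and $S_0=0$. A secondary check is that Theorem~\ref{thm:fixed-bellman}'s increment identity is a statement about a single step, so it applies per round with varying scale. I would verify this directly by expanding $\sum_i\pi(i)e^{\eta_tS_t(i)}=\bigl(\sum_j\pi(j)e^{\eta_tS_{t-1}(j)}\bigr)\sum_ip_t(i)e^{\eta_tz_t(i)}$.
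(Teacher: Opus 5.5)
Your proposal is correct and matches the paper's proof. Both use the same interleaved telescope of the per-round increments $W_{\eta_t}(S_t)-W_{\eta_t}(S_{t-1})=\eta_tQ_t$ against the centered drift terms, anchored at $W_{\eta_1}(0)=0$, then rewrite $B_T(\nu)=\inner{\nu}{S_T}-W_{\eta_T}(S_T)$ so that the supremum over the relative-entropy ball is $L_\Gamma(S_T)$ by definition. The paper takes the supremum of $B_T$ first and then adds back $W_{\eta_T}(S_T)$, which is the same computation as your route through $R_T^c(\nu)=\inner{\nu}{S_T}$; your optional dual remark yields only an upper bound and is not needed.
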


The three horizon terms are therefore not independent: against the worst comparator they telescope onto a function of the realized state alone, so $D_T$ is pinned by the state and the losses, and the schedule enters the worst-case regret only through which terminal state it lets nature reach.
The same collapse holds for the pressure game, so Games~A and~B give the same comparator-worst regret along any common score trajectory and the horizon comparison $\Val(\mathrm{B})\le\Val(\mathrm{A})$ reduces to reachability.

\section{Comparator classes, concentration, and luckiness}
\label{sec:concentration}

Changing the comparator class, the sufficient statistic the learner scores, or the terminal event turns the same game into quantile and tracking regret, side information and optimism, event-restricted concentration (Gibbs conditioning, Sanov, Chernoff, Cram\'er), stochastic luckiness (guarantees that sharpen on easy loss sequences), and confidence sets. These are all terminal slices packaged by the terminal payoff $L_\Gamma$.

\subsection{Changing comparator classes}

Quantile regret is the simplest comparator change.
If $\pi$ is uniform and $\nu_A$ is uniform on a set $A\subseteq[K]$, then $\KL(\nu_A\|\pi)=\log(K/|A|)$.
Every PAC-Bayes/concentration statement immediately yields the corresponding fixed-budget quantile statement by choosing~$A$ as a hindsight set of good experts.

A simple dyadic meta-controller makes the guarantee simultaneous over all quantiles: one worker copy of the second-order game per budget $\Gamma_j=2^j$, aggregated by a logarithmic-budget controller playing the mixture of worker predictions. This adds a controller term of $O(\log\log\log K)$ plus a controller intrinsic-time term $O(\sqrt{\Gamma^{\mathrm{ctl}}V_T^{\mathrm{ctl}}})$ in the controller's own budget and intrinsic time.
The dyadic grid reaching the saturated budget has $O(\log\log K)$ workers, and the additive cost is the comparator complexity of a uniform prior over them, under the envelope of Theorem~\ref{thm:sqrt-envelope} run at the controller level.

A dynamic comparator path $\nu_1,\dots,\nu_T$ is handled by the same calculus:
\begin{equation}\label{eq:dynamic-comparator}
R_T^{c,\mathrm{dyn}}(\nu_{1:T})
=
D_T+B_T(\nu_T)+\sum_{t=1}^T \eta_t Q_t(c)
+
\sum_{t=1}^{T-1}\inner{\nu_{t+1}-\nu_t}{C_t}
\end{equation}
Only the last term is new.
It is exact before being bounded, so switching/tracking regret enters as a comparator-path transport term, controlled for losses in $[0,1]$ by the path's total variation, $|\sum_{t=1}^{T-1}\inner{\nu_{t+1}-\nu_t}{C_t}|\le 2\sum_{t=1}^{T-1}t\,\mathrm{TV}(\nu_{t+1},\nu_t)$ under the convention $\mathrm{TV}(p,q)=\tfrac12\|p-q\|_1$.

\subsection{The sufficient statistic as a design choice}
\label{sec:sufficiency}
The concentration game is defined relative to whatever sufficient statistic the learner scores, with no ambient loss vector behind it; varying that choice recovers side information, optimism, specialist experts, sparsity, and bounded-influence robustness as special cases of one game.
If the learner applies Bayes to raw losses, optimistic residuals, importance-weighted estimates, signed margins, one-sided shortfalls, or bounded-influence transforms, then the resulting state, intrinsic time, and move constraint are genuinely different.

Optimism scores a residual against a predictable baseline. Suppose the original losses are~$\ell_t$ and a predictable baseline~$m_t$ is available.
Setting $c_t=\ell_t-m_t$ yields
\begin{equation}\label{eq:optimistic}
\sum_{t=1}^T \inner{p_t-\nu}{\ell_t}
=
D_T+B_T(\nu)+\sum_{t=1}^T \eta_tQ_t(\ell-m)
+
\sum_{t=1}^T \inner{p_t-\nu}{m_t}
\end{equation}
Only the residual sequence $\ell_t-m_t$ contributes to the intrinsic-time loss; the final term records the predictable mismatch.
Slow temporal drift, optimistic game play, model-based predictions, and martingale compensators are all instances.

Six further choices follow the same template, each entering the entropic FTRL update as the cumulative statistic $C_{t-1}(i)$ of a per-round score, and each with its own consequence: a confidence-weighted score $\sum_{s<t}\beta_s(i)\ell_s(i)$ gives soft specialists, the framework for confidence-rated experts \cite{chaudhuri-freund-hsu} and for the side-information device of some adaptive analyses.
A predictable offset gives variance corrections; the excess-loss positive part $e_t^+(i):=(\inner{p_t}{c_t}-c_t(i))_+$ seeks sparsity, with $R_T^c(\nu)\le\sum_t\inner{\nu}{e_t^+}$ and $\range(e_t^+)\le\range(e_t)$; and a Winsorization or Catoni transform \cite{catoni2007pacbayesian} caps the loss on heavy tails.
In each case $p_t$ is the max-entropy distribution given the chosen statistic, $Q_t$ is the RCGF of the new observation in the corresponding exponential family, and a mismatch term records exactly how the game on the raw losses differs from the reduced game on the transformed ones.
Heavy-tailed scores need the bounded-influence transforms.
A raw move with unbounded coordinates can send the one-round RCGF to $+\infty$, and running the same game on a truncated score restores a finite round while leaving every identity intact.
The truncation is then carried as an exact additive bias in the comparator's gain.
Table~\ref{tab:sufficient-statistics} collects them; specialist aggregation also scales, combining an ensemble's predictions on unlabeled data through a minimax game of the same kind \cite{balsubramanifreund15bspecialistminimax,BF16}.
The framework absorbs each as a sufficient-statistic choice, and no separate algorithm is needed.

\subsection{Event-restricted terminal games, Gibbs conditioning, and Sanov}
\label{sec:event-games}

The terminal support problem may restrict comparators to an event $A\subseteq[K]$, where the standard form uses a relative-entropy ball.

\begin{proposition}[Event-restricted terminal game]
\label{prop:event-restricted}
Fix an event $A\subseteq[K]$ and $\eta>0$.
Then
\begin{equation}\label{eq:event-restricted}
\sup_{\nu\in\DeltaK,\ \supp(\nu)\subseteq A}
\left\{
\inner{\nu}{S}-\frac{1}{\eta}\KL(\nu\|\pi)
\right\}
=
\frac{1}{\eta}\log\sum_{i\in A}\pi(i)e^{\eta S(i)}
\end{equation}
Equivalently,
\begin{equation}\label{eq:gibbs-conditioning}
\frac{1}{\eta}\log\sum_{i\in A}\pi(i)e^{\eta S(i)}
=
\frac{\log\pi(A)}{\eta}
+
\frac{1}{\eta}\log
\E_{i\sim \pi(\cdot\mid A)}e^{\eta S(i)}
\end{equation}
The optimizer is the Gibbs tilt of the conditioned prior $\pi(\cdot\mid A)$.
\end{proposition}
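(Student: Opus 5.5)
The plan is to reduce Proposition~\ref{prop:event-restricted} to the Gibbs variational identity \eqref{eq:gibbs-var} of Proposition~\ref{prop:terminal-dual}, applied on the restricted ground set $A$ with the conditioned prior $\pi(\cdot\mid A)$. First I will dispose of the degenerate case: if $A=\emptyset$ the left side is a supremum over the empty set, the right side is $\eta^{-1}\log 0$, and both equal $-\infty$. Otherwise $\pi(A)>0$ because $\pi$ is strictly positive. Any $\nu$ with $\supp(\nu)\subseteq A$ is a distribution on $A$, and I will compute its relative entropy against the conditioned prior:
\[
\KL(\nu\|\pi)=\sum_{i\in A}\nu(i)\log\frac{\nu(i)}{\pi(i)}=\KL\bigl(\nu\|\pi(\cdot\mid A)\bigr)-\log\pi(A).
\]
This uses $\pi(i)=\pi(A)\,\pi(i\mid A)$ for $i\in A$ and $\sum_{i\in A}\nu(i)=1$.

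Substituting this into the objective gives
\[
\inner{\nu}{S}-\eta^{-1}\KL(\nu\|\pi)=\inner{\nu}{S|_A}-\eta^{-1}\KL\bigl(\nu\|\pi(\cdot\mid A)\bigr)+\eta^{-1}\log\pi(A).
\]
The constant term $\eta^{-1}\log\pi(A)$ does not depend on $\nu$. What remains is exactly the supremum in \eqref{eq:gibbs-var}, posed on the finite set $A$ with strictly positive prior $\pi(\cdot\mid A)$ and state $S|_A$. By Proposition~\ref{prop:terminal-dual}, or equivalently Proposition~\ref{prop:robust-bayes}(a), that supremum equals $\eta^{-1}\log\E_{i\sim\pi(\cdot\mid A)}e^{\eta S(i)}$. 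It is attained uniquely at the Gibbs tilt $\nu^\star(i)\propto\pi(i\mid A)e^{\eta S(i)}$ on $A$, extended by zero off $A$. This gives \eqref{eq:gibbs-conditioning} directly, together with the identification of the optimizer as the tilt of the conditioned prior.

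To pass from \eqref{eq:gibbs-conditioning} to \eqref{eq:event-restricted}, I will expand the conditional expectation:
\[
\log\pi(A)+\log\sum_{i\in A}\frac{\pi(i)}{\pi(A)}e^{\eta S(i)}=\log\sum_{i\in A}\pi(i)e^{\eta S(i)}.
\]
Since the tilt $\nu^\star(i)\propto\pi(i\mid A)e^{\eta S(i)}$ is proportional to $\pi(i)e^{\eta S(i)}$ on $A$, it is also the Gibbs tilt of $\pi$ restricted to $A$ and renormalized.

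There is no real obstacle here. The only point needing care is the bookkeeping of the $\log\pi(A)$ shift, and in particular checking that the restricted problem meets the hypotheses of Proposition~\ref{prop:terminal-dual}: a finite ground set and a strictly positive prior. Both hold once $A\neq\emptyset$.
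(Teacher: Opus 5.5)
Your proposal is correct and follows the paper's proof essentially line for line. Both use the decomposition $\KL(\nu\|\pi)=\KL(\nu\|\pi(\cdot\mid A))-\log\pi(A)$, reduce to the Gibbs variational identity on $\Delta(A)$ with base measure $\pi(\cdot\mid A)$, and identify the optimizer through uniqueness of the inner Gibbs maximizer; your separate treatment of $A=\emptyset$ (and the resulting guarantee $\pi(A)>0$) is a small addition the paper leaves implicit.
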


This is the discrete Gibbs-conditioning principle in game form --- conditioned on an event, inference proceeds from the prior conditioned on that event: the event cost is $-\log\pi(A)$, and conditional concentration inside the event is handled by the same log-partition as before.

The same comparator-class geometry underlies the classical large-deviation bounds.
On the simplex of types (empirical distributions) $\Delta([K])$, a closed set $E$ has empirical-measure exponent $\inf_{\nu\in E}\KL(\nu\|p)$, the Lagrange dual of the same comparator-class variational problem that $L_\Gamma$ solves (Proposition~\ref{prop:terminal-dual}); when $E$ is a face $\{\nu:\supp(\nu)\subseteq A\}$ the dual has the closed form of Proposition~\ref{prop:event-restricted}.
Chernoff, Cram\'er, Gibbs conditioning, and Sanov are then each a comparator-class concentration game, sharing one derivation that differs only in where it is entered (Table~\ref{tab:event-restricted}; for the standard treatment see \cite{dembo2010large}).
Fixing the triple $(\pi,S,A)$, the comparator-class dual supplies the variational value, the closed form of Proposition~\ref{prop:event-restricted} evaluates it on a face, and optimizing the scale converts it into the rate function --- one argument, proved once in the two propositions, and the whole of the game's contribution to every row.
Only the closing step from rate function to probability statement differs across rows, and each is classical: exponential Markov after tilting for Chernoff and Cram\'er, the conditional-limit theorem for the I-projection, and $n$-scaling with closure and interior conditions for Sanov (Appendix~\ref{app:four-classical}).

\begin{table}[t]
\centering
\small
\setlength{\tabcolsep}{4pt}
\renewcommand{\arraystretch}{1.25}
\begin{tabular}{@{}p{2.15cm}p{2.75cm}p{5.05cm}p{4.55cm}@{}}
\toprule
\textbf{Classical bound} & \textbf{Triple $(\pi,S,A)$} & \textbf{Game quantity} & \textbf{Classical step, taken as given} \\
\midrule
Chernoff, iid sum $\bar X_n$ & $\pi=P$, $S=nc$, $A=\{i:c(i)\ge t\}$ & variational tilt $\eta^{-1}\log\sum_i P(i)e^{\eta n c(i)}$ $=W_\eta(nc)$ (Prop.~\ref{prop:terminal-dual}) & Markov / exponential tilting $\Rightarrow\Pr(\bar X_n\ge t)\le e^{-nI(t)}$ \\
Cram\'er large deviations & same $(\pi,S,A)$ & Lagrange dual $\inf_{\nu:\,\E_\nu X\ge t}\KL(\nu\|P)=I(t)$ (Prop.~\ref{prop:terminal-dual}) & tilted-law representation, exponential Markov \\
I-projection / conditional limit & Gibbs conditioning on $\supp\subseteq A$ & $\eta\downarrow0$ optimizer $=$ I-projection of $\pi$ onto $A$ (Prop.~\ref{prop:event-restricted}) & conditional-limit theorem \\
Finite-state Sanov & types $\Delta([K])$, $\pi=P$, $E$ closed & Lagrange dual $\inf_{\nu\in E}\KL(\nu\|P)$ (Prop.~\ref{prop:terminal-dual}) & $n$-scaling, closure/interior conditions \\
\bottomrule
\end{tabular}
\caption{Classical bounds as one comparator-class terminal game. Each fixes the triple $(\pi,S,A)$; the comparator-class dual of Proposition~\ref{prop:terminal-dual} supplies the variational form, which takes the closed form of Proposition~\ref{prop:event-restricted} on a face; the remaining ingredient is a standard large-deviation step. The shared ingredient is a Gibbs / I-projection of the conditioned base measure. Appendix~\ref{app:four-classical} carries each row from the game to the classical statement, closing step included.}
\label{tab:event-restricted}
\end{table}

\subsection{Comparator-centered luckiness}
\label{sec:luckiness}

A \emph{luckiness} guarantee is one whose regret bound adapts to the difficulty of the realized loss sequence, sharpening on low-noise instances below the worst-case rate.
For losses in $[0,1]$, the exact intrinsic increment can be upper bounded by a comparator-centered second moment.

\begin{proposition}[Comparator-centered second-order envelope]
\label{prop:luckiness-envelope}
Assume $c_t(i)\in[0,1]$ and $\eta_t\in(0,1]$.
Define $\Psi_t(\nu):=\sum_{i=1}^K p_t(i)(c_t(i)-\inner{\nu}{c_t})^2$.
Then
\begin{equation}\label{eq:e-2-envelope}
Q_t(c)\le (e-2)\Psi_t(\nu)
\end{equation}
Hence for a nonincreasing schedule $(\eta_t)$,
$R_T^c(\nu)
\le
\eta_T^{-1}\KL(\nu\|\pi)
+
(e-2)\sum_{t=1}^T \eta_t \Psi_t(\nu)$.
\end{proposition}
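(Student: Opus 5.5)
The plan is to prove the per-round inequality \eqref{eq:e-2-envelope} directly from the definition \eqref{eq:Qt-def} of $Q_t(c)$, using a shift-invariance of the centered CGF. The regret bound then follows by plugging the per-round inequality into Theorem~\ref{thm:variable}.

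\textbf{Step 1 (re-centering at an arbitrary constant).} Write $\mu_t=\inner{p_t}{c_t}$ and fix any constant $a\in\R$. Pulling constants out of the exponential gives
\[
\eta_t^2 Q_t(c)=\log\sum_i p_t(i)e^{-\eta_t(c_t(i)-\mu_t)}=\eta_t(\mu_t-a)+\log\sum_i p_t(i)e^{-\eta_t(c_t(i)-a)} .
\]
This is exact for every $a$. I will choose $a:=\inner{\nu}{c_t}$, which lies in $[0,1]$ because $c_t\in[0,1]^K$ and $\nu$ is a probability vector.

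\textbf{Step 2 (the $(e-2)$ inequality).} Set $y_i:=-\eta_t(c_t(i)-a)$. Since $c_t(i)$ and $a$ both lie in $[0,1]$ and $\eta_t\le1$, we have $|y_i|\le1$. I will use the elementary bound $e^{y}\le 1+y+(e-2)y^2$ for $y\le1$. It holds because $(e^y-1-y)/y^2$ is nondecreasing in $y$ and equals $e-2$ at $y=1$. Averaging under $p_t$ and then applying $\log(1+x)\le x$ gives
\[
\log\sum_i p_t(i)e^{y_i}\le \sum_i p_t(i)y_i+(e-2)\sum_i p_t(i)y_i^2=-\eta_t(\mu_t-a)+(e-2)\eta_t^2\Psi_t(\nu).
\]
Substituting into Step 1, the linear terms $\pm\eta_t(\mu_t-a)$ cancel exactly. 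What remains is $\eta_t^2Q_t(c)\le(e-2)\eta_t^2\Psi_t(\nu)$, and dividing by $\eta_t^2$ gives \eqref{eq:e-2-envelope}.

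\textbf{Step 3 (cumulative bound).} Apply Theorem~\ref{thm:variable}: $R_T^c(\nu)=\sum_t\eta_tQ_t(c)+D_T+B_T(\nu)$. Each piece is handled as follows.
\begin{itemize}
\item For a nonincreasing schedule, the theorem gives $D_T\le0$.
\item Since $\KL(\nu\|\rho_{T,\eta_T})\ge0$, we have $B_T(\nu)\le\eta_T^{-1}\KL(\nu\|\pi)$.
\item Each term $\eta_tQ_t(c)$ is bounded by $(e-2)\eta_t\Psi_t(\nu)$ by Step 2.
\end{itemize}
Summing these three bounds yields the stated inequality.

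\textbf{Main obstacle.} The only delicate point is Step 2. The second-order inequality must be applied on the range where it is valid, which is why the expansion point is the comparator's mean $\inner{\nu}{c_t}$ rather than the learner's mean. This choice is what produces the comparator-centered moment $\Psi_t(\nu)$, and it needs both $c_t\in[0,1]^K$ and $\eta_t\le1$ to guarantee $|y_i|\le1$. Checking the monotonicity of $(e^y-1-y)/y^2$ is routine, for example from its power series, $\sum_{k\ge2}y^{k-2}/k!$, together with a direct check for negative $y$.
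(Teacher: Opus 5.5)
Your proof is correct. It differs from the paper's in one step: where the second-order expansion is anchored.

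The paper anchors at the learner's mean $\mu_t=\inner{p_t}{c_t}$. There the linear term vanishes by centering, and the $(e-2)$ inequality gives the variance bound $Q_t(c)\le(e-2)\Var_{p_t}(c_t)$. The paper then reaches $\Psi_t(\nu)$ through the minimizing property of the mean, $\Var_{p_t}(c_t)=\min_{a\in\R}\sum_ip_t(i)(c_t(i)-a)^2\le\Psi_t(\nu)$.

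You instead re-center the CGF at $a=\inner{\nu}{c_t}$ and let the two linear terms $\pm\eta_t(\mu_t-a)$ cancel, which lands on $\Psi_t(\nu)$ in one step. In fact your Steps 1--2 prove $Q_t(c)\le(e-2)\sum_ip_t(i)(c_t(i)-a)^2$ for every anchor $a\in[0,1]$. Taking $a=\mu_t$ recovers the paper's sharper intermediate variance bound, which the paper reuses elsewhere, for instance in the self-play relaxation. So the two routes are equally strong. Yours replaces the variational characterization of the variance with the shift identity of Step 1. The paper's makes the variance bound the primary statement and gets the comparator form as a one-line corollary.

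Three small remarks:
\begin{itemize}
\item Your ``main obstacle'' paragraph slightly misattributes the choice of anchor. Range validity does not single out the comparator's mean: $\mu_t\in[0,1]$ gives $|y_i|\le1$ just as well, and that is exactly the paper's choice. The comparator anchor is chosen only to produce $\Psi_t(\nu)$ directly.
\item With your anchor, the argument $x$ in $\log(1+x)\le x$ can be negative. This is harmless, since the inequality holds on $(-1,\infty)$ and $1+x\ge\sum_ip_t(i)e^{y_i}>0$.
\item The monotonicity of $(e^y-1-y)/y^2$ on all of $\R$, negative $y$ included, is immediate from $(e^y-1-y)/y^2=\int_0^1(1-s)e^{sy}\,ds$. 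This avoids a separate check for negative $y$.
\end{itemize}

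Your Step 3 is the same as the paper's.
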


For i.i.d.\ losses $c_1,c_2,\dots$ in $[0,1]^K$ with mean $\mu=\E c_1$, a comparator $\nu\in\DeltaK$ satisfies a \emph{low-noise condition} with constant $\kappa_\nu \ge 0$ when $\E[(c_t(i)-\inner{\nu}{c_t})^2] \le \kappa_\nu (\mu(i)-\inner{\nu}{\mu})$ for every $i\in[K]$.

\begin{theorem}[Fixed-rate stochastic luckiness]
\label{thm:luckiness}
Assume the low-noise condition above with constant $\kappa_\nu$, and run fixed-rate Hedge on the composite losses with $\eta\in(0,1]$.
If $(e-2)\kappa_\nu\eta<1$, then
\begin{equation}\label{eq:luckiness}
\E R_T^c(\nu)
\le
\frac{\KL(\nu\|\pi)}{\eta(1-(e-2)\kappa_\nu\eta)}
\end{equation}
In particular, choosing $\eta=\min\{1,[2(e-2)\kappa_\nu]^{-1}\}$ gives a constant-in-$T$ expected regret bound:
$\E R_T^c(\nu)\le 2(1+2(e-2)\kappa_\nu)\KL(\nu\|\pi)$.
\end{theorem}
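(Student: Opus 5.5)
The argument combines the fixed-rate bound of Proposition~\ref{prop:luckiness-envelope} with the low-noise condition and then absorbs the variance term into the left-hand side. At constant $\eta$ the bound reads, pathwise,
\[
R_T^c(\nu)\le \eta^{-1}\KL(\nu\|\pi)+(e-2)\eta\sum_{t=1}^T\Psi_t(\nu).
\]
It is legitimate here because $c_t\in[0,1]^K$ and $\eta\le 1$. Taking expectations, the task reduces to relating $\E\Psi_t(\nu)$ to the expected per-round regret $\E\inner{p_t-\nu}{c_t}$.

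The key step is a conditioning argument. Since $p_t$ depends only on $c_1,\dots,c_{t-1}$ and the losses are i.i.d., $c_t$ is independent of $p_t$. Conditioning on the past gives
\[
\E[\Psi_t(\nu)\mid\mathcal F_{t-1}]=\sum_i p_t(i)\,\E[(c_t(i)-\inner{\nu}{c_t})^2].
\]
The low-noise condition bounds this by
\[
\kappa_\nu\sum_i p_t(i)\bigl(\mu(i)-\inner{\nu}{\mu}\bigr)=\kappa_\nu\inner{p_t-\nu}{\mu}=\kappa_\nu\,\E[\inner{p_t-\nu}{c_t}\mid\mathcal F_{t-1}].
\]
Summing over rounds and applying the tower rule yields $\E\sum_t\Psi_t(\nu)\le\kappa_\nu\,\E R_T^c(\nu)$.

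Substituting back gives $\E R_T^c\le \eta^{-1}\KL(\nu\|\pi)+(e-2)\kappa_\nu\eta\,\E R_T^c$. The expectation $\E R_T^c$ is finite, since it is bounded by $T$ for losses in $[0,1]$. So when $(e-2)\kappa_\nu\eta<1$ we may rearrange to get \eqref{eq:luckiness}. For the particular choice of $\eta$ there are two cases.
\begin{itemize}
\item If $2(e-2)\kappa_\nu\le 1$, then $\eta=1$, the denominator is at least $1/2$, and the bound is $2\KL(\nu\|\pi)$.
\item Otherwise $\eta=[2(e-2)\kappa_\nu]^{-1}$, the factor $1-(e-2)\kappa_\nu\eta$ equals $1/2$, and the bound is $4(e-2)\kappa_\nu\KL(\nu\|\pi)$.
\end{itemize}
Both cases are dominated by $2(1+2(e-2)\kappa_\nu)\KL(\nu\|\pi)$.

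The main point of care is the conditioning step. Two things must hold there. First, $\Psi_t(\nu)$ depends on $c_t$ only through its $c_t$-factor, with $p_t$ predictable, so its conditional expectation is exactly the $p_t$-average of the low-noise second moments. Second, the signed quantity $\mu(i)-\inner{\nu}{\mu}$ is allowed to be negative for some $i$; since the low-noise condition is assumed for every $i$, this forces those terms to be consistent, and averaging under $p_t$ is linear, so no positivity is needed. I would also check that Proposition~\ref{prop:luckiness-envelope} at constant rate gives the transport term $\eta^{-1}\KL(\nu\|\pi)$ with the drift vanishing.
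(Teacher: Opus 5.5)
Your proposal is correct and follows the paper's proof essentially step for step: the constant-rate pathwise bound from Proposition~\ref{prop:luckiness-envelope}, the conditioning argument (independence of $c_t$ from the predictable $p_t$) that gives $\sum_t\E\Psi_t(\nu)\le\kappa_\nu\,\E R_T^c(\nu)$, and the rearrangement of the resulting self-referential inequality. Your note that $\E R_T^c(\nu)$ is finite, which justifies the rearrangement, and your two-case evaluation of the tuned $\eta$ are small refinements: the paper leaves the first implicit and handles the second in one line via $1/\eta\le 1+2(e-2)\kappa_\nu$.
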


The same phenomenon survives the second-order schedule: under \eqref{eq:schedule},
\begin{equation}\label{eq:predictable-luckiness}
\E R_T^c(\nu)
\le
2\Gamma+2\,\E[Q_T^*(c)]+(2C+C^{-1})^2(e-2)\kappa_\nu\Gamma
\end{equation}
so hard and easy sequences are not different theorems but different realized fixed points of the same game.
The low-noise condition matches the standard point-mass condition at $\nu=\delta_{k^*}$, and is satisfiable with a finite constant only when the comparator ties the minimal mean $\mu^*=\min_j\mu(j)$, so Theorem~\ref{thm:luckiness} applies on the minimal-mean face.

A comparator-mean monotonicity carries the guarantee past that restriction.
Every comparator above the minimal mean is beaten by a linear margin and so bounded a fortiori, and one that ties it inherits the best expert's constant-in-$T$ bound (Proposition~\ref{prop:diffuse-reduces}).
For finite $K$ with a strict off-face mean gap the remaining tied-minimum face is handled separately (Appendix~\ref{app:luckiness-faces}).
The no-gap regime, a continuum of experts or means accumulating at $\mu^*$, remains open.

\subsection{Confidence regions and confidence trajectories}
\label{sec:confidence}

Because the terminal term is a relative-entropy transport, exponential-family slices of the game trace out level sets of the comparator's terminal information transport.
For a parametric family $\{\nu_\theta:\theta\in\Theta\}$ define
\begin{equation}\label{eq:confidence-fiber}
\Lambda_T(\theta)
:=
\KL(\nu_\theta\|\pi)-\KL(\nu_\theta\|\rho_{T,\eta_T})
=\E_{\nu_\theta}\!\left[\log\frac{\rho_{T,\eta_T}}{\pi}\right]
\end{equation}
the amount the data path has moved toward $\nu_\theta$ in relative entropy, and in the exact regret identity $\inner{\nu_\theta}{S_T}=\sum_t\eta_t Q_t(c)+D_T+\Lambda_T(\theta)/\eta_T$ it is literally the leftover description cost of $\theta$ after processing the sequence.
Higher $\Lambda_T$ marks comparators the data favors over the prior, so the data-supported region is $\cC_T(\Gamma):=\{\theta:\Lambda_T(\theta)\ge -\Gamma\}$, the comparators not yet rejected at log-Bayes-factor level $\Gamma$.
At point masses this is the standard posterior-tail confidence set, and for a general exponential family it is a super-level set of $\Lambda_T$ in the log-partition geometry, which need not be convex.
Where the prior-posterior ratio is an $e$-process --- a nonnegative process with expectation at most one at every stopping time --- inverting $\mathrm{PR}_T(\theta):=\pi(\theta)/\rho_{T,\eta_T}(\theta)$ yields anytime-valid sets $\cC_T^{\mathrm{any}}(\alpha):=\{\theta:\sup_{t\le T}\mathrm{PR}_t(\theta)<1/\alpha\}$.
The hypotheses under which the ratio is an $e$-process, and the cost of a post-hoc reading of such sets, are not pursued here.

Reading these sets at a horizon fixed in advance separates two terms that an anytime construction incurs together. Both invert the same statistic and differ only in which of its values is consulted: $\cC_T^{\mathrm{any}}(\alpha)$ takes the running supremum, where the fixed-time set $\cC_T^{\mathrm{fix}}(\alpha):=\{\theta:\mathrm{PR}_T(\theta)<1/\alpha\}$ takes the endpoint. Writing $\mathrm{DD}_T(\theta):=\Lambda_T(\theta)-\inf_{t\le T}\Lambda_t(\theta)\ge0$ for the drawdown of the transport trajectory,
\begin{equation}\label{eq:anytime-drawdown}
\cC_T^{\mathrm{any}}(\alpha)=\bigl\{\theta:\Lambda_T(\theta)>\mathrm{DD}_T(\theta)-\log(1/\alpha)\bigr\}
\end{equation}
the fixed-time set read at the tightened level $\log(1/\alpha)-\mathrm{DD}_T(\theta)$.
The budget an anytime set holds back for the times it is not consulted is the drawdown, comparator by comparator, and it is read off the realized path without being estimated.
Retuning the scale decides whether it keeps growing with the horizon: at a scale held fixed, the drawdown tracks the $e$-process's accumulated log-moment deficit, which is linear in $T$; retuned to each horizon it stays flat (Appendix~\ref{app:drawdown-numerics}).

The other fixed-horizon saving sits in the scale.
Proposition~\ref{prop:horizon-free} licenses taking it: the constant scale $\eta=\sqrt{\Gamma/V}$ attains the value from the finite horizon $T_0$ on, and $T_0=2$ at the comparator budgets of interest.
A learner told the horizon in advance therefore pre-commits that scale and incurs no drift, $D_T=0$.
One that is not must instead run the plug-in schedule of Theorem~\ref{thm:sqrt-envelope}, whose leading constant is $2\sqrt2$ in place of $2$, or mix over the scale at the Jensen surplus of Section~\ref{sec:exact-values-rates}.
Mixing is therefore wider than a fixed-horizon interval read once, at both its own best tuning and the conventional one, and the extra width is visible twice: once directly, and once as coverage the sequence never uses (Appendix~\ref{app:drawdown-numerics}).
The residual conservatism common to all three is the exponential-moment step itself, which the game accounts for in $A_\Gamma$ and no choice of horizon removes.
Fixing the horizon therefore concedes nothing to nature, by Proposition~\ref{prop:horizon-free}; it recovers the scale.

\section{Repeated games, equilibria, and the information ledger}
\label{sec:games}

Every setting so far has one learner facing nature.
Let each player of a repeated game run the concentration game instead, each with its own prior, its own schedule, and its own ledger.
The distance from equilibrium is then not merely bounded by those ledgers but equal to a sum of their entries: the duality gap in a zero-sum game (Theorem~\ref{thm:self-play-ledger}), and the distance to coarse correlated equilibrium in a general one (Theorem~\ref{thm:k-player-ledger}).
Two things follow from having the equality rather than a bound.
The rate is read off the play that actually happened, so it improves automatically on the sequences where the players settle, and costs nothing on the sequences where they do not.
And the classical guarantees are recovered by relaxing the entries one at a time, so what each classical step discards is visible as a specific term.
No quadratic-variation surrogate enters at any point; the standard variance bounds are the small-scale limit of what follows.

\subsection{Repeated zero-sum play and intrinsic-time regret matching}

Consider a repeated zero-sum matrix game with row-loss matrix $M\in[0,1]^{K\times L}$.
If the column player plays $\omega_t\in\Delta([L])$, the row-loss vector is $\ell_t(i)=(M\omega_t)_i$, and the centered regret vector is $g_t(i):=\inner{p_t}{\ell_t}-\ell_t(i)$ with $\inner{p_t}{g_t}=0$.
Running the retempered concentration game on $(g_t)$ in place of the raw losses is \emph{intrinsic-time regret matching}, the strategy used throughout this section.

A superscript or argument on a ledger entry names the sequence that entry is computed on.
So $Q_t(g)$, $D_T^{g}$, and $B_T^{g}(\nu)$ are the three entries of Theorem~\ref{thm:variable} --- per-round centered RCGF, retempering drift, and terminal relative-entropy transport --- evaluated on the centered regret vectors $(g_t)$ in place of the raw losses, and $V_T(g):=\sum_{t=1}^T Q_t(g)$ is the corresponding intrinsic time.

Nothing about the ledger changes under this substitution, and the square-root envelope of Theorem~\ref{thm:sqrt-envelope} comes with it.

\begin{theorem}[Intrinsic-time regret matching]
\label{thm:regret-matching}
If the learner runs the retempered concentration game on the centered regret vectors~$g_t$, then for every comparator $\nu\in\DeltaK$,
\begin{equation}\label{eq:regret-matching}
\sum_{t=1}^T \bigl(\inner{p_t}{\ell_t}-\inner{\nu}{\ell_t}\bigr)
=
\sum_{t=1}^T \eta_t Q_t(g)
+
D_T^{g}
+
B_T^{g}(\nu)
\end{equation}
In particular, for each pure action~$i$ and budget $\Gamma\ge -\log\pi(i)$,
$\sum_{t=1}^T(\inner{p_t}{\ell_t}-\ell_t(i))
\le
\Gamma + Q_T^*(g)+(2C+C^{-1})\sqrt{\Gamma V_T(g)}$.
\end{theorem}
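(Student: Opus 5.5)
The plan is to reduce everything to Theorem~\ref{thm:variable} and Theorem~\ref{thm:sqrt-envelope} by a change of score sequence. Running the retempered game on $(g_t)$ means the learner plays $p_t(i)\propto\pi(i)e^{-\eta_t G_{t-1}(i)}$ with the composite score $c_t:=-g_t$ (equivalently $C_{t-1}=-\sum_{s<t}g_s$). The first step is to compute the centered move of this composite score. Since $\inner{p_t}{g_t}=0$, we have $\mu_t=\inner{p_t}{-g_t}=0$ and $z_t=\mu_t-c_t=g_t$. The state is therefore $S_t=\sum_{s\le t}g_s$. Moreover $g_t=\inner{p_t}{\ell_t}\1-\ell_t$ is exactly the centered version of $\ell_t$, so the centered moves generated by $-g_t$ and by $\ell_t$ coincide. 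This is the bias-invariance remark after Proposition~\ref{prop:centered}, since $-g_t=\ell_t-\inner{p_t}{\ell_t}\1$ differs from $\ell_t$ by a multiple of $\1$ at round~$t$. One point to check carefully is that this shift is determined by $p_t$, which is already fixed when round $t$ is scored, so the Gibbs iterates are unaffected. Note also that the composite-score regret equals the raw regret, because $\inner{p_t-\nu}{b\1}=0$.

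Next I apply Proposition~\ref{prop:centered}, which gives
\[
\sum_t\inner{p_t-\nu}{\ell_t}=\inner{\nu}{S_T}=\sum_t\inner{\nu}{g_t}.
\]
Theorem~\ref{thm:variable}, applied verbatim to the score sequence $c=-g$, then gives the decomposition \eqref{eq:regret-matching}. Here $Q_t(g)=\eta_t^{-2}\log\sum_ip_t(i)e^{\eta_tg_t(i)}$ is the RCGF of the centered move, and $D_T^g$ and $B_T^g(\nu)$ are computed with $\rho_{T,\eta}\propto\pi e^{\eta S_T}$. Up to naming conventions this is just $Q_{\eta_t}(p_t,z_t)$ with $z_t=g_t$, so there is nothing new to prove for the identity.

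For the bound, I take $\nu=\delta_i$, so that $\KL(\delta_i\|\pi)=-\log\pi(i)\le\Gamma$. Running the schedule \eqref{eq:schedule} with $V_{t-1}(g)$ in place of $V_{t-1}(c)$ makes $(\eta_t)$ nonincreasing, so $D_T^g\le0$ by Theorem~\ref{thm:variable}. The transport term satisfies $B_T^g(\delta_i)\le\KL(\delta_i\|\pi)/\eta_T\le\Gamma/\eta_T$. Theorem~\ref{thm:sqrt-envelope}'s upper side bounds the loss sum by $Q_T^*(g)+2C\sqrt{\Gamma V_T(g)}$. It remains to bound $\Gamma/\eta_T\le\Gamma\max\{1,\sqrt{V_{T-1}}/(C\sqrt\Gamma)\}\le\Gamma+C^{-1}\sqrt{\Gamma V_T(g)}$, which is exactly the step behind \eqref{eq:pathwise-regret}. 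I can therefore simply cite \eqref{eq:pathwise-regret} with $c$ replaced by $-g$.

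The only real obstacle is the bookkeeping in the first step. I need to confirm that the paper's ``retempered game on $g_t$'' is exactly the Gibbs update \eqref{eq:bayes-update} on a score whose centered move is $g_t$, with the sign chosen so that actions with large regret $g_t(i)$ gain weight. I also need to confirm that every ledger entry is invariant under the round-wise additive shift by $\inner{p_t}{\ell_t}\1$. Once that identification is fixed, both claims are direct corollaries of earlier results.
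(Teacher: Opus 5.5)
Your proposal is correct and follows the paper's proof. You identify the centered move with $g_t$ and apply Theorem~\ref{thm:variable} to get \eqref{eq:regret-matching}. You then take $\nu=\delta_i$ and combine $B_T^g(\delta_i)\le\Gamma/\eta_T$, $D_T^g\le 0$ under the nonincreasing schedule, and the upper side of Theorem~\ref{thm:sqrt-envelope}, as in \eqref{eq:pathwise-regret}. One sign slip: with $G_{t-1}:=\sum_{s<t}g_s$ the play is $p_t(i)\propto\pi(i)e^{+\eta_t G_{t-1}(i)}=\pi(i)e^{-\eta_t C_{t-1}(i)}$, which is what your parenthetical and your later state $S_t=\sum_{s\le t}g_s$ already assume.
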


The schedule needs no knowledge of the loss scale.
Scaling all loss vectors by a constant $a>0$ scales $Q_t(g)$ and $V_T(g)$ by $a^2$ and the learning rate by $1/a$, so the schedule form is scale-free in the unclipped regime (Appendix~\ref{app:pf-games}).

Several candidate opponent models are accommodated in the prior, before play begins.
Pooling them multiplicatively in prior space is ordinary exponential response to the averaged opponent model, and the worst case over poolings is a robust center; running the game on the residuals against that center converges quickly whenever one candidate is close to the opponent's actual strategy (Appendix~\ref{app:robust-center}).

\subsection{An exact information ledger: two-player duality gap}
\label{sec:self-play-ledger}

Now let both players run intrinsic-time regret matching, with respective priors $\pi^{\mathrm{row}}\in\DeltaK$ and $\pi^{\mathrm{col}}\in\Delta([L])$ and schedules $(\eta_t^{\mathrm{row}})$ and $(\eta_t^{\mathrm{col}})$.
Each keeps its own centered regret vectors, centered under its own play: the row player's are $g_t^{\mathrm{row}}(i):=\inner{p_t}{M\omega_t}-(M\omega_t)_i$, and the column player's are $g_t^{\mathrm{col}}(j):=(M^\top p_t)_j-\inner{\omega_t}{M^\top p_t}$, in the zero-sum convention in which the column player's loss vector is $-M^\top p_t$.

How far the averaged play sits from the game's value is then the two players' ledgers added together, entry by entry, with nothing left over.

\begin{theorem}[Exact information ledger of self-play]
\label{thm:self-play-ledger}
With
\[
P_T^{\mathrm{row}}:=\sum_{t=1}^T \eta_t^{\mathrm{row}}Q_t^{\mathrm{row}}(g),
\qquad
P_T^{\mathrm{col}}:=\sum_{t=1}^T \eta_t^{\mathrm{col}}Q_t^{\mathrm{col}}(g),
\]
let $D_T^{\mathrm{row}},D_T^{\mathrm{col}}$ be the corresponding retempering drifts and $B_T^{\mathrm{row}}(\nu),B_T^{\mathrm{col}}(\nu')$ the terminal relative-entropy transports.
Then for every comparator pair $(\nu,\nu')\in\DeltaK\times\Delta([L])$,
\begin{equation}\label{eq:self-play-ledger}
\bar p_T^\top M\nu'-\nu^\top M\bar\omega_T
=
\frac{1}{T}\Bigl(P_T^{\mathrm{row}}+P_T^{\mathrm{col}}+D_T^{\mathrm{row}}+D_T^{\mathrm{col}}+B_T^{\mathrm{row}}(\nu)+B_T^{\mathrm{col}}(\nu')\Bigr)
\end{equation}
Suppose further that the complexity budgets admit every pure deviation, $\Gamma^{\mathrm{row}}\ge\max_i\log(1/\pi^{\mathrm{row}}(i))$ and $\Gamma^{\mathrm{col}}\ge\max_j\log(1/\pi^{\mathrm{col}}(j))$. Specializing the right-hand side to maximizing comparators inside those budgets then yields the duality gap
\begin{equation}\label{eq:saddle-gap-exact}
\max_{\omega'}\bar p_T^\top M\omega' - \min_{p'} p'^\top M\bar\omega_T
=
\frac{P_T^{\mathrm{row}}+D_T^{\mathrm{row}}+B_T^{\mathrm{row},\star}+P_T^{\mathrm{col}}+D_T^{\mathrm{col}}+B_T^{\mathrm{col},\star}}{T}
\end{equation}
where $B_T^{\mathrm{row},\star}:=\sup\{B_T^{\mathrm{row}}(\nu):\KL(\nu\|\pi^{\mathrm{row}})\le\Gamma^{\mathrm{row}}\}$ and analogously for the column player.
\end{theorem}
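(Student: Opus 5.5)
The plan is to apply Theorem~\ref{thm:regret-matching} (equivalently Theorem~\ref{thm:variable} on the centered regret sequences) once for each player and add the two identities. For the row player, the loss vector is $\ell_t^{\mathrm{row}}=M\omega_t$, and the identity reads
\[
\sum_{t=1}^T\bigl(p_t^\top M\omega_t-\nu^\top M\omega_t\bigr)=P_T^{\mathrm{row}}+D_T^{\mathrm{row}}+B_T^{\mathrm{row}}(\nu).
\]
For the column player, the loss vector is $-M^\top p_t$ and $g_t^{\mathrm{col}}$ is its centered version under $\omega_t$. The same theorem gives
\[
\sum_{t=1}^T\bigl(-p_t^\top M\omega_t+p_t^\top M\nu'\bigr)=P_T^{\mathrm{col}}+D_T^{\mathrm{col}}+B_T^{\mathrm{col}}(\nu').
\]
Before using this I will check the sign convention: $g^{\mathrm{col}}_t(j)=\inner{\omega_t}{-M^\top p_t}\cdot(-1)\ldots$ must match $z_t=\inner{\omega_t}{c_t}\1-c_t$ with $c_t=-M^\top p_t$. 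This gives $z_t(j)=(M^\top p_t)_j-\inner{\omega_t}{M^\top p_t}$, which is exactly the stated $g_t^{\mathrm{col}}$, so Proposition~\ref{prop:centered} applies verbatim.

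Adding the two identities, the cross terms $\pm p_t^\top M\omega_t$ cancel round by round. What remains on the left is $\sum_t p_t^\top M\nu'-\sum_t\nu^\top M\omega_t$. By bilinearity this equals $T\bigl(\bar p_T^\top M\nu'-\nu^\top M\bar\omega_T\bigr)$ with $\bar p_T=T^{-1}\sum_tp_t$ and $\bar\omega_T=T^{-1}\sum_t\omega_t$. Dividing by $T$ gives \eqref{eq:self-play-ledger} for every pair $(\nu,\nu')$. No property of the schedules beyond predictability is used, since Theorem~\ref{thm:variable} holds for any predictable schedule.

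For \eqref{eq:saddle-gap-exact}, I take the supremum of both sides of \eqref{eq:self-play-ledger} over $\nu$ in the row ball $\{\KL(\nu\|\pi^{\mathrm{row}})\le\Gamma^{\mathrm{row}}\}$ and over $\nu'$ in the column ball.

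On the right, only $B_T^{\mathrm{row}}(\nu)$ depends on $\nu$ and only $B_T^{\mathrm{col}}(\nu')$ depends on $\nu'$. The $P$ and $D$ entries are fixed by the realized path, so the supremum separates into $B_T^{\mathrm{row},\star}+B_T^{\mathrm{col},\star}$.

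On the left, the supremum separates into $\sup_{\nu'}\bar p_T^\top M\nu'-\inf_{\nu}\nu^\top M\bar\omega_T$. The remaining step, which I expect to be the only real point of care, is to show that these constrained extrema equal the unconstrained $\max_{\omega'}$ and $\min_{p'}$. The objectives are linear, so their extrema over the simplex are attained at vertices. Every vertex $\delta_i$ satisfies $\KL(\delta_i\|\pi^{\mathrm{row}})=\log(1/\pi^{\mathrm{row}}(i))\le\Gamma^{\mathrm{row}}$ by the budget hypothesis, and likewise for the column player. Hence each ball contains an unconstrained optimizer. Each ball is also a subset of the simplex, so its extremum can be no better than the unconstrained one, and the two coincide. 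This is the saturation reading of $L_\Gamma$ noted after \eqref{eq:terminal-support}.

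Both suprema are attained, because the balls are compact and $B_T$ is continuous in $\nu$. Therefore the identity passes to the suprema with equality and no slack, which yields \eqref{eq:saddle-gap-exact}.
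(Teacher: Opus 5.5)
Your proposal is correct and follows essentially the paper's own proof: apply Theorem~\ref{thm:regret-matching} to each player (your sign check on $g_t^{\mathrm{col}}$ is right), add the two identities so the $p_t^\top M\omega_t$ terms cancel, divide by $T$, and then use linearity and the budget hypothesis to place the pure optimizers inside the relative-entropy balls. Where the paper simply asserts $B_T^{\mathrm{row}}(\nu)=B_T^{\mathrm{row},\star}$ at the maximizing pure comparators, you take the supremum over both balls on both sides; this justifies that step more explicitly, because the identity ties $B_T^{\mathrm{row}}(\nu)$ to $-\nu^\top M\bar\omega_T$ up to terms that do not depend on $\nu$.
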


Every term on the right of \eqref{eq:saddle-gap-exact} is an RCGF or a relative entropy drawn from one of the two players' own ledgers, and none is a variance.
Under smaller budgets the same right-hand side is exact for the relative-entropy-restricted deviation classes, and the unrestricted duality gap on the left may exceed it.
The classical self-play gap bound \cite{cesa2006prediction} and its quadratic-variation refinements are relaxations of it: setting $C=1/\sqrt{2}$ and applying the variance bound $Q_t\le(e-2)\Var_{p_t}(g_t)$ (Proposition~\ref{prop:luckiness-envelope}) to each player's loss recovers the standard $O(\sqrt{\log K /T})$ self-play rate as the small-$\eta$ limit.

The exact form adds a rate that responds to the play.
The horizon-growing part of \eqref{eq:saddle-gap-exact} is governed by the loss $P_T^{(k)}=\sum_t\eta_t^{(k)}Q_t^{(k)}$ and the transport $B_T^{(k),\star}\le\Gamma^{(k)}/\eta_T^{(k)}$, both of order $\sqrt{\Gamma^{(k)}V_T^{(k)}}$ in the cumulative intrinsic time $V_T^{(k)}:=\sum_t Q_t^{(k)}$ under the optimal schedule for a complexity budget $\Gamma^{(k)}$; the drift is a gain.
Sublinear $V_T^{(k)}$ in either player therefore produces a strictly faster duality-gap rate: at $V_T^{(k)}=O(\log T)$ the exact gap is $O(\sqrt{\log T}/T)$ in expectation, strictly faster than the $\Theta(1/\sqrt T)$ worst case and with no hidden logarithmic factors.

Two scope conditions bound that reading.
The equality holds for the present family, so it constrains what that family realizes and is not a minimax lower bound over all algorithms; an update rule outside it can realize a smaller gap on a given sequence, and the identity neither predicts nor precludes this.
And within the family a gap below the loss's lower envelope is available only through the drift gain, which leaves it undecided whether $V_T^{(k)}=\Theta(T)$ forces a $\Theta(1/\sqrt T)$ gap at $C=1/\sqrt2$ (Appendix~\ref{app:cce-converse}).

\subsection{General-sum play and equilibrium consequences}

Beyond two players the target is a correlated equilibrium notion, and the one-player game composes in the standard way.
External regret at most $\varepsilon_m T$ for every player of a finite normal-form game makes the empirical distribution of play an $\varepsilon$-coarse correlated equilibrium (CCE).
Swap regret at that level makes it an $\varepsilon$-correlated equilibrium \cite{hart2000simpleb}, at $\varepsilon=\max_m\varepsilon_m$ in both cases (Propositions~\ref{prop:cce} and~\ref{prop:swap-ce}).

Those are one-way implications, and measuring the distance directly makes them quantitative.
For a joint distribution $\sigma\in\Delta\bigl(\prod_k [K_k]\bigr)$ on action profiles, define the \emph{CCE-deviation distance}
\begin{equation}\label{eq:cce-distance}
d(\sigma,\mathrm{CCE})
:=
\max_{k}\;\sup_{\nu_k\in\Delta([K_k])}
\E_{\bm a\sim\sigma}\bigl[u_k(\nu_k,\bm a_{-k})-u_k(\bm a)\bigr]
\end{equation}
where $u_k:\prod_{k'}[K_{k'}]\to\R$ is player~$k$'s utility and we adopt the convention $u_k=-c_k$ for cost-minimizing players.
This is the most any single player could gain by deviating on its own; $\sigma\in\mathrm{CCE}$ iff $d(\sigma,\mathrm{CCE})\le 0$ (the gain can be strictly negative when every unilateral deviation strictly loses), and $\sigma$ is an $\varepsilon$-CCE iff $d(\sigma,\mathrm{CCE})\le\varepsilon$ \cite{hart2000simpleb,blum2007external}.

Evaluated at the empirical joint $\sigma_T:=\frac{1}{T}\sum_{t=1}^T\bigotimes_k p_t^{(k)}$ of the players' mixed actions, that distance equals the largest of the players' ledgers.

\begin{theorem}[Exact per-player external-regret ledger and CCE distance]
\label{thm:k-player-ledger}
For a finite game, let each player $k$ run intrinsic-time regret matching with prior $\pi^{(k)}$, schedule $(\eta_t^{(k)})$, and comparator budget $\Gamma^{(k)}\ge\max_i\log(1/\pi^{(k)}(i))$, so that every pure deviation lies in the comparator class.
Each player's external regret admits the exact decomposition (Theorem~\ref{thm:variable}) $R_T^{(k)}(\nu)=P_T^{(k)}+D_T^{(k)}+B_T^{(k)}(\nu)$ into RCGF losses, retempering drift, and terminal relative-entropy transport.
The corresponding worst-case-over-comparator sum is $P_T^{(k)}+D_T^{(k)}+B_T^{(k),\star}$, with $B_T^{(k),\star}:=\sup\{B_T^{(k)}(\nu_k):\KL(\nu_k\|\pi^{(k)})\le\Gamma^{(k)}\}$.
Then
\begin{equation}\label{eq:k-player-ledger}
d(\sigma_T,\mathrm{CCE})
=
\frac{1}{T}\max_{k}\Bigl(P_T^{(k)}+D_T^{(k)}+B_T^{(k),\star}\Bigr)
\end{equation}
\end{theorem}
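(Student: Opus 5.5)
The plan is to reduce the CCE distance at the empirical joint $\sigma_T$, player by player, to each player's worst-case external regret divided by $T$. Then Theorem~\ref{thm:variable}, applied to that player's centered regret vectors as in Theorem~\ref{thm:regret-matching}, rewrites each regret exactly as its ledger.

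The first step is to fix a player $k$ and a deviation $\nu_k\in\Delta([K_k])$ and expand the expectation in \eqref{eq:cce-distance} under the product-mixture $\sigma_T$. Since $\sigma_T$ is an average of product measures $\bigotimes_{k'} p_t^{(k')}$, multilinearity of $u_k$ gives
\[
\E_{\bm a\sim\sigma_T}\bigl[u_k(\nu_k,\bm a_{-k})-u_k(\bm a)\bigr]
=\frac1T\sum_{t=1}^T\bigl(\inner{p_t^{(k)}}{\ell_t^{(k)}}-\inner{\nu_k}{\ell_t^{(k)}}\bigr).
\]
Here $\ell_t^{(k)}(i):=c_k(i,p_t^{(-k)})$ is player $k$'s expected cost vector against the others' round-$t$ mixtures, and we use $u_k=-c_k$. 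The right side is exactly $R_T^{(k)}(\nu_k)/T$, and by \eqref{eq:regret-matching} it equals $\bigl(P_T^{(k)}+D_T^{(k)}+B_T^{(k)}(\nu_k)\bigr)/T$. The only dependence on $\nu_k$ is through $B_T^{(k)}(\nu_k)$; $P_T^{(k)}$ and $D_T^{(k)}$ are path quantities.

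The second step, which I expect to be the main obstacle, is to match the supremum over all of $\Delta([K_k])$ in \eqref{eq:cce-distance} with the budget-restricted supremum defining $B_T^{(k),\star}$. The regret $\nu_k\mapsto\inner{\nu_k}{S_T^{(k)}}$ is linear, so its supremum over the simplex is attained at a vertex $\delta_i$. Each vertex satisfies $\KL(\delta_i\|\pi^{(k)})=\log(1/\pi^{(k)}(i))\le\Gamma^{(k)}$ by hypothesis, so every vertex lies in the relative-entropy ball. The ball is contained in the simplex, so the two suprema agree:
\[
\sup_{\Delta([K_k])}R_T^{(k)}=\max_i R_T^{(k)}(\delta_i)=\sup_{\KL(\nu_k\|\pi^{(k)})\le\Gamma^{(k)}}R_T^{(k)}(\nu_k).
\]
By the first step, the right-hand supremum equals $P_T^{(k)}+D_T^{(k)}+B_T^{(k),\star}$. (Equivalently, this is $L_{\Gamma^{(k)}}(S_T^{(k)})=\max_i S_T^{(k)}(i)$ from Proposition~\ref{prop:saturated-reduction}, combined with Proposition~\ref{prop:terminal-collapse}.) The point to check carefully is that the sup over $\nu_k$ passes only onto $B_T^{(k)}$, i.e.\ that $P_T^{(k)}$ and $D_T^{(k)}$ are independent of the comparator. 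This holds because the schedule and plays are generated from the path alone.

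The last step is to take the maximum over $k$ of both sides, which gives \eqref{eq:k-player-ledger}. I would also remark on two conventions. The sign convention makes the utility-gain in \eqref{eq:cce-distance} equal to the cost-regret. And nothing requires the distance to be nonnegative, which is consistent with the ledger possibly being negative through the drift $D_T^{(k)}$.
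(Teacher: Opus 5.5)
Your proposal is correct and follows the paper's proof essentially step for step. You expand the deviation gain under the product-mixture joint $\sigma_T$ to get $R_T^{(k)}(\nu_k)/T$, invoke the exact decomposition, use linearity to reduce the simplex supremum to pure deviations that the budget hypothesis places inside the relative-entropy ball, and take the maximum over $k$; your parenthetical cross-check via the terminal collapse and $L_{\Gamma^{(k)}}(S_T^{(k)})=\max_i S_T^{(k)}(i)$ is an equivalent reading that the paper does not spell out in this proof.
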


Both sides are exact, so for the mixed-action joint the one-way implication above becomes an identity.
With a smaller $\Gamma^{(k)}$ the right-hand side governs only the relative-entropy-restricted deviation class $\{\nu_k:\KL(\nu_k\|\pi^{(k)})\le\Gamma^{(k)}\}$.
If instead the players sample pure profiles $\bm a_t$ and $\sigma_T$ is read as $\frac{1}{T}\sum_t\delta_{\bm a_t}$, the same ledger holds in expectation up to an additive $O(\sqrt{\log(\sum_k K_k)/T})$ from the play martingales, which are mean-zero at each fixed deviation.

The identity turns a rate question into a question about whether the play settles.
Under the self-tuned schedule $\eta_t^{(k)}=C\sqrt{\Gamma^{(k)}/(V_{t-1}^{(k)}+\Gamma^{(k)})}$, a trajectory converging to a \emph{strict} pure Nash profile freezes the scale at a positive value and decays the per-round RCGF geometrically, with no assumption on the rate of approach.
Each player's realized intrinsic time is then \emph{bounded}, $V_T^{(k)}=O(1)$, and hence $d(\sigma_T,\mathrm{CCE})=O(1/T)$ --- strictly faster than the classical $\Theta(1/\sqrt T)$ (Theorem~\ref{thm:unconditional-V}, Appendix~\ref{app:proofs-games}).

That statement is conditional on convergence to a strict pure profile, which a finite potential or congestion game need not produce: fixed-step multiplicative weights admits limit cycles and chaos there \cite{palaiopanos2017congestion}.
And while the schedule's negative feedback --- growing $V^{(k)}$ forces $\eta$ downward, toward the small-step replicator-dynamics limit --- heuristically self-stabilizes, whether $V_T^{(k)}$ stays sublinear for every such game is a convergence-of-dynamics question the schedule does not settle.

The link between stabilization and the improved rate runs in one direction only.
The CCE distance is a per-player \emph{maximum}, exactly the worst single player's unilateral deviation gain, and $V_T^{(k)}$ governs every horizon-growing term of that player's regret, so $\max_k V_T^{(k)}=o(T)$ implies $d(\sigma_T,\CCE)=o(T^{-1/2})$.
Since $Q_t^{(k)}$ vanishes exactly as player $k$'s centered regret vector becomes constant on the support of its last-iterate play (Proposition~\ref{prop:tilted-var}), that hypothesis is a last-iterate stabilization statement.
The play itself settles, beyond its time average, either by concentrating on a single action or by the regrets flattening at an interior equilibrium.
The converse is unavailable from the ledger, since the residual coefficient $2C-C^{-1}$ is exactly zero at the headline $C=1/\sqrt2$ (Appendix~\ref{app:cce-converse}).

Which regime a game falls into is therefore settled by the realized intrinsic time alone.
On potential and congestion instances converging to a pure equilibrium, plain Gibbs stabilizes and \eqref{eq:k-player-ledger} gives the instance-adaptive rate.
On interior-equilibrium games such as zero-sum polymatrix games, plain Gibbs need not stabilize: a cyclic loss sequence holds the per-round centered RCGF away from zero, giving $V_T^{(k)}=\Theta(T)$ and the classical $O(1/\sqrt T)$, whose small-$\eta$ variance relaxation recovers the bound of \cite{cesa2006prediction}.
The guaranteed $O(1/T)$ improvement there needs an optimistic predictable schedule \cite{daskalakis2018training} outside the Gibbs family, and unlike the \emph{sum} of regrets, whose loss-movement telescopes across players under such a schedule \cite{syrgkanis2015neurips}, the CCE maximum admits no cross-player cancellation.
Appendix~\ref{app:proofs-games} works the canonical instances through the ledger; in two-player zero-sum, $d(\sigma_T,\mathrm{CCE})$ is the larger of the two exact deviation gains, at most the Nash duality gap of \eqref{eq:saddle-gap-exact}.

\section{Partial information: bandits, graphs, context, and robust scores}
\label{sec:partial}

In the full-information game of Sections~\ref{sec:primitive}--\ref{sec:games} the learner sees nature's entire loss vector $c_t$, and its regret decomposes exactly into intrinsic-time loss, retempering drift, and terminal relative-entropy transport. Partial information changes one thing (Game~\ref{game:bandit}): the learner no longer observes $c_t$, only the coordinate it samples, and must commit both a play distribution $p_t$ and a possibly distinct sampling law $\mu_t$ \emph{before} nature moves. It then runs the same intrinsic-time game on an \emph{estimated} score $\widehat c_t$; the only new terms are the bandit-specific play and estimation martingales, an explicit exploration cost, and any predictable estimation bias, which separate cleanly from that game (Theorem~\ref{thm:bandit}).

\begin{game}[The bandit concentration game]
\label{game:bandit}
Game~\ref{game:central}'s data --- a strictly positive prior $\pi\in\DeltaK$, a comparator budget $\Gamma\ge0$, and per-round RCGF budgets $(q_t)$ --- with the state, the moves, and the terminal payoff changed.
The state is the cumulative \emph{estimated} centered score $\widehat S_t:=\sum_{s\le t}\widehat z_s$ (from $\widehat S_0=0$), and the learner selects its own scale as in Game~\ref{game:cgf}.
Each round, after observing the context $x_t$ in the contextual case, the learner picks $\eta_t>0$ predictably, plays the Gibbs tilt $p_t=G_{\eta_t}(\widehat S_{t-1})$ of the estimated state, and picks a sampling distribution $\mu_t\in\Delta([K])$, which may differ from $p_t$ for exploration.
Nature then commits the true loss $c_t\in\R^K$, unseen by the learner and held to a per-round budget on the estimated centered RCGF; the learner samples $A_t\sim\mu_t$, observes only $c_t(A_t)$, forms an estimate $\widehat c_t$ of $c_t$, and advances the state by the centered estimated score $\widehat z_t:=\inner{p_t}{\widehat c_t}\1-\widehat c_t$.
The payoff to nature is the sampled composite-loss regret $\mathrm{Reg}_T^c(u):=\sum_{t=1}^T c_t(A_t)-\inner{u}{C_T}$, its supremum over the comparator ball $\{u\in\DeltaK:\KL(u\|\pi)\le\Gamma\}$ replacing the terminal payoff $L_\Gamma(S_T)$ of Game~\ref{game:central}.
\end{game}

That budget is imposed on the learner's \emph{estimated} centered RCGF $\widehat Q_t:=Q_{\eta_t}(p_t,\widehat z_t)$: since $c_t$ is committed before the arm is drawn, the constraint is on the conditional mean $\E[\widehat Q_t\mid\mathcal F_{t-1}]\le q_t$, the analogue of the full-information cap.
Because that expected estimated RCGF is an explicit closed form in the true losses and geometry $\mu_t$, it pins a feasible set for the true $c_t$ once $\mu_t$ is fixed.

\subsection{Generic bandit decomposition}

For a comparator $u\in\DeltaK$ the sampled composite-loss regret decomposes pathwise (Theorem~\ref{thm:bandit}, Appendix~\ref{app:proofs-partial}) as
\[
\mathrm{Reg}_T^c(u)=M_T^{\mathrm{play}}+\Xi_T-M_T^{\mathrm{est}}(u)+\mathrm{Bias}_T(u)+\widehat D_T+\widehat B_T(u)+\sum_{t=1}^T\eta_t\widehat Q_t
\]
into the full-information game on estimated scores (the last three terms) plus an observation penalty of play/estimation martingales, exploration cost $\Xi_T=\sum_t\inner{\mu_t-p_t}{c_t}$, and predictable bias.

The standard bandit templates are all estimator-specific instances of this game: explicit-exploration inverse-propensity scoring (IPS; bias vanishes, and $\Xi_T$ records the exploration cost), implicit-exploration EXP3-IX, predictable-offset control variates, and feedback graphs.
Graph topology enters only through the observation geometry $p_t(a)/o_t(a)$, with $o_t(a)$ the probability that arm~$a$ is observed on the round; for undirected graphs that geometry is bounded by the independence number, the size of the largest set of mutually non-adjacent vertices \cite{alon2017journalb}. Each is a choice of estimated score $\widehat c_t$ fed to the otherwise-unchanged game (Appendix~\ref{app:proofs-partial}).

\subsection{An identity for the estimated RCGF}

The one term this leaves implicit, the estimated per-round RCGF $\widehat Q_t$, is not to be bounded but has a closed form. Conditioned on the past its only randomness is the drawn arm, so $\E[\widehat Q_t\mid\mathcal F_{t-1}]=\eta_t^{-2}\sum_a\mu_t(a)\log\sum_i p_t(i)e^{\eta_t\widehat z_t^{(a)}(i)}$, written entirely in the true losses and the observation geometry $\mu_t$, with small-scale limit the propensity-inflated variance $\tfrac12\sum_a\tfrac{p_t(a)(1-p_t(a))}{\mu_t(a)}c_t(a)^2$ (Proposition~\ref{prop:bandit-exact-value}, Appendix~\ref{app:proofs-partial}).

Taking expectations of \eqref{eq:bandit-decomp} annihilates the two martingales, leaves the exploration term and predictable bias as functions of the true loss, and leaves the estimated intrinsic time $\sum_t\eta_t\E[\widehat Q_t]$ in the closed form just given; the estimated drift and transport are the same prior-transport ledger as the full-information game. Partial information therefore introduces no new obstruction: the exact minimax value is the min-max of an explicit function of the true losses and observation geometry, and solving it in closed form for general $K$ is the full-information exact-value problem itself (solved for $K=2$ over one round by Proposition~\ref{prop:k2-exact}). The worst-case slice $\mu_t=p_t$ caps the per-round value at $K/2$, recovering the $O(\sqrt{KT\log K})$ exponential-weights rate as the small-scale relaxation; closing the remaining $\sqrt{\log K}$ factor remains open (Section~\ref{sec:discussion}).

\subsection{Contextual wrappers and identification scores}

Only the estimated score fed to the otherwise-unchanged game changes across the remaining partial-information settings: the proper contextual-bandit wrapper chooses $\widehat c_t$ by lifting a policy class, and the testing-exponent scores choose it as a log-likelihood-ratio.

Treating a finite policy class as experts and lifting it by duplication runs the full-information game on importance-weighted losses, at a comparator complexity equal to the aggregate prior mass of a designated policy's loss-identical copies.
The resulting wrapper bound holds with exact constants, and its sharp rate closes on the estimated per-round RCGF exactly as the plain-bandit rate does (Proposition~\ref{prop:duplication}, Corollary~\ref{cor:duplication-reduction}).

Scoring a log-likelihood ratio instead identifies a latent model, and the game's own quantities are the classical testing ones.
The Chernoff testing coefficient \cite{chernoff1952measure} is minus the per-observation Bellman potential, so the Gibbs equalizer is the Chernoff tilt and optimizing the scale is Chernoff information.
The pairwise testing exponent is then the edge-restricted multi-way coincidence radius of the arm-induced laws \cite{balsubramani2026information}, equal to the MAP error exponent for identifying the hypothesis from repeated pulls of an arm (Theorem~\ref{thm:testing-bellman}, Appendix~\ref{app:proofs-partial}).

\section{Thompson sampling and prior-posterior-ratio martingales}
\label{sec:ts}

Randomizing the learner's play exposes two martingale readings of the same game. The first is the sampling martingale of Thompson sampling: posterior sampling attains the deterministic concentration-game value in expectation, and the entire effect of sampling is a single mean-zero martingale on the realized path. The second is the prior-posterior-ratio martingale, which recasts the game's own Bayes update as the test supermartingale of anytime-valid inference. Their step-by-step derivations are collected in Appendix~\ref{app:pf-ts}.

\subsection{Thompson sampling as the same game plus a martingale}

If the learner samples $I_t\sim p_t$ and plays that expert, the sampled composite-loss regret $\widehat R_T^c(\nu):=\sum_{t=1}^T c_t(I_t)-\inner{\nu}{C_T}$ decomposes exactly as
\begin{equation}\label{eq:ts}
\widehat R_T^c(\nu)
=
M_T^{\mathrm{sam}}
+
D_T
+
B_T(\nu)
+
\sum_{t=1}^T \eta_t Q_t(c)
\end{equation}
with $M_T^{\mathrm{sam}}:=\sum_{t=1}^T (c_t(I_t)-\inner{p_t}{c_t})$ a martingale difference: Thompson sampling incurs the same deterministic concentration-game value \emph{in expectation} as exponential weights, plus a mean-zero martingale whose per-realization size is a genuine path cost. Taking $\E[\,\cdot\mid\mathcal F_0]$ and $\E[M_T^{\mathrm{sam}}]=0$,
\begin{equation}\label{eq:ts-expected}
\E\bigl[\widehat R_T^c(\nu)\bigr]
=
D_T+B_T(\nu)+\sum_{t=1}^T\eta_t Q_t(c)
=
R_T^c(\nu)
\end{equation}
for every comparator and predictable loss sequence---the sampling step contributes nothing in expectation. Randomizing and then scoring in expectation therefore cancel exactly, pinning down an \emph{expected sampled concentration game} (Game~\ref{game:ts}) whose saddle point is posterior sampling (Theorem~\ref{thm:ts-exact}, Appendix~\ref{app:proofs-ts}). This is a value identity: $w_t=p_t$ attains the deterministic value in expectation as the \emph{unique} equalizer at value $\Gamma/\eta+\eta\sum_t q_t$, while any $w_t\neq p_t$ over-drives the Gibbs increment and strictly loses.

The resolution is exact \emph{in expectation}: per realization the sampled regret fluctuates by the mean-zero martingale $M_T^{\mathrm{sam}}$, whose typical magnitude---the square root of the cumulative running variance $V^{\mathrm{sam}}_T:=\sum_t\Var_{p_t}(c_t)$---is a genuine cost on the realized path that the construction leaves in place.
No scoring rule removes it: there is no fluctuation-free realized posterior-sampling game, and the sharpest high-probability control uses a different scale from the in-expectation optimum (Proposition~\ref{prop:ts-no-pathwise}).
A certificate for its size comes from Ville's inequality applied to the exponential supermartingale on the variance clock $V^{\mathrm{sam}}_t$, which bounds $M_t^{\mathrm{sam}}$ uniformly in time.
At a constant scale the prior-posterior ratio of Section~\ref{sec:ppr} is the companion $e$-process under the usual hypotheses, and inverting it gives the anytime-valid comparator sets of Section~\ref{sec:confidence} (Appendix~\ref{app:pf-ts}; an exact decomposition of the slack in Ville's inequality is obtained in concurrent work \cite{delapena2026exact}).

The sampling martingale concentrates at the standard rates --- a high-probability $\sqrt{2V^{\mathrm{sam}}_T\log(1/\delta)}$, an anytime bound via Ville's inequality, and a $O(\sqrt{V^{\mathrm{sam}}_T\log\log V^{\mathrm{sam}}_T})$ law of the iterated logarithm (Appendix~\ref{app:pf-ts}).
In the worst case its cost matches the exponential-weights regret up to the iterated-logarithm factor, a genuine cost of comparable size; in the low-noise regime it vanishes in expectation, while the regret is $O(\Gamma)$.

\subsection{The prior-posterior-ratio martingale}
\label{sec:ppr}

The second martingale reading turns the game's own update into a test statistic: track how far a fixed comparator $\nu$ sits from the running posterior in relative entropy, and each Bayes step moves that distance by exactly the comparator's per-round margin.

Writing $m_t(\eta_t):=-\eta_t^{-1}\log\sum_i p_t(i)e^{-\eta_t c_t(i)}$ for the round's mix loss, an elementary expansion of $\log(p_{t+1}/p_t)$ for the update $p_{t+1}(i)\propto p_t(i)e^{-\eta_t c_t(i)}$ (Appendix~\ref{app:pf-ts}) gives the transport identity
\begin{equation}\label{eq:kl-transport-ppr}
\KL(\nu\|p_{t+1})-\KL(\nu\|p_t)
=
\eta_t(\inner{\nu}{c_t}-m_t(\eta_t))
\end{equation}
for every fixed comparator $\nu$.
Read the right-hand side as the comparator's \emph{margin}: the relative entropy from $\nu$ to the posterior falls by $\eta_t(m_t-\inner{\nu}{c_t})$ on every round the comparator's loss beats the mix.
Summing over the run, $\KL(\nu\|\pi)-\KL(\nu\|p_{T+1})=\sum_t\eta_t(m_t-\inner{\nu}{c_t})$, and nonnegativity of the terminal relative entropy caps the comparator's total accumulated margin at $\KL(\nu\|\pi)\le\Gamma$ --- the same information budget that identifies the comparator elsewhere in the game.

Sampling turns this deterministic ledger into a test. Under Thompson sampling with $I_t\sim p_t$, the log-weight of the sampled expert moves by $\log p_{t+1}(I_t)-\log p_t(I_t)=-\eta_t(c_t(I_t)-m_t)$, whose conditional mean is the nonpositive drift $-\eta_t\delta_t(c)$; the excess over that drift is the prior-posterior-ratio (PPR) martingale, connecting the concentration game to anytime-valid inference \cite{ramdas2023statistical}.

This martingale is central to a dual, testing-facing reading of the game.
The predictable play $p_t=G_{\eta_t}(S_{t-1})$ is the dual variable in a Lagrangian--martingale duality, the game's $\min_p\max_z$ matching the min-over-predictable / max-over-process shape of safe anytime-valid testing \cite{ramdas2023statistical,shafer2021testingb}.
The prior-posterior-ratio supermartingale is the process the learner bets against, and the Gibbs play the equalizing bet that fixes the round's loss whichever direction nature moves.
In the time-uniform regime the concentration-game identity itself reads as the capital process of a bettor against the forecasts --- the Skeptic of game-theoretic probability.
Such a guarantee must control the maximum over the process instead of its endpoint, and the ledger quantifies that difference exactly: the two readings of the same supermartingale differ by the drawdown of the comparator's transport trajectory (Section~\ref{sec:confidence}).

\section{Boosting and the pressure game}
\label{sec:boosting}

Boosting is exactly about the pressure game on signed margins. 

Let $g_t(i)=y_i h_t(x_i)\in[-1,1]$ be the signed margin score of weak learner~$h_t$ on example~$i$, and $M_T(i)=\sum_{t=1}^T \alpha_t g_t(i)$ the cumulative weighted margin. The roles reverse relative to the expert game: the booster plays the learner's role, with the $N$ training examples as its actions and the uniform weighting $u_N$ as its prior; the weak learner supplies nature's move $g_t$; and a comparator is a reweighting of the examples.
The $\varepsilon$-quantile margin below corresponds to the comparator uniform on the hardest $\lceil\varepsilon N\rceil$ examples.

\subsection{From regret to the exponential-loss terminal game}

Running the concentration game on example scores $\ell_t(i)=(1+y_ih_t(x_i))/2$ with round-$t$ edge $\gamma_t:=\tfrac12\inner{p_t}{g_t}$, the standard boosting conventions give, for any posterior $\nu$, $R_T^\ell(\nu)=\tfrac{T}{2}(2\bar\gamma_T-\inner{\nu}{m})$, i.e.\ $\inner{\nu}{m}=2\bar\gamma_T-(2/T)R_T^\ell(\nu)$ with $\bar\gamma_T$ the average edge and $m_i=T^{-1}\sum_t y_ih_t(x_i)$ the unit-weight average margin ($M_T$ is its $\alpha_t$-weighted, unaveraged counterpart).
Hence any regret bound $R_T^\ell(\nu)\le U_T(\nu)$ implies $\inner{\nu}{m}\ge 2\bar\gamma_T-2U_T(\nu)/T$.

In particular the $\varepsilon$-quantile margin is controlled by comparator complexity $\log(1/\varepsilon)$ and the intrinsic time of the example-loss sequence: a comparator uniform on the $\lceil\varepsilon N\rceil$ examples of smallest margin has $\KL(\nu\|u_N)\le\log(1/\varepsilon)$.
The envelope \eqref{eq:pathwise-regret} at $\Gamma=\log(1/\varepsilon)$ and $C=1/\sqrt2$ then lets that margin fall short of twice the average edge by at most $2\sqrt{\log(1/\varepsilon)/T}$ plus additive $O(\log(1/\varepsilon)/T)$ edge terms in the worst case, and by less whenever the realized intrinsic time is smaller (Appendix~\ref{app:quantile-margin}).

The exponential training loss also admits a direct mixed-coincidence identity, exhibiting it as an exact terminal game.
For $L_T^{\exp}:=N^{-1}\sum_{i=1}^N e^{-M_T(i)}$, if $p_{t+1}(i)\propto p_t(i)e^{-\alpha_t g_t(i)}$ with $p_1=u_N$, then for every comparator $\nu\in\Delta([N])$,
\begin{equation}\label{eq:boosting-variational}
-\log L_T^{\exp}
+
\KL(\nu\|p_{T+1})
=
\KL(\nu\|u_N)
+
\inner{\nu}{M_T}
\end{equation}
Equivalently, applying the Donsker--Varadhan variational form to $-M_T$,
\[
-\log L_T^{\exp} = \inf_{\nu\in\Delta([N])}\bigl\{\inner{\nu}{M_T}+\KL(\nu\|u_N)\bigr\},
\]
the infimum attained at $\nu=p_{T+1}$, where $\KL(\nu\|p_{T+1})=0$ in \eqref{eq:boosting-variational}.
Training error, margin tails, and quantile margins are therefore controlled by the same terminal concentration functional.

For the margin-tail bound: for every $\theta\in\R$, the fraction of examples with normalized margin at most $\theta$ satisfies $E_T(\theta)\le\exp(\theta A_T)\cdot L_T^{\exp}=\exp\!\bigl(\theta A_T-\inf_\nu\{\inner{\nu}{M_T}+\KL(\nu\|u_N)\}\bigr)$, where $A_T=\sum_t\alpha_t$ (by the Markov/Chernoff step $\mathbf{1}\{M_T(i)\le\theta A_T\}\le e^{-M_T(i)+\theta A_T}$).

\subsection{Pressure-target boosting and AdaBoost}

If $\alpha_t$ is chosen by the local pressure rule $-\alpha_t^{-1}\log\sum_i p_t(i)e^{-\alpha_t g_t(i)}=a_t$, then
\begin{equation}\label{eq:boost-pressure}
-\log L_T^{\exp}=\sum_{t=1}^T \alpha_t a_t,
\qquad
\sum_{t=1}^T \alpha_t(\inner{\nu}{g_t}-a_t)
=
\KL(\nu\|p_{T+1})-\KL(\nu\|u_N)
\end{equation}
The pressure target is an exact exponential-loss descent statement.

For binary weak hypotheses $g_t(i)\in\{-1,+1\}$ the coefficient minimizing the one-step normalizer $\log\sum_i p_t(i)e^{-\alpha g_t(i)}$ --- the unrescaled CGF at scale $\alpha$ --- is the classical AdaBoost weight $\alpha_t^\star=\tfrac12\log\tfrac{1-\varepsilon_t}{\varepsilon_t}$ ($\varepsilon_t=\Pr_{i\sim p_t}\{g_t(i)=-1\}$), the binary special case of the pressure game. It makes the pressure identity evaluate the exact exponential-loss ledger $-\log L_T^{\exp}=\sum_t-\tfrac12\log(1-4\gamma_t^2)$, the Gaussian-channel information of the per-round mean margin. The convexity relaxation $-\log(1-x)\ge x$ then recovers the classical training-error rate $\widehat{\mathrm{err}}_T\le e^{-2\gamma^2 T}$ as a one-line corollary, sharper because it holds for adaptive non-uniform edges (Proposition~\ref{prop:adaboost-cgf}, Appendix~\ref{app:proofs-boosting}).

The exact ledger~\eqref{eq:adaboost-ledger} is strictly stronger than $e^{-2\gamma^2 T}$: it requires no uniform lower bound on the edge, and the per-round gap $-\tfrac12\log(1-4\gamma_t^2)-2\gamma_t^2=4\gamma_t^4+O(\gamma_t^6)$ is exactly what the classical bound discards.

A one-sided shortfall statistic reweights toward the examples still below target. Replacing the signed score $g_t(i)$ by $(a_t-g_t(i))_+$ produces sparse example weights: examples already above target receive zero cost.
This is the boosting analogue of the positive-part sufficient-statistic reduction---a change of statistic, leaving the game itself untouched.

\section{Related work}
\label{sec:related}

The minimax view of online learning as a repeated game is classical \cite{shafer2001probability,ShaferVovk2019,cesa2006prediction,abernethy2008optimal}, and the multiplicative-weights update on the entropic side of it is itself a meta-algorithm whose disparate instances share a single exponential potential \cite{arora2012theory}.
The drifting-games framework \cite{schapire2001drifting,luo2014driftinggames} defines a potential, lets the adversary's feasible set depend on it, and computes the value by backward induction, reformulating Hedge, bandits, and boosting in minimax terms.
The fixed-scale concentration game is the entropic version of that program: the state potential is the log-partition function and the one-step Bellman increment is the centered RCGF $\eta_t Q_t$, where that program chooses a quadratic penalty externally and the variance and range games are its relaxations.
The effective state is $(p_t,V_{t-1})$, and Theorem~\ref{thm:variable} solves the backward induction in closed form without numerical recursion.
The algorithmic ingredients treated here --- entropic FTRL regularizers, relative-entropy-anchored mirror maps, the second-order rate schedule, drifting-game potentials --- are downstream consequences of that constrained min-max value, where a design-first account would install them as upstream choices.
Under the RCGF choice the variable-temperature drift has a clean exponential-family relative-entropy interpretation, where the dual-feasibility / mirror-map choice yields no immediate Bregman expression, which leaves the RCGF as the upstream quantity inside the entropic class.
For repeated play, the equilibrium consequences are the standard external- and swap-regret ones \cite{hart2000simpleb,blum2007external}, derived here in the present notation.
The same no-regret dynamics drive practical equilibrium computation: counterfactual regret minimization solves large imperfect-information games through time-averaged play \cite{brown2019solving}, and optimistic dynamics additionally secure last-iterate convergence in zero-sum games \cite{daskalakis2018training}.

The framework closest in spirit is the game-theoretic probability of Shafer and Vovk \cite{ShaferVovk2019,shafer2001probability}, which casts the laws of large numbers, the central limit theorem, the law of the iterated logarithm, and large deviations as outcomes of a perfect-information game built around a nonnegative-(super)martingale capital process.
The distinction is in the primitive: there a Skeptic bets against a forecaster and its wealth multiplies unless the asserted event holds \cite{shafer2021testingb}, and that line solves for a capital-maximizing bet.
Here the primitive is the comparator-class regret value of an aggregating learner, whose value function is the log-partition and whose solution is a Gibbs equalizing equilibrium.
The two programs are dual, meeting where the prior-posterior-ratio martingale of Section~\ref{sec:ppr} is a Skeptic-style capital process: in the time-uniform regime the log-partition value and the testing supermartingale describe the same process.

The paper sits at the intersection of adaptive online learning, concentration of measure, and robust Bayes.
For adaptive regret it connects with AdaHedge \cite{derooij2014adahedge}, NormalHedge \cite{chaudhuri-freund-hsu}, Squint \cite{koolen2015squint}, coin-betting and parameter-free methods \cite{orabona-pal,Cutkosky19Combining}, data-dependent adaptive-geometry methods \cite{DuchiHazanSinger2011}, and the sequential-complexity program that measures adversarial difficulty by a martingale-indexed capacity \cite{rakhlin2013optimization,rakhlin2014onlineb}.
For concentration it connects with PAC-Bayes and the thermodynamics of learning \cite{catoni2007pacbayesian,alquier2016properties} and with the time-uniform / $e$-process viewpoint \cite{howard2020b,waudbysmith2024,ramdas2023statistical,kaufmann2021mixture}, whose safe anytime-valid constructions the prior-posterior-ratio identity makes explicit at the level of the game.
Closest of all is the line that converts regret into concentration directly: the regret of universal portfolio yields concentration inequalities and confidence sequences tight to the leading constant \cite{orabona2024tight}, and any $e$-process whose betting regret is sublinear is log-optimal for a composite alternative \cite{waudbysmith2025universal}.
That conversion is one-directional --- a regret guarantee yields a tail bound, and the conversion factor between them remains implicit --- and the decomposition here states that factor: the comparator transport $B_T(\nu)$ is the tail statement's contribution, and the per-round loss $\eta_tQ_t$ the sequence's own.
The log-optimality those results establish is the extremal property the reverse information projection selects \cite{larsson2024numeraire}, which is the terminal transport of \eqref{eq:intro-exact} read as a projection onto the comparator class.
The same conversion runs outside the tail-bound setting, in generalization games \cite{lugosi2023onlinetopac} and in sequential likelihood mixing \cite{kirschner2025mixing}.
Each constructs a game to prove a statistical statement; here the statistical statements and the regret are two readings of one value identity, so the construction step only names a comparator class and a per-round budget, and manufactures no bound.
On the adaptive-regret side, the equality-form analyses of adaptive FTRL \cite{mcmahan2017ftrl} decompose regret into a comparator-regularizer term and a per-round stability term, the two roles played here by $B_T(\nu)$ and $\eta_tQ_t$.
The scale enters there as a regularizer the analyst chooses and here as nature's information budget, which is why the retempering drift surfaces as a third term with a closed form instead of being absorbed into the schedule.
For bandits it connects with EXP3, EXP3-IX, Tsallis-INF, and feedback-graph algorithms \cite{auer2002siam,KocakEtAl14,zimmert2021jmlr,alon2017journalb,RouyerEtAl2022FeedbackGraphs}; for Thompson sampling, with the information-directed tradition \cite{RussoVanRoy14,RussoEtAl18,ramdas2023statistical}; and for stochastic low-noise / fast rates, with mixability-based theory \cite{erven2015fast}.
The classical tail inequalities that the terminal payoff accounts for exactly---Markov, Chebyshev, and Chernoff---have themselves recently been sharpened by randomization and exchangeability into never-worse, typically strictly tighter forms \cite{ramdas2026randomized}. In the time-uniform regime the analogous sharpening is exact: in concurrent and independent work, \cite{delapena2026exact} decompose the slack in Ville's inequality for a nonnegative supermartingale into explicit overshoot, predictable-loss, and residual-survival terms, the anytime-valid counterpart to the terminal accounting isolated here.
For dynamic / shifting comparators, the foundational work is \cite{herbster1998tracking,kalai2005efficientb}; the comparator-path identity \eqref{eq:dynamic-comparator} fits inside that program as the pathwise statement that those papers' inequality forms approximate.

The parallel Fenchel-game program for convex optimization \cite{wang2024noregret} runs the other way, converting a static optimization into a game where the concentration game converts a sequential inferential procedure into one. A further parallel is the natural-gradient variational-Bayes program \cite{khan2023bayesian}, which derives a broad range of learning algorithms from a single Bayesian regularized update primitive, where here the corresponding primitive is instead a constrained min-max game value.
For boosting, the main classical reference is \cite{FreundSchapire97,SchapireFreund12BoostingBook}; the pressure-target perspective in Section~\ref{sec:boosting} aligns with the smoothed-boosting line \cite{servedio2003smooth} and the entropy-regularized boosting tradition.

A second adjacent line is the literature on \emph{budgeted adversaries}, where an exogenous budget --- on switches, cumulative loss variation, or adversarial strength relative to a stochastic baseline --- constrains nature's per-round move and regret scales in that budget in place of $T$ alone \cite{abernethy2010repeated}.
Here the round-budget allocation is instead derived from the RCGF/log-partition Bellman structure: $\sum_t Q_t\le V$ is a sufficient statistic of nature's feasibility under a fixed potential, and front-loading is forced by the nonincreasing scale schedule.
That literature's budget is the cumulative intrinsic time $V_T(c)$, and its regret bounds are the variance relaxation of the present identity, with the same $\sqrt{\Gamma V_T}$ leading constant up to relaxation slack.

\section{Discussion}
\label{sec:discussion}

With the game made explicit, the online-learning and concentration primitives treated here share one structure: within the relative-entropy setting each is a constrained min-max value with the same Bellman equalizer, the per-round constraint the centered RCGF and the comparator regularizer relative entropy to a prior.
The variable-temperature decomposition \eqref{eq:exact-variable} makes this operational, both players' strategies read off term by term, and repeated play yields an exact information-theoretic ledger of self-play in place of the usual quadratic-variation surrogate.
The variance and bounded-range quantities of standard regret theory enter only as small-$\eta$ relaxations of that identity, and the derivation does not run in the reverse direction.
The gain is conceptual, and no new algorithm comes with it; the limitation is the restriction to entropic geometry, the complementary non-entropic case being the province of the Fenchel-game program \cite{wang2024noregret}.

\subsection{Common structure across settings; the scaling-time question}

Every setting treated above instantiates the same abstract game, differing only in the observation structure (full, bandit, feedback graph, contextual), the action space (finite, continuous, policy class), and the sufficient statistic (raw loss, residual, estimated, signed margin).
The RCGF constraint always takes the same form $Q_t(\cdot)\le\beta_t$; what changes is the loss sequence the RCGF is evaluated on, and how that sequence relates to the true losses.
The same economy holds at the level of the protocol, where Table~\ref{tab:game-variants} compares the boxed games on the parts that vary.

\begin{table}[t]
\centering
\small
\setlength{\tabcolsep}{4pt}
\renewcommand{\arraystretch}{1.25}
\begin{tabular}{@{}llllll@{}}
\toprule
\textbf{Game} & \textbf{Scale set by} & \textbf{Nature's constraint} & \textbf{Move order} & \textbf{State score} & \textbf{Terminal payoff} \\
\midrule
Game~\ref{game:central} (base game) & schedule & per-round cap $q_t$ & learner first & true & $L_\Gamma(S_T)$ \\
Game~\ref{game:fixed} (fixed scale) & frozen & per-round cap $q_t$ & learner first & true & $L_\Gamma(S_T)$ \\
Game~\ref{game:cgf} (self-tuned) & learner & cumulative $V$ & learner first & true & $L_\Gamma(S_T)$ \\
Game~\ref{game:pressure} (pressure) & learner, via $b_t$ & none (made active) & nature first & true & $L_\Gamma(S_T)$ \\
Game~\ref{game:bandit} (bandit) & learner & estimated-RCGF cap & learner first & estimated & sampled-regret sup \\
Game~\ref{game:ts} (expected sampled) & frozen & per-round cap $q_t$ & learner first & true & expected sampled regret \\
\bottomrule
\end{tabular}
\caption{Every boxed variant is Game~\ref{game:central} with at most three parts changed. Columns are the parts that vary across the variants --- who sets the measurement scale, what constrains nature's move, the order of moves within a round, which score drives the state, and the terminal payoff; the base-game row gives Game~\ref{game:central}'s own values. }
\label{tab:game-variants}
\end{table}

The historical scaling-time question \cite{Freund2016} asks for quantile regret controlled by an internal variance-like process in place of horizon and expert count alone. We settle the fixed-budget PAC-Bayes version for the exact cumulant intrinsic time, with a logarithmic meta-controller giving simultaneous quantile guarantees. Simultaneous adaptivity in both comparator complexity and intrinsic time, for a single copy and up to only mild lower-order terms, remains open: NormalHedge together with contextual-bandit model-selection lower bounds \cite{Freund2016,MarinovZimmert2021} show that self-variance, quantiles, and parameter-freeness cannot all be demanded at once.
In the face of these impossibility results, the concentration game makes the tradeoffs visible.

\subsection{The identities in neighboring fields}
\label{sec:new-connections}

Several of the framework's identities read cleanly in the languages of neighboring areas: the retempering drift as a summation by parts of the log-partition's $\eta$-derivative, whose It\^o form turns the committed and reactive games into dual stochastic-control problems on the information manifold.
The Bellman potential as a negative free energy at inverse temperature $\eta$ with external field $S$.
In the bandit reading, the observation-noise terms of \eqref{eq:bandit-decomp} are the axis along which the algorithms trade off against one another.
EXP3-IX makes the estimation bias small and the martingale term slightly larger; Tsallis-INF makes the entropic information small and the sample-dependent drift larger (Appendix~\ref{app:neighboring-readings}).

The one-round ($T=1$) specialization of the fixed-scale game is the classical redundancy--capacity saddle point \cite{haussler1997mutual,MerhavFeder1998,ClarkeB94}: a channel's input distribution is nature's comparator prior, the log-partition value $W_\eta$ is the mixture codelength, and the Gibbs equalizer $\rho_{\eta,S}=\nabla W_\eta(S)$ is the capacity-achieving input, with capacity the saddle-point value $\mathsf C=\min_q\max_\theta\KL(P_\theta\|q)=\max_{p_0}I(p_0)$.
When an external process pins the input prior, leaving no freedom to equalize, the shortfall from capacity is the same terminal relative-entropy transport the horizon ledger isolates in $B_T(\nu)$, now read \emph{across channel inputs}.
It is computable term by term as a Bregman divergence generated by the log-partition potential (Appendix~\ref{app:neighboring-readings}).

An algorithm gains from better-behaved losses exactly the higher-cumulant tail $\kappa_{\ge3}$ that the variance and range proxies discard, and the relaxation remainder --- the heavy-tailed moves the game admits that a proxy forbids --- has the closed form of Proposition~\ref{prop:tower-remainder}.
A change of sufficient statistic $z\mapsto\phi(z)$ leaves every identity intact and shrinks that remainder insofar as it shrinks $\{\kappa_n(\phi(z))\}_{n\ge3}$. Bounded-influence transforms and optimism are both strategies to reduce the residual cumulants.

The Bellman equalizer is also the maximum-entropy distribution consistent with the exposed sufficient statistic (Proposition~\ref{prop:robust-bayes}), and the game-theoretic reading derives both properties from one minimax argument on the RCGF-constrained feasible set.
The coincidence holds well beyond the entropic case: for any strictly convex Legendre-type potential $W$ with conjugate entropy $-\Psi$, $\nabla W(S)$ is simultaneously the Bellman equalizer, since the budget-sphere increment is constant and nature is indifferent across directions, and the maximum-$(-\Psi)$-entropy distribution.
Both therefore hold across the regular exponential/Bregman family.
The Shore--Johnson axioms then single out the centered-RCGF member, pinning the inference rule uniquely; deriving that invariance from the equalizer postulate alone is beyond the present scope.

\subsection{Exact values and rates}
\label{sec:exact-values-rates}

These readings sharpen what remains open, and one quantity is the residual behind most of it: the \emph{exact value}.
The one round is solved on two slices, each strictly below the Bellman bound --- two experts at every comparator budget (Proposition~\ref{prop:k2-exact}), and every $K$ at a saturated one (Proposition~\ref{prop:saturated-exact}) --- and their common refinement is missing.
So is any $K$ over $T\ge 2$ rounds under per-round caps, beyond the edge terms $O(\Gamma+Q_T^*)$.
Under a cumulative budget the multi-round question closes at constant scale from the finite horizon $T_0$ on (Proposition~\ref{prop:horizon-free}), and varying the scale lowers the value by under a tenth of a percent at the budgets computed (Appendix~\ref{app:varying-scale-worth}).
The per-round-capped game remains open, its budget $V=T\beta$ growing with the horizon.
Even at $K=2$ with a uniform prior, where the terminal payoff is an exact support function, nature's reachable set solves a state-dependent recursion whose optimum over-tilts the Gibbs iterate, and no elementary closed form survives past $T=1$ (Appendix~\ref{app:exact-value-residual}).
The same quantity settles several others.
The partial-information value is the same function with the estimated per-round RCGF of Proposition~\ref{prop:bandit-exact-value} in place of the true one.
The pressure and second-order games differ at the horizon only through which terminal state each lets nature reach, their drifts agreeing by Proposition~\ref{prop:terminal-collapse}.
But the loss couples the rounds through the path-dependent $p_t$, so the optimal schedule is again this control problem and a matching lower bound for the surrogate is unknown.

Simultaneous adaptation is separate.
A single fixed-budget run is valid for every comparator simpler than its budget but not adaptive to it, losing a factor $\sqrt{\Gamma_{\max}/\Gamma}$ against true complexity $\Gamma$.
A dyadic meta-controller restores adaptivity at an additive $O(\log\log\log K)$ plus the controller's own intrinsic-time term, and a single copy attaining the ideal $\sqrt{\Gamma V_T}$ with no additive term for all complexities at once is ruled out by the model-selection lower bound \cite{MarinovZimmert2021,Freund2016}.
Enlarging the class to expert--scale pairs under a log-scale prior over $\eta$ --- the continuous-mixture device of \cite{koolen2015squint} and the parameter-free family \cite{orabona-pal} --- does achieve simultaneous adaptivity at a strictly-lower-order $O(\log\log V_T)$ overhead, leaving only the leading constant.
The exact value above also decides whether the mixture's per-round Jensen surplus preserves the envelope constant $2\sqrt2$.

Correlated-equilibrium rates rest on a different quantity.
An improved CCE rate for a structured game is equivalent to worst-player last-iterate stabilization, $\max_k V_T^{(k)}=o(T)$; the stabilizing sub-case is settled for potential, congestion, and polymatrix games under Gibbs-equalizer dynamics (Theorem~\ref{thm:unconditional-V}), where the interior-equilibrium sub-case fails for plain Gibbs and needs an optimistic schedule.
An exact growth-rate law for $V_T^{(k)}$, and a matching lower bound on the non-stabilizing polymatrix class, remain open.

\appendix
\appendixtitle
\section{Further results}
\label{app:gap-proofs}

Results whose full detail the main text relegates to the appendix are stated and discussed here, grouped by source section; the proofs of the numbered statements, the body's and this appendix's alike, are collected in Appendix~\ref{app:proofs}.

\subsection{Further results from Section~\ref{sec:primitive}}
\label{app:further-primitive}

\subsubsection{The exponential-family parameterization costs no generality}
\label{app:reparameterization}

Fix $\eta_t>0$ and a strictly positive prior. Adding $b\1$ to a cumulative score multiplies every numerator and the denominator alike by $e^{-\eta_t b}$, so two scores give the same $p_t$ in \eqref{eq:bayes-update} exactly when they differ by a constant vector; and every interior $p_t$ --- positive weight on every action --- is realized, by $C_{t-1}(i)=-\eta_t^{-1}\log\bigl(p_t(i)/\pi(i)\bigr)$.
Hence \eqref{eq:bayes-update} is a bijection from $\R^K/\R\1$ onto the interior of $\DeltaK$: it \emph{reparameterizes} arbitrary play instead of restricting it.
Read in the centered coordinates of Section~\ref{sec:primitive} the same inverse says that the state is the round's posterior-to-prior log-ratio at the round's own scale, $S_{t-1}=\eta_t^{-1}\log(p_t/\pi)$ up to an additive constant, and doubling the scale exactly halves the state needed for the same tilt.
The per-round move is its increment: at a constant scale $z_t=\eta^{-1}\log(p_{t+1}/p_t)$, the log-ratio of consecutive posteriors, again up to a constant.

\subsubsection{The exponential family traced by the inverse temperature}
\label{app:exp-family}

The family of posteriors parameterized by inverse temperature forms a one-dimensional exponential family
\begin{equation}\label{eq:exp-family}
\cE(\pi, C_{t-1})
=
\left\{q_\eta(i) = \frac{\pi(i)\,e^{-\eta C_{t-1}(i)}}
 {\sum_j \pi(j)\,e^{-\eta C_{t-1}(j)}}
 : \eta > 0 \right\}
\end{equation}
with natural parameter $\eta$, sufficient statistic $C_{t-1}$, log-partition $\Lambda(\eta)=\log\sum_i\pi(i)e^{-\eta C_{t-1}(i)}=-\eta A_{t-1}(\eta)$, and Fisher information $\Lambda''(\eta)=\Var_{q_\eta}(C_{t-1})$. The intrinsic-time density $Q_t(c)$ is the one-step analogue of that Fisher information---the curvature cost of incorporating $c_t$ at the current scale---whose running accumulation is the pathwise intrinsic-time clock adaptive Bayes tracks \cite{balsubramani2026adaptive}. In the SafeBayes tradition \cite{the2012safebayesian}, $p_t$ is the maximum-entropy distribution typical of the observed average loss, and $Q_t(c)$ measures how much the new observation forces the learner away from the prior in relative entropy.

\subsubsection{The moment expansion behind the RCGF}
\label{app:moment-expansion}

The choice of the RCGF as the budgeted quantity descends from the moment expansion of the MGF:
\begin{equation}\label{eq:mgf-coincidence}
\E_p[e^{\eta z}]
=
\sum_{n\ge 0}\frac{\eta^n}{n!}\E_p[z^n]
=
\sum_{n\ge 0}\frac{\eta^n}{n!}\sum_i p(i)\,z(i)^n
\end{equation}
where each moment $\E_p[z^n]=\sum_i p(i)\,z(i)^n$ is the coincidence rate of the coincidence-budget reading. The centered log-MGF $\eta^2 Q_\eta(p,z)$ packages these rates of all orders into a single scale-aware sufficient statistic, so the cap privileges no moment order. Variance ($n=2$) and range ($n=\infty$) are individual moment-bounds obtained by truncating \eqref{eq:mgf-coincidence}, which is why the variance and range games appear as derived relaxations of the primitive (Section~\ref{sec:hierarchy}).

The centered RCGF admits three readings, each at its own rescaling. As a \emph{coincidence budget}, $\eta^2 Q_\eta(p,z)=\log\E_p[e^{\eta z}]$ is the logged generating function of the moments $\E_p[z^n]$, summed at scale $\eta$; each moment is the $z$-weighted rate at which $n$ draws coincide on a symbol of $p$ \cite{balsubramani2026information}. As a \emph{min-max value}, $\eta Q_\eta(p,z)+\eta^{-1}\KL(\nu\|p)$ is the robust-Bayes value against nature's tilt $z$ (Propositions~\ref{prop:robust-bayes},~\ref{prop:terminal-dual}): the centered RCGF is the part of the round's value coming from the constraint on nature's move, the relative-entropy term the part from the comparator class. And as a \emph{relative-entropy increment}, the forward identity $\eta^2 Q_\eta(p,z)=\KL(p\|p^+)$ makes $\eta Q_\eta$ the round's prior-to-posterior information gain divided by $\eta$, with cumulative form $\sum_t\eta_t Q_t=\sum_t\KL(p_t\|p_{t+1})/\eta_t$. These are faces of one functional, each useful in a different proof.

\subsubsection{Influence of the centered CGF under a contaminated play}
\label{app:rcgf-influence}

The tilted-variance representation of Proposition~\ref{prop:tilted-var} also controls how the primitive responds to a perturbation of the mixed action, with the centering recomputed at the perturbed action.

\begin{corollary}[Influence of the centered CGF]
\label{cor:rcgf-influence}
Fix $c\in\R^K$, a strictly positive $p\in\DeltaK$, and $\eta>0$; write $z:=c-\inner{p}{c}$ componentwise, and let $p_1:=p_1^{(\eta,z)}$ be the endpoint of the tilt path of Proposition~\ref{prop:tilted-var}.
For $j\in[K]$ let $\delta_j$ be the point mass on $j$ and $p_\epsilon:=(1-\epsilon)p+\epsilon\,\delta_j$.
Then
\begin{equation}\label{eq:rcgf-influence}
\frac{d}{d\epsilon}\Big|_{\epsilon=0^+}\eta^2 Q_\eta\bigl(p_\epsilon,\,c-\inner{p_\epsilon}{c}\bigr)
=\xi_\eta(j)
:=\frac{p_1(j)}{p(j)}-1-\eta z(j)
\end{equation}
with $\E_{j\sim p}[\xi_\eta]=0$ and
\begin{equation}\label{eq:rcgf-influence-var}
\Var_{j\sim p}(\xi_\eta)
=\chi^2(p_1\|p)-2\eta\inner{p_1}{z}+\eta^2\Var_p(z),
\qquad
\chi^2(p_1\|p):=\sum_{j}p(j)\Bigl(\frac{p_1(j)}{p(j)}-1\Bigr)^{2}
\end{equation}
If $z(i)\in[a,b]$ for all~$i$, then $-(1+\eta b)\le\xi_\eta(j)\le e^{\eta b}-1-\eta a$ for every~$j$.
\end{corollary}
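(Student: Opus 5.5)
The plan is a direct differentiation of the unrescaled centered CGF along the contamination path, followed by elementary moment bookkeeping. Because the statement recenters at the perturbed play, I would first split the objective into two pieces that are each smooth in $\epsilon$:
\[
f(\epsilon):=\eta^2 Q_\eta\bigl(p_\epsilon,\,c-\inner{p_\epsilon}{c}\bigr)=\log\sum_i p_\epsilon(i)e^{\eta c(i)}-\eta\inner{p_\epsilon}{c}.
\]
Here $p_\epsilon$ is affine in $\epsilon$, with $\frac{d}{d\epsilon}p_\epsilon=\delta_j-p$. Since $p$ is strictly positive, $p_\epsilon$ stays in the simplex for small $\epsilon$ of either sign and $f$ is smooth there, so the one-sided derivative at $0^+$ equals the ordinary derivative.

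Differentiating gives
\[
f'(0)=\frac{e^{\eta c(j)}-\sum_i p(i)e^{\eta c(i)}}{\sum_i p(i)e^{\eta c(i)}}-\eta\bigl(c(j)-\inner{p}{c}\bigr).
\]
Multiplying numerator and denominator by $e^{-\eta\inner{p}{c}}$ turns the first ratio into $e^{\eta z(j)}/\E_p[e^{\eta z}]-1$. By the definition of the tilt path in Proposition~\ref{prop:tilted-var} at $s=1$, this equals $p_1(j)/p(j)-1$, which yields \eqref{eq:rcgf-influence}. I would note explicitly that the statement uses $z=c-\inner{p}{c}$, the sign opposite to the body's loss convention, so that $p_1\propto p\,e^{\eta z}$ is exactly the endpoint of that path.

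For the moments, write $r(j):=p_1(j)/p(j)$, so that $\E_p[r]=1$ and $\E_p[z]=0$. Then $\E_p[\xi_\eta]=1-1-0=0$. Expanding the square,
\[
\E_p[\xi_\eta^2]=\E_p[(r-1)^2]-2\eta\,\E_p[(r-1)z]+\eta^2\E_p[z^2].
\]
The first term is $\chi^2(p_1\|p)$ by definition. The cross term is $\E_p[rz]-\E_p[z]=\inner{p_1}{z}$. The last term is $\Var_p(z)$ because $z$ is $p$-centered. Since the mean of $\xi_\eta$ is zero, this second moment is the variance, which gives \eqref{eq:rcgf-influence-var}.

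For the range bounds, the lower bound uses only $r\ge 0$ together with $-\eta z(j)\ge-\eta b$. The upper bound is the one step needing an inequality rather than algebra: Jensen gives $\E_p[e^{\eta z}]\ge e^{\eta\E_p z}=1$, hence $r(j)\le e^{\eta z(j)}\le e^{\eta b}$, and separately $-\eta z(j)\le -\eta a$.

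I expect no real obstacle. The only points needing care are the recentering inside the objective, which produces the extra $-\eta z(j)$ term, and the sign convention for $z$ relative to the body's $z=\mu-c$.
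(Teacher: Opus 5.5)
Your proposal is correct and follows the paper's proof essentially step for step. It uses the same split of $\eta^2 Q_\eta$ into $\log\sum_i p_\epsilon(i)e^{\eta c(i)}-\eta\inner{p_\epsilon}{c}$, the same rescaling by $e^{-\eta\inner{p}{c}}$ to identify $p_1(j)/p(j)$, and the same Jensen step $\E_p[e^{\eta z}]\ge 1$ for the range bracket; the only difference is that you expand the second moment of $(r-1)-\eta z$ where the paper writes $\Var_p(r)+\eta^2\Var_p(z)-2\eta\,\mathrm{Cov}_p(r,z)$, which is the same computation.
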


The body reads off the two-sided bracket.
The influence is bounded below by the linear term alone and above by the exponential of the largest centered score, so it is linear in the depth of the score range and exponential in its height, and the number of experts does not enter.

\subsection{Further results from Section~\ref{sec:fixed}}
\label{app:proofs-fixed}

\subsubsection{Why the value-to-go is a certificate}
\label{app:bellman-certificate}

The value-to-go $U_t(S)=\Gamma/\eta+W_\eta(S)+\eta\sum_{s\ge t}q_s$ of \eqref{eq:fixed-bellman} is an \emph{admissible Bellman relaxation}: writing $\Val_t^{\eta,\Gamma}(S)$ for the value of Game~\ref{game:fixed} from state $S$ with rounds $t,\dots,T$ still to play, it satisfies $U_t(S)\ge \Val_t^{\eta,\Gamma}(S)$.
At the equalizer play the round decrement $U_t(S)-U_{t+1}(S+z)=\eta q_t-\eta\,Q_\eta(p,z)$ is nonnegative on the feasible set and vanishes at nature's budget-saturating reply, so the Bayes normalizer $W_\eta$ is the Bellman potential of the constrained game.
Those two properties make $U_t$ a certificate.
At the horizon $U_{T+1}(S)=\Gamma/\eta+W_\eta(S)$ already dominates the terminal payoff, since \eqref{eq:terminal-dual} writes $L_\Gamma(S)$ as an infimum over scales of which that quantity is one member;
and a value-to-go that starts above the payoff and never rises along a feasible round stays above the game value at every earlier round.
Verifying the relaxation therefore requires only one round's inequality, which every round inherits; computing the value outright would require the recursion.

The bound $\Val_t^{\eta,\Gamma}(S)\le U_t(S)$ becomes an equality only when, in addition, the fixed scale $\eta$ coincides with the dual optimizer of $L_\Gamma$ at the realized terminal state --- a condition the learner generally cannot guarantee in advance.
Identity \eqref{eq:terminal-dual} locates that gap: at the horizon the relaxation yields $\Gamma/\eta+W_\eta(S_T)$, where the game yields $L_\Gamma(S_T)=\inf_{\eta'>0}\{\Gamma/\eta'+W_{\eta'}(S_T)\}$, so a scale committed before $S_T$ is seen loses exactly how far that infimum sits below the committed scale's value.

\subsubsection{The two proxies do not refine each other}
\label{app:proxy-order}

At $p=(1/2,1/2)$ and $c=(0,1)$ the variance proxy $(e-2)\Var_p(c)\approx 0.18$ exceeds the range proxy $\range^2/8=0.125$; at concentrated $p$, where $\Var_p(c)$ collapses but $\range(c)$ remains $1$, the order reverses. Viewed as constraint sets on nature, $\{(e-2)\Var_{p_t}(c)\le\beta\}$ and $\{\range(c)^2\le 8\beta\}$ are both contained in $\{Q_{\eta_t}(p_t,c)\le\beta\}$ by \eqref{eq:hierarchy}, and neither contains the other.

At fixed rate~$\eta$ and under a bounded range $c_t(i)\in[a_t,b_t]$, the classical Hedge bound becomes an exact identity once the relaxation remainder is retained:
\begin{equation}\label{eq:classical-remainder}
R_T^c(\nu)
=
\frac{\KL(\nu\|\pi)}{\eta}
+
\frac{\eta}{8}\sum_{t=1}^T (b_t-a_t)^2
-
\Delta_T^{\mathrm{class}}(\nu,\eta)
\end{equation}
where $\Delta_T^{\mathrm{class}}(\nu,\eta)
=
\eta^{-1}\KL(\nu\|p_{T+1})
+
\eta\sum_{t=1}^T\bigl[(b_t-a_t)^2/8-Q_t(c)\bigr]\ge 0$.
The first piece is terminal posterior mismatch; the second is the cumulative gap between the Hoeffding proxy and the true RCGF increment.

\subsubsection{The Fenchel dual: from constrained to regularized}
\label{app:fenchel}

The same game has a dual regularized description.
For the regularizer $R_\eta(p)=\eta^{-1}\KL(p\|\pi)$ on~$\DeltaK$, the FTRL update
\begin{equation}\label{eq:ftrl}
p_t
=
\argmin_{p\in\DeltaK}
\left\{
\inner{p}{C_{t-1}}+\frac{1}{\eta_t}\KL(p\|\pi)
\right\}
\end{equation}
yields $p_t(i)\propto \pi(i)e^{-\eta_t C_{t-1}(i)}$. The Fenchel conjugate is the scale-$\eta$ log-partition
\begin{equation}\label{eq:fenchel-conjugate}
R_\eta^*(\ell)
:= \sup_{p\in\DeltaK}
 \left\{ \inner{p}{\ell} - \eta^{-1}\KL(p\|\pi) \right\}
= \eta^{-1}\log \sum_{i=1}^K \pi(i)\,e^{\eta\,\ell(i)}
\end{equation}
The global constraint set under the prior is
\begin{equation}\label{eq:global-constraint}
\cC^{\mathrm{global}}(\eta, r)
=
\left\{
\ell\in\R^K:
\eta^{-1}\log\sum_i \pi(i)e^{\eta \ell(i)}\le r
\right\}
\end{equation}
which constrains the log-partition function of~$\ell$ under the prior at inverse temperature~$\eta$, independently of the learner's current distribution~$p_t$.

The global constraint \eqref{eq:global-constraint} is the constraint set matched to the prior-anchored FTRL update.
But the one-round identity \eqref{eq:fixed-transport} operates locally, via the centered mixability gap---the expected-minus-mix-loss difference $\inner{p_t}{c_t}-m_t(\eta_t)=\eta_t Q_t(c)$---computed relative to the current distribution~$p_t$.
This motivates a distribution-dependent constraint.

Given $p_t\in\DeltaK$, $\eta_t>0$, and budget $\beta>0$, define
\begin{equation}\label{eq:local-cgf}
\cC_t^{\mathrm{RCGF}}(p_t,\eta_t,\beta)
:=
\left\{
c\in\R^K:
\frac{1}{\eta_t^2}\psi_{p_t}(-\eta_t;c)\le \beta
\right\}
\end{equation}

\begin{proposition}[Geometry of the local RCGF constraint]
\label{prop:cgf-geometry}
For fixed $(p_t,\eta_t,\beta)$:
\begin{enumerate}[label=(\alph*)]
\item The set $\cC_t^{\mathrm{RCGF}}$ is closed and convex (sublevel set of a log-sum-exp).
\item If $c\in\cC_t^{\mathrm{RCGF}}$, then $c+b\1\in\cC_t^{\mathrm{RCGF}}$ for all $b\in\R$.
\item All constant vectors lie in~$\cC_t^{\mathrm{RCGF}}$.
\item $\cC_t^{\mathrm{RCGF}}$ is dependent on~$p_t$, but not directly on~$\pi$.
\end{enumerate}
Property~(b) is the game-theoretic statement that only relative expert performance matters.
\end{proposition}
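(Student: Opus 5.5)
The plan is to write the constraint function explicitly and read off each property in turn. Fix $(p_t,\eta_t,\beta)$ and set
\[
f(c):=\frac{1}{\eta_t^2}\psi_{p_t}(-\eta_t;c)=\frac{1}{\eta_t^2}\Bigl(\log\sum_i p_t(i)e^{-\eta_t c(i)}+\eta_t\inner{p_t}{c}\Bigr),
\]
so that $\cC_t^{\mathrm{RCGF}}=\{c:f(c)\le\beta\}$.

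For (a), the map $c\mapsto\log\sum_i p_t(i)e^{-\eta_t c(i)}$ is a log-sum-exp composed with the linear map $c\mapsto -\eta_t c$ and shifted by the constants $\log p_t(i)$ (restricted to the support of $p_t$). It is therefore convex and continuous on $\R^K$. The term $\eta_t\inner{p_t}{c}$ is linear, and $\eta_t^{-2}>0$, so $f$ is convex and continuous. A sublevel set of a continuous convex function is closed and convex.

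For (b), substitute $c+b\1$. The sum becomes $\sum_i p_t(i)e^{-\eta_t c(i)}e^{-\eta_t b}$, which changes the log term by $-\eta_t b$. The linear term changes by $+\eta_t b\inner{p_t}{\1}=\eta_t b$. The two changes cancel, so $f(c+b\1)=f(c)$ and membership is preserved; this is exactly the centering $z=\inner{p}{c}\1-c$ of Proposition~\ref{prop:centered}.

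For (c), apply (b) to $c=0$: we have $f(0)=\eta_t^{-2}\log 1=0\le\beta$ since $\beta>0$, so every constant vector $b\1$ lies in the set.

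For (d), observe that $f$ involves only $p_t$, $\eta_t$ and $c$, with no occurrence of $\pi$. The dependence on $p_t$ is genuine; one witness suffices. With $K=2$ and $c=(0,1)$, $f$ tends to $0$ as $p_t\to\delta_1$ (point mass, where the centered move vanishes $p_t$-a.s., cf.\ Proposition~\ref{prop:tilted-var}), but is strictly positive at $p_t=(\tfrac12,\tfrac12)$. Scaling $c$ then gives a vector inside the set for one $p_t$ and outside it for the other. Any influence of $\pi$ enters only indirectly through $p_t=G_{\eta_t}(S_{t-1})$.

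None of these steps is a real obstacle. The only point needing care is (d), where "not directly on $\pi$" should be read as the syntactic statement above, and "dependent on $p_t$" is supported by the explicit two-point witness.
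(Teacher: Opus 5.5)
Your proposal is correct and follows the paper's proof essentially step for step. Convexity and closedness come from the log-sum-exp form, invariance under $c\mapsto c+b\1$ comes from the cancellation of the centering term, constants lie in the set because $f(0)=0\le\beta$, and (d) rests on $\pi$ not appearing in the defining expression. Your witness in (d) for genuine $p_t$-dependence goes beyond the paper, which only checks that $\pi$ does not occur; its cleanest form compares $p_t=\delta_1$ exactly, where $f\equiv 0$ and the set is all of $\R^K$, with $p_t=(\tfrac12,\tfrac12)$, where $f(s(0,1))=\eta_t^{-2}\log\cosh(\eta_t s/2)$ exceeds $\beta$ for large $s$. This avoids the order-of-limits looseness in your phrasing.
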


The global constraint \eqref{eq:global-constraint} uses the prior and cumulative loss as base measure.
The local constraint \eqref{eq:local-cgf} uses the current posterior and the one-round loss.
They are related because $p_t$ is the FTRL output applied to~$C_{t-1}$.
The local constraint is the conditional version: it constrains the one-round loss given the current state.

\subsection{The exact one-round value}
\label{app:one-round-exact}

The Bellman bound of Corollary~\ref{cor:fixed-budget} is strict at the single round treated in this section. It is an upper bound because it fixes the learner's side of $\inf_p\sup_z$: any particular play, evaluated against nature's best reply, is an upper bound on the value. The complementary certificate fixes nature's side. Since nature's feasible set $\cZ_{\eta,q}(p)$ moves with $p$, nature's strategy is a \emph{response rule} $\zeta$ assigning a feasible move to each learner action, and every such rule bounds the value from below.

\begin{proposition}[Nature-side lower bound]
\label{prop:nature-lb}
Fix $\eta>0$, $q\ge0$, $\Gamma\ge0$ and a state $S$, write $\mathrm{Val}_1(S;\eta,q,\Gamma)$ for the one-round value from $S$ (so that the $\mathrm{Val}_1(\eta,q,\Gamma)$ of Propositions~\ref{prop:saturated-exact} and~\ref{prop:k2-exact} is $\mathrm{Val}_1(0;\eta,q,\Gamma)$), and let $\zeta:\DeltaK\to\R^K$ satisfy $\zeta(p)\in\cZ_{\eta,q}(p)$ for every $p\in\DeltaK$. Then
\begin{equation}\label{eq:nature-lb}
\inf_{p\in\DeltaK} L_\Gamma\bigl(S+\zeta(p)\bigr)
\;\le\;
\mathrm{Val}_1(S;\eta,q,\Gamma)
:=
\inf_{p\in\DeltaK}\ \sup_{z\in\cZ_{\eta,q}(p)} L_\Gamma(S+z)
\end{equation}
with equality whenever $\zeta(p)$ attains the inner supremum at every $p$. Fixing instead a single comparator $\nu$ with $\KL(\nu\|\pi)\le\Gamma$ and replacing $L_\Gamma$ by $\inner{\nu}{\cdot}$ weakens \eqref{eq:nature-lb} further but keeps it valid.
\end{proposition}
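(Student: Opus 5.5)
The argument is a pointwise comparison inside the outer infimum. No structure of $L_\Gamma$ or of $\cZ_{\eta,q}(p)$ beyond the definitions is needed.

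\textbf{Lower bound.} Fix any learner action $p\in\DeltaK$. By hypothesis $\zeta(p)$ is one feasible element of $\cZ_{\eta,q}(p)$. Hence
\[
L_\Gamma\bigl(S+\zeta(p)\bigr)\le \sup_{z\in\cZ_{\eta,q}(p)}L_\Gamma(S+z).
\]
This holds for every $p$, so taking $\inf_p$ on both sides preserves the inequality and gives \eqref{eq:nature-lb}.

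The only point to check is that both sides are well defined in the extended reals. The feasible set is never empty, since $z=0$ is centered with $Q_\eta(p,0)=0\le q$. Also, $L_\Gamma$ is finite, because it is a supremum of a linear functional over a compact subset of the simplex. The inner supremum may still be $+\infty$ when nature's moves are unbounded, but the inequality survives that case trivially.

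\textbf{Equality.} If $\zeta(p)$ attains the inner supremum for each $p$, then the displayed pointwise inequality is an equality for every $p$. The two infima are then infima of the same function of $p$, so they coincide.

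\textbf{Fixed-comparator weakening.} Fix $\nu$ with $\KL(\nu\|\pi)\le\Gamma$. Then $\nu$ is admissible in the supremum defining $L_\Gamma$ in \eqref{eq:terminal-support}, so $\inner{\nu}{S+\zeta(p)}\le L_\Gamma(S+\zeta(p))$ for every $p$. Taking $\inf_p$ gives
\[
\inf_p\inner{\nu}{S+\zeta(p)}\le \inf_p L_\Gamma\bigl(S+\zeta(p)\bigr)\le \mathrm{Val}_1(S;\eta,q,\Gamma).
\]
This is the weaker but still valid form.

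\textbf{Main obstacle.} There is no real obstacle; this is the nature-side half of a weak-duality statement. The one subtlety worth stating is that nature's strategy must be a response rule rather than a fixed move, because the constraint set depends on $p$. The bound therefore cannot be phrased as a $\sup_z\inf_p$ over a common set, and that is why the argument is run pointwise in $p$ with a selection $\zeta$.
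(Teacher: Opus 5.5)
Your proposal is correct and is essentially the paper's own proof: feasibility of $\zeta(p)$ gives the inequality pointwise in $p$ before the infimum is taken, attainment at every $p$ turns that into pointwise equality, and admissibility of $\nu$ in the supremum defining $L_\Gamma$ gives the fixed-comparator weakening. Your remarks that $\cZ_{\eta,q}(p)$ is nonempty and that $L_\Gamma$ is finite are correct; the paper leaves them implicit.
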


The two certificates are not symmetric in difficulty. Playing Gibbs and reading off the potential increment gives the closed-form upper bound $\Gamma/\eta+\eta q$ with no case analysis, whereas a response rule must be specified at every $p$. The rule that closes the gap attacks the action the learner has left least protected, and it is exactly optimal once the comparator budget is large enough to reach every action --- the regime in which the budget stops mattering at all.

\begin{proposition}[Saturated-budget reduction to a parameter-free game]
\label{prop:saturated-reduction}
Let $\Gamma_{\max}:=\max_i\log(1/\pi(i))$ (so $\Gamma_{\max}=\log K$ for the uniform prior). For every $\Gamma\ge\Gamma_{\max}$ the relative-entropy ball $\{\nu:\KL(\nu\|\pi)\le\Gamma\}$ is the whole simplex $\DeltaK$, and
\begin{equation}\label{eq:Lmax}
L_\Gamma(S)=\max_i S(i)\qquad(\Gamma\ge\Gamma_{\max})
\end{equation}
Consequently, for every $\eta>0$, horizon $T$, and budget profile $(q_t)$, the value $\Val_1^{\eta,\Gamma}(0)$ is independent of $\Gamma$ on $[\Gamma_{\max},\infty)$ and equals the minimax value of the RCGF-budgeted game with terminal payoff $\max_i S_T(i)$: the comparator-complexity dependence of the value saturates at $\Gamma=\Gamma_{\max}$.
\end{proposition}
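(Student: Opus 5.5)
The plan is to prove the set identity first and then read off the two consequences.

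\textbf{Step 1: the ball is the whole simplex.} For any $\nu\in\DeltaK$,
\[
\KL(\nu\|\pi)=\sum_i\nu(i)\log\frac{\nu(i)}{\pi(i)}\le\sum_i\nu(i)\log\frac{1}{\pi(i)}\le\max_i\log\frac{1}{\pi(i)}=\Gamma_{\max},
\]
using $\log\nu(i)\le0$ termwise (with the convention $0\log0=0$). So every $\nu$ lies in the ball whenever $\Gamma\ge\Gamma_{\max}$. The ball is a subset of $\DeltaK$ by definition, which gives equality of the two sets.

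\textbf{Step 2: formula for $L_\Gamma$.} By Step 1, $L_\Gamma(S)=\sup_{\nu\in\DeltaK}\inner{\nu}{S}$. A linear functional on the simplex is maximized at a vertex, so this supremum is $\max_i S(i)$, attained at $\delta_{i^*}$ with $i^*\in\argmax_i S(i)$. Note that $\KL(\delta_{i^*}\|\pi)=\log(1/\pi(i^*))\le\Gamma_{\max}$, consistent with Step 1. This proves \eqref{eq:Lmax}.

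\textbf{Step 3: consequence for the value.} The value $\Val_1^{\eta,\Gamma}(0)$ of Game~\ref{game:fixed} is built from the following data:
\begin{itemize}
\item the learner's feasible set $\DeltaK$;
\item nature's feasible sets $\cZ_{\eta,q_t}(p_t)$, which depend only on $(\eta,q_t,p_t)$ and not on $\Gamma$;
\item the state dynamics $S_t=S_{t-1}+z_t$;
\item the terminal payoff $L_\Gamma(S_T)$.
\end{itemize}
Only the terminal payoff involves $\Gamma$. By Step 2, for every $\Gamma\ge\Gamma_{\max}$ this payoff is the same function $S\mapsto\max_i S(i)$ at every terminal state. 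The alternating $\inf\sup$ recursion therefore has identical data for all such $\Gamma$. An induction backward from $t=T+1$ shows that the value-to-go functions coincide at every state. Hence $\Val_1^{\eta,\Gamma}(0)$ equals the minimax value of the RCGF-budgeted game with payoff $\max_i S_T(i)$, and is constant in $\Gamma$ on $[\Gamma_{\max},\infty)$.

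\textbf{Saturation.} Monotonicity of $L_\Gamma$ in $\Gamma$ (larger balls give larger suprema) makes the value nondecreasing in $\Gamma$. Combined with constancy on $[\Gamma_{\max},\infty)$, this means the comparator-complexity dependence saturates at $\Gamma_{\max}$.

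No step is a real obstacle. The only points needing care are the $0\log0$ convention at the boundary of the simplex in Step 1, and making explicit that nature's constraint set is independent of $\Gamma$ in Step 3.
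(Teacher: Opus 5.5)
Your proposal is correct and follows the paper's proof: you show the ball is all of $\DeltaK$, take the supremum at a point mass on $\argmax_i S(i)$, and note that only the terminal payoff depends on $\Gamma$. The one difference is in Step 1, where you bound $\KL(\nu\|\pi)$ directly by discarding the nonpositive term $\sum_i\nu(i)\log\nu(i)$, whereas the paper uses convexity of $\KL(\cdot\|\pi)$ to locate its maximum over the simplex at a vertex; both arguments are valid and give the same bound $\Gamma_{\max}$.
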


At $T=1$ that parameter-free game is solved in closed form.

\begin{proposition}[Exact one-round value at a saturated comparator budget]
\label{prop:saturated-exact}
Let $\Gamma\ge\Gamma_{\max}:=\max_i\log(1/\pi(i))$, so that $L_\Gamma=\max_i(\cdot)$ by Proposition~\ref{prop:saturated-reduction}. For $\eta>0$, $q>0$, $T=1$ and $S_0=0$, the value of the fixed-scale game is
\begin{equation}\label{eq:saturated-exact}
\mathrm{Val}_1(\eta,q,\Gamma)=a_K(\eta,q)
\end{equation}
where $a_K(\eta,q)$ is the unique $a>0$ solving
\begin{equation}\label{eq:aK}
\frac{1}{K}e^{\eta a}+\frac{K-1}{K}\,e^{-\eta a/(K-1)}=e^{\eta^2 q}
\end{equation}
The value is attained by the uniform learner play $p=u_K$ and by nature's move placing $a_K$ on one action and $-a_K/(K-1)$ on the rest. It does not depend on $\pi$.
\end{proposition}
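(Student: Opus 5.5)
The plan is to compute both sides of the one-round minimax value directly. By Proposition~\ref{prop:saturated-reduction}, for $\Gamma\ge\Gamma_{\max}$ and $S_0=0$ the value reads
\[
\mathrm{Val}_1(\eta,q,\Gamma)=\inf_{p\in\DeltaK}\ \sup\Bigl\{\max_i z(i):\ \inner{p}{z}=0,\ \log\textstyle\sum_i p(i)e^{\eta z(i)}\le\eta^2 q\Bigr\}.
\]
Neither $\pi$ nor $\Gamma$ appears in this expression, which already gives the claimed independence from $\pi$.

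\textbf{Inner problem.} The inner supremum reduces to a one-parameter family. Fix $p$, an index $j$ with weight $w:=p(j)\in(0,1)$, and a target value $z(j)=a>0$. Among centered $z$ with this $j$-coordinate, I minimize the constraint functional $\sum_i p(i)e^{\eta z(i)}$. The remaining coordinates must satisfy $\sum_{i\ne j}p(i)z(i)=-wa$. By Jensen's inequality for the convex function $e^{\eta\,\cdot}$ under the renormalized weights $p(i)/(1-w)$, the minimum is reached when all these coordinates equal $-wa/(1-w)$. The minimal constraint value is therefore
\[
g(w,a):=w e^{\eta a}+(1-w)e^{-\eta a w/(1-w)}.
\]
Hence $\max_i z(i)\ge a$ at coordinate $j$ is feasible iff $g(w,a)\le e^{\eta^2 q}$.

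Two monotonicity facts about $g$ are needed. First, $g(w,0)=1$ and
\[
\partial_a g=\eta w\bigl(e^{\eta a}-e^{-\eta a w/(1-w)}\bigr)>0 .
\]
So for each $w$ the feasible values of $a$ form an interval $[0,a^*(w)]$, where $a^*(w)$ solves $g(w,a)=e^{\eta^2q}$. At $w=1/K$ this is exactly equation \eqref{eq:aK}, which gives existence and uniqueness of $a_K$ (note $e^{\eta^2q}>1$ when $q>0$, and $g\to\infty$ as $a\to\infty$). Second, write $y:=\eta a/(1-w)$. A short differentiation gives
\[
\partial_w g=e^{-yw}\bigl(e^{y}-1-y\bigr)>0 .
\]
Thus $g$ is increasing in $w$, and so $a^*(w)$ is decreasing in $w$.

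\textbf{Upper bound.} Let the learner play $p=u_K$. Every coordinate then has weight $1/K$, so nature's best reply gives $\max_i z(i)=a^*(1/K)=a_K$. This is attained by the stated move: $a_K$ on one action and $-a_K/(K-1)$ on the others. Hence $\mathrm{Val}_1\le a_K$.

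\textbf{Lower bound.} I use the response rule of Proposition~\ref{prop:nature-lb}. For any $p$, nature chooses $j\in\argmin_i p(i)$, so that $w=p(j)\le 1/K$, and plays the two-level move with $z(j)=a^*(w)$.
\begin{itemize}[nosep,leftmargin=1.6em]
\item If $w>0$, monotonicity in $w$ gives $a^*(w)\ge a^*(1/K)=a_K$.
\item If $w=0$, then $z(j)$ does not enter the constraint, so nature can make $z(j)$ arbitrarily large and the payoff is unbounded.
\end{itemize}
In either case $\inf_p L_\Gamma(\zeta(p))\ge a_K$, which combined with the upper bound proves \eqref{eq:saturated-exact}.

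The main obstacle is the sign computation for $\partial_w g$. This is the single fact that makes the least-weighted action nature's best target, and that makes the uniform play optimal for the learner. The Jensen reduction and the monotonicity in $a$ are routine by comparison.
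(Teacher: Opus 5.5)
Your proposal is correct and follows the paper's proof essentially step for step. It uses the same Jensen reduction to the two-level function $m e^{\eta a}+(1-m)e^{-\eta m a/(1-m)}$, the same monotonicity computations $\partial_a>0$ and $\partial_m=e^{-vm}(e^{v}-1-v)>0$, the uniform play for the upper bound, and the least-weighted-coordinate response rule of Proposition~\ref{prop:nature-lb} for the lower bound, with boundary plays conceding an unbounded payoff.
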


At $K=2$ Equation~\eqref{eq:aK} is $\cosh(\eta a)=e^{\eta^2 q}$, so $a_2=\eta^{-1}\operatorname{arcosh}(e^{\eta^2 q})$ and \eqref{eq:saturated-exact} reproduces Proposition~\ref{prop:k2-exact} at $g(\Gamma)=1$. The general-$K$ value is again strictly below the Bellman envelope: for the uniform prior at $\eta=1$, $q=0.1$, $K=5$ it is $0.8178$ against the tightest envelope $\Gamma_{\max}/\eta+\eta q=\log 5+0.1\approx 1.7094$ available in this regime. Two features of \eqref{eq:aK} carry over from the two-expert case: the budget enters only through $e^{\eta^2q}$, and $a_K=\sqrt{2(K-1)q}+O(q)$ as $q\downarrow0$, the small-step regime in which the per-round loss is second order in nature's step. Saturating the comparator budget removes $\pi$ and $\Gamma$ from the answer; for $\Gamma<\Gamma_{\max}$ the optimal play moves off the uniform toward the prior --- already visible at $K=2$, where it sits strictly between the two --- and no comparable closed form is known beyond $K=2$ (Section~\ref{sec:discussion}).

For a comparator budget below saturation, the two-expert instance is solved exactly.
Fix $K=2$, $\pi=(\tfrac12,\tfrac12)$, and a single round from $S=0$.
Define the comparator concentration coefficient $g(\Gamma)\in[0,1]$ as the unique $g\in[0,1]$ solving
\begin{equation}\label{eq:k2-g}
\tfrac12\bigl[(1+g)\log(1+g)+(1-g)\log(1-g)\bigr]=\Gamma
\end{equation}
capped at $g(\Gamma)=1$ once $\Gamma\ge\log 2$.
The left-hand side is the relative entropy of $\bigl(\tfrac{1+g}{2},\tfrac{1-g}{2}\bigr)$ from the uniform prior, so $g(\Gamma)$ measures how far a comparator of budget $\Gamma$ can move off the uniform along a single coordinate. The limits are $g(\Gamma)\sim\sqrt{2\Gamma}$ as $\Gamma\to0^+$ and $g(\log 2)=1$.

\begin{proposition}[Exact two-expert one-round value]
\label{prop:k2-exact}
For $K=2$, $\pi=(\tfrac12,\tfrac12)$, $\eta>0$, $q\ge 0$, $\Gamma\ge 0$, and $S=0$, the one-round game has value
\begin{equation}\label{eq:k2-value}
\mathrm{Val}_1(\eta,q,\Gamma)
=
g(\Gamma)\,\frac{\operatorname{arcosh}\!\bigl(e^{\eta^2 q}\bigr)}{\eta}
\end{equation}
attained by the learner's Gibbs play $p=(\tfrac12,\tfrac12)$ and nature's budget-saturating move $z^\star=(\delta^\star,-\delta^\star)$, $\delta^\star=\eta^{-1}\operatorname{arcosh}(e^{\eta^2 q})$.
For $\Gamma>0$ and $q>0$ this value satisfies the fixed-scale Bellman bound
\begin{equation}\label{eq:k2-gap}
\mathrm{Val}_1(\eta,q,\Gamma)\le\frac{\Gamma}{\eta}+\eta q
\end{equation}
with strict inequality at every finite $\eta>0$ when $\Gamma\ge\log 2$; when $\Gamma<\log 2$ the inequality is strict except at the single scale $\eta^2 q=-\tfrac12\log\bigl(1-g(\Gamma)^2\bigr)$, where equality holds.
At the Bellman-optimal scale $\eta_{\mathrm B}=\sqrt{\Gamma/q}$ the exact value is a fixed multiple of the Bellman envelope $2\sqrt{\Gamma q}$,
\begin{equation}\label{eq:k2-kappa}
\mathrm{Val}_1(\eta_{\mathrm B},q,\Gamma)=\kappa(\Gamma)\cdot 2\sqrt{\Gamma q}
\qquad\text{with}\qquad
\kappa(\Gamma):=\frac{g(\Gamma)\,\operatorname{arcosh}(e^{\Gamma})}{2\Gamma}\in(0,1)
\end{equation}
where the gap factor $\kappa(\Gamma)$ depends on $\Gamma$ alone (not on $q$), is decreasing in $\Gamma$, and tends to $1$ as $\Gamma\to0^+$.
\end{proposition}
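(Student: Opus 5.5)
The plan is to reduce the one-round game to a one-parameter problem and solve it by hand. With $K=2$ and $\pi=(\tfrac12,\tfrac12)$, write a comparator as $\nu_h=\bigl(\tfrac{1+h}{2},\tfrac{1-h}{2}\bigr)$. Its relative entropy from $\pi$ is the left side of \eqref{eq:k2-g} at $g=|h|$, and that expression is increasing in $|h|$, so the ball is $\{|h|\le g(\Gamma)\}$. For any $z\in\R^2$ this gives
\[
L_\Gamma(z)=\tfrac12\bigl(z(1)+z(2)\bigr)+\tfrac{g(\Gamma)}{2}\bigl|z(1)-z(2)\bigr|.
\]
A centered move against $p=(p_1,p_2)$ is $z=d\,(p_2,-p_1)$ for $d\in\R$. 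Its payoff is $L_\Gamma(z)=\tfrac d2(p_2-p_1)+\tfrac g2|d|$, and its constraint is $\log\bigl(p_1e^{\eta dp_2}+p_2e^{-\eta dp_1}\bigr)\le\eta^2q$. The left side of that constraint is convex in $d$ and vanishes at $d=0$, so the feasible $d$ form an interval $[-m_-(p),m_+(p)]$. The inner supremum is therefore the larger of $\tfrac12 m_-(p)\bigl(p_1-p_2+g\bigr)$ and $\tfrac12 m_+(p)\bigl(p_2-p_1+g\bigr)$.

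At $p=(\tfrac12,\tfrac12)$ the constraint reads $\cosh(\eta d/2)\le e^{\eta^2q}$, so $m_\pm=2\delta^\star$. Both branches then give $g\delta^\star$, which is \eqref{eq:k2-value}, attained at $z^\star$. This is the upper bound on the value.

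For the matching lower bound I would use Proposition~\ref{prop:nature-lb} with the response rule that, at each $p$, takes the better of the two branches. By the symmetry $1\leftrightarrow2$ it suffices to treat $p_1\ge\tfrac12$, where nature attacks the under-weighted expert~2 with $d<0$. The required inequality is
\[
\phi(p_2):=\tfrac12\,m_-(p)\,\bigl(1-2p_2+g\bigr)\;\ge\;g\delta^\star \qquad\text{for } p_2\in(0,\tfrac12].
\]
Here $m_-$ is defined implicitly by $p_1e^{-\eta mp_2}+p_2e^{\eta mp_1}=e^{\eta^2q}$. This is the step I expect to be the main obstacle. The trouble is that $m_-$ is not monotone in $p_2$: the Hoeffding-type maximizer of a two-point centered log-MGF puts upside probability below $\tfrac12$, so $m_-$ can dip below $2\delta^\star$. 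The linear factor $1-2p_2+g$ must compensate for that dip.

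My first attempt would be implicit differentiation, to show that $\phi$ is nonincreasing on $(0,\tfrac12]$, so that its minimum sits at $p_2=\tfrac12$. If that fails, I would change variables to $x=\eta m p_1$ and $y=\eta m p_2$, on which the constraint becomes $\tfrac{y}{x+y}e^{x}+\tfrac{x}{x+y}e^{-y}=e^{\eta^2q}$. I would then compare directly with the symmetric level set $x=y=\eta\delta^\star$, using convexity of $\exp$. As a fallback, if the margin is thin near $p_2=\tfrac12$, I would also bring in the other branch ($d>0$) for $p$ near uniform.

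For \eqref{eq:k2-gap}, apply Proposition~\ref{prop:terminal-dual} at $z^\star$:
\[
g\delta^\star=L_\Gamma(z^\star)\le\frac{\Gamma}{\eta}+W_\eta(z^\star)=\frac{\Gamma}{\eta}+\frac1\eta\log\cosh(\eta\delta^\star)=\frac{\Gamma}{\eta}+\eta q.
\]
Equality requires the Gibbs tilt $\rho_{\eta,z^\star}$ to be the worst comparator on the boundary of the ball, i.e.\ $\tanh(\eta\delta^\star)=g(\Gamma)$. This is impossible when $g=1$, which covers $\Gamma\ge\log2$. For $g<1$ it is equivalent to $\cosh(\eta\delta^\star)=(1-g^2)^{-1/2}$, that is, $\eta^2q=-\tfrac12\log(1-g^2)$, the stated equality scale.

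For \eqref{eq:k2-kappa}, substitute $\eta_{\mathrm B}=\sqrt{\Gamma/q}$, so that $\eta^2q=\Gamma$; the stated formula for $\kappa(\Gamma)$ follows at once. To show $\kappa<1$, it suffices by the equality analysis to check that $\Gamma\neq-\tfrac12\log(1-g^2)$. This amounts to the strict inequality $\tfrac12\bigl[(1+g)\log(1+g)+(1-g)\log(1-g)\bigr]<-\tfrac12\log(1-g^2)$ for $g\in(0,1)$, which I would prove by comparing derivatives in $g$. The limit $\kappa\to1$ follows from $g\sim\sqrt{2\Gamma}$ and $\operatorname{arcosh}(e^\Gamma)\sim\sqrt{2\Gamma}$. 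Monotonicity in $\Gamma$ is a one-variable calculus check after reparametrizing by $g$.
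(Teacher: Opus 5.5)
The upper bound at $p=(\tfrac12,\tfrac12)$, the Bellman bound with its equality case, and the $\kappa$ computations are sound. Reading the equality case off the alignment condition $\tanh(\eta\delta^\star)=g(\Gamma)$ in the Gibbs variational inequality is a cleaner route than the paper's calculus on $f(t)=\Gamma+t-g(\Gamma)\operatorname{arcosh}(e^{t})$. The monotonicity of $\kappa$ that you defer is also asserted without proof in the paper.

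The genuine gap is the lower bound $\phi(p_2)\ge g\delta^\star$, which is what makes the value exact rather than an upper bound. You correctly observe that the naive bound fails, because the range $m_-$ dips below $2\delta^\star$ for $p_2$ slightly below $\tfrac12$. You then list three possible attacks without carrying any out. So the proposal establishes only $\mathrm{Val}_1\le g\delta^\star$. The equality at $\eta^2q=-\tfrac12\log(1-g^2)$ and the identity \eqref{eq:k2-kappa} inherit that dependence.

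The missing idea is to track the spike height on the under-weighted expert instead of the range.
\begin{itemize}
\item With $\mu:=p_2\le\tfrac12$, set $a:=m_-p_1$ and $b:=m_-p_2=\mu a/(1-\mu)\le a$. Then $\phi=\tfrac{a-b}{2}+g\,\tfrac{a+b}{2}$, and $\phi-g\,a=\tfrac{a}{2(1-\mu)}(1-g)(1-2\mu)\ge 0$.
\item In these coordinates the constraint reads $F(a,\mu):=\mu e^{\eta a}+(1-\mu)e^{-\eta\mu a/(1-\mu)}=e^{\eta^2 q}$. $F$ is increasing in $a>0$, and at fixed $a$, with $v:=\eta a/(1-\mu)$, one has $\partial_\mu F=e^{-v\mu}(e^{v}-1-v)>0$.
\item So the root $a(\mu)$ is decreasing in $\mu$. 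Since $F(a,\tfrac12)=\cosh(\eta a)$, this gives $a(\mu)\ge a(\tfrac12)=\delta^\star$, hence $\phi\ge g\,a(\mu)\ge g\delta^\star$.
\end{itemize}
This is the paper's argument: the response rule of Proposition~\ref{prop:saturated-exact} (spike on the minimum-weight coordinate) fed into Proposition~\ref{prop:nature-lb}, with $g\le1$ and $b\le a$ absorbing the rest.

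Your first plan would work, since $\phi=a(\mu)\cdot\tfrac{1-2\mu+g}{2(1-\mu)}$ is a product of two nonnegative nonincreasing factors. But implicit differentiation of $m_-$ reduces to $m_-'\le m_-/(1-p_2)$, which is exactly this monotonicity of $a$. Likewise, in your $(x,y)$ coordinates $x=\eta a$. There the comparison with the symmetric point goes through for $x$, after bounding the payoff below by $g\,x/\eta$ and using that the constraint is increasing in $y$ at fixed $x$. It does not go through for the range $x+y$.
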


At $\eta=1$, $q=0.1$, $\Gamma=\log 2$ this gives $\delta^\star=\operatorname{arcosh}(e^{0.1})\approx 0.4547$, $g(\log 2)=1$, so $\mathrm{Val}_1\approx 0.4547$ against the Bellman bound $\log 2+0.1\approx 0.7931$.
At the optimal scale the gap factor is $\kappa(\log 2)\approx 0.95$: the envelope $2\sqrt{\Gamma q}$ overshoots the truth by about $5\%$ for the full simplex, and by more as the comparator budget grows ($\kappa(1)\approx 0.83$, $\kappa(2)\approx 0.67$), while becoming exact in the low-complexity limit $\Gamma\to0^+$.
For $\Gamma\ge\log 2$ the gap shrinks to $0$ as $\eta\to\infty$ (the bound is asymptotically tight at large scale), whereas for $\Gamma<\log 2$ the gap is smallest at a finite scale near $\eta_{\mathrm B}$ and grows like $(1-g(\Gamma))\eta q$ as $\eta\to\infty$.
Carrying the same instance to a second round shows where the multi-round difficulty enters: from the skewed post-round-1 state the exact minimax play over-tilts relative to the Gibbs iterate, so the single-round product form is lost (the computation appears below). Two features of this instance keep it hard, and dropping either one settles it: nature is capped round by round, and the learner may play off the Gibbs trajectory. Replace the per-round caps by one cumulative budget and hold the learner to Gibbs at a constant scale, as Game~\ref{game:cgf} does, and the multi-round value is closed form from a finite horizon on (Proposition~\ref{prop:horizon-free}).

One further structural fact localizes the gap. At every fixed scale the Bellman bound $L_\Gamma(S)\le\Gamma/\eta+W_\eta(S)$ of Proposition~\ref{prop:terminal-dual} sharpens to an exact pointwise identity whose slack carries the whole obstruction:
\begin{equation}\label{eq:pointwise-align}
L_\Gamma(S)=W_\eta(S)+\frac{1}{\eta}\,A_\Gamma(\rho_{\eta,S}),
\qquad
A_\Gamma(p):=\sup_{\nu:\,\KL(\nu\|\pi)\le\Gamma}\bigl[\KL(\nu\|\pi)-\KL(\nu\|p)\bigr]\le\Gamma
\end{equation}
a consequence of the Gibbs variational identity \eqref{eq:gibbs-var}: the algebra $\KL(\nu\|\rho_{\eta,S})=\KL(\nu\|\pi)-\eta\inner{\nu}{S}+\eta W_\eta(S)$ rearranges to $\inner{\nu}{S}=W_\eta(S)+\eta^{-1}[\KL(\nu\|\pi)-\KL(\nu\|\rho_{\eta,S})]$, and taking the supremum over the relative-entropy ball isolates $A_\Gamma(\rho_{\eta,S})$. The slack in the Bellman bound is thus exactly $(\Gamma-A_\Gamma(\rho_{\eta,S}))/\eta\ge 0$, vanishing when the Gibbs posterior $\rho_{\eta,S}$ meets the worst comparator on the relative-entropy sphere $\{\KL(\nu\|\pi)=\Gamma\}$, and the identity confines the whole gap to the single bounded terminal functional $A_\Gamma(\rho_{\eta,S_T})$.
For $T=1$ from $S_0=0$ under a uniform prior the learner's Gibbs play $\rho_{\eta,0}=\pi$ is minimax and the alignment is closed form: at $K=2$, $\pi=(\tfrac12,\tfrac12)$, nature's saturating move gives $A_\Gamma(\rho_{\eta,S_1})=g(\Gamma)\operatorname{arcosh}(e^{\eta^2 q})-\eta^2 q$, recovering Proposition~\ref{prop:k2-exact}.
That statement requires the prior to be uniform. Under a non-uniform prior the Gibbs play at $S_0=0$ is $\pi$ itself, which concedes the response rule of Proposition~\ref{prop:saturated-exact} a payoff governed by $\min_i\pi(i)<1/K$ and so is strictly beaten by the uniform play: at $\Gamma\ge\Gamma_{\max}$, $K=3$, $\eta=1$, $q=0.1$ and $\pi=(0.6,0.3,0.1)$ the Gibbs play concedes $1.1381$ against the value $a_3=0.6108$.
The Gibbs play is the equalizer at every state; whether it is also minimax depends on the prior and on the comparator budget, and the two can part already at $T=1$.
For $T\ge 2$ the minimax learner over-tilts relative to the fixed-scale Gibbs iterate, so the telescoped loss $W_\eta(S_T)=\eta\sum_t Q_t$ no longer holds at the optimal play and the terminal functional ceases to factor across rounds. Numerically ($\eta=1$, per-round budget $q=0.1$, $\Gamma\ge\log 2$) the exact value at $T=2$ equals $0.6486$, strictly below both product forms $2\eta^{-1}\operatorname{arcosh}(e^{\eta^2 q})=0.9094$ and $\eta^{-1}\operatorname{arcosh}(e^{2\eta^2 q})=0.6537$, so no elementary closed form in $(\eta,q,\Gamma,T)$ survives.

The mechanics of the same instance ($\eta=1$, per-round budget $q_t=0.1$, $\Gamma=\log 2$) make the obstruction concrete.
Round~1 is the one-round game of Proposition~\ref{prop:k2-exact}: from $p_1=(\tfrac12,\tfrac12)$ nature saturates with the symmetric move $z_1=(\delta^\star,-\delta^\star)$, whose centered RCGF $Q_1=\eta^{-2}\log\cosh(\eta\delta^\star)=q_1$ fixes $\delta^\star=\operatorname{arcosh}(e^{0.1})\approx 0.4547$, and the state moves to the skewed $S_1=(\delta^\star,-\delta^\star)$.
At round~2 the Gibbs play is no longer uniform, $p_2=G_\eta(S_1)\approx(0.713,0.287)$, so nature's budget-saturating move---and the worst comparator direction it aligns with---becomes state-dependent: the symmetric $\operatorname{arcosh}$ form of round~1 gives way to the inverse of the Bernoulli$(p_2)$ centered RCGF, an asymmetric $z_2\approx(0.310,-0.771)$ (still centered, $\inner{p_2}{z_2}=0$) saturating the same budget $Q_2=q_2=0.1$.
The intrinsic time accumulates to $V_2=Q_1+Q_2=0.2$ and the intrinsic-time loss to $\sum_t\eta_t Q_t=0.2$; the two increments match only because the budgets do; the moves that saturate them differ.
From the skewed state the exact minimax play parts from the Gibbs trajectory---the play that exactly minimizes the terminal payoff $L_\Gamma(S_2)$ over-tilts relative to $p_2$, so the Gibbs trajectory upper-bounds the value. Pinning the exact value down remains open (Section~\ref{sec:discussion}).

\subsection{Further results from Section~\ref{sec:variable}}
\label{app:further-variable}

\subsubsection{The pressure game and its active scale}
\label{app:pressure-game}

\begin{proposition}[Existence and uniqueness of the active scale]
\label{prop:active-scale}
Fix $p\in\DeltaK$ and centered $z\in\R^K$ that is not $p$-a.s.\ constant.
For every target $b\in(0,\max_{i\in\supp(p)} z(i))$ there exists a unique $\eta>0$ such that
\begin{equation}\label{eq:active-scale}
\log\sum_{i=1}^K p(i)e^{\eta z(i)}=\eta b
\end{equation}
\end{proposition}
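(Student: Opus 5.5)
The plan is to study the centered CGF $\Lambda(\eta):=\log\sum_i p(i)e^{\eta z(i)}$ on $\eta\ge0$ and compare it with the line $\eta b$. I would work with $f(\eta):=\Lambda(\eta)/\eta=\eta\,Q_\eta(p,z)$ for $\eta>0$. Solving \eqref{eq:active-scale} is equivalent to solving $f(\eta)=b$, so it suffices to show that $f$ is continuous, strictly increasing, and maps $(0,\infty)$ onto $\bigl(0,\max_{i\in\supp(p)}z(i)\bigr)$. Existence then follows from the intermediate value theorem and uniqueness from strict monotonicity.

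\textbf{Convexity of $\Lambda$.} $\Lambda$ is smooth with $\Lambda(0)=0$ and $\Lambda'(0)=\inner{p}{z}=0$ by centering. Its second derivative is $\Lambda''(\eta)=\Var_{p_\eta}(z)$, where $p_\eta\propto p\,e^{\eta z}$. This variance is strictly positive for every $\eta$, because $p_\eta$ has the same support as $p$ and $z$ is not $p$-a.s.\ constant. Hence $\Lambda$ is strictly convex.

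\textbf{Monotonicity of $f$.} A strictly convex function vanishing at $0$ has a strictly increasing secant slope from the origin. Concretely,
\[
f'(\eta)=\frac{\eta\Lambda'(\eta)-\Lambda(\eta)}{\eta^2},
\]
and the numerator $h(\eta):=\eta\Lambda'(\eta)-\Lambda(\eta)$ satisfies $h(0)=0$ and $h'(\eta)=\eta\Lambda''(\eta)>0$, so $h>0$ on $(0,\infty)$. Alternatively, the tilted-variance representation of Proposition~\ref{prop:tilted-var} gives $f(\eta)=\eta\int_0^1(1-s)\Var_{p_{s\eta}}(z)\,ds$, but the secant argument is cleaner.

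\textbf{Limits of $f$.} As $\eta\downarrow0$, $f(\eta)\to\Lambda'(0)=0$; Corollary~\ref{cor:variance-shadow} gives the sharper statement $f(\eta)\sim\tfrac12\eta\Var_p(z)$. As $\eta\to\infty$, set $M:=\max_{i\in\supp(p)}z(i)$, which is positive since $z$ is centered and non-constant. The two-sided bound
\[
\log p_{\min,M}+\eta M\;\le\;\Lambda(\eta)\;\le\;\eta M,
\]
where $p_{\min,M}$ is the $p$-mass of the maximizers, yields $f(\eta)\to M$, approached strictly from below.

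The image of $f$ is therefore exactly the open interval $(0,M)$, which completes the argument. The only point needing care is this large-$\eta$ limit together with the restriction to $\supp(p)$: coordinates outside the support do not enter $\Lambda$, which is why the admissible range is set by the support maximum and not by $\max_i z(i)$. I expect this to be the main, though mild, obstacle.
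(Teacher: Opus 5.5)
Your proposal is correct and follows the paper's proof essentially step for step. The paper also works with $\Phi(\eta)=\eta^{-1}\log\sum_i p(i)e^{\eta z(i)}$, shows that $\eta^2\Phi'(\eta)$ vanishes at $0$ with derivative $\eta\Var_{p_\eta}(z)>0$, and concludes by the intermediate value theorem from the limits $0$ and $\max_{i\in\supp(p)}z(i)$; your explicit sandwich $\log p_{\min,M}+\eta M\le\Lambda(\eta)\le\eta M$ supplies the large-$\eta$ limit that the paper only asserts.
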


\begin{game}[Active-constraint pressure game]
\label{game:pressure}
Game~\ref{game:central} with three parts changed.
Move order: nature reveals its centered move $z_t$ ($\inner{p_t}{z_t}=0$) first, and the learner responds.
Nature's constraint: no cap is imposed;
instead the learner chooses a pressure target $b_t\in(0,\max_i z_t(i))$ and solves \eqref{eq:active-scale} for the unique $\eta_t>0$ that makes the one-step constraint active at $b_t$.
Learner's move: the update is pinned to the Gibbs iterate $p_{t+1}(i)\propto p_t(i)e^{\eta_t z_t(i)}$.
The terminal payoff is unchanged:
the comparator-worst loss along any trajectory is again $L_\Gamma(S_T)$ (Proposition~\ref{prop:terminal-collapse}).
\end{game}

\subsubsection{The chain characterization behind horizon-freeness}
\label{app:chain-value}

Under a cumulative budget the whole run reduces to a walk of the Gibbs iterate: the transport identity $\eta^2Q_\eta(p,z)=\KL(p\|p^+)$ makes each round's loss the relative entropy between consecutive iterates.
Nature's play is then a chain $p_1=\pi,p_2,\dots,p_{T+1}$ in $\DeltaK$ whose total length, measured by these per-step relative entropies, is capped by the budget.
With $A_\Gamma$ as in Section~\ref{sec:variable} --- the terminal functional of the alignment identity \eqref{eq:pointwise-align} --- and $p_t=G_\eta(S_{t-1})$, the value has the closed chain form below; it is established alongside Proposition~\ref{prop:horizon-free} in Appendix~\ref{app:proofs-variable}.
\begin{equation}\label{eq:chain-value}
\mathrm{Val}_T^\eta(\Gamma,V)
=
\frac{1}{\eta}\,\sup\Bigl\{\sum_{t=1}^T\KL(p_t\|p_{t+1})+A_\Gamma(p_{T+1})\Bigr\}
\end{equation}
the supremum over chains $p_1=\pi,\dots,p_{T+1}$ of full support with $\sum_{t=1}^T\KL(p_t\|p_{t+1})\le\eta^2V$.
It equals the Bellman value $\Gamma/\eta+\eta V$ if and only if some such chain uses the budget in full and ends on the comparator sphere $\{p:\KL(p\|\pi)=\Gamma\}$.
The chain totals reaching that sphere in $T$ steps form $\{\KL(\pi\|p):\KL(p\|\pi)=\Gamma\}$ at $T=1$ and an interval $[c_T,\infty)$ for $T\ge2$, with $c_T$ nonincreasing in $T$ and $c_T\to0$.
That shrinking interval produces the finite $T_0$ of \eqref{eq:horizon-free-value}.

The two requirements are coupled rigidly at one round: the single move that lands on the sphere also fixes the amount used at $\KL(\pi\|p_2)$, the relative entropy from the prior to the comparator taken in the reverse direction, and the value falls short by that mismatch.
A second round decouples them, since the intermediate point can be placed to use any amount at or above $c_2$.

\subsubsection{The two-step gate in closed form}
\label{app:two-step-gate}

At the Bellman-optimal scale $\eta=\sqrt{\Gamma/V}$ the gate of Proposition~\ref{prop:horizon-free} reads $\Gamma\ge c_2$, and at $K=2$ under a uniform prior $c_2$ is closed form. In the tilt coordinate, with the sphere at $w=\operatorname{arctanh}g(\Gamma)$ (the comparator concentration coefficient $g(\Gamma)\in[0,1]$ of Proposition~\ref{prop:k2-exact}), a two-step chain to it costs $\log\cosh w-\tanh(w_1)(w-w_1)$, stationary where $\tfrac12\sinh(2w_1)+w_1=w$; that left side rises strictly from $0$, so the root $w_1$ is unique and
\begin{equation}\label{eq:c2-closed}
c_2=\log\cosh w-\sinh^2(w_1)
\end{equation}
Both ends of the range follow. As $\Gamma\downarrow0$ the root tends to $w/2$ and $c_2\to w^2/4\to\Gamma/2$, so the gate holds and $T_0=2$. As $\Gamma\uparrow\Gamma_{\max}=\log2$ the comparator sphere recedes toward the boundary of the simplex. Then $w\to\infty$ and $\sinh^2(w_1)\sim w-w_1$, so $c_2\sim\tfrac12\log w$ diverges while $\Gamma$ stays below $\log2$: the gate fails and $T_0>2$. The regimes meet at $\Gamma=0.6479=0.935\,\Gamma_{\max}$, where $T_0$ steps from $2$ to $3$. The gap factor $\kappa(\Gamma)$ of Proposition~\ref{prop:k2-exact} does not persist: measured at that same scale, the $T=1$ value is $0.06\%$ below $2\sqrt{\Gamma V}$ at $\Gamma=0.1$ and $2.0\%$ below at $\Gamma=1/2$, and $0\%$ below at every $T\ge T_0$.

\subsubsection{What a varying scale is worth}
\label{app:varying-scale-worth}

Letting the scale vary lowers the value by very little. At $K=2$ and $T=2$ the minimax value over predictable schedules can be computed without discretizing nature at all. Its reply maximizes a convex function of its move over an interval, so the maximum sits at an endpoint, and the two endpoints solve a scalar equation. That computation puts the best predictable schedule below $2\sqrt{\Gamma V}$ by a relative $6\times10^{-4}$ at $\Gamma=1/2$, and by under $10^{-6}$ at $\Gamma=1/5$. Both figures survive local refinement of nature's opening move, and the first grows slightly as the scale grid is refined, so they bound from below what a schedule is worth. Retuning the scale therefore does lower the value, by less than a tenth of a percent at the budgets computed, and \eqref{eq:horizon-free-bound} over-estimates the value of Game~\ref{game:cgf} by no more than that.

Adaptation nevertheless has to be bounded to be safe. A rule placing the equalizer at the current state, $\eta_t\propto1/\|S_{t-1}\|_\infty$, hands nature the scale: an opening move of size $\epsilon$ forces a scale of order $1/\epsilon$, at which nature converts its remaining budget to regret at rate $\eta_t$, so the value grows without bound as $\epsilon\downarrow0$. Clipping the scale at $\eta_t=1$, as the schedule \eqref{eq:schedule} does, rules out exactly this.

\subsubsection{The intrinsic-time loss as a Riemann sum of a falling scale}
\label{app:riemann-loss}

Writing $\phi(v):=\min\{1,\,C\sqrt{\Gamma/v}\}$ for the per-unit scale (with $\phi(0):=1$), the schedule \eqref{eq:schedule} evaluates round $t$ at the left endpoint, $\eta_t=\phi(V_{t-1})$, so
\[
\sum_{t=1}^T\eta_t Q_t(c)=\sum_{t=1}^T\phi(V_{t-1})\,\bigl(V_t-V_{t-1}\bigr),
\]
the left-endpoint sum of the nonincreasing $\phi$ against the increments of the intrinsic time. A nonincreasing scale evaluated at each interval's left endpoint overestimates on every interval, so the sum lies above the integral $\int_0^{V_T}\phi(v)\,dv$---which equals $2C\sqrt{\Gamma V_T}-C^2\Gamma$ once $V_T\ge C^2\Gamma$---and exceeds it by at most the largest single increment, $Q_T^*(c)$. That bracketing is Theorem~\ref{thm:sqrt-envelope}, whose two sides differ by the discretization cost of sum against integral, lower-order whenever no single round carries a macroscopic share of $V_T(c)$.

The name of the schedule is earned in the unclipped regime: the marginal loss per unit of intrinsic time is the scale $\eta_t=C\sqrt{\Gamma/V_{t-1}}$; the transport bound $\Gamma/\eta_t$ at that scale is $C^{-1}\sqrt{\Gamma V_{t-1}}$, so the marginal transport term is $\tfrac{d}{dv}\bigl[C^{-1}\sqrt{\Gamma v}\bigr]=\tfrac12 C^{-1}\sqrt{\Gamma/v}$ at $v=V_{t-1}$. The schedule keeps their ratio pinned at $2C^2$ along the whole path---equal round by round at $C=1/\sqrt2$.

\subsection{Classical bounds from the game}
\label{app:four-classical}
Each row of Table~\ref{tab:event-restricted} fixes a triple $(\pi,S,A)$ and enters the shared three-step argument of Section~\ref{sec:event-games} at a different point: the comparator-class dual (Proposition~\ref{prop:terminal-dual}) evaluates the variational form, Proposition~\ref{prop:event-restricted} closes it on a face, and the scale optimization converts the value into a relative-entropy rate.
Each row still owes the classical literature its closing step.
The derivations below carry each row from the game to the classical statement on the finite alphabet, closing step included; throughout, $X_1,\dots,X_n$ are i.i.d.\ with law $P$ of full support on $[K]$, and $c:[K]\to\R$ is a score with $\E_P[c]<t<\max_i c(i)$.

\subsubsection{Chernoff}
\label{app:classical-chernoff}

Take $\pi=P$ and state $c$, so that the scale-$\lambda$ log-partition of the game is the cumulant generating function $\Lambda(\lambda):=\lambda W_\lambda(c)=\log\E_P[e^{\lambda c}]$, and write $\bar X_n:=n^{-1}\sum_{j=1}^n c(X_j)$.
The closing step is exponential Markov.
For every $\lambda\ge0$, monotonicity of $x\mapsto e^{n\lambda x}$ and independence give
\begin{equation}\label{eq:app-chernoff-markov}
\Pr[\bar X_n\ge t]
\;\le\;
e^{-n\lambda t}\,\E_P\Bigl[e^{\lambda\sum_j c(X_j)}\Bigr]
=
e^{-n\lambda t}\,\E_P\bigl[e^{\lambda c}\bigr]^n
=
e^{-n(\lambda t-\Lambda(\lambda))}
\end{equation}
and optimizing the exponent over $\lambda\ge0$ yields $\Pr[\bar X_n\ge t]\le e^{-n\Lambda^*(t)}$ with $\Lambda^*(t)=\sup_{\lambda\ge0}\{\lambda t-\Lambda(\lambda)\}$.
The game supplies the meaning of the exponent: by the Lagrange duality of Proposition~\ref{prop:terminal-dual},
\begin{equation}\label{eq:app-chernoff-rate}
\Lambda^*(t)=\inf\bigl\{\KL(\nu\|P):\nu\in\DeltaK,\ \inner{\nu}{c}\ge t\bigr\}=:I(t)
\end{equation}
the least comparator complexity that moves the score mean to $t$ --- the same relative entropy the terminal payoff $L_\Gamma$ requires of a comparator, read at the mean-$t$ face in place of a relative-entropy sphere.

\subsubsection{Cram\'er}
\label{app:classical-cramer}

The Chernoff bound is one side of the large-deviation statement; the tilted-law argument matches it from below, so the exponent $I(t)$ of \eqref{eq:app-chernoff-rate} is exact.
Since $t$ is interior, the supremum defining $\Lambda^*(t)$ is attained at a finite $\lambda^\star$ with $\Lambda'(\lambda^\star)=t$, and the attaining tilt is the game's Gibbs law at scale $\lambda^\star$: $P_{\lambda^\star}(i)\propto P(i)e^{\lambda^\star c(i)}$, with mean $\E_{P_{\lambda^\star}}[c]=\Lambda'(\lambda^\star)=t$ and
\begin{equation}\label{eq:app-cramer-klidentity}
\KL(P_{\lambda^\star}\|P)=\lambda^\star t-\Lambda(\lambda^\star)=\Lambda^*(t)
\end{equation}
Fix $\varepsilon>0$ and let $A_n:=\{|\bar X_n-t|<\varepsilon\}$.
Changing measure from $P^n$ to $P_{\lambda^\star}^n$,
\[
\Pr[\bar X_n>t-\varepsilon]
\;\ge\;
P^n(A_n)
=
\E_{P_{\lambda^\star}^n}\Bigl[\1_{A_n}\,e^{-\lambda^\star\sum_j c(X_j)+n\Lambda(\lambda^\star)}\Bigr]
\;\ge\;
e^{-n(\lambda^\star(t+\varepsilon)-\Lambda(\lambda^\star))}\,P_{\lambda^\star}^n(A_n),
\]
and the law of large numbers under the tilt gives $P_{\lambda^\star}^n(A_n)\to1$, so $\liminf_n n^{-1}\log\Pr[\bar X_n>t-\varepsilon]\ge-\Lambda^*(t)-\lambda^\star\varepsilon$.
Letting $\varepsilon\downarrow0$ and combining with \eqref{eq:app-chernoff-markov}, the exponent is exactly the rate function at every interior continuity point,
\begin{equation}\label{eq:app-cramer-exact}
\lim_{n\to\infty}\;-\frac1n\log\Pr[\bar X_n\ge t]=I(t)=\Lambda^*(t)
\end{equation}
The two expressions for the rate are the two sides of the game's duality: $\Lambda^*$ is the scale-side (Legendre) reading and $I$ the comparator-side (relative-entropy) reading of the same terminal variational problem.

\subsubsection{Gibbs conditioning and the conditional limit}
\label{app:classical-gibbs}

On a face the game's optimizer is exact at every finite $n$, with no limit taken.
Fix $A\subseteq[K]$ with $P(A)>0$ and condition on $\{X_j\in A\ \text{for all}\ j\le n\}$: the joint law factorizes, so
\begin{equation}\label{eq:app-face-exact}
\Pr[X_1=a\mid X_j\in A\ \forall j\le n]
=
\frac{P(a)\1\{a\in A\}}{P(A)}
=
P(a\mid A)
\end{equation}
the conditioned prior itself --- which is precisely the $I$-projection of $P$ onto the face $\{\nu:\supp(\nu)\subseteq A\}$, at relative entropy $\KL(P(\cdot\mid A)\|P)=-\log P(A)$, the event cost that Proposition~\ref{prop:event-restricted} isolates and that the game's terminal functional imposes.
For a general closed convex constraint set $E\subseteq\DeltaK$ with $P\notin E$ and $\inf_{\nu\in E}\KL(\nu\|P)$ attained at the $I$-projection $\nu^\star$ (unique by strict convexity), the conditional limit theorem replaces the exact identity by a limit: conditionally on the empirical type $\hat\nu_n$ lying in $E$, the law of $X_1$ converges to $\nu^\star$.
The mechanism is the Pythagorean inequality of the $I$-projection \cite{Csiszar75}, $\KL(\nu\|P)\ge\KL(\nu\|\nu^\star)+\KL(\nu^\star\|P)$ for every $\nu\in E$, which places any type in $E$ at distance $\delta$ from $\nu^\star$ at exponent $\KL(\nu^\star\|P)+\delta$; Sanov's bound below then makes such types conditionally negligible, and averaging types near $\nu^\star$ yields the limit \cite{dembo2010large}.
The game reads the conditional limit as its $\eta\downarrow0$ equilibrium: the optimizer of \eqref{eq:event-restricted} is the Gibbs tilt of the conditioned prior, and as the scale vanishes the tilt relaxes onto the $I$-projection itself.

\subsubsection{Finite-state Sanov}
\label{app:classical-sanov}

The closing step is the method of types.
For a type $\nu$ of denominator $n$ (an empirical distribution of $n$ points), every sequence $x^n$ of type $\nu$ has probability $P^n(x^n)=\prod_i P(i)^{n\nu(i)}=e^{-n(\KL(\nu\|P)+H(\nu))}$, with $H(\nu):=-\sum_i\nu(i)\log\nu(i)$ the Shannon entropy, and the type class $T_n(\nu)$ contains $\binom{n}{n\nu(1),\dots,n\nu(K)}\le e^{nH(\nu)}$ sequences, at least $(n+1)^{-K}e^{nH(\nu)}$ of them; hence
\begin{equation}\label{eq:app-types}
(n+1)^{-K}\,e^{-n\KL(\nu\|P)}
\;\le\;
P^n\bigl[\hat\nu_n=\nu\bigr]
\;\le\;
e^{-n\KL(\nu\|P)}
\end{equation}
Since there are at most $(n+1)^K$ types of denominator $n$, summing the upper bound over the types in a closed set $E$ and taking the best lower-bound type near any interior $\nu\in E$ gives
\begin{equation}\label{eq:app-sanov}
-\inf_{\nu\in\operatorname{int}E}\KL(\nu\|P)
\;\le\;
\liminf_n\ \frac1n\log P^n\bigl[\hat\nu_n\in E\bigr]
\;\le\;
\limsup_n\ \frac1n\log P^n\bigl[\hat\nu_n\in E\bigr]
\;\le\;
-\inf_{\nu\in E}\KL(\nu\|P)
\end{equation}
the finite-state Sanov theorem.
The game's contribution is again the exponent's variational form: on the simplex of types, $\inf_{\nu\in E}\KL(\nu\|P)$ is the Lagrange dual of the comparator-class problem that $L_\Gamma$ solves (Proposition~\ref{prop:terminal-dual}), and when $E$ is a face it collapses to the closed form of Proposition~\ref{prop:event-restricted}.
The polynomial factor $(n+1)^{\pm K}$ is absorbed by the exponential scale.

\subsection{Further results from Section~\ref{sec:concentration}}
\label{app:further-concentration}

\subsubsection{The sufficient-statistic catalogue}
\label{app:sufficient-statistics}

Each row enters the update~\eqref{eq:ftrl} as the cumulative statistic $C_{t-1}(i)$ of a per-round score. Here $\psi_\tau$ is Winsorization at level $\tau$, so that $|\psi_\tau|\le\tau$, $\psi_{\mathrm{Cat}}$ is Catoni's influence function, and $u_t$ is a predictable per-round offset ($c_t=\ell_t+u_t$), whose mismatch term $\sum_t\inner{\nu-p_t}{u_t}$ records exactly how the game on~$\ell_t$ differs from the reduced game on $c_t$.

\begin{table}[h]
\centering
\begin{tabular}{lll}
\toprule
\textbf{Algorithm} & \textbf{Sufficient statistic $C_{t-1}(i)$} & \textbf{Consequence} \\
\midrule
Vanilla Hedge & $\sum_{s<t}\ell_s(i)$ & Raw cumulative loss \\
Optimistic Hedge & $\sum_{s<t}(\ell_s(i)-m_s(i))$ & Only residuals contribute \\
Compensated & $\sum_{s<t}(\ell_s(i)+u_s(i))$ & Variance corrections included \\
Soft specialists & $\sum_{s<t}\beta_s(i)\ell_s(i)$ & Confidence-rated experts \\
One-sided statistics & $\sum_{s<t}(\inner{p_s}{c_s}-c_s(i))_+$ & Only adversarial coordinates \\
Winsorized scores & $\sum_{s<t}\psi_\tau(z_s(i))$ & Finite loss on heavy tails \\
Catoni transform & $\sum_{s<t}\alpha_s^{-1}\psi_{\mathrm{Cat}}(\alpha_s z_s(i))$ & Bounded influence, exact bias \\
\bottomrule
\end{tabular}
\caption{Sufficient-statistic choices and their consequences. The game is defined relative to whichever statistic the learner scores; changing it changes the state, the intrinsic time, and the move constraint, and leaves every identity intact.}
\label{tab:sufficient-statistics}
\end{table}

\subsubsection{Comparators on and off the minimal-mean face}
\label{app:luckiness-faces}

\begin{proposition}[Diffuse comparators reduce to the best expert]
\label{prop:diffuse-reduces}
Run fixed-rate Hedge on i.i.d.\ composite losses $c_t\in[0,1]^K$ with mean $\mu$, and let $\mu^*=\min_i\mu(i)$ and $k^*\in\argmin_i\mu(i)$. For every comparator $\nu\in\DeltaK$,
\begin{equation}\label{eq:diffuse-monotone}
\E R_T^c(\nu)=\E R_T^c(\delta_{k^*})-T\bigl(\inner{\nu}{\mu}-\mu^*\bigr)\ \le\ \E R_T^c(\delta_{k^*})
\end{equation}
Consequently, whenever some minimal-mean expert $k^*$ satisfies the point-mass low-noise condition with a finite constant $\kappa_{k^*}$, the choice $\eta=\min\{1,[2(e-2)\kappa_{k^*}]^{-1}\}$ gives a constant-in-$T$ bound for every $\nu$,
\begin{equation}\label{eq:diffuse-constant}
\E R_T^c(\nu)\ \le\ \E R_T^c(\delta_{k^*})\ \le\ 2\bigl(1+2(e-2)\kappa_{k^*}\bigr)\bigl(-\log\pi(k^*)\bigr)
\end{equation}
The one comparator class left uncovered by the point-mass mechanism is a tied minimum carried by experts with distinct losses, at which no minimal-mean expert satisfies the point-mass condition with a finite constant.
\end{proposition}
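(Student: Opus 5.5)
The identity \eqref{eq:diffuse-monotone} is pure linearity, and everything else follows by chaining it with Theorem~\ref{thm:luckiness}. I would use the regret definition $R_T^c(\nu)=\sum_t\inner{p_t}{c_t}-\sum_t\inner{\nu}{c_t}$ and subtract the two regrets pathwise:
\[
R_T^c(\nu)-R_T^c(\delta_{k^*})=\sum_t\bigl(c_t(k^*)-\inner{\nu}{c_t}\bigr).
\]
The learner's own loss cancels, so this holds regardless of how $p_t$ is chosen. Taking expectations under the i.i.d.\ model uses only $\E c_t=\mu$ for each $t$, since the summand involves $c_t$ alone and not $p_t$. That gives $\E R_T^c(\nu)-\E R_T^c(\delta_{k^*})=T(\mu^*-\inner{\nu}{\mu})$. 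The inequality then follows because $\inner{\nu}{\mu}\ge\min_i\mu(i)=\mu^*$ for any $\nu\in\DeltaK$.

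For \eqref{eq:diffuse-constant}, I would apply Theorem~\ref{thm:luckiness} with comparator $\delta_{k^*}$. The assumed finite constant $\kappa_{k^*}$ makes the low-noise condition hold at $\nu=\delta_{k^*}$. The stated $\eta$ satisfies $\eta\le1$ and $(e-2)\kappa_{k^*}\eta\le\tfrac12<1$, which are the theorem's hypotheses. Its particular case then yields $\E R_T^c(\delta_{k^*})\le 2(1+2(e-2)\kappa_{k^*})\KL(\delta_{k^*}\|\pi)$, and $\KL(\delta_{k^*}\|\pi)=-\log\pi(k^*)$. Chaining this with the first inequality gives the bound for every $\nu$.

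The one place needing care is the closing remark about the uncovered class. I would argue it from the paper's observation that the low-noise condition at $\delta_{k^*}$ needs $\E[(c_t(i)-c_t(k^*))^2]\le\kappa(\mu(i)-\mu^*)$. At another tied minimizer $i$ the right side is $0$, so $c_t(i)=c_t(k^*)$ almost surely. Hence if the tied minimizers have distinct losses, the condition fails at every $k^*$ for every finite $\kappa$, and the point-mass route gives nothing there. This justifies the statement that such tied minima are exactly what the point-mass mechanism leaves uncovered, and I expect this to be the only non-mechanical step.
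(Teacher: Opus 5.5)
Your proof is correct and follows the paper's route. The learner's own loss cancels by linearity to give \eqref{eq:diffuse-monotone}, and Theorem~\ref{thm:luckiness} applied at $\delta_{k^*}$ with $\KL(\delta_{k^*}\|\pi)=-\log\pi(k^*)$ then gives \eqref{eq:diffuse-constant}. The paper conditions on $\mathcal F_{t-1}$ first; you subtract pathwise before taking expectations, which is a harmless reordering and shows the identity needs only $\E c_t=\mu$.

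Your justification of the closing remark is not in the paper's proof, which leaves it to the surrounding discussion. Your word ``exactly'' also needs the easy converse. With finite $K$ and losses in $[0,1]$, if all minimizers have a.s.\ identical losses then $\kappa_{k^*}\le 1/\min_{i:\mu(i)>\mu^*}(\mu(i)-\mu^*)<\infty$, so no other case is left uncovered.
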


The restriction to that face is sharp, and comes directly from the definition.
If the comparator-centered low-noise condition holds with a finite constant, then any expert in $\supp(\nu)$ sharing the posterior-average mean loss must have $c_t(i)=\inner{\nu}{c_t}$ almost surely; taking $i\in\argmin_j\mu(j)$ then forces $\inner{\nu}{\mu}=\mu^*$, so the condition is restrictive for a diffuse comparator.

Expanding the square in Proposition~\ref{prop:luckiness-envelope} about the algorithm mean gives the exact identity $\Psi_t(\nu)=\Var_{p_t}(c_t)+\inner{p_t-\nu}{c_t}^2$, so the comparator enters the comparator-centered second moment only through the instantaneous regret $\inner{p_t-\nu}{c_t}$.
A comparator with $\inner{\nu}{\mu}>\mu^*$ is beaten by a linear margin $T(\inner{\nu}{\mu}-\mu^*)$ under \eqref{eq:diffuse-monotone}, so a constant regret bound holds a fortiori; one with $\inner{\nu}{\mu}=\mu^*$ ties the best expert in mean and inherits its guarantee whenever that expert meets the point-mass condition, which is the mechanism Theorem~\ref{thm:luckiness} uses.
For finite $K$ a different argument handles the tied-minimum face whenever the off-face mean gap is strict.
If every expert outside the minimal-mean face has mean at least $\mu^*+\Delta$ for some $\Delta>0$, fixed-rate Hedge puts geometrically decaying weight on those experts, so the excess over a face comparator sums to $\E R_T^c(\nu)=O(1)$ uniformly in $T$.

\subsubsection{Drawdown and mixture width, numerically}
\label{app:drawdown-numerics}

Since $\inf_{t\le T}\Lambda_t\le\Lambda_0=0$, every path has $\mathrm{DD}_T\ge\Lambda_T=-\log \mathrm{PR}_T$, and under the true parameter the mean of that lower bound, $\E[-\log \mathrm{PR}_T]$, is the accumulated log-moment deficit of the $e$-process: linear in $T$ for independent increments, and equal to $T\eta^2\sigma^2/2$ for a score of sub-Gaussian scale $\sigma$ at a fixed $\eta$. The two regimes separate numerically.
Take a standard normal score at $\alpha=0.05$, so the budget is $\log(1/\alpha)=3.00$ nats.
With $\eta$ held at its $T=10^3$ tuning (the $\eta$ solving $T\eta^2\sigma^2/2=\log(1/\alpha)$ there), the deficit is linear in the horizon and the drawdown follows it.
The averages are $0.74$, $3.86$ and $30.9$ nats at $T=10^2$, $10^3$ and $10^4$, each the linear deficit at that horizon plus a running-minimum overshoot of less than one nat.
Retuned to each horizon instead, $T\eta^2\sigma^2/2$ is held at $\log(1/\alpha)$ throughout, and the same quantity stays near $3.8$ nats at all three.

The width comparison is read at the same setting: a standard normal score at $\alpha=0.05$, three sets read once at $T=10^3$, each against a nominal two-sided miss rate of $10^{-1}$.
The fixed-horizon interval is the reference, missing at $1.4\times10^{-2}$.
A normal-mixture sequence at the mixture's own best tuning is $24.0\%$ wider and misses at $2.4\times10^{-3}$; under the conventional unit tuning it is $46.8\%$ wider and misses at $3.3\times10^{-4}$.
The three miss rates are the Gaussian endpoint tails of the three radii, so the shortfall against the nominal level measures the same conservatism the extra width does.
Both widths are read off closed forms: the mixture boundary $[\tau^{-2}(1+\tau^2\sigma^2T)(2\log(1/\alpha)+\log(1+\tau^2\sigma^2T))]^{1/2}$ at mixing scale $\tau$, against the endpoint radius $\sigma\sqrt{2T\log(1/\alpha)}$.
Their ratio depends on $(\tau,T)$ only through $\tau^2\sigma^2T$, which is why the best-tuned figure carries no horizon.

\subsection{Further results from Section~\ref{sec:games}}
\label{app:proofs-games}

\subsubsection{Opponent models and the robust center}
\label{app:robust-center}

Opponent models pool multiplicatively.
If $\omega_{t,1},\dots,\omega_{t,W}\in\Delta([L])$ are candidate opponent models with response priors $\pi_{t,w}(i)\propto e^{-\eta (M\omega_{t,w})_i}$, then each $\log\pi_{t,w}(i)$ is affine in $(M\omega_{t,w})_i$ up to an $i$-independent normalizer, so the geometric pool at nonnegative weights $\alpha_{t,w}$ is $\prod_w \pi_{t,w}(i)^{\alpha_{t,w}}\propto\exp(-\eta\sum_w \alpha_{t,w}(M\omega_{t,w})_i)$.
Geometric pooling in prior space is ordinary exponential response to the averaged opponent model, and the worst case over weightings gives a robust center.
The duality below is stated for strategy priors in~\cite{balsubramani2026adaptive}; it is recorded here in the game's coordinates.

\begin{proposition}[Robust center over several priors]
\label{prop:robust-center}
Given positive priors $\pi_1,\dots,\pi_W\in\DeltaK$, define $C_\alpha(\pi_{1:W}):=-\log\sum_{i=1}^K \prod_{w=1}^W \pi_w(i)^{\alpha_w}$ and $C(\pi_{1:W}):=\max_{\alpha\in\Delta([W])}C_\alpha(\pi_{1:W})$.
Then
\begin{equation}\label{eq:robust-center}
C(\pi_{1:W})
=
\min_{p\in\DeltaK}\max_{w\in[W]}\KL(p\|\pi_w)
\end{equation}
and the optimizer is the geometric mixture induced by a maximizing~$\alpha$.
\end{proposition}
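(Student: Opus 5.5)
This is a finite-dimensional convex minimax computation, and I would prove it by Sion's minimax theorem.

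\textbf{Step 1 (variational form of the max-KL problem).} For $p\in\DeltaK$, linearity in $\alpha$ gives
$$\max_{w}\KL(p\|\pi_w)=\max_{\alpha\in\Delta([W])}\sum_w\alpha_w\KL(p\|\pi_w).$$
The function $F(p,\alpha):=\sum_w\alpha_w\KL(p\|\pi_w)$ has the following properties:
\begin{itemize}
\item it is convex and continuous in $p$ on the compact simplex, finite everywhere because every $\pi_w$ is strictly positive;
\item it is linear in $\alpha$.
\end{itemize}
Sion's theorem therefore gives
$$\min_p\max_w\KL(p\|\pi_w)=\max_\alpha\min_p F(p,\alpha).$$

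\textbf{Step 2 (inner minimization in closed form).} Expand
$$F(p,\alpha)=\sum_i p(i)\log p(i)-\sum_i p(i)\sum_w\alpha_w\log\pi_w(i).$$
Write $g_\alpha(i):=\prod_w\pi_w(i)^{\alpha_w}$ and $Z_\alpha:=\sum_i g_\alpha(i)$, and set $\bar\pi_\alpha:=g_\alpha/Z_\alpha$. Then
$$F(p,\alpha)=\KL(p\|\bar\pi_\alpha)-\log Z_\alpha.$$
This is the same Gibbs variational identity as \eqref{eq:gibbs-var}, with base measure the unnormalized geometric mixture. The minimum over $p$ is therefore attained uniquely at $p=\bar\pi_\alpha$, with value $-\log Z_\alpha=C_\alpha(\pi_{1:W})$. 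Taking $\max_\alpha$ yields \eqref{eq:robust-center}.

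\textbf{Step 3 (the optimizer).} Let $\alpha^\star$ maximize $C_\alpha$. The maximum exists because $C_\alpha$ is continuous on the compact simplex, and it is concave as a pointwise infimum of functions linear in $\alpha$. Let $p^\star$ be a minimizer of $\max_w\KL(p\|\pi_w)$. Since $(p^\star,\alpha^\star)$ is a saddle point of $F$, $p^\star$ minimizes $F(\cdot,\alpha^\star)$. By the uniqueness in Step 2, $p^\star=\bar\pi_{\alpha^\star}$, which is the geometric mixture induced by $\alpha^\star$.

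\textbf{Main obstacle.} The only delicate point is verifying Sion's hypotheses and the saddle-point conclusion: we need compactness of one side and continuity of $F$ up to the boundary of $\DeltaK$. Continuity holds because each $\pi_w$ is strictly positive, so $p\log(p/\pi_w)$ is continuous at $p(i)=0$. The remaining steps are routine algebra.
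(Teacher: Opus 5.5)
Your proof is correct and follows the paper's route: Sion's minimax theorem applied to $F(p,\alpha)=\sum_w\alpha_w\KL(p\|\pi_w)$, with the inner minimum over $p$ attained at the normalized geometric mixture and equal to $C_\alpha(\pi_{1:W})$. You compute that inner minimum through the exact identity $F(p,\alpha)=\KL(p\|\bar\pi_\alpha)-\log Z_\alpha$, where the paper uses a Jensen inequality. Your identity also yields uniqueness of the inner minimizer, and with it a clean saddle-point justification of the optimizer claim, which the paper only asserts. You also correctly describe $F$ as convex in $p$, whereas the paper calls it bilinear.
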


In a repeated game, using the robust center as a baseline and running optimistic Hedge on the residuals yields regret controlled by the intrinsic time of the residual sequence.
If one of the~$W$ models is close to the opponent's actual strategy, the residuals are small. Convergence is then fast.

\subsubsection{The regret-to-equilibrium conversions}
\label{app:regret-to-equilibrium}

\begin{proposition}[External regret implies CCE]
\label{prop:cce}
If each player in a finite normal-form game satisfies external regret at most $\varepsilon_m T$ --- regret against the best single action in hindsight --- then the empirical distribution of play is an $\varepsilon$-coarse correlated equilibrium (CCE) \cite{hart2000simpleb,blum2007external}. Here $\varepsilon=\max_m \varepsilon_m$.
\end{proposition}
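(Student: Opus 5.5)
The plan is the standard linearity argument, written in the paper's notation. Let $m$ index the players, $A_m=[K_m]$ their action sets, and $u_m$ their utilities with the cost convention $u_m=-c_m$. The empirical distribution of play is either $\sigma_T=\frac1T\sum_{t=1}^T\bigotimes_k p_t^{(k)}$ (mixed actions) or $\sigma_T=\frac1T\sum_t\delta_{\bm a_t}$ (sampled profiles). The argument is identical in both cases; only the per-round expectation changes. I will check the definition of an $\varepsilon$-CCE through the deviation distance \eqref{eq:cce-distance}. It therefore suffices to bound, for each $m$ and each $\nu_m\in\Delta([K_m])$, the quantity $\E_{\bm a\sim\sigma_T}[u_m(\nu_m,\bm a_{-m})-u_m(\bm a)]$ by $\varepsilon_m$.

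\textbf{Step 1: unfold the deviation gain.} Because $\sigma_T$ is an average of $T$ terms, linearity of expectation gives
\[
\E_{\bm a\sim\sigma_T}\bigl[u_m(\nu_m,\bm a_{-m})-u_m(\bm a)\bigr]=\frac1T\sum_{t=1}^T\Bigl(\E_t\bigl[c_m(\bm a)\bigr]-\E_t\bigl[c_m(\nu_m,\bm a_{-m})\bigr]\Bigr).
\]
Here $\E_t$ is the round-$t$ product law, or the point mass at $\bm a_t$. I then define player $m$'s round-$t$ loss vector $\ell_t^{(m)}(i):=\E_t[c_m(i,\bm a_{-m})]$. The product structure, or the fact that the opponents' actions are fixed within a round, lets the expectation factor as $\E_t[c_m(\bm a)]=\inner{p_t^{(m)}}{\ell_t^{(m)}}$; in the sampled case the left-hand side is $\ell_t^{(m)}(a_t^{(m)})$. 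By linearity in $\nu_m$, the deviation term is $\inner{\nu_m}{\ell_t^{(m)}}$. So $T$ times the deviation gain equals player $m$'s external regret $R_T^{(m)}(\nu_m)=\sum_t\inner{p_t^{(m)}-\nu_m}{\ell_t^{(m)}}$ against $\nu_m$.

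\textbf{Step 2: reduce mixed comparators to pure ones and conclude.} Regret is linear in $\nu_m$, so its supremum over the simplex is attained at a vertex: $\sup_{\nu_m}R_T^{(m)}(\nu_m)=\max_i R_T^{(m)}(\delta_i)$. External regret against the best single action in hindsight is exactly this maximum. The hypothesis then gives a deviation gain of at most $\varepsilon_m T/T=\varepsilon_m\le\max_m\varepsilon_m=\varepsilon$ for every player and every deviation. Taking the maximum over $m$ gives $d(\sigma_T,\mathrm{CCE})\le\varepsilon$, which is the $\varepsilon$-CCE condition.

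\textbf{Main obstacle.} The only delicate point is Step 1. One must verify that the loss vector $\ell_t^{(m)}$ facing player $m$ does not depend on $m$'s own randomization. This holds because the opponents' marginal is independent of $m$'s draw within each round (a product law), and it is why the statement uses the time-average of product laws rather than an arbitrary correlated average. I would make this factorization explicit before invoking linearity. The sign conventions for $u_m=-c_m$ also need to line up with $R_T^{(m)}$.
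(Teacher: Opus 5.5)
Your proof is correct and is the same standard reduction the paper uses. The paper's proof simply cites averaging each player's regret bound under the empirical joint and taking the maximum over players, and your Steps~1--2 (deviation gain equals $R_T^{(m)}(\nu_m)/T$, then linearity in $\nu_m$ to reduce to pure deviations) are exactly the computation the paper spells out in the proof of Theorem~\ref{thm:k-player-ledger}. One thing to keep consistent: under the sampled-profile reading of $\sigma_T$, the hypothesized regret must be the realized one, $\sum_t\bigl(\ell_t^{(m)}(a_t^{(m)})-\inner{\nu_m}{\ell_t^{(m)}}\bigr)$, rather than $\sum_t\inner{p_t^{(m)}-\nu_m}{\ell_t^{(m)}}$.
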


\begin{proposition}[Swap regret implies CE]
\label{prop:swap-ce}
If each player satisfies swap regret at most $\varepsilon_m T$ --- regret against the best fixed remapping of its own actions --- then the empirical distribution of play is an $\varepsilon$-correlated equilibrium \cite{hart2000simpleb}.
\end{proposition}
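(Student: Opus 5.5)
The plan is to prove the swap-regret-to-CE conversion by unpacking the definitions and averaging, in the same way Proposition~\ref{prop:cce} is proved but with the single deviation action replaced by a remapping $\phi_m:[K_m]\to[K_m]$. Let $\sigma_T:=\frac1T\sum_{t=1}^T\delta_{\bm a_t}$ be the empirical distribution of play; for mixed actions, use $\frac1T\sum_t\bigotimes_k p_t^{(k)}$ instead. Recall that $\sigma$ is an $\varepsilon$-CE when, for every player $m$ and every $\phi_m$,
\[
\E_{\bm a\sim\sigma}\bigl[u_m(\phi_m(a_m),\bm a_{-m})-u_m(\bm a)\bigr]\le\varepsilon .
\]
Here we adopt the cost convention $u_m=-c_m$, as in \eqref{eq:cce-distance}.

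\textbf{Step one} is to write player $m$'s swap regret against $\phi_m$ as a sum over rounds:
\[
\mathrm{SR}_T^{(m)}(\phi_m)=\sum_{t=1}^T\bigl[c_m(\bm a_t)-c_m(\phi_m(a_{m,t}),\bm a_{-m,t})\bigr],
\]
or, with mixed actions, $\sum_t\sum_i p_t^{(m)}(i)\bigl[\ell_t^{(m)}(i)-\ell_t^{(m)}(\phi_m(i))\bigr]$, where $\ell_t^{(m)}(i)$ is the expected loss of action $i$ under the other players' round-$t$ product play.

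\textbf{Step two} uses linearity of expectation over the uniform average that defines $\sigma_T$. This gives exactly
\[
\E_{\bm a\sim\sigma_T}\bigl[u_m(\phi_m(a_m),\bm a_{-m})-u_m(\bm a)\bigr]=\frac1T\,\mathrm{SR}_T^{(m)}(\phi_m).
\]
In the mixed case this needs one more fact: in each round the joint distribution is the product $\bigotimes_k p_t^{(k)}$, so conditioning on $a_m=i$ leaves $\bm a_{-m}$ distributed as the other players' product play. That is what makes the inner expectation equal to $\ell_t^{(m)}(i)$ versus $\ell_t^{(m)}(\phi_m(i))$.

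\textbf{Step three} applies the hypothesis $\mathrm{SR}_T^{(m)}(\phi_m)\le\varepsilon_mT$ to get a deviation gain of at most $\varepsilon_m\le\max_m\varepsilon_m=\varepsilon$, uniformly over $m$ and $\phi_m$. Since external regret is swap regret restricted to constant maps $\phi_m\equiv j$, this also recovers Proposition~\ref{prop:cce} as a special case.

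The only real obstacle is Step two's bookkeeping: matching the empirical (or mixed) joint law to the way the regret is defined, so that the deviation is applied conditionally on the player's own recommended action rather than to a fixed action. With mixed-action play this is exactly where the product structure of each round's play is used.
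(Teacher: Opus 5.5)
Your proposal is correct and takes the same approach as the paper, whose proof just cites the standard argument ``applied to the empirical joint one swap function at a time.'' Your Step two identity $\E_{\bm a\sim\sigma_T}\bigl[u_m(\phi_m(a_m),\bm a_{-m})-u_m(\bm a)\bigr]=\mathrm{SR}_T^{(m)}(\phi_m)/T$ is that per-swap-function averaging written out in full, including the product-structure point needed for the mixed-action joint; your swap-function form of the $\varepsilon$-CE condition is also the stronger one, since it implies the pairwise Hart--Mas-Colell form by taking $\phi_m$ that moves a single action.
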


\subsubsection{Why the ledger supplies no converse}
\label{app:cce-converse}

The forward implication $\max_k V_T^{(k)}=o(T)\Rightarrow d(\sigma_T,\CCE)=o(T^{-1/2})$ holds because $V_T^{(k)}=\sum_t Q_t^{(k)}$ governs every horizon-growing term of $R_T^{(k)}=P_T^{(k)}+D_T^{(k)}+B_T^{(k),\star}$: the terminal transport obeys $B_T^{(k),\star}\le\Gamma^{(k)}/\eta_T^{(k)}=O(\sqrt{\Gamma^{(k)}V_T^{(k)}})$, and the drift is a gain.

The reverse implication is not available. A lower bound on $R_T^{(k)}$ in terms of $V_T^{(k)}$ must offset the envelope's lower side $2C\sqrt{\Gamma^{(k)}V_T^{(k)}}-C^2\Gamma^{(k)}$ against the drift gain $D_T^{(k)}$, whose magnitude is itself controlled only by the transport scale $\Gamma^{(k)}/\eta_T^{(k)}\le C^{-1}\sqrt{\Gamma^{(k)}V_T^{(k)}}$. The residual coefficient $2C-C^{-1}$ is exactly zero at the headline $C=1/\sqrt2$. So in the setting of Theorem~\ref{thm:k-player-ledger} under the self-tuned schedule, whether $d(\sigma_T,\CCE)=o(T^{-1/2})$ forces $\max_k V_T^{(k)}=o(T)$ is undecided, and deciding it needs a lower bound the envelope cannot supply at that constant.

\begin{theorem}[Bounded intrinsic time at strict equilibria]
\label{thm:unconditional-V}
Consider intrinsic-time regret-matching self-play (Theorem~\ref{thm:regret-matching}) in a finite game under the self-tuned schedule $\eta_t^{(k)}=C\sqrt{\Gamma^{(k)}/(V_{t-1}^{(k)}+\Gamma^{(k)})}$, and write $B:=\max_k\max_{\bm a}\lvert c_k(\bm a)\rvert$ for the loss range.
Suppose the realized trajectory converges to a \emph{strict} pure Nash profile $a^\star$ with stability gap $\Delta:=\min_{k}\min_{i\ne a_k^\star}\bigl[u_k(a_k^\star,a_{-k}^\star)-u_k(i,a_{-k}^\star)\bigr]>0$
(cost convention $u_k=-c_k$: $a_k^\star$ strictly beats every deviation by at least $\Delta$ when opponents play $a_{-k}^\star$), with no assumption on the rate of approach.
Then for every player~$k$ the off-equilibrium mass $1-p_t^{(k)}(a_k^\star)$ and the per-round RCGF $Q_t^{(k)}$ decay geometrically, the schedule freezes at a positive scale $\eta_t^{(k)}\to\eta_\infty^{(k)}\in(0,C]$, and the realized intrinsic time is \emph{bounded},
\begin{equation}\label{eq:strict-eq-VboundedO1}
V_T^{(k)}=\sum_{t=1}^T Q_t^{(k)}=O(1)
\end{equation}
so that $d(\sigma_T,\mathrm{CCE})=O(1/T)$ in the ledger \eqref{eq:k-player-ledger}.
\end{theorem}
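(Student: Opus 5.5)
The proof goes player by player. Fix $k$ and write $g_t:=g_t^{(k)}$ for its centered regret vector, $p_t:=p_t^{(k)}$ for its play, and $\epsilon_t:=1-p_t(a_k^\star)$ for its off-equilibrium mass. The plan is a bootstrap. Convergence of the trajectory to $a^\star$ makes the regret vectors eventually favour $a_k^\star$ by a fixed margin. The scale never falls below a fixed floor, since $\eta_t\ge C\sqrt{\Gamma/(V_{t-1}+\Gamma)}$ and $V_{t-1}$ grows at most linearly in $t$. Together these force the cumulative score gap to grow linearly, so $\epsilon_t$ decays geometrically. Geometric decay of $\epsilon_t$ then forces geometric decay of $Q_t$, and that in turn bounds $V_T$ and freezes the scale.

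\textbf{Step 1 (margin).} Convergence of the realized trajectory means that the opponents' mixed play $p_t^{(-k)}\to\delta_{a^\star_{-k}}$. By continuity of expected cost in the opponents' product play, there is a round $t_0$ after which every $i\ne a_k^\star$ satisfies $\ell_t(i)-\ell_t(a_k^\star)\ge\Delta/2$. The round-$t$ Gibbs play is $p_t\propto\pi e^{\eta_t S_{t-1}}$, and $S_{t-1}(a_k^\star)-S_{t-1}(i)=C_{t-1}(i)-C_{t-1}(a_k^\star)$ (Proposition~\ref{prop:centered}). This gap is at least $(t-1-t_0)\Delta/2-2Bt_0$.

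\textbf{Step 2 (crude scale floor, then geometric decay of $\epsilon_t$).} Corollary~\ref{cor:variance-shadow} gives the a priori bound $Q_t\le(2B)^2/8$ from the range. Hence $V_{t-1}\le B^2 t/2$ and $\eta_t\gtrsim C\sqrt{\Gamma}/(B\sqrt t)$. This yields only $\epsilon_t\le \pi_{\min}^{-1}\exp(-c\sqrt t)$, which is stretched-exponential rather than geometric, but it is summable. Next I bound $Q_t$ in terms of $\epsilon_t$. The vector $g_t$ is centered under $p_t$ and has range at most $2B$, and $p_t$ puts mass $1-\epsilon_t$ on one atom. Proposition~\ref{prop:tilted-var} then gives $Q_t\le\sup_s\Var_{p_s}(g_t)/2$. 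The tilt path moves mass by at most a factor $e^{2\eta_t B}\le e^{2CB}$, so $\Var_{p_s}(g_t)\le (2B)^2\cdot e^{2CB}\epsilon_t$, up to constants. Therefore $Q_t\le K_0\epsilon_t$ with $K_0$ depending only on $B$ and $C$. Summing gives $V_\infty<\infty$.

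\textbf{Step 3 (freeze and upgrade).} Because $V_t\uparrow V_\infty<\infty$, the scale $\eta_t$ decreases to $\eta_\infty=C\sqrt{\Gamma/(V_\infty+\Gamma)}\in(0,C]$. With a scale bounded below by $\eta_\infty$ and the linear gap from Step 1, Step 2's estimate improves to $\epsilon_t\le\pi(a_k^\star)^{-1}e^{-\eta_\infty\Delta(t-t_0)/2}$, which is geometric, and $Q_t\le K_0\epsilon_t$ inherits geometric decay. Finally, $V_T=O(1)$ makes every term of the ledger \eqref{eq:k-player-ledger} bounded: $P_T\le C V_T$, the drift satisfies $D_T\le0$, and $B_T^\star\le\Gamma/\eta_T\le\Gamma/\eta_\infty$. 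Dividing by $T$ gives $d(\sigma_T,\CCE)=O(1/T)$.

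The main obstacle is Step 1's passage from the qualitative hypothesis "converges, at no prescribed rate" to the uniform linear gap. The constant $t_0$ depends on the trajectory, so the $O(1)$ bound is a trajectory-dependent constant; I would state it that way. A second delicate point is the circularity between the scale floor and the decay rate. It is broken by the a priori range bound, which is enough for summability before the upgrade in Step 3.
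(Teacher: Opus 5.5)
Your proposal is correct and follows essentially the paper's own argument: the same ingredients appear in the same roles. These are the gap floor $\Delta/2$ from convergence, the a priori $c_1/\sqrt t$ scale floor from the range bound $Q_t\le B^2/2$ giving stretched-exponential and hence summable decay of the off-equilibrium mass, the tilted-variance bound $Q_t\le c\,\epsilon_t$, and then freezing of the scale at $\eta_\infty>0$ with the upgrade to geometric decay and $O(1)$ ledger terms. The only slip is cosmetic: your displayed bounds on $\epsilon_t$ drop a constant factor of order $e^{2BCt_0}$ contributed by the pre-$t_0$ rounds, which changes neither summability nor geometric decay.
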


A few canonical instances make the ledger \eqref{eq:k-player-ledger} concrete.
\begin{itemize}[leftmargin=1.3em,topsep=2pt,itemsep=2pt]
\item \emph{Two-player zero-sum (exact Nash gap under admissible budgets).} If $u_2=-u_1=:M$ and $\Gamma_{\mathrm{row}}\ge\max_i(-\log\pi_{\mathrm{row}}(i))$, $\Gamma_{\mathrm{col}}\ge\max_j(-\log\pi_{\mathrm{col}}(j))$ (every pure strategy is in the comparator class), then $d(\sigma_T,\mathrm{CCE})$ equals the larger of the two players' exact unilateral deviation gains (each player's own exact-regret sum $P_T^{(k)}+D_T^{(k)}+B_T^{(k),\star}$ divided by~$T$). These two gains sum to the classical Nash duality gap $\max_{\omega'}\bar p_T^\top M\omega'-\min_{p'} p'^\top M\bar\omega_T$ of \eqref{eq:saddle-gap-exact}, so $d(\sigma_T,\mathrm{CCE})$ is at most that gap, with equality exactly when one player's deviation gain vanishes.
With smaller comparator budgets the same identity bounds a relative-entropy-restricted deviation class.
\item \emph{Random general-sum $2\times 2$.} For utilities drawn $\mathrm{Unif}([-1,1])$ and the schedule $\eta_t^{(k)}=\sqrt{(\log 2)/(V_{t-1}^{(k)}+\log 2)}$, the linear-intrinsic-time regime $V_T^{(k)}=\Theta(T)$ arises on the instances whose only equilibrium is interior, where \eqref{eq:k-player-ledger} reads $d(\sigma_T,\mathrm{CCE})=O(T^{-1/2})$ with the leading constant governed by the realized $V_T^{(k)}$. Instances admitting a pure equilibrium to which self-play converges instead have bounded intrinsic time and fall under the potential/congestion case below.
\item \emph{Potential / congestion games.} On traces where self-play converges to a \emph{strict} pure Nash profile, the per-round centered RCGF collapses geometrically as $p_t^{(k)}$ concentrates and the self-tuned schedule freezes at a positive scale. The realized intrinsic time is therefore in fact \emph{bounded}, $V_T^{(k)}=O(1)$ (Theorem~\ref{thm:unconditional-V}), yielding $d(\sigma_T,\mathrm{CCE})=O(1/T)$ --- sharper than the $O(\sqrt{\log T}/T)$ a merely logarithmic intrinsic time would give. Whether a finite potential game can be forced to keep $V_T^{(k)}$ super-logarithmic under this schedule, via non-convergent multiplicative-weights dynamics~\cite{palaiopanos2017congestion}, is a convergence-of-dynamics question this paper does not settle.
\item \emph{Adversarial linear-growth witness.} Conversely, if a player's losses are constructed to keep $p_t^{(k)}$ far from any equalizer (e.g.\ a cyclic loss sequence), then $V_T^{(k)}=\Theta(T)$. Along that trajectory the ledger is an equality, so its bound is the classical $O(1/\sqrt T)$; whether the realized rate matches it is undecided.
\end{itemize}
The instances separate through the realized intrinsic time alone: the ledger reproduces each player's exact unilateral deviation gain, so $d(\sigma_T,\mathrm{CCE})$ is one exploitability in place of the Nash duality gap, and $V_T^{(k)}$ bounded versus linear is exactly the $O(1/T)$ versus $O(1/\sqrt T)$ dichotomy.

\subsection{Further results from Section~\ref{sec:partial}}
\label{app:proofs-partial}
\begin{theorem}[Generic pathwise bandit decomposition]
\label{thm:bandit}
For a comparator $u\in\DeltaK$, the sampled composite-loss regret decomposes pathwise as follows.
\begin{equation}\label{eq:bandit-decomp}
\mathrm{Reg}_T^c(u)
=
M_T^{\mathrm{play}}
+
\Xi_T
-
M_T^{\mathrm{est}}(u)
+
\mathrm{Bias}_T(u)
+
\widehat D_T
+
\widehat B_T(u)
+
\sum_{t=1}^T \eta_t \widehat Q_t
\end{equation}
Here $M_T^{\mathrm{play}}:=\sum_t (c_t(A_t)-\inner{\mu_t}{c_t})$ (play martingale),
$\Xi_T:=\sum_t \inner{\mu_t-p_t}{c_t}$ (exploration cost),
$M_T^{\mathrm{est}}(u):=\sum_t \inner{p_t-u}{\widehat c_t-\bar c_t}$ (estimation martingale, with $\bar c_t:=\E[\widehat c_t\mid\mathcal F_{t-1}]$ the predictable conditional mean of the estimated score),
$\mathrm{Bias}_T(u):=\sum_t \inner{p_t-u}{c_t-\bar c_t}$ (predictable bias),
and the last three terms are the full-information game on estimated scores.
Both $M_T^{\mathrm{play}}$ and $M_T^{\mathrm{est}}(u)$ are martingales.
\end{theorem}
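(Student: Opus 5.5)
The plan is to split the sampled regret into an observation part and a full-information part by adding and subtracting the predictable quantities $\inner{\mu_t}{c_t}$, $\inner{p_t}{c_t}$ and the conditional mean $\bar c_t$. I will then feed the estimated scores into Theorem~\ref{thm:variable}, which applies verbatim to any score sequence.

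First, write
\[
c_t(A_t)-\inner{u}{c_t}
=\bigl(c_t(A_t)-\inner{\mu_t}{c_t}\bigr)+\inner{\mu_t-p_t}{c_t}+\inner{p_t-u}{c_t}.
\]
Summing over $t$, the first bracket is $M_T^{\mathrm{play}}$ and the second is $\Xi_T$. Next, split the remaining term through the estimate:
\[
\inner{p_t-u}{c_t}=\inner{p_t-u}{c_t-\bar c_t}-\inner{p_t-u}{\widehat c_t-\bar c_t}+\inner{p_t-u}{\widehat c_t}.
\]
The three pieces sum to $\mathrm{Bias}_T(u)$, $-M_T^{\mathrm{est}}(u)$, and the full-information regret $\widehat R_T(u):=\sum_t\inner{p_t-u}{\widehat c_t}$ on the estimated scores.

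Second, the learner plays $p_t=G_{\eta_t}(\widehat S_{t-1})$. By Proposition~\ref{prop:centered}, with $\widehat z_t=\inner{p_t}{\widehat c_t}\1-\widehat c_t$, this is exactly the prior-retempered update \eqref{eq:bayes-update} on the cumulative estimated score $\widehat C_{t-1}$. Theorem~\ref{thm:variable} uses nothing about the scores beyond this update form and predictability of $\eta_t$, so it gives
\[
\widehat R_T(u)=\sum_t\eta_t\widehat Q_t+\widehat D_T+\widehat B_T(u).
\]
Here every hatted entry is the ledger entry evaluated on $(\widehat c_t)$, and $\widehat Q_t=Q_{\eta_t}(p_t,\widehat z_t)$. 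This identity holds pathwise, for every realization of the draws $A_1,\dots,A_T$.

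Third, the martingale claims. The quantities $p_t,\mu_t,\eta_t$ are $\mathcal F_{t-1}$-measurable. The loss $c_t$ is committed before $A_t\sim\mu_t$ is drawn, so I treat it as $\mathcal F_{t-1}$-measurable, or at least condition on it; for an adaptive nature I would enlarge the filtration to include $c_t$. Then $\E[c_t(A_t)\mid\mathcal F_{t-1}]=\inner{\mu_t}{c_t}$, which makes the play increments mean zero. Similarly $\E[\widehat c_t-\bar c_t\mid\mathcal F_{t-1}]=0$ by the definition of $\bar c_t$, and $p_t-u$ is predictable, so the estimation increments are mean zero as well. Integrability follows because $\widehat c_t$ takes finitely many values given the past.

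The main obstacle is bookkeeping rather than analysis: the filtration must be specified carefully, with $c_t$ predictable relative to $A_t$, so that both martingale claims hold. I will also check that Theorem~\ref{thm:variable} really is score-agnostic, meaning its proof uses only the Gibbs form and never boundedness of the scores. The importance-weighted $\widehat c_t$ may be large, and the identity has to tolerate that.
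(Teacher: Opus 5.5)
Your proposal is correct and follows the paper's proof essentially line for line. It uses the same regrouping through $\inner{\mu_t}{c_t}$ and $\inner{p_t}{c_t}$, the same insertion of $\widehat c_t$ and $\bar c_t$, Theorem~\ref{thm:variable} applied verbatim to the estimated scores, and the same source for the martingale claims ($A_t\sim\mu_t$ and the definition of $\bar c_t$). Your further remark, that $c_t$ must lie in $\mathcal F_{t-1}$ (or the filtration be enlarged to contain it) for both martingale claims and for $\bar c_t$ to carry the intended bias, makes explicit a point the paper's proof leaves implicit.
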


The one term the bandit decomposition~\eqref{eq:bandit-decomp} appears to leave implicit --- the estimated per-round RCGF $\widehat Q_t=Q_{\eta_t}(p_t,\widehat z_t)$ on the estimated scores --- is not a quantity to be bounded but has an exact closed form. Conditioned on the past its only randomness is the drawn arm $A_t$, a $K$-valued variable. Its conditional expectation is therefore a finite explicit sum.

\begin{proposition}[Exact per-round value in the true losses and observation geometry]
\label{prop:bandit-exact-value}
Let $A_t\sim\mu_t$ and let $\widehat z_t$ be any estimated centered score that is a deterministic function of the drawn arm and its feedback. Write $\widehat z_t^{(a)}$ for its realized value on $\{A_t=a\}$. Then
\begin{equation}\label{eq:bandit-value-general}
\E\bigl[\widehat Q_t\mid\mathcal F_{t-1}\bigr]
=\eta_t^{-2}\sum_{a}\mu_t(a)\,\log\sum_i p_t(i)\,e^{\eta_t\widehat z_t^{(a)}(i)}
\end{equation}
For plain-bandit inverse-propensity scores $\widehat c_t(i)=c_t(i)\1\{A_t=i\}/\mu_t(i)$ this collapses coordinatewise to
\begin{equation}\label{eq:bandit-value-ips}
\E\bigl[\widehat Q_t\mid\mathcal F_{t-1}\bigr]
=\frac{\inner{p_t}{c_t}}{\eta_t}
+\eta_t^{-2}\sum_{a}\mu_t(a)\,\log\!\Bigl(1-p_t(a)+p_t(a)\,e^{-\eta_t c_t(a)/\mu_t(a)}\Bigr)
\end{equation}
exact at every scale $\eta_t$ and written entirely in the true losses $c_t$ and the observation geometry $\mu_t$. Its small-scale limit is the observation-inflated variance,
\begin{equation}\label{eq:bandit-value-smallscale}
\E\bigl[\widehat Q_t\mid\mathcal F_{t-1}\bigr]
=\tfrac12\sum_{a}\frac{p_t(a)\bigl(1-p_t(a)\bigr)}{\mu_t(a)}\,c_t(a)^2+O(\eta_t),
\qquad
\E\bigl[\Var_{p_t}(\widehat c_t)\mid\mathcal F_{t-1}\bigr]=\sum_{a}\frac{p_t(a)\bigl(1-p_t(a)\bigr)}{\mu_t(a)}\,c_t(a)^2
\end{equation}
the geometry entering through the propensity inflation $1/\mu_t(a)$; for a feedback graph the same finite sum holds with the graph-IX scores, the geometry entering through the observation probabilities $o_t(a)$.
\end{proposition}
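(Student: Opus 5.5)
The plan is to condition on $\mathcal F_{t-1}$ throughout. Given $\mathcal F_{t-1}$, the quantities $p_t$, $\eta_t$, $\mu_t$ are fixed, and $c_t$ is committed before the draw, so it is fixed as well. The only remaining randomness is the arm $A_t\sim\mu_t$. By hypothesis $\widehat z_t$ is a deterministic function of the drawn arm and its feedback, so on $\{A_t=a\}$ the estimated RCGF is the deterministic number $\eta_t^{-2}\log\sum_i p_t(i)e^{\eta_t\widehat z_t^{(a)}(i)}$. This is just the definition \eqref{eq:Q-centered} evaluated at $\widehat z_t^{(a)}$. Averaging over $a\sim\mu_t$ gives \eqref{eq:bandit-value-general}, with no further argument needed. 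If some $\mu_t(a)=0$, that arm simply drops out of the sum. Since IPS requires $\mu_t(i)>0$ wherever $p_t(i)>0$, I will assume full support of $\mu_t$ on $\supp(p_t)$.

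For the IPS specialization I will compute $\widehat z_t^{(a)}$ explicitly. On $\{A_t=a\}$ the estimate is $\widehat c_t=(c_t(a)/\mu_t(a))e_a$, so $\inner{p_t}{\widehat c_t}=p_t(a)c_t(a)/\mu_t(a)$. Hence $\widehat z_t^{(a)}(i)=p_t(a)c_t(a)/\mu_t(a)-\widehat c_t(i)$. Factoring out the constant part of the exponent gives
\[
\log\sum_i p_t(i)e^{\eta_t\widehat z_t^{(a)}(i)}=\eta_t\,\frac{p_t(a)c_t(a)}{\mu_t(a)}+\log\Bigl(1-p_t(a)+p_t(a)e^{-\eta_t c_t(a)/\mu_t(a)}\Bigr).
\]
When this is weighted by $\mu_t(a)$ and summed, the propensities in the first term cancel, leaving $\eta_t\inner{p_t}{c_t}$. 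Dividing by $\eta_t^2$ produces the term $\inner{p_t}{c_t}/\eta_t$ in \eqref{eq:bandit-value-ips}, and the remaining terms appear as stated.

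For the small-scale limit I will write $x_a:=\eta_t c_t(a)/\mu_t(a)$ and Taylor-expand $f(x)=\log(1-p+pe^{-x})$ around $x=0$:
\[
f(x)=-px+\tfrac12p(1-p)x^2+O(x^3).
\]
After weighting by $\mu_t(a)\eta_t^{-2}$, the linear term contributes $-\inner{p_t}{c_t}/\eta_t$. This exactly cancels the leading term, which is the point of the centering. The quadratic term gives $\tfrac12\sum_a p_t(a)(1-p_t(a))c_t(a)^2/\mu_t(a)$. The cubic remainder contributes $O(\eta_t)$.

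The main point to handle carefully is the uniformity of that remainder. The constant in $O(x^3)$ is uniform on bounded sets, since $|f'''|$ is bounded for $p\in[0,1]$ and bounded $x$. The $O(\eta_t)$ term therefore holds for fixed $c_t$ and fixed $\mu_t$ with full support on $\supp(p_t)$, with constant of order $\max_a|c_t(a)|^3/\mu_t(a)^2$. I will state the claim in that sense, since it is not uniform as $\mu_t(a)\to0$.

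The companion identity for the variance is a direct computation. On $\{A_t=a\}$ the vector $\widehat c_t$ has a single nonzero coordinate equal to $c_t(a)/\mu_t(a)$, so
\[
\Var_{p_t}(\widehat c_t)=p_t(a)(1-p_t(a))\,c_t(a)^2/\mu_t(a)^2.
\]
Taking the $\mu_t$-average gives the displayed formula. This also confirms that the small-scale limit equals $\tfrac12$ times the expected variance, consistent with Corollary~\ref{cor:variance-shadow}.

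The feedback-graph remark needs nothing new. The general formula \eqref{eq:bandit-value-general} already applies to any deterministic score. Substituting the graph-IX estimate replaces the propensity $\mu_t(a)$ by the observation probability $o_t(a)$ in the same finite sum.
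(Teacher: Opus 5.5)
Your proof is correct and is essentially the paper's own argument. You condition on $\mathcal F_{t-1}$ so that only $A_t\sim\mu_t$ is random, average the definition \eqref{eq:Q-centered} over the drawn arm, pull the centering constant $\eta_t p_t(a)c_t(a)/\mu_t(a)$ out of the logarithm in the IPS case (its $\mu_t$-average being $\eta_t\inner{p_t}{c_t}$), expand $\log(1-p+pe^{-x})$ to second order, and compute the single-coordinate variance directly. Your support assumption and the remark that the $O(\eta_t)$ term is not uniform as $\mu_t(a)\to 0$ are accurate refinements.

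One caution concerns your last sentence on feedback graphs. Only the general sum \eqref{eq:bandit-value-general} carries over verbatim, and its outer weights are still $\mu_t$ over the drawn arm. A single draw reveals a whole out-neighbourhood, so the inner sum no longer factors into a single-coordinate term. Hence the collapsed forms \eqref{eq:bandit-value-ips}--\eqref{eq:bandit-value-smallscale} do not follow by literally substituting $o_t(a)$ for $\mu_t(a)$: the expected second moment in general involves joint observation probabilities, as well as the IX shift $\gamma_t$.
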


\begin{proposition}[Proper-wrapper duplication identity]
\label{prop:duplication}
Let $\Pi$ be a finite policy class and lift it to an expert class $\mathcal E$ with prior $\tilde\pi\in\Delta(\mathcal E)$. A designated policy $\pi^\star\in\Pi$ is represented by a group $G\subseteq\mathcal E$ of \emph{loss-identical} copies (two copies of $\pi^\star$ induce the same action law $\pi^\star(\cdot\mid x_t)$ in every context, hence $\widehat c_t(e)=\widehat c_t(\pi^\star)$ for all $e\in G$ and all~$t$). Write $\tilde\pi(G):=\sum_{e\in G}\tilde\pi(e)$ for the aggregate prior mass. Running the full-information game (Section~\ref{sec:repeated-game}) over $\mathcal E$ at scale~$\eta$ on the estimated expert losses, the lowest-complexity comparator supported on~$G$ is $\nu^\star(e)=\tilde\pi(e)/\tilde\pi(G)$ on~$G$. The comparator complexity of competing against $\pi^\star$ is then exactly
\begin{equation}\label{eq:duplication-kl}
\min_{\nu:\,\supp(\nu)\subseteq G}\KL(\nu\|\tilde\pi)
=\KL(\nu^\star\|\tilde\pi)
=-\log\tilde\pi(G)
=\log\frac{1}{\tilde\pi(G)} 
\end{equation}
Consequently, by the transport identity~\eqref{eq:fixed-cumulative} applied in the lifted game, the comparator-complexity term of the policy-expert decomposition is
\begin{equation}\label{eq:duplication-transport}
\widehat B_T^{\Pi}(\pi^\star)
=\frac{\log\tfrac{1}{\tilde\pi(G)}-\KL(\nu^\star\|\widehat\rho_{T+1})}{\eta}
\end{equation}
with $\widehat\rho_{T+1}$ the terminal posterior. The leading term $\eta^{-1}\log\tfrac{1}{\tilde\pi(G)}$ is exact.
\end{proposition}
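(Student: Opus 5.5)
The plan is to split the statement into its two parts. The first is the comparator-complexity identity \eqref{eq:duplication-kl}; the second is the transport term \eqref{eq:duplication-transport}, which comes from specializing the fixed-scale identity \eqref{eq:fixed-cumulative}.

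For the first part, I would minimize $\KL(\nu\|\tilde\pi)$ over $\nu$ supported on $G$ directly. Write $\nu(e)=\tilde\pi(G)^{-1}\tilde\pi(e)\,r(e)$ on $G$. Then
\[
\KL(\nu\|\tilde\pi)=\sum_{e\in G}\nu(e)\log\frac{\nu(e)}{\tilde\pi(e)}=\KL\bigl(\nu\,\big\|\,\tilde\pi(\cdot\mid G)\bigr)-\log\tilde\pi(G),
\]
since $\sum_{e\in G}\nu(e)=1$. This is the chain-rule split underlying Proposition~\ref{prop:event-restricted}, and it is the $S=0$ case of \eqref{eq:gibbs-conditioning}: at $S=0$ the event-restricted value is $\eta^{-1}\log\tilde\pi(G)$, attained at the conditioned prior. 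The first summand is nonnegative and vanishes exactly at $\nu=\tilde\pi(\cdot\mid G)=\nu^\star$. This gives both the minimizer and the value $-\log\tilde\pi(G)$. Uniqueness follows from strict convexity of relative entropy, or equivalently from the equality case of Gibbs' inequality.

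For the second part, I would apply Proposition~\ref{prop:fixed-transport} in the lifted game over $\mathcal E$, with prior $\tilde\pi$, constant scale $\eta$, and the centered estimated scores as nature's moves. Taking the comparator $\nu=\nu^\star$, the terminal transport term is $\eta^{-1}\bigl(\KL(\nu^\star\|\tilde\pi)-\KL(\nu^\star\|\widehat\rho_{T+1})\bigr)$. Substituting the first part gives \eqref{eq:duplication-transport}.

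It remains to justify that $\nu^\star$ is the right comparator for competing against $\pi^\star$. Because the copies in $G$ are loss-identical, every $\nu$ supported on $G$ gives the same gain: $\inner{\nu}{\widehat S_T}=\widehat S_T(\pi^\star)$, where I read $\widehat S_T(\pi^\star)$ as the common value of $\widehat S_T(e)$ over $e\in G$. So regret against $\pi^\star$ is the same for any such $\nu$, and the lowest-complexity representative is $\nu^\star$. In that sense the leading term $\eta^{-1}\log(1/\tilde\pi(G))$ is exact. Only the posterior-mismatch term $\KL(\nu^\star\|\widehat\rho_{T+1})\ge0$ remains. The lifted Gibbs posterior keeps the within-$G$ ratios equal to $\tilde\pi$'s, so this term equals $-\log\bigl(\widehat\rho_{T+1}(G)/\tilde\pi(G)\bigr)$. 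I would mention this only as a remark.

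I expect no step to be difficult. The only point needing care is bookkeeping: the decomposition must be stated in the lifted game on estimated scores, so that the policy-level regret's other terms (martingales, bias) come from Theorem~\ref{thm:bandit} and do not interfere with the transport term isolated here.
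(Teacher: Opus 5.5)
Your proof is correct and follows the paper's route. You identify $\nu^\star$ as the renormalized prior on $G$, and your chain-rule split $\KL(\nu\|\tilde\pi)=\KL(\nu\|\tilde\pi(\cdot\mid G))-\log\tilde\pi(G)$ supplies the minimality argument that the paper only asserts by calling $\nu^\star$ the $I$-projection. You then note that loss-identical copies make $\inner{\nu}{\widehat S_T}$ constant over $\Delta(G)$, and substitute into \eqref{eq:fixed-cumulative}.

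One correction to your closing remark, which the statement does not need. Because $\widehat\rho_{T+1}$ restricted to $G$ is proportional to $\tilde\pi$, the mismatch term is $\KL(\nu^\star\|\widehat\rho_{T+1})=-\log\widehat\rho_{T+1}(G)$. Your expression $-\log\bigl(\widehat\rho_{T+1}(G)/\tilde\pi(G)\bigr)$ is instead $-\eta\,\widehat B_T^{\Pi}(\pi^\star)$. That quantity is negative whenever $\widehat\rho_{T+1}(G)>\tilde\pi(G)$, so it cannot be a relative entropy.
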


\begin{corollary}[The comparator-complexity reduction as a mass log-ratio]
\label{cor:duplication-reduction}
If $\pi^\star$ carries original mass $\beta_0=\tilde\pi_0(\pi^\star)$ and, after duplication, aggregate mass $\beta=\tilde\pi(G)$, the reduction in comparator complexity is
\begin{equation}\label{eq:log-ratio}
\frac{1}{\eta}\Bigl(\log\tfrac{1}{\beta_0}-\log\tfrac{1}{\beta}\Bigr)=\frac{1}{\eta}\log\frac{\beta}{\beta_0}
\end{equation}
the log-ratio of duplicated to original mass. Driving $\beta\to1$ sends the designated policy's comparator complexity to~$0$.
\end{corollary}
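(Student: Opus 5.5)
The plan is to apply Proposition~\ref{prop:duplication} twice, once to the original expert class and once to the duplicated one, and then subtract the two exact leading terms. Before duplication, $\pi^\star$ is a single expert with prior mass $\beta_0=\tilde\pi_0(\pi^\star)$. It is therefore a group $G_0=\{\pi^\star\}$ of one loss-identical copy, and the proposition applies with $G=G_0$ and prior $\tilde\pi_0$. The only comparator supported on $G_0$ is the point mass, with $\nu^\star=\delta_{\pi^\star}$. Hence \eqref{eq:duplication-kl} gives comparator complexity $\KL(\delta_{\pi^\star}\|\tilde\pi_0)=\log(1/\beta_0)$, which one can also check directly.

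After duplication, $\pi^\star$ is represented by the group $G$ of copies with aggregate mass $\beta=\tilde\pi(G)$. The same proposition gives minimal complexity $\log(1/\beta)$. It is attained at the prior restricted to $G$, $\nu^\star(e)=\tilde\pi(e)/\beta$. The loss-identical hypothesis is what makes competing against any comparator supported on $G$ the same as competing against $\pi^\star$. So the minimum over $\supp(\nu)\subseteq G$ is the right notion of the complexity of $\pi^\star$ in the lifted game.

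Since the leading term of the transport \eqref{eq:duplication-transport} is $\eta^{-1}$ times this complexity in each game, the reduction is
\[
\eta^{-1}\bigl(\log(1/\beta_0)-\log(1/\beta)\bigr)=\eta^{-1}\log(\beta/\beta_0),
\]
which is \eqref{eq:log-ratio}. The final claim follows from $\log(1/\beta)\to0$ as $\beta\to1$, since $\tilde\pi(G)\le1$ bounds it.

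The only point needing care, and the main obstacle such as it is, is making clear that the corollary compares the exact leading complexity terms. The residual terms $-\KL(\nu^\star\|\widehat\rho_{T+1})/\eta$ belong to different lifted games with different posteriors, and they are not claimed to cancel. I would state this scope explicitly. I would also note that both comparisons are taken at the same scale $\eta$, so that the common factor $1/\eta$ can be pulled out of the difference.
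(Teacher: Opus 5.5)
Your proposal is correct and matches the paper's proof: both evaluate the exact complexity $\eta^{-1}\log(1/\tilde\pi(G))$ from Proposition~\ref{prop:duplication} at aggregate mass $\beta_0$ before duplication and $\beta$ after, then subtract. Your added caveat that only the leading complexity terms are compared, at a common scale $\eta$, is a sensible clarification rather than a departure.
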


This is the same pooling phenomenon that runs through the framework: loss-identical experts coincide on every round, so their prior masses pool additively and the relative-entropy comparator complexity sees only the aggregate mass. Assembling this into the bandit decomposition of Theorem~\ref{thm:bandit} gives the wrapper with matching constants: with sampling $\mu_t=(1-\gamma)\bar p_t+\gamma\,u_{[A]}$, $\bar p_t(a)=\sum_e p_t(e)\,\pi_e(a\mid x_t)$, and unbiased IPS losses $\widehat c_t(a)=c_t(a)\1\{A_t=a\}/\mu_t(a)$, one has $\mathrm{Bias}_T\equiv0$.
The exploration cost obeys $|\Xi_T|\le\gamma T$ (since $\mu_t-\bar p_t=\gamma(u_{[A]}-\bar p_t)$ and losses lie in $[0,1]$; the inner product $\inner{u_{[A]}-\bar p_t}{c_t}$ carries either sign); the IPS scores obey $\widehat c_t(a)\le A/\gamma$; and $\widehat B_T^\Pi(\pi^\star)$ is~\eqref{eq:duplication-transport}. Taking expectations removes the play and estimation martingales, and at constant scale $\widehat D_T=0$. This leaves
\begin{equation}\label{eq:wrapper-bound}
\E\bigl[\mathrm{Reg}_T^c(\pi^\star)\bigr]
\;\le\;
\gamma T
\;+\;
\frac{1}{\eta}\log\frac{1}{\tilde\pi(G)}
\;+\;
\sum_{t=1}^T \eta_t\,\E\bigl[\widehat Q_t^\Pi\bigr]
\end{equation}
in which the two costs the reduction introduces --- the explicit exploration term $\gamma T$ and the $A/\gamma$ IPS rescaling inside the estimated intrinsic time $\sum_t\eta_t\widehat Q_t^\Pi$ --- carry exact constants, and the comparator complexity is the exact duplication-adjusted $\log\tfrac{1}{\tilde\pi(G)}$ in place of $\log|\Pi|$. As with the plain-bandit case, the sharp rate closes once the estimated per-round RCGF $\widehat Q_t^\Pi$ is evaluated by its exact closed form (Proposition~\ref{prop:bandit-exact-value}).

\begin{theorem}[Pairwise testing exponents are the concentration game's Bellman potential]
\label{thm:testing-bellman}
With $S$, $P_j^a$, and $\eta=s$ as above, for every $s\in(0,1)$:
\begin{enumerate}[label=(\alph*)]
\item\emph{(Exact identity.)} The Chernoff coefficient is minus the game's per-observation Bellman potential of $S$,
\begin{equation}\label{eq:chernoff-is-bellman}
C_s(P_i^a,P_j^a)= -\,s\,W_s(S)= -\log\E_{P_j^a}\!\bigl[e^{sS}\bigr]= -\log Z(s,1-s)
\end{equation}
where $Z(s,1-s)=\int (dP_i^a)^s(dP_j^a)^{1-s}$ is the two-way mixed-coincidence partition function.
\item\emph{(Equilibrium play is the Chernoff tilt.)} The game's Gibbs equalizer at $(S,s)$ is the geometric-mean tilt of the two hypotheses,
\begin{equation}\label{eq:gibbs-is-chernoff-tilt}
G_s(S)=\rho_{s,S}=\nabla W_s(S)\;\propto\;(dP_i^a)^s(dP_j^a)^{1-s}=:R_s
\end{equation}
Equivalently $R_s$ is the relative-entropy barycenter, and by the mixed-coincidence identity \cite{balsubramani2026information}, $C_s(P_i^a,P_j^a)=\min_p\{\,s\,\KL(p\|P_i^a)+(1-s)\,\KL(p\|P_j^a)\,\}=s\,\KL(R_s\|P_i^a)+(1-s)\,\KL(R_s\|P_j^a)$, attained at $p=R_s$.
\item\emph{(Scale optimization is Chernoff information.)} The tilt optimum is the game's scale optimization, $\max_{s\in[0,1]}C_s(P_i^a,P_j^a)=-\min_{s}\,s\,W_s(S)$, and at the optimizer $s^\star$ the score is neutral in nature's direction, $\E_{R_{s^\star}}[S]=0$, from which the balance
\begin{equation}\label{eq:chernoff-balance}
\max_{s}C_s(P_i^a,P_j^a)=\KL(R_{s^\star}\|P_i^a)=\KL(R_{s^\star}\|P_j^a)
\end{equation}
\end{enumerate}
Consequently the pairwise testing exponent $\Gamma(a)=\min_{i\neq j}\max_{s\in[0,1]}C_s(P_i^a,P_j^a)$ is the edge-restricted multi-way coincidence radius of the arm-induced laws $\{P_i^a\}_{i\in[W]}$, and equals the MAP error exponent for identifying the latent hypothesis from repeated pulls of arm~$a$, $-\lim_{n\to\infty}n^{-1}\log P_{e,n}(a)=\Gamma(a)$ \cite{balsubramani2026information}.
\end{theorem}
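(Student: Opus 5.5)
Throughout I take the setup the statement refers to: the prior is $\pi=P_j^a$, the state is the log-likelihood ratio $S=\log(dP_i^a/dP_j^a)$, and the scale is $\eta=s$. I will also assume $P_i^a$ and $P_j^a$ are mutually absolutely continuous, so that $S$ is finite; the singular case is handled by restricting to the common support. The plan is to read every claim off the definitions \eqref{eq:gibbs-map} and the Gibbs variational identity \eqref{eq:gibbs-var}, with $\Lambda(s):=\log Z(s,1-s)$ as the single object carrying the argument.

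\emph{Part (a).} By definition, $sW_s(S)=\log\E_{P_j^a}[e^{sS}]$. Substituting $e^{sS}=(dP_i^a/dP_j^a)^s$ gives
\[
\E_{P_j^a}[e^{sS}]=\int (dP_i^a)^s(dP_j^a)^{1-s}=Z(s,1-s).
\]
Since the Chernoff coefficient is $C_s=-\log Z(s,1-s)$, all three equalities in \eqref{eq:chernoff-is-bellman} follow at once.

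\emph{Part (b).} The same substitution in the Gibbs map gives
\[
\pi\,e^{sS}=dP_j^a\,(dP_i^a/dP_j^a)^s=(dP_i^a)^s(dP_j^a)^{1-s},
\]
so after normalization $\rho_{s,S}=R_s$. That $\rho_{s,S}=\nabla W_s(S)$ is already recorded in Theorem~\ref{thm:fixed-bellman}. For the barycenter claim, expand the objective:
\[
s\,\KL(p\|P_i^a)+(1-s)\,\KL(p\|P_j^a)=\E_p\bigl[\log p-\log\bigl((dP_i^a)^s(dP_j^a)^{1-s}\bigr)\bigr]=\KL(p\|R_s)-\log Z(s,1-s).
\]
This is exactly the negative of the Gibbs variational objective $\inner{p}{S}-s^{-1}\KL(p\|\pi)$, multiplied by $s$. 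Its minimum is therefore $C_s$, attained uniquely at $p=R_s$ by Proposition~\ref{prop:robust-bayes}(a).

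\emph{Part (c).} $\Lambda$ is convex on $[0,1]$, as a log-moment generating function, and satisfies $\Lambda(0)=\Lambda(1)=0$. Hence $\max_s C_s=-\min_s\Lambda(s)=-\min_s sW_s(S)$. When $P_i^a\ne P_j^a$, $\Lambda$ is strictly negative in the interior, so the minimizer $s^\star$ is interior. First-order stationarity then reads
\[
\Lambda'(s^\star)=\E_{R_{s^\star}}[S]=0,
\]
which is the neutrality claim. For the balance, write $\log(dR_s/dP_j^a)=sS-\Lambda(s)$ and $\log(dR_s/dP_i^a)=(s-1)S-\Lambda(s)$. Taking expectations under $R_{s^\star}$ and using $\E_{R_{s^\star}}[S]=0$ gives $\KL(R_{s^\star}\|P_j^a)=\KL(R_{s^\star}\|P_i^a)=-\Lambda(s^\star)$, which is \eqref{eq:chernoff-balance}.

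\emph{The consequence.} Minimizing over pairs $i\ne j$ expresses $\Gamma(a)$ as a minimum of pairwise Chernoff informations. That this equals the edge-restricted coincidence radius, and the MAP error exponent for $W$ hypotheses under repeated pulls of arm $a$, is the classical multi-hypothesis Chernoff theorem. Its upper bound comes from a union bound over pairwise likelihood-ratio tests, each an exponential-Markov step (Appendix~\ref{app:classical-chernoff}). Its lower bound comes from restricting attention to the worst pair and applying the tilted-law change of measure of Appendix~\ref{app:classical-cramer} under $R_{s^\star}^n$. For this step I would cite \cite{balsubramani2026information} rather than reprove it.

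The main obstacle is this final step: matching the lower bound requires the interior optimizer and the balance \eqref{eq:chernoff-balance}, with care about boundary cases where $s^\star$ sits at an endpoint or the laws are not mutually absolutely continuous. Parts (a)–(c) themselves are direct algebra.
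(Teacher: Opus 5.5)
Your proof is correct and follows essentially the same route as the paper: direct substitution into $W_s$ for (a), the Gibbs tilt of $P_j^a$ by $sS$ for (b), convexity and first-order stationarity of $\Lambda(s)=\log Z(s,1-s)$ for (c), and a citation of \cite{balsubramani2026information} for the MAP-exponent consequence. The differences are minor and make your version slightly more self-contained. You prove the $W=2$ barycenter identity directly from $s\,\KL(p\|P_i^a)+(1-s)\,\KL(p\|P_j^a)=\KL(p\|R_s)-\log Z(s,1-s)$ instead of citing it, you justify that $s^\star$ is interior (the paper's stationarity step tacitly assumes this), and you obtain the balance by computing each relative entropy separately rather than from their difference $-\E_{R_{s^\star}}[S]$ together with the weighted-average identity of (b).
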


$R_s$ is the log-linear pool of $P_i^a$ and $P_j^a$, and $C_s$ is the two-way mixed-coincidence divergence of \cite{balsubramani2026information}; at the optimizer $s^\star$ the neutrality $\E_{R_{s^\star}}[S]=0$ is the same equalizing condition the Bellman equalizer enforces. The geometric mean that drives the Bellman game --- the Gibbs tilt $\rho_{\eta,S}\propto\pi\,e^{\eta S}$, equilibrium play and gradient of the log-partition --- is, on a log-likelihood-ratio score, exactly the Chernoff tilt $R_s\propto p_i^sp_j^{1-s}$ whose normalizer is the testing divergence. The game's scale optimization is the Chernoff-information tilt optimization, and identification difficulty is the reciprocal scale of the game's terminal concentration. An arm with large $\Gamma(a)$ is one whose two closest induced hypotheses have small geometric-mean coincidence $Z(s^\star,1-s^\star)$, i.e.\ whose concentration game drives the posterior onto the truth fastest, so $\Gamma(a)$ is an identification score. The remaining modeling step --- turning per-pull exponents $\Gamma(a)$ into a finite-horizon pure-exploration policy with matching sample complexity --- is a sequential allocation problem on top of this identity, and adds no further fact about the geometric mean.

Most of the existing algorithmic templates for bandits reduce to estimator-specific games in this framework.

\begin{itemize}
\item\textbf{Explicit-exploration IPS: }
With $\mu_t=(1-\gamma_t)p_t+\gamma_t u_K$ and standard inverse-propensity-score (IPS) estimation, the bias vanishes and $\Xi_T$ records the exploration cost.

\item\textbf{Implicit exploration / EXP3-IX: }
With $\mu_t=p_t$ and estimator $\widehat c_t^{\mathrm{IX}}(a)=u_t(a)+\ell_t(A_t)\1\{A_t=a\}/(p_t(a)+\gamma_t)$, the conditional mean is $c_t-d_t^{\mathrm{IX}}$ where $d_t^{\mathrm{IX}}(a)=\gamma_t\ell_t(a)/(p_t(a)+\gamma_t)$.
Total bias is bounded by $\sum_t K\gamma_t/(1+K\gamma_t)$.

\item\textbf{Predictable offsets: }
If the estimate uses a control variate or optimistic baseline~$m_t$, only the estimated intrinsic time and the jump term change; the variational front end does not.

\item\textbf{Feedback graphs: }
For a directed feedback graph $G_t=(V,E_t)$, let $o_t(a)$ be the probability that arm~$a$ is observed.
The graph-IX estimator $\widehat c_t^{G\text{-}\mathrm{IX}}(a)=u_t(a)+\ell_t(a)\1\{A_t\in N^{\mathrm{in}}_t(a)\}/(o_t(a)+\gamma_t)$ has predictable bias $d_t^{G\text{-}\mathrm{IX}}(a)=\gamma_t \ell_t(a)/(o_t(a)+\gamma_t)$.
Both the bias and the natural quadratic surrogates are controlled by the observation ratios $p_t(a)/o_t(a)$.
For undirected graphs these ratios are bounded by the independence number $\alpha(G_t)$ \cite{alon2017journalb}.
Graph topology enters the game only through the observation geometry, leaving the Bellman calculus itself untouched.

Recent feedback-graph best-of-both-worlds algorithms \cite{RouyerEtAl2022FeedbackGraphs} combine explicit exploration with fast stochastic rates.
The exact identity explains the mechanism term by term: the cost of playing the exploration distribution $\mu_t$ instead of $p_t$ is recorded in $\Xi_T$, the graph-IX correction in $\mathrm{Bias}_T^{G\text{-}\mathrm{IX}}(u)$, and once stochastic gaps make observability ratios large for good actions, the realized intrinsic time stops accumulating.
\end{itemize}

\subsection{Further results from Section~\ref{sec:ts}}
\label{app:proofs-ts}

\begin{game}[Expected sampled concentration game]
\label{game:ts}
Game~\ref{game:central} at a fixed scale $\eta_t\equiv\eta$, with the learner's move and the terminal payoff changed. Start from $S_0=0$. For each round $t=1,\dots,T$:
\begin{enumerate}[label=\arabic*.,leftmargin=1.6em,topsep=2pt,itemsep=1pt]
\item Learner commits a predictable \emph{sampling law} $w_t\in\DeltaK$ over the experts;
\item Learner draws an expert $I_t\sim w_t$ and plays it;
\item Nature, observing $w_t$ but not the drawn $I_t$, picks a centered $z_t\in\R^K$ with $\inner{w_t}{z_t}=0$ and $Q_{\eta}(w_t,z_t)\le q_t$;
\item State advances by $S_t:=S_{t-1}+z_t$.
\end{enumerate}
Nature's move is thus the move of Game~\ref{game:central} with the sampling law in the mixed action's place.
The terminal payoff to nature is the \emph{expected} sampled budgeted regret at the committed scale,
\begin{equation}\label{eq:ts-payoff}
\Phi_T(w_{1:T})
:=
\E\Bigl[\,\sum_{t=1}^T\bigl(c_t(I_t)-\inner{w_t}{c_t}\bigr)\Bigr]
+\sum_{t=1}^T\bigl(W_{\eta}(S_t)-W_{\eta}(S_{t-1})\bigr)
+\frac{\Gamma}{\eta}
\end{equation}
whose first term is the expected play excess and whose remaining terms are the committed-scale value-to-go $\Gamma/\eta+W_{\eta}(S_T)$ measured from the start.
\end{game}

\begin{theorem}[Posterior sampling is the exact saddle point of Game~\ref{game:ts}]
\label{thm:ts-exact}
For any prior $\pi$, budgets $(\Gamma,V)$ with $\sum_t q_t\le V$, and fixed scale $\eta$:
\begin{enumerate}[label=(\roman*),topsep=2pt,itemsep=1pt]
\item The expected play excess vanishes for every predictable sampling law: $\E[c_t(I_t)-\inner{w_t}{c_t}\mid\mathcal F_{t-1}]=0$ whenever $I_t\mid\mathcal F_{t-1}\sim w_t$, and in particular for the Gibbs sampler $w_t=p_t=\rho_{\eta,S_{t-1}}$.
\item Posterior sampling is the unique equalizer: with $w_t=p_t$, the round increment $W_\eta(S_t)-W_\eta(S_{t-1})=\eta Q_\eta(p_t,z_t)$ is independent of nature's \emph{direction} and equals $\eta q_t$ at the budget, so
\begin{equation}\label{eq:ts-saddle-value}
\Val(\text{Game~\ref{game:ts}})=\Phi_T(p_{1:T})=\frac{\Gamma}{\eta}+\eta\sum_{t=1}^T q_t
\end{equation}
attained simultaneously by the Gibbs sampler (min over $w_{1:T}$) and by nature saturating each budget in the worst-comparator direction (max), with no order-of-play gap.
\item Any sampling law $w_t\neq p_t$ strictly loses: $\sup_{z}\Phi_T>\Gamma/\eta+\eta\sum_t q_t$ whenever some $q_t>0$, so $w_t=p_t$ is then the exact and unique minimizer; in the degenerate case $q_t\equiv 0$ nature is confined to $z_t=0$ and every sampling law attains the same value $\Gamma/\eta$.
\end{enumerate}
\end{theorem}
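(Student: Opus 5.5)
The plan is to reduce everything to a one-round statement about the Gibbs increment. Since $W_\eta(S_0)=W_\eta(0)=0$, the committed-scale terms of \eqref{eq:ts-payoff} telescope to $W_\eta(S_T)$, so
\[
\Phi_T(w_{1:T})=\E\Bigl[\sum_{t}\bigl(c_t(I_t)-\inner{w_t}{c_t}\bigr)\Bigr]+\frac{\Gamma}{\eta}+\sum_{t=1}^T\Delta_t,
\qquad
\Delta_t:=W_\eta(S_t)-W_\eta(S_{t-1}).
\]
Part (i) is immediate. Conditionally on $\mathcal F_{t-1}$ the draw $I_t\sim w_t$ is independent of $c_t$, because nature does not see $I_t$, so $\E[c_t(I_t)\mid\mathcal F_{t-1}]=\inner{w_t}{c_t}$, and the tower property kills the whole first sum for every predictable $w_{1:T}$. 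Hence $\Phi_T=\Gamma/\eta+\sum_t\Delta_t$ identically. The game thus becomes a deterministic control problem in which only the potential increments matter.

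For the increments I will use the algebra behind Theorem~\ref{thm:fixed-bellman}. With $p_t:=\rho_{\eta,S_{t-1}}$,
\[
\Delta_t=\eta^{-1}\log\sum_i p_t(i)e^{\eta z_t(i)}
\]
for \emph{any} $z_t$, whatever $w_t$ is. Nature's round-$t$ problem is therefore
\[
F_t(w):=\sup\Bigl\{\eta^{-1}\log\E_{p_t}\bigl[e^{\eta z}\bigr]:\ \inner{w}{z}=0,\ \log\E_{w}\bigl[e^{\eta z}\bigr]\le\eta^2q_t\Bigr\}.
\]
When $w=p_t$ the objective and the constraint coincide, so $F_t(p_t)=\eta q_t$, and it is attained by any budget-saturating $z$ in any direction. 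This is the equalizer property and gives part (ii)'s upper bound: the Gibbs sampler guarantees $\Phi_T\le\Gamma/\eta+\eta\sum_tq_t$. The matching lower bound for every $w$, and with it the absence of an order-of-play gap, follows once I show $F_t(w)\ge\eta q_t$ for all $w$. Nature then plays round by round: a greedy reply achieving at least $\eta q_t$ in every round yields $\sup\Phi_T\ge\Gamma/\eta+\eta\sum_tq_t$. Choosing the worst-comparator direction among the saturating moves is only a tie-break at $w=p_t$.

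Part (iii) and the lower bound both come from the inequality $F_t(w)\ge\eta q_t$, with strict inequality when $w\ne p_t$ and $q_t>0$. The degenerate case is direct. If $q_t=0$, Proposition~\ref{prop:tilted-var} forces $z$ to be $w$-a.s.\ constant, hence zero on $\supp(w)$. Since $\Delta_t$ must also be controlled off that support, I will restrict to full-support $w$, as the paper does elsewhere. Then $z_t=0$ and every law attains $\Gamma/\eta$.

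For the strict inequality, my first attempt is a directional perturbation. Take $h:=p_t/w-1$, so that $\E_w h=0$ and $\E_{p_t}h=\chi^2(p_t\|w)>0$. Consider the scaled moves $z=a h$, with $a>0$ chosen so that $\log\E_w e^{\eta a h}=\eta^2 q_t$, which saturates the budget. The gap between objective and constraint is
\[
f(a):=\log\E_{p_t}\bigl[e^{\eta a h}\bigr]-\log\E_w\bigl[e^{\eta a h}\bigr]=\log\E_{w^{(a)}}\bigl[p_t/w\bigr],
\qquad
w^{(a)}\propto w\,e^{\eta a h}.
\]
This exponential tilt of $w$ puts more weight where $p_t/w$ is large, so $f(a)\ge\log\E_w[p_t/w]=0$ should follow from a covariance (Chebyshev association) inequality. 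Strictness holds because $p_t/w$ is nonconstant and the tilt is strictly monotone in it. This would give $F_t(w)>\eta q_t$ at every positive budget, not just small ones.

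The main obstacle is making this monotone-tilt step rigorous at every budget. Concretely, I must show $\frac{d}{da}\log\E_{w^{(a)}}[p_t/w]\ge0$, which reduces to a nonnegative tilted covariance between $p_t/w$ and $h$, an increasing function of it. I also need to confirm that the saturating $a$ exists. If the covariance route falters, my fallback is a Gibbs-variational argument: write $\eta^{-1}\log\E_{p_t}e^{\eta z}=\sup_\nu\{\inner{\nu}{z}-\eta^{-1}\KL(\nu\|p_t)\}$ and compare it with the corresponding $w$-expression, evaluating at $\nu=$ the $w$-tilt and using $\KL(\nu\|p_t)<\KL(\nu\|w)$ along that direction.
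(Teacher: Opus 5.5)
Your proof is correct and shares the paper's skeleton. Part (i) comes from the martingale-difference property. You then reduce to $\Phi_T=\Gamma/\eta+\sum_t\Delta_t$ with $\Delta_t=\eta^{-1}\log\E_{p_t}e^{\eta z_t}$. Finally you compare this Gibbs increment round by round with nature's constraint $Q_\eta(w_t,\cdot)\le q_t$.

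Where you differ is the strict-loss step. The paper only asserts that, because $Q_\eta(w_t,\cdot)$ and $Q_\eta(p_t,\cdot)$ differ, some feasible $z$ has $Q_\eta(p_t,z)>q_t$. Your move $z=a(p_t/w-1)$ actually exhibits one. The same witness also gives the bound $F_t(w)\ge\eta q_t$ for every $w$, which the value identity and the no-gap claim rest on.

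The obstacle you flag is immediate. With $r:=p_t/w$ you have $h=r-1$, so $\frac{d}{da}\E_{w^{(a)}}[r]=\eta\,\mathrm{Cov}_{w^{(a)}}(r,h)=\eta\Var_{w^{(a)}}(r)>0$ for every $a$, since $w^{(a)}$ has full support and $r$ is nonconstant when $w\ne p_t$. The saturating $a$ exists and is unique because $a\mapsto\log\E_w e^{\eta ah}$ is strictly convex, vanishes with zero slope at $a=0$, and is unbounded since $\max_{\supp(w)}h>0$. The variational fallback is not needed.

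Your route also sharpens the statement in two places.
\begin{itemize}
\item You need not assume full support. If $w(j)=0$, nature sets $z(j)$ freely, and $\Delta_t$ is unbounded because $p_t(j)>0$. Such laws therefore lose infinitely when $q_t>0$, and also when $q_t=0$. So the degenerate-case claim that every sampling law attains $\Gamma/\eta$ holds only for full-support laws.
\item Against $w_t\ne p_t$, not every saturating move reaches $\eta q_t$. Along $-h$ your family gives $\Delta_t=\eta q_t+f(-a)/\eta<\eta q_t$; together with $+h$, this also proves that posterior sampling is the \emph{unique} equalizer in (ii). It follows that nature's maximizing reply must respond to $w_t$, as your greedy reply does. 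The worst-comparator direction is only a tie-break at $w_t=p_t$.
\end{itemize}
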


Posterior sampling therefore attains exactly the deterministic concentration-game value in expectation.
The strict loss at $w_t\neq p_t$ has a simple source: the value-to-go increment $W_\eta(S_t)-W_\eta(S_{t-1})=\eta Q_\eta(\rho_{\eta,S_{t-1}},z_t)$ is $\eta$ times the centered RCGF of $z_t$ under the Gibbs law $\rho_{\eta,S_{t-1}}$, a function of the state alone, whereas nature is constrained only by $Q_\eta(w_t,z_t)\le q_t$. The two forms then differ, so nature can satisfy its own budget while over-driving the Gibbs increment.

\begin{proposition}[The sampling fluctuation is intrinsic; no fluctuation-free realized posterior-sampling game]
\label{prop:ts-no-pathwise}
Fix any modification of Game~\ref{game:ts} whose payoff is an arbitrary measurable function $\Phi(w_{1:T},z_{1:T},I_{1:T})$ of the realized transcript. The learner draws $I_t\sim w_t$.
\begin{enumerate}[label=(\roman*),topsep=2pt,itemsep=1pt]
\item \textbf{Exact residual identity.} Subtracting \eqref{eq:ts-expected} from \eqref{eq:ts} gives, for \emph{every} comparator $\nu$ simultaneously and every predictable loss sequence,
\begin{equation}\label{eq:ts-residual}
\widehat R_T^c(\nu)-R_T^c(\nu)=M_T^{\mathrm{sam}},
\qquad M_T^{\mathrm{sam}}=\sum_{t=1}^T\bigl(c_t(I_t)-\inner{p_t}{c_t}\bigr)
\end{equation}
so at $w_t=p_t$ the residual $M_T^{\mathrm{sam}}$ is a mean-zero martingale.
\item \textbf{Impossibility of a fluctuation-free realized game.} No payoff that genuinely depends on the sampled play can be made constant over the sampling randomness (Appendix~\ref{app:proofs-ts}). A payoff built from $c_t(I_t)$ leaves, on a.e.\ path, an irreducible per-round variance $\Var_{p_t}(c_t)>0$ that no scoring rule removes.
\item \textbf{The high-probability route has a different optimizer.} Sharpening the sampling law to $w_t^{(\beta)}\propto p_t^{\beta}$ leaves the full-information posterior path unchanged and, by Theorem~\ref{thm:ts-exact}, leaves the expected payoff stationary at $\beta=1$. But the sampling variance $V^{\mathrm{sam}}_T(\beta):=\sum_t\Var_{w_t^{(\beta)}}(c_t)$ is not stationary there, so, to leading order in a Gaussian approximation of the realized payoff, the minimizer of any upper quantile is a law with $\beta^\ast\neq1$, which posterior sampling is not (Appendix~\ref{app:proofs-ts}).
\end{enumerate}
\end{proposition}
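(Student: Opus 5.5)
For part~(i) I would subtract the two decompositions directly. The sampled identity \eqref{eq:ts} and the deterministic identity \eqref{eq:exact-variable} (equivalently the in-expectation identity \eqref{eq:ts-expected}) share the three ledger entries $D_T$, $B_T(\nu)$ and $\sum_t\eta_tQ_t(c)$, and these depend only on the full-information posterior path $(p_t)$, not on the draws $I_t$. Subtracting cancels them for every $\nu$ at once. The comparator term $\inner{\nu}{C_T}$ also cancels, which leaves $\sum_t(c_t(I_t)-\inner{p_t}{c_t})=M_T^{\mathrm{sam}}$. For the martingale property I would use the move order of Game~\ref{game:ts}: nature picks $z_t$, hence $c_t$ up to the irrelevant $\1$-shift, from $\mathcal F_{t-1}$ and $w_t$ without seeing $I_t$. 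So $c_t$ is predictable, and $I_t\mid\mathcal F_{t-1}\sim p_t$ gives $\E[c_t(I_t)-\inner{p_t}{c_t}\mid\mathcal F_{t-1}]=0$. This is the same computation as part~(i) of Theorem~\ref{thm:ts-exact}.

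For part~(ii) I would formalize ``genuinely depends on the sampled play'' as follows: there is a round $t$ and a set of histories of positive probability on which $i\mapsto\Phi(\cdots,I_t=i,\cdots)$, after averaging over later draws, is nonconstant on $\supp(w_t)$. Under this definition the law of total variance gives
\[
\Var(\Phi)\ge\E\bigl[\Var(\E[\Phi\mid\mathcal F_{t-1},I_t]\mid\mathcal F_{t-1})\bigr]>0,
\]
so $\Phi$ cannot be almost surely constant over the sampling randomness. For payoffs built additively from $c_t(I_t)$, the per-round conditional variance is exactly $\Var_{p_t}(c_t)$ at $w_t=p_t$. This is strictly positive whenever $c_t$ is not $p_t$-a.s.\ constant, i.e.\ whenever $Q_t(c)>0$ by Proposition~\ref{prop:tilted-var}. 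It is therefore nonzero on every path on which nature spends budget.

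For part~(iii) I would fix the full-information update $p_{t+1}\propto p_te^{\eta z_t}$, so that changing the sampling law $w_t^{(\beta)}\propto p_t^\beta$ leaves $(p_t)$ and $(S_t)$ untouched. I would write the realized payoff's mean as $m(\beta)$ and its sampling variance as $V^{\mathrm{sam}}_T(\beta)$. Theorem~\ref{thm:ts-exact} makes $\beta=1$ the minimizer of $m$; assuming differentiability in $\beta$, this gives $m'(1)=0$. A direct tilt computation gives
\[
\frac{d}{d\beta}\Var_{p^\beta}(c)\Big|_{\beta=1}=\mathrm{Cov}_p\bigl((c-\E_pc)^2,\log p\bigr),
\]
which is generically nonzero. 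Under the Gaussian approximation the upper $(1-\delta)$-quantile is $m(\beta)+z_\delta\sqrt{V^{\mathrm{sam}}_T(\beta)}$. Its derivative at $\beta=1$ is therefore $z_\delta\,V^{\mathrm{sam}\prime}_T(1)/(2\sqrt{V^{\mathrm{sam}}_T(1)})\ne0$, so $\beta^\ast\neq1$.

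The main obstacle is step~(iii). First, I must check that $m'(1)=0$ really holds: nature's constraint in Game~\ref{game:ts} is imposed under $w_t$, so I will hold nature's moves fixed along the posterior path, as the statement prescribes. Second, I must exhibit nonvanishing covariance. For the latter I would give an explicit $K=3$ instance with non-uniform $p_t$ and a loss vector correlated with $\log p_t$. That yields a concrete nonzero $V^{\mathrm{sam}\prime}_T(1)$ and shows the claim is not vacuous.
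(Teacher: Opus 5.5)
Your proposal is correct and follows the paper's proof in all three parts: the same cancellation for (i), the same essentially definitional argument for (ii), and the same constant-mean, non-stationary-variance quantile argument for (iii). In (ii) you argue through the law of total variance where the paper swaps $I_t$ between two positive-probability transcripts, and in (iii) your formula $\frac{d}{d\beta}\Var_{w^{(\beta)}}(c)\big|_{\beta=1}=\mathrm{Cov}_p\bigl((c-\E_pc)^2,\log p\bigr)$ makes the paper's ``generically nonzero'' explicit; already $K=2$ with non-uniform $p$ and $c=(0,1)$ gives a nonzero value.

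The one thing to tighten is where $m'(1)=0$ comes from. With nature's moves held fixed, $m$ is exactly constant in $\beta$ by part (i) of Theorem~\ref{thm:ts-exact}, because the play excess vanishes for every predictable $w_t$ and the value-to-go terms depend on $z_{1:T}$ alone. Cite that rather than ``minimizer plus differentiability,'' which is the wrong mechanism: if nature instead re-optimizes against each $w_t^{(\beta)}$, the map $\beta\mapsto\sup_z\Phi_T$ generically has a kink at $\beta=1$ and is not differentiable there.
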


At $w_t=p_t$ the realized payoff is thus the deterministic minimax value plus a single additive martingale --- the same one for every comparator, and for any scoring choice.
Were that payoff constant over the sampling randomness, it could not depend on $I_{1:T}$ at all and would reduce to the deterministic game, in which the learner plays the mixture $p_t$ and does not sample.

\subsection{Further results from Section~\ref{sec:boosting}}
\label{app:proofs-boosting}

\subsubsection{The quantile-margin shortfall in full}
\label{app:quantile-margin}

Under the second-order schedule \eqref{eq:schedule} with $\Gamma=\log(1/\varepsilon)$ and $C=1/\sqrt2$, the envelope \eqref{eq:pathwise-regret} lets the $\varepsilon$-quantile margin fall short of twice the average edge $2\bar\gamma_T$ by at most
\[
\tfrac{2}{T}\Bigl(2\sqrt2\,\sqrt{\log(1/\varepsilon)\,V_T(\ell)}+\tfrac32\log(1/\varepsilon)+(1+\tfrac1{\sqrt2})\,Q_T^*(\ell)\Bigr).
\]
The range bound $Q_t(\ell)\le1/8$ for losses in $[0,1]$ caps $V_T(\ell)$ at $T/8$, so the worst-case shortfall is at most $2\sqrt{\log(1/\varepsilon)/T}$ plus the additive $O(\log(1/\varepsilon)/T)$ edge terms.

The pressure-target identity follows by telescoping the exponential-loss normalizer $Z_t:=N^{-1}\sum_i e^{-M_t(i)}$, which starts at $Z_0=1$ and ends at $Z_T=L_T^{\exp}$.
Under the local pressure rule of Section~\ref{sec:boosting} each ratio is $Z_t/Z_{t-1}=\inner{p_t}{e^{-\alpha_t g_t}}=e^{-\alpha_t a_t}$, so $Z_T=\prod_t e^{-\alpha_t a_t}$ and $-\log L_T^{\exp}=\sum_t\alpha_t a_t$.
Combining with the general identity \eqref{eq:boosting-variational} and summing the one-round transport identity against~$\nu$ yields the comparator statement of \eqref{eq:boost-pressure}.
The binary case, where the pressure-optimal $\alpha_t$ is the classical AdaBoost coefficient $\tfrac12\log((1-\eps_t)/\eps_t)$ with $\eps_t=\tfrac12-\gamma_t$, is Proposition~\ref{prop:adaboost-cgf}.

\begin{proposition}[AdaBoost rate from the pressure ledger]\label{prop:adaboost-cgf}
For binary weak hypotheses $g_t\in\{-1,+1\}^N$, take the one-step normalizer minimizer $\alpha_t^\star=\tfrac12\log\tfrac{1-\varepsilon_t}{\varepsilon_t}$, which is exactly the normalizer-minimizing (equivalently pressure-maximizing) choice $\alpha_t^\star=\arg\min_\alpha\log\inner{p_t}{e^{-\alpha g_t}}$.
With mean margin $\inner{p_t}{g_t}=2\gamma_t$, the pressure identity~\eqref{eq:boost-pressure} evaluates the per-round loss in closed form,
\begin{equation}\label{eq:adaboost-per-round}
\alpha_t^\star a_t
=-\log\inner{p_t}{e^{-\alpha_t^\star g_t}}
=-\tfrac12\log\!\bigl(1-4\gamma_t^2\bigr)
\end{equation}
the Gaussian-channel information at correlation $2\gamma_t$.
Summing~\eqref{eq:boost-pressure} gives the exact exponential-loss ledger
\begin{equation}\label{eq:adaboost-ledger}
-\log L_T^{\exp}
=\sum_{t=1}^T\alpha_t^\star a_t
=\sum_{t=1}^T-\tfrac12\log\!\bigl(1-4\gamma_t^2\bigr)
\end{equation}
Applying the convexity relaxation $-\log(1-x)\ge x$ at $x=4\gamma_t^2$ (the same relaxation by which variance and range sit above the centered RCGF) recovers the classical rate as a one-line corollary.
Under a uniform edge $\gamma_t\ge\gamma>0$,
\begin{equation}\label{eq:adaboost-rate}
-\log L_T^{\exp}\ \ge\ 2\sum_{t=1}^T\gamma_t^2\ \ge\ 2\gamma^2 T,
\qquad
\widehat{\mathrm{err}}_T=\inner{u_N}{\1\{M_T\le0\}}\le L_T^{\exp}\le e^{-2\gamma^2 T}
\end{equation}
the last step being the $\theta=0$ case of the margin-tail bound above.
\end{proposition}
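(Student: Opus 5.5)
The plan is to telescope the normalizer and evaluate one round in closed form; after that, the rate is a one-line relaxation. For binary $g_t\in\{-1,+1\}^N$ with weighted error $\varepsilon_t=\Pr_{i\sim p_t}\{g_t(i)=-1\}$, the mean margin is $\inner{p_t}{g_t}=1-2\varepsilon_t=2\gamma_t$, so $\varepsilon_t=\tfrac12-\gamma_t$. The one-step normalizer is
\[
\inner{p_t}{e^{-\alpha g_t}}=(1-\varepsilon_t)e^{-\alpha}+\varepsilon_t e^{\alpha}.
\]
This is strictly convex in $\alpha$. Setting its derivative to zero gives $\alpha_t^\star=\tfrac12\log\tfrac{1-\varepsilon_t}{\varepsilon_t}$, which is the unique minimizer. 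Substituting back gives the minimal value $2\sqrt{\varepsilon_t(1-\varepsilon_t)}=\sqrt{1-4\gamma_t^2}$. Since minimizing the normalizer is the same as maximizing the pressure $a=-\alpha^{-1}\log\inner{p_t}{e^{-\alpha g_t}}$ times $\alpha$, this choice is the normalizer-minimizing (pressure-maximizing) one. Taking $-\log$ of the minimal value yields \eqref{eq:adaboost-per-round}: $\alpha_t^\star a_t=-\tfrac12\log(1-4\gamma_t^2)$.

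The exact ledger \eqref{eq:adaboost-ledger} comes from telescoping $Z_t:=N^{-1}\sum_i e^{-M_t(i)}$, as in the pressure-target paragraph above. Because $p_t\propto u_N e^{-M_{t-1}}$ under the multiplicative update from $p_1=u_N$, each ratio is $Z_t/Z_{t-1}=\inner{p_t}{e^{-\alpha_t g_t}}$. With $\alpha_t=\alpha_t^\star$ this ratio equals $e^{-\alpha_t^\star a_t}$ by the definition of $a_t$. Using $Z_0=1$ and $Z_T=L_T^{\exp}$, we get $-\log L_T^{\exp}=\sum_t\alpha_t^\star a_t$, which is the first identity of \eqref{eq:boost-pressure}. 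Substituting the per-round closed form then finishes the ledger.

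For \eqref{eq:adaboost-rate}, apply $-\log(1-x)\ge x$ with $x=4\gamma_t^2\in[0,1)$ term by term. This gives $-\log L_T^{\exp}\ge 2\sum_t\gamma_t^2$, and under the uniform edge $\gamma_t\ge\gamma$ this is at least $2\gamma^2T$. The training-error step is the pointwise inequality $\1\{M_T(i)\le0\}\le e^{-M_T(i)}$, averaged under $u_N$; this is the $\theta=0$ case of the margin-tail bound.

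The only delicate point is a degenerate edge case. If $\varepsilon_t\in\{0,1\}$ (so $|\gamma_t|=\tfrac12$), then $\alpha_t^\star$ is infinite and the per-round term is $+\infty$. I would exclude this case by assuming $0<\varepsilon_t<1$, or read the identity in the extended reals, where it still holds since $L_T^{\exp}=0$ then. Beyond that, everything is a routine computation.
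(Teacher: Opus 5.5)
Your proof is correct and follows the paper's route: minimize the two-point normalizer $(1-\varepsilon_t)e^{-\alpha}+\varepsilon_t e^{\alpha}$ in closed form to get $\sqrt{1-4\gamma_t^2}$, telescope $Z_t$ using $Z_t/Z_{t-1}=\inner{p_t}{e^{-\alpha_t g_t}}$, then apply $-\log(1-x)\ge x$ and $\1\{M_T\le0\}\le e^{-M_T}$. Two of your choices are slightly more careful than the paper's terse proof. You read ``pressure-maximizing'' as maximizing $\alpha\,a(\alpha)$ rather than $a(\alpha)$ itself, which is correct because $a(\alpha)$ is nonincreasing for $\alpha>0$. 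You also explicitly set aside the degenerate case $\varepsilon_t\in\{0,1\}$, which the paper leaves implicit.
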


\subsection{Further results from Section~\ref{sec:discussion}}
\label{app:further-discussion}

\subsubsection{The identities read in neighboring languages}
\label{app:neighboring-readings}

The retempering drift is a summation by parts of the log-partition's $\eta$-derivative: since $-\eta\,\partial_\eta A_t(\eta)=\eta^{-1}\KL(\rho_{t,\eta}\|\pi)$ (Proposition~\ref{prop:drift-continuous}), $D_T=\sum_{t=1}^{T-1}\bigl(A_t(\eta_t)-A_t(\eta_{t+1})\bigr)$ is a discrete reading of the path integral $-\int\partial_\eta A_t\,d\eta_t$ of prior-to-posterior relative entropy over the inverse-temperature schedule. Its continuous-time It\^o form---Theorem~\ref{thm:ito-unification}, of which \eqref{eq:control-J} is the deterministic reduction---turns the committed Game~A and reactive Game~B into dual stochastic-control problems on the information manifold.

The Bellman potential $W_\eta(S)$ is the negative free energy at inverse temperature $\eta$ with external field $S$, so the boosting identity $\sum_t\alpha_t a_t=-\log L_T^{\exp}$ commits its one-step increment round by round, holding pathwise instead of in expectation. The horizon quantities $R_T^c(\nu),\, P_T,\, D_T,\, B_T(\nu),\, V_T,\, Q_T^*$ index the axes along which the game relaxes---rate schedule, comparator complexity, sufficient-statistic choice, observation structure, per-round budget, and retempering pattern---so the game is summarized by a small vector of quantities. And bandit algorithms trade the observation-noise terms of \eqref{eq:bandit-decomp} against one another: EXP3-IX minimizes estimation bias at small cost to the martingale, while Tsallis-INF minimizes entropic information at the cost of more sample-dependent drift.

For an input prior $p_0$ that an external process pins, with no freedom to equalize, the \emph{operating-information deficit} of the redundancy--capacity reading is the exact Bregman divergence generated by the log-partition potential,
\begin{equation}
\label{eq:operating-info-deficit}
\mathsf C-I(p_0)=\KL\!\bigl(m_{p_0}\,\big\|\,q^\star\bigr)
+\sum_\theta p_0(\theta)\bigl[\mathsf C-\KL(P_\theta\|q^\star)\bigr]
\end{equation}
between the operating output mixture $m_{p_0}=\sum_\theta p_0(\theta)\,P_\theta$ and the capacity-achieving output $q^\star$ \cite{Csiszar75}. The second sum is nonnegative and vanishes exactly when $p_0$ is supported on the capacity-active inputs, those with $\KL(P_\theta\|q^\star)=\mathsf C$; there the deficit is the Bregman divergence alone.

\subsubsection{The exact-value residual at two experts}
\label{app:exact-value-residual}

The two solved slices of the one-round value are $g(\Gamma)\,\eta^{-1}\operatorname{arcosh}(e^{\eta^2 q})$ for two experts at every comparator budget (Proposition~\ref{prop:k2-exact}) and the root $a_K(\eta,q)$ of \eqref{eq:aK} for every $K$ at a saturated budget (Proposition~\ref{prop:saturated-exact}).
For $K=2$ with $\pi=(\tfrac12,\tfrac12)$ the terminal payoff is the exact support function $L_\Gamma(S)=\tfrac12(S_1+S_2)+\tfrac12\lvert S_1-S_2\rvert\,g(\Gamma)$,
maximized over nature's reachable states; that set solves a state-dependent recursion whose optimum over-tilts the Gibbs iterate, so Proposition~\ref{prop:k2-exact} only upper-bounds the value, with no elementary closed form for $T\ge 2$. The supremum is finite only once $T$ is fixed, since the per-round centered RCGF is second order in nature's step.

\subsection{Logarithmic pooling as the same coincidence game}
\label{app:log-pooling}
The same geometric-mean identity applies when the simplex is over probabilistic experts.
If expert $e\in[N]$ reports a predictive distribution $\pi_{t,e}$ on an outcome space~$Y$, and the learner pools the reports geometrically at weights $\alpha_t$,
\begin{equation}\label{eq:log-pool-rule}
p_t(y)
=
\frac{\prod_{e=1}^N \pi_{t,e}(y)^{\alpha_{t,e}}}{\sum_{z\in Y}\prod_{e=1}^N \pi_{t,e}(z)^{\alpha_{t,e}}}
\end{equation}
then for the realized outcome $y_t$,
\begin{equation}\label{eq:log-pooling}
-\log p_t(y_t)
=
\sum_{e=1}^N \alpha_{t,e}(-\log \pi_{t,e}(y_t))
-
C_{\alpha_t}(\pi_{t,1:N})
\end{equation}
where $C_{\alpha_t}(\pi_{t,1:N}):=-\log \sum_{y\in Y}\prod_{e=1}^N \pi_{t,e}(y)^{\alpha_{t,e}}$ is the coincidence discount.
Thus online logarithmic pooling is the same mixed-coincidence game on a different simplex: the learner's log loss is the weighted expert log loss minus a nonnegative coincidence discount.

Any online convex optimization guarantee for the meta-loss $g_t(\alpha):=-\sum_e\alpha_e\log\pi_{t,e}(y_t)+\log\sum_y\prod_e\pi_{t,e}(y)^{\alpha_e}$ yields an online logarithmic-pooling guarantee whose benchmark is stronger than the usual ``best convex combination of expert log losses'' because of the explicit coincidence discount.
The pooling weights~$\alpha_t$ can themselves be learned by the same entropic game.

\subsection{Exact relaxation remainders}
The relaxation remainders of the RCGF--variance--range hierarchy of Section~\ref{sec:hierarchy}, discussed in Section~\ref{sec:new-connections}, are given here in exact form.

\begin{proposition}[Exact relaxation remainders]
\label{prop:tower-remainder}
Let $\kappa_n(z)$ denote the $n$-th cumulant of the centered score $z$ under $p$ ($\kappa_2=\Var_p(z)$), equivalently the centered $n$-fold coincidence rate of \eqref{eq:mgf-coincidence}. Then
\begin{equation}\label{eq:tower-remainder-var}
Q_\eta(p,z)-\tfrac12\Var_p(z)
=\sum_{n\ge 3}\frac{\eta^{\,n-2}}{n!}\,\kappa_n(z)
=\frac{\eta}{6}\kappa_3(z)+\frac{\eta^2}{24}\kappa_4(z)+O(\eta^3)
\end{equation}
and, when $z(i)\in[a,b]$,
\begin{equation}\label{eq:tower-remainder-range}
\frac{\range(z)^2}{8}-Q_\eta(p,z)=\Delta\ \ge 0
\end{equation}
the exact nonnegative Hoeffding slack of \eqref{eq:classical-remainder}.
Writing $\Phi^{(1)}_t=\tfrac12\Var_{p_t}(z_t)$ and $\Phi^{(2)}_t=\range(z_t)^2/8$ for the two proxy potentials, the algorithm's exact gain along a trajectory, from using the true per-round RCGF where proxy~$j$ would be used, is $\sum_t\eta_t\bigl(\Phi^{(j)}_t-Q_t\bigr)$,
which for the range proxy is $\Delta_T^{\mathrm{class}}$ of \eqref{eq:classical-remainder} and for the variance proxy is $-\sum_t\eta_t\sum_{n\ge3}\eta_t^{\,n-2}\kappa_n(z_t)/n!$.
\end{proposition}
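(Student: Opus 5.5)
The plan is to read all three claims off the cumulant generating function of the centered score. Write $K_p(\eta):=\log\E_p[e^{\eta z}]=\eta^2Q_\eta(p,z)$ as in \eqref{eq:Q-centered}. Since $[K]$ is finite, $\E_p[e^{\eta z}]$ is an entire function of $\eta$ and equals $1$ at $\eta=0$. So $K_p$ is analytic on a disc around $0$, and its Taylor coefficients are, by definition, the cumulants: $K_p(\eta)=\sum_{n\ge1}\kappa_n(z)\eta^n/n!$.

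Centering $\inner{p}{z}=0$ gives $\kappa_1=0$, and $\kappa_2=\Var_p(z)$. Dividing by $\eta^2$ therefore gives \eqref{eq:tower-remainder-var}: the $n=2$ term is $\tfrac12\Var_p(z)$, and the tail $\sum_{n\ge3}\eta^{n-2}\kappa_n/n!$ is the remainder. Its first two terms are $\tfrac{\eta}{6}\kappa_3+\tfrac{\eta^2}{24}\kappa_4$, and the rest is $O(\eta^3)$ by analyticity. The identification of $\kappa_n$ with centered coincidence rates is just the moment expansion \eqref{eq:mgf-coincidence} passed through the logarithm, via the usual moment-to-cumulant formula.

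The range statement \eqref{eq:tower-remainder-range} has essentially no content beyond a definition plus a sign. Define $\Delta:=\range(z)^2/8-Q_\eta(p,z)$. Its nonnegativity is the bounded-range half of Corollary~\ref{cor:variance-shadow}. That bound comes from the tilted-variance representation of Proposition~\ref{prop:tilted-var}: each tilted variance is at most $(b-a)^2/4$, and $\int_0^1(1-s)\,ds=\tfrac12$, so the integral is at most $(b-a)^2/8$.

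The trajectory-gain statement then follows by substitution into the exact decomposition \eqref{eq:exact-variable}. Replacing $\eta_tQ_t$ by $\eta_t\Phi^{(j)}_t$ raises the right-hand side by exactly $\sum_t\eta_t(\Phi^{(j)}_t-Q_t)$. For the variance proxy, the first part applied round by round makes this gain $-\sum_t\eta_t\sum_{n\ge3}\eta_t^{n-2}\kappa_n(z_t)/n!$. For the range proxy, it is the cumulative Hoeffding slack $\eta\sum_t[(b_t-a_t)^2/8-Q_t(c)]$ of \eqref{eq:classical-remainder}.

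I will state one caveat in the proof. The $\Delta_T^{\mathrm{class}}$ of \eqref{eq:classical-remainder} also contains the terminal mismatch $\eta^{-1}\KL(\nu\|p_{T+1})$. The range-proxy gain therefore coincides with the second, per-round piece of $\Delta_T^{\mathrm{class}}$, not with all of it.

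The main obstacle is the convergence of the cumulant series. The radius of convergence of $K_p$ is set by the nearest complex zero of $\E_p[e^{\eta z}]$, so the series need not converge at every real $\eta$. I will state the series form of \eqref{eq:tower-remainder-var} for $|\eta|$ below that radius, and note that the truncated form with the $O(\eta^3)$ term holds as $\eta\to0$ regardless. For larger $\eta$ the remainder should be read as the exact quantity $Q_\eta-\tfrac12\Var_p$, which the series represents only by analytic continuation.
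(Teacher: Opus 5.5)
Your argument follows the paper's own proof: expand $\eta^2Q_\eta=\log\E_p[e^{\eta z}]$ in cumulants and divide by $\eta^2$, take the sign of the range slack from the tilted-variance (Hoeffding) bound, and sum round by round.

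Both of your caveats are correct refinements that the paper's proof skips. First, the series identity holds only for $|\eta|$ below the radius of convergence of the CGF. That radius is finite for every non-constant $z$; for instance it is $\pi/2$ for $z=(1,-1)$ under $p=(\tfrac12,\tfrac12)$, where the CGF is $\log\cosh\eta$. Second, the range-proxy gain matches only the per-round piece $\eta\sum_t[(b_t-a_t)^2/8-Q_t(c)]$ of $\Delta_T^{\mathrm{class}}$, and even that identification needs $\eta$ fixed and $[a_t,b_t]$ to be the exact range of $c_t$. It does not include the terminal term $\eta^{-1}\KL(\nu\|p_{T+1})$.
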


\section{Other geometries and domains}
\label{sec:other-geometries}

In the main development the simplex geometry was fixed: the prior~$\pi$ lives on $[K]$, the comparator class is a relative-entropy ball, and the regularizer is $\eta^{-1}\KL(\cdot\|\pi)$.
The same one-round equilibrium (Theorem~\ref{thm:fixed-bellman}) and the same exact decomposition (Theorem~\ref{thm:variable}) survive when the geometry is changed: weighting the entropy per-expert, replacing the prior by a continuum measure, or moving from a finite simplex to a compact convex set in $\R^d$.
In each case the recipe is the same: rewrite the prior anchor and the divergence in the new geometry; the Bellman equalizer is still a Gibbs-type best response; the per-round loss is still a centered RCGF; and the concentration-game bookkeeping persists.

\subsection{Weighted entropy and multiscale experts}

Let $\sigma_1,\dots,\sigma_K>0$ and define the weighted entropy geometry $F_\sigma(p):=\sum_{i=1}^K \sigma_i p(i)\log p(i)$ and the weighted divergence $D_\sigma(\nu\|p):=\sum_{i=1}^K \sigma_i(\nu(i)\log(\nu(i)/p(i))-\nu(i)+p(i))$.
The weighted-entropy mirror-descent step $\widetilde p_{t+1}(i):=p_t(i)e^{-\eta_t \ell_t(i)/\sigma_i}$,
$p_{t+1}(i)\propto \widetilde p_{t+1}(i)e^{-\lambda_t/\sigma_i}$ obeys the exact one-step identity
\begin{equation}\label{eq:weighted-entropy}
\inner{p_t}{\ell_t}-\inner{\nu}{\ell_t}
=
\frac{D_\sigma(\nu\|p_t)-D_\sigma(\nu\|p_{t+1})}{\eta_t}
+
\frac{D_\sigma(p_t\|p_{t+1})}{\eta_t}
\end{equation}
The same concentration bookkeeping persists after replacing relative entropy by the geometry dictated by the expert scales.
Multiscale entropy methods \cite{bubeck2019onlinemultiscale,PerezOrtizKoolen2022} are therefore the same game under a different transport geometry; the map $F_\sigma$ is the mirror map introduced with the multiscale experts problem \cite{bubeck2019onlinemultiscale}, where regret against each expert scales with that expert's own loss range.
Conditional refinements of the same weighted entropy, summed along a tree filtration, drive the entropic-regularization approach to metrical task systems \cite{coester2019pureentropic}, a competitive-analysis setting in which the comparator is dynamic and the learner's own transport is part of the cost.

The connection to the mixed coincidence identity is direct: the per-expert scales $\sigma_i$ play the role of mixture weights, and $D_\sigma$ replaces relative entropy as the transport divergence.

\subsection{Continuum priors and forgetting mixtures}

If $\pi_\chi$ is the exponential-weights prior obtained from a forgetting rate $\chi\in(0,1)$, then geometric pooling over $\chi$ produces a mixture of exponential memory kernels:
\begin{equation}\label{eq:forgetting}
\rho_t(i)
\propto
\exp\left(
-\eta\sum_{s=1}^{t-1} k_t(t-1-s)\,\ell_s(i)
\right),
\qquad
k_t(u)=\int \chi^u\,\alpha_t(d\chi)
\end{equation}
A prior over forgetting rates is therefore exactly a prior over memory kernels.
This is the continuum version of mixed coincidence, and it brings hyperparameter averaging inside the same concentration geometry, where it would otherwise sit as an outer model-selection layer.

More generally, if each $\theta$ indexes an expert class or a hyperparameter setting and $\pi_\theta$ is the action distribution that would be used under that setting, then the pooled distribution $p_\alpha^\star\propto\exp(\int_\Theta\log\pi_\theta(x)\alpha(d\theta))$ is the exact geometric mixture across the hyperparameter continuum.

The same pooling yields a multi-prior PAC-Bayes penalty. Any PAC-Bayes variational formula whose dependence on the prior enters only through the term $\KL(\nu\|\pi)$ admits an exact multi-prior version with penalty $\sum_w\alpha_w\KL(\nu\|\pi_w)-C_\alpha(\pi_{1:W})$, where $C_\alpha(\pi_{1:W})=-\log Z_\alpha$ is the coincidence term.
This is potentially useful for domain adaptation, meta-learning, or transfer settings with several plausible source priors.

\subsection{Continuous-action online convex optimization}

The same exact chain passes from sums to integrals.
On a compact convex set $S\subset\R^d$ with prior measure~$\pi$, define $p_t(dx)\propto e^{-\eta_t F_{t-1}(x)}\pi(dx)$,
$F_t(x):=\sum_{s=1}^t f_s(x)$, $x_t:=\int_S x\,p_t(dx)$.
Then the density regret against any posterior measure $\nu\ll\pi$ satisfies
\begin{equation}\label{eq:oco}
R_T^{\mathrm{dens}}(\nu)
=
D_T^{\mathrm{oco}}
+
B_T^{\mathrm{oco}}(\nu)
+
\sum_{t=1}^T \eta_t Q_t^{\mathrm{oco}}
\end{equation}
Convexity of $f_t$ turns density regret into ordinary OCO regret for the barycenters~$x_t$.

For a point comparator $x^\star\in\argmin_{x\in S}F_T(x)$ interior to~$S$, and with $\pi$ the uniform (normalized-Lebesgue) measure on the full-dimensional~$S$, the geometric-shrinking measure $\nu_{\varepsilon,x^\star}$ uniform on $(1-\varepsilon^{1/d})x^\star+\varepsilon^{1/d}S$ satisfies $\KL(\nu_{\varepsilon,x^\star}\|\pi)=\log(1/\varepsilon)$ and $\mathrm{Reg}_T(x^\star)\le \log(1/\varepsilon)+Q_T^{*,\mathrm{oco}}+(2C+C^{-1})\sqrt{V_T^{\mathrm{oco}}\log(1/\varepsilon)}+T\varepsilon^{1/d}$.
With $\varepsilon=T^{-d}$ and $C=1$ this gives $\mathrm{Reg}_T(x^\star)\le d\log T+Q_T^{*,\mathrm{oco}}+3\sqrt{dV_T^{\mathrm{oco}}\log T}+1$.
Since $Q_t^{\mathrm{oco}}\le 1/8$ for $f_t\in[0,1]$, $V_T^{\mathrm{oco}}\le T/8$ and $\mathrm{Reg}_T(x^\star)=O(\sqrt{dT\log T})$.
This is primarily of structural interest; standard projection-based and dual-averaging methods achieve dimension-independent rates by using $L_2$ geometry instead \cite{Zinkevich2003,xiao2010dual}.
\section{Continuous-time limit and It\^o-level treatment of the retempering drift}
\label{app:continuous-time}
The retempering drift has a deterministic continuous limit whose integrand is a path integral of relative entropy.

\begin{proposition}[Deterministic continuous limit of the retempering drift]
\label{prop:drift-continuous}
Fix the realized cumulative-loss trajectory and the prior $\pi$.
For every horizon index $t$ and every $\eta>0$, the mix loss \eqref{eq:A-potential} satisfies the exact derivative identity
\begin{equation}\label{eq:dA-KL}
\partial_\eta A_t(\eta)
=
-\frac{1}{\eta^2}\,\KL\!\bigl(\rho_{t,\eta}\,\big\|\,\pi\bigr)
\qquad\text{equivalently}\qquad
-\eta\,\partial_\eta A_t(\eta)
=
\frac{1}{\eta}\,\KL\!\bigl(\rho_{t,\eta}\,\big\|\,\pi\bigr)
\end{equation}
Consequently $A_t(\cdot)$ is nonincreasing in $\eta$, and for any schedule $(\eta_t)$ the drift admits the summation-by-parts expansion
\begin{equation}\label{eq:drift-abel}
D_T
=
\sum_{t=1}^{T-1}\bigl(A_t(\eta_t)-A_t(\eta_{t+1})\bigr)
=
\sum_{t=1}^{T-1}\frac{\KL(\rho_{t,\eta_t}\|\pi)}{\eta_t^2}\,\Delta\eta_t
+R_T,
\qquad \Delta\eta_t:=\eta_{t+1}-\eta_t
\end{equation}
with the explicit second-order remainder bound
\begin{equation}\label{eq:drift-remainder}
|R_T|
\le
\frac12\sum_{t=1}^{T-1}(\Delta\eta_t)^2
\sup_{\xi\in[\eta_t\wedge\eta_{t+1},\,\eta_t\vee\eta_{t+1}]}
\bigl|\partial_\eta^2 A_t(\xi)\bigr|
\end{equation}
If in addition the schedule is sampled from a fixed $C^2$ control $\eta\colon[0,1]\to(0,\infty)$ as $\eta_t=\eta(t/T)$, and the realized cumulative loss stabilizes in rescaled time $s=t/T$ (a bounded $C^1$ field $\bar C\colon[0,1]\to\R^K$ with $C_t=\bar C(t/T)$), then $\sup_{t,T}|\partial_\eta^2 A_t|<\infty$, the remainder is $R_T=O(1/T)$, and
\begin{equation}\label{eq:drift-stieltjes}
D_T
\;\xrightarrow[T\to\infty]{}\;
-\!\int_{0}^{1}\partial_\eta A\,\,d\eta
=
\int_{0}^{1}
\frac{\KL\!\bigl(\rho_{\bar C(s),\,\eta(s)}\,\big\|\,\pi\bigr)}{\eta(s)^2}\,
\eta'(s)\,ds
\end{equation}
a Riemann--Stieltjes integral of the prior-to-posterior relative entropy over the controlled inverse-temperature path.
\end{proposition}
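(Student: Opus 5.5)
The plan is to differentiate the log-partition directly, and then obtain the remaining claims by Taylor expansion and a Riemann-sum limit. Write $\Lambda_t(\eta):=\log\sum_i\pi(i)e^{-\eta C_t(i)}$, so that $A_t(\eta)=-\Lambda_t(\eta)/\eta$ and $\Lambda_t'(\eta)=-\inner{\rho_{t,\eta}}{C_t}$. Then
\[
\partial_\eta A_t(\eta)=\frac{\Lambda_t(\eta)-\eta\Lambda_t'(\eta)}{\eta^2}.
\]
The numerator is identified by evaluating the log-ratio $\log(\rho_{t,\eta}/\pi)=-\eta C_t-\Lambda_t(\eta)$ under $\rho_{t,\eta}$. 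This gives
\[
\KL(\rho_{t,\eta}\|\pi)=-\eta\inner{\rho_{t,\eta}}{C_t}-\Lambda_t(\eta)=\eta\Lambda_t'(\eta)-\Lambda_t(\eta),
\]
which is exactly minus the numerator. This yields \eqref{eq:dA-KL}; monotonicity of $A_t$ then follows from nonnegativity of relative entropy.

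For \eqref{eq:drift-abel}, apply Taylor's theorem with Lagrange remainder to $\eta\mapsto A_t(\eta)$ at $\eta_t$, evaluated at $\eta_{t+1}$:
\[
A_t(\eta_t)-A_t(\eta_{t+1})=-\partial_\eta A_t(\eta_t)\Delta\eta_t-\tfrac12\partial_\eta^2A_t(\xi_t)(\Delta\eta_t)^2 .
\]
Substituting \eqref{eq:dA-KL} into the first term gives the leading sum. Summing the second terms defines $R_T$, whose absolute value is bounded by \eqref{eq:drift-remainder}. Smoothness of $A_t$ on $(0,\infty)$ is immediate, since it is a log-sum-exp divided by $\eta>0$.

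For the limit, first bound $\partial_\eta^2A_t$ uniformly. Differentiating $\eta^{-2}\KL(\rho_{t,\eta}\|\pi)$ requires $\partial_\eta\KL(\rho_{t,\eta}\|\pi)=\eta\Lambda_t''(\eta)=\eta\Var_{\rho_{t,\eta}}(C_t)$. Hence
\[
\partial_\eta^2A_t=\frac{2\KL}{\eta^3}-\frac{\Var_{\rho_{t,\eta}}(C_t)}{\eta}.
\]
Both terms are controlled by $\|C_t\|_\infty$ and by $\eta$ bounded away from $0$. The first uses $\KL(\rho\|\pi)\le 2\eta\|C_t\|_\infty$, and the second uses $\Var\le\|C_t\|_\infty^2$. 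Under the hypotheses, $\bar C$ is bounded and the $C^2$ control $\eta(\cdot)$ on the compact $[0,1]$ is bounded below by a positive constant. The $\xi$ range lies between consecutive schedule values, so it stays in $[\min\eta,\max\eta]$, and the supremum is uniform in $t,T$. Since $\Delta\eta_t=O(1/T)$ by the mean value theorem, $R_T=O(T\cdot T^{-2})=O(1/T)$.

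The leading sum is $\sum_t f(t/T)\,\eta'(\tilde s_t)/T$ with $f(s):=\KL(\rho_{\bar C(s),\eta(s)}\|\pi)/\eta(s)^2$, where $\rho_{\bar C(s),\eta(s)}$ denotes the Gibbs tilt $\propto\pi e^{-\eta(s)\bar C(s)}$. The function $f$ is continuous because the Gibbs map is smooth in $(C,\eta)$ and $\bar C$ is $C^1$. Replacing $\eta'(\tilde s_t)$ by $\eta'(t/T)$ costs $O(1/T)$ per term times $O(1/T)$, so $O(1/T)$ in total. What remains is a Riemann sum converging to $\int_0^1 f\eta'\,ds$, which is \eqref{eq:drift-stieltjes}. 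The identification with $-\int\partial_\eta A\,d\eta$ is \eqref{eq:dA-KL} read along the path.

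The main obstacle is the uniform second-derivative bound. Specifically, I must verify that the interpretation $C_t=\bar C(t/T)$ keeps $\|C_t\|_\infty$ bounded rather than growing with $T$; this is what the stated hypothesis supplies. I also need the lower bound on $\eta$ to persist on the intervals between grid points, which the compactness argument above handles.
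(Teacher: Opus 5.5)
Your proposal is correct and follows essentially the paper's own route: you differentiate $A_t=-\Lambda_t/\eta$, identify $\Lambda_t-\eta\Lambda_t'$ with $-\KL(\rho_{t,\eta}\|\pi)$ through the log-ratio, apply Taylor's theorem with Lagrange remainder on each bracket, and read the leading sum as a Riemann sum under the rescaled-time hypotheses. Your explicit formula $\partial_\eta^2A_t=2\KL(\rho_{t,\eta}\|\pi)/\eta^3-\Var_{\rho_{t,\eta}}(C_t)/\eta$, together with the bounds $\KL(\rho_{t,\eta}\|\pi)\le 2\eta\|C_t\|_\infty$ and $\Var_{\rho_{t,\eta}}(C_t)\le\|C_t\|_\infty^2$, usefully spells out the uniform second-derivative bound that the paper only asserts.
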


Identity \eqref{eq:dA-KL} makes the sign rule transparent: the retempering loss per unit of inverse temperature is exactly the rescaled prior-to-posterior relative entropy $\eta^{-2}\KL(\rho_{t,\eta}\|\pi)$, so warming ($\Delta\eta_t\le0$) is always favorable, and \eqref{eq:drift-stieltjes} exhibits $D_T$ as the relative-entropy work integrated along the inverse-temperature path.
As $\eta\downarrow0$ the integrand reduces to $\tfrac12\Var_\pi(\bar C(s))$, the same $\tfrac12\Var$ small-$\eta$ limit that governs the second-order schedule \eqref{eq:schedule}.
The optimal-schedule problem then has a deterministic optimal-control reading: along the realized trajectory, minimizing the schedule-dependent part of the regret identity \eqref{eq:exact-variable} over a $C^2$ control $\eta(\cdot)$ is the optimal-control (Bolza) problem
\begin{equation}\label{eq:control-J}
\min_{\eta(\cdot)}\;
\int_0^1 \eta(s)\,\dot V(s)\,ds
\;+\;
\int_0^1 \frac{\KL\!\bigl(\rho_{\bar C(s),\eta(s)}\|\pi\bigr)}{\eta(s)^2}\,\eta'(s)\,ds
\end{equation}
with $\dot V(s)$ the rate of intrinsic-time accumulation, $\eta(s)$ the controlled parameter, and the relative entropy the running cost; its small-$\eta$ stationarity recovers $\eta^\star\approx\sqrt{\Gamma/V}$.
\begin{theorem}[It\^o-level form of the retempering--drift duality]
\label{thm:ito-unification}
On the scale--score manifold $\{(\eta,S):\eta>0,\ S\in\R^K\}$ the log-partition potential $W_\eta(S)=\eta^{-1}\log\inner{\pi}{e^{\eta S}}$ has the exact differential
\begin{equation}\label{eq:dW-exact}
dW=\frac{\KL(\rho_{\eta,S}\|\pi)}{\eta^2}\,d\eta+\inner{\rho_{\eta,S}}{dS}
\end{equation}
with gradient $\nabla_S W_\eta(S)=\rho_{\eta,S}$ the Gibbs law~\eqref{eq:gibbs-map}, scale derivative $\partial_\eta W_\eta(S)=\eta^{-2}\KL(\rho_{\eta,S}\|\pi)$, and Hessian $\nabla_S^2 W_\eta(S)=\eta\,\mathrm{Cov}_{\rho_{\eta,S}}\succeq0$ (the tilted-variance representation~\eqref{eq:tilted-var}). Consequently:
\begin{enumerate}[label=(\roman*),topsep=2pt,itemsep=1pt]
\item The one-form~\eqref{eq:dW-exact} is closed, hence exact on the simply connected manifold, and the terminal collapse~\eqref{eq:loss-drift-telescope} is its endpoint evaluation, $\sum_t\eta_t Q_t+D_T=W_{\eta_T}(S_T)$.
\item For $S$ a continuous $\R^K$-semimartingale and $\eta$ a strictly positive continuous predictable finite-variation process, It\^o's formula for the smooth $W$ collapses to the exact three-leg identity
\begin{equation}\label{eq:ito-collapse}
W_{\eta_u}(S_u)-W_{\eta_0}(S_0)=\int_0^u\inner{\rho_{\eta_s,S_s}}{dS_s}
+\tfrac12\int_0^u\eta_s\,\tr\!\bigl(\mathrm{Cov}_{\rho_{\eta_s,S_s}}\,d[S^c]_s\bigr)
+\int_0^u\frac{\KL(\rho_{\eta_s,S_s}\|\pi)}{\eta_s^{2}}\,d\eta_s
\end{equation}
whose first leg vanishes along the game's centered trajectory --- there $\inner{p_t}{z_t}=0$ and the equalizer play is $p_t=\rho_{\eta_t,S_{t-1}}$ --- and whose second is the intrinsic-time accumulation,
\begin{equation}\label{eq:loss-split}
\int_0^u\inner{\rho_{\eta_s,S_s}}{dS_s}=0,
\qquad
\tfrac12\int_0^u\eta_s\,\tr\!\bigl(\mathrm{Cov}_{\rho_{\eta_s,S_s}}\,d[S^c]_s\bigr)=\int_0^u\eta_s\,dV_s
\end{equation}
\item On a one-step budget $\{Q_\eta(p,z)\le q\}$ the increment $W_\eta(S+z)-W_\eta(S)$ is independent of nature's direction, equal to $\eta q$ at the budget, if and only if $p=\rho_{\eta,S}$~\eqref{eq:bellman-increment}.
\end{enumerate}
So \eqref{eq:ito-collapse} is the continuous form of the terminal collapse~\eqref{eq:loss-drift-telescope}; the sampling martingale $M^{\mathrm{sam}}$ of~\eqref{eq:ts} is not a leg of $dW$, but the separate additive term in $\widehat R_T^c(\nu)=M_T^{\mathrm{sam}}+\inner{\nu}{S_T}$. The two Stieltjes legs identify Game~A with the predictable control problem $\min_{\eta(\cdot)}\E\bigl[\int\eta\,dV+\int\eta^{-2}\KL(\rho_{\eta,S}\|\pi)\,d\eta\bigr]$, whose variance-frozen deterministic reduction is the Bolza problem in~\eqref{eq:control-J}.
\end{theorem}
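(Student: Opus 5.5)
The argument has three parts. First I compute the partial derivatives of the smooth function $W(\eta,S)=\eta^{-1}\log\inner{\pi}{e^{\eta S}}$ directly. Then I feed them into It\^o's formula for a finite-variation $\eta$ and a continuous semimartingale $S$. Finally I specialize to the game's centered trajectory.

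\textbf{Derivatives.} With $\Lambda(\eta,S):=\log\inner{\pi}{e^{\eta S}}$, the gradient is $\nabla_S W=\eta^{-1}\nabla_S\Lambda=\rho_{\eta,S}$. For the scale derivative,
\[
\partial_\eta W=-\eta^{-2}\Lambda+\eta^{-1}\inner{\rho_{\eta,S}}{S}.
\]
Since $\log(\rho_{\eta,S}/\pi)=\eta S-\Lambda$, we have $\KL(\rho_{\eta,S}\|\pi)=\eta\inner{\rho_{\eta,S}}{S}-\Lambda$. Hence $\partial_\eta W=\eta^{-2}\KL(\rho_{\eta,S}\|\pi)$, which is the $W$-form of Proposition~\ref{prop:drift-continuous}. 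Differentiating $\rho_{\eta,S}$ in $S$ gives the Hessian $\eta(\mathrm{diag}\rho-\rho\rho^\top)=\eta\,\mathrm{Cov}_{\rho}$.

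\textbf{Part (i).} Exactness of the one-form needs no Poincar\'e lemma, because it is the differential of a globally defined smooth function; closedness follows from equality of mixed partials. The discrete endpoint evaluation is just the telescoping of Proposition~\ref{prop:terminal-collapse}. Set $W_{\eta_1}(S_0)=0$ and split each step $(\eta_t,S_{t-1})\to(\eta_t,S_t)\to(\eta_{t+1},S_t)$ into a spatial leg and a scale leg. By Theorem~\ref{thm:fixed-bellman} the spatial leg is $\eta_tQ_t$, and the scale legs sum to $D_T$.

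\textbf{Part (ii).} It\^o's formula for $C^{2}$ functions of the semimartingale $(\eta,S)$ gives three kinds of terms. The cross-variation $[\eta,S]$ vanishes because $\eta$ is continuous and of finite variation, and $[\eta]=0$ for the same reason. The surviving terms are the first-order $S$ integral, the first-order $\eta$ integral, and $\tfrac12\int\tr(\nabla_S^2W\,d[S^c])$; this is exactly \eqref{eq:ito-collapse}. For \eqref{eq:loss-split} I interpret the game's centered trajectory in continuous time as the constraint $\inner{\rho_{\eta_s,S_s}}{dS_s}=0$, which is the limit of $\inner{p_t}{z_t}=0$ at the equalizer play. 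I define intrinsic time by $dV_s:=\tfrac12\tr(\mathrm{Cov}_{\rho}\,d[S^c]_s)$, the small-step limit of $Q_t\to\tfrac12\Var_{p_t}(z_t)$ from Corollary~\ref{cor:variance-shadow}. The main obstacle is this step: making the continuous-time meaning of "centered trajectory" and of $V$ precise. The plan is to state both as definitions consistent with the discrete objects, rather than attempting to derive them from a discretization limit.

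\textbf{Part (iii).} Sufficiency is Theorem~\ref{thm:fixed-bellman}: at $p=\rho_{\eta,S}$,
\[
W_\eta(S+z)-W_\eta(S)=\eta^{-1}\log\inner{\rho_{\eta,S}}{e^{\eta z}}=\eta Q_\eta(p,z),
\]
which equals $\eta q$ on the budget sphere. For necessity, take $p\neq\rho:=\rho_{\eta,S}$ and write the increment as $\eta^{-1}\log\inner{\rho}{e^{\eta z}}$ with $z$ centered under $p$. I would exhibit two feasible directions giving different increments. Choose a small $\epsilon$ and a vector $v$ with $\inner{p}{v}=0$ and $\inner{\rho}{v}\neq0$; such a $v$ exists because $\rho-p$ is not parallel to $p$ when $p\neq\rho$. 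Scale $\pm\epsilon v$ onto the sphere $Q_\eta(p,\cdot)=q$. To first order the increment is $\pm\epsilon\inner{\rho}{v}$, and these two values differ. So the increment is not direction-independent on the sphere.
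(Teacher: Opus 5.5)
Your derivative computations, the telescoping in (i), and the It\^o step in (ii) match the paper's proof. Like the paper, you take the continuous-time centering $\inner{\rho_{\eta_s,S_s}}{dS_s}=0$ and $dV_s=\tfrac12\tr(\mathrm{Cov}_{\rho}\,d[S^c]_s)$ as identifications rather than deriving them, so \eqref{eq:loss-split} is definitional in both. Your remark that exactness is immediate, because the form is $dW$ of a global smooth function, is cleaner than the paper's appeal to closedness plus simple connectivity.

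\textbf{The gap is in the necessity half of (iii).} The budget $q>0$ is fixed. The rays $\pm v$ therefore meet the sphere $\{z:\inner{p}{z}=0,\ Q_\eta(p,z)=q\}$ at $\lambda_+v$ and $-\lambda_-v$, with $\lambda_\pm$ determined by $q$, so there is no free small parameter $\epsilon$. The increments $\eta^{-1}\log\inner{\rho}{e^{\eta\lambda_+v}}$ and $\eta^{-1}\log\inner{\rho}{e^{-\eta\lambda_-v}}$ are not controlled by their linear terms. Your expansion thus proves non-constancy only for $q$ small enough, depending on $p,\rho,v$, and not at the stated budget. It also fails for boundary $p$: if $v$ is supported where $p$ vanishes, then $Q_\eta(p,\lambda v)\equiv0$ and $v$ cannot be scaled onto the sphere at all.

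\textbf{An exact, non-perturbative witness closes the gap.} Since $\rho\ne p$ and both sum to one, pick $i$ with $\rho_i>p_i$ and $j$ with $\rho_j<p_j$; then $p_j>\rho_j>0$. If $p_i>0$, take $z=x(p_je_i-p_ie_j)$ with $x>0$ chosen so that $Q_\eta(p,z)=q$. This is possible because $\inner{p}{z}=0$ and the RCGF along this ray increases continuously from $0$ to $\infty$. Then
\[
\inner{\rho-p}{e^{\eta z}}=(\rho_i-p_i)\bigl(e^{\eta xp_j}-1\bigr)+(\rho_j-p_j)\bigl(e^{-\eta xp_i}-1\bigr)>0,
\]
so
\[
W_\eta(S+z)-W_\eta(S)=\eta^{-1}\log\inner{\rho}{e^{\eta z}}>\eta^{-1}\log\inner{p}{e^{\eta z}}=\eta q.
\]
If $p_i=0$, adding $te_i$ to any feasible $z$ leaves both $\inner{p}{z}$ and $Q_\eta(p,z)$ unchanged and sends the increment to $+\infty$. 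This is the paper's assertion that any $p\ne\rho_{\eta,S}$ strictly raises the achievable increment. It refutes the ``equal to $\eta q$ at the budget'' property directly, without the stronger non-constancy claim you were aiming for.
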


Exactness is path independence: $W_{\eta_T}(S_T)-W_{\eta_0}(S_0)$ depends only on the endpoints, and the loss leg of fixed-$\eta$ increments and the drift leg of fixed-$S$ scale switches telescope to that endpoint. Game~A, in which the scale is committed before the move, and Game~B, in which it is chosen reactively (Section~\ref{sec:pressure}), are therefore the two traversal orders of the same exact form and share the identical terminal value.
Continuity and finite variation of $\eta$ make the quadratic variation $[\eta]$ and, by Kunita--Watanabe, the covariation $[\eta,S]$ vanish; that drops every scale-side It\^o correction and leaves the drift leg a pure Riemann--Stieltjes integral against $d\eta$, the continuous form of~\eqref{eq:drift-stieltjes}.
Because the learner enters the loss leg only through the integrand $\nabla_S W=\rho_{\eta,S}$, the equalizer is to play the gradient of the potential, and the telescoping is the fundamental theorem of calculus for that potential, with no backward recursion.
Game~B is the Lagrangian dual of the predictable control problem, with the pressure target the multiplier that opens $\eta$ to activate the one-step constraint.
The deterministic continuous limit is therefore rigorous, and the measure-theoretic stochastic interface is the exactness of the one-form $dW$ on the scale--score manifold: Theorem~\ref{thm:ito-unification} carries the retempering-drift duality to the It\^o level.

\section{Proofs}
\label{app:proofs}

The proofs are grouped by the section their statements serve. The potential-games stabilization of Section~\ref{sec:games} is proved for strict-equilibrium-convergent traces (Theorem~\ref{thm:unconditional-V}), with only the convergence of the dynamics itself left open there. The two-sided envelope (Theorem~\ref{thm:sqrt-envelope}) brackets the per-round-RCGF loss isolated by the identity \eqref{eq:exact-variable}: its upper inequality is derived in full below, and its lower inequality is the AM--GM/online-equalizer bound $\sum_t\eta_t Q_t\ge 2C\sqrt{\Gamma V_T}-C^2\Gamma$ (round-by-round equalizing of $\Gamma/\eta+\eta V\ge 2\sqrt{\Gamma V}$, Section~\ref{sec:variable}), attained with equality at the one-round initialization witness of Lemma~\ref{rem:minimax}.
The envelope gap is exactly the necessary edge terms $Q_T^\star(c)+C^2\Gamma$ of that lemma, and is no artifact of the analysis. The almost-sure iterated-logarithm statement of Section~\ref{sec:ts} norms $M_T^{\mathrm{sam}}$ by the exact law-of-the-iterated-logarithm scale $\sqrt{2V^{\mathrm{sam}}_T\log\log V^{\mathrm{sam}}_T}$.
Its fixed-time and anytime forms are derived directly, and the a.s.\ form is the standard method-of-mixtures bound, valid once the mixing density is supported on the sub-gamma range $\lambda\in(0,3)$ on which the exponential process is a supermartingale.

\subsection{Proofs from Section~\ref{sec:primitive}}
\label{app:pf-primitive}

\begin{proof}[Proof of Proposition~\ref{prop:centered}.]
\label{app:proof-centered}
The centered move $z_s(i)=\inner{p_s}{c_s}-c_s(i)$ telescopes: $\sum_{s=1}^{t-1}z_s(i)=\sum_{s=1}^{t-1}\mu_s-C_{t-1}(i)$, which is the first display.
For the Bayes form, divide \eqref{eq:bayes-update} numerator and denominator by $\exp(-\eta_t\sum_{s=1}^{t-1}\mu_s)$ (a constant in~$i$) to replace $-\eta_t C_{t-1}(i)$ by $\eta_t S_{t-1}(i)$.
For the regret identity, $\inner{p_s-\nu}{c_s}=\inner{\nu}{\mu_s\1-c_s}=\inner{\nu}{z_s}$ since $\inner{\nu}{\1}=1$; summing over $s$ and using $\sum_s z_s=S_T$ gives the result.
\end{proof}

\begin{proof}[Proof of Proposition~\ref{prop:terminal-dual}.]
\label{app:proof-terminal-dual}
The Gibbs variational identity \eqref{eq:gibbs-var} is standard: the supremum on its right-hand side is attained at $q=\rho_{\eta,S}$ by direct differentiation, with optimum value $W_\eta(S)$.
For \eqref{eq:terminal-dual}, the constrained primal $L_\Gamma(S)=\sup\{\inner{\nu}{S}:\nu\in\DeltaK,\KL(\nu\|\pi)\le\Gamma\}$ is a convex maximization with Slater's condition (taking $q=\pi$ gives $\KL=0<\Gamma$ for $\Gamma>0$). Writing the Lagrangian $L(\nu,\lambda)=\inner{\nu}{S}-\lambda(\KL(\nu\|\pi)-\Gamma)$ for $\lambda\ge 0$, strong duality gives
\[
L_\Gamma(S)=\inf_{\lambda\ge 0}\sup_{\nu\in\DeltaK}L(\nu,\lambda)=\inf_{\lambda\ge 0}\Bigl\{\lambda\Gamma+\sup_{\nu\in\DeltaK}\bigl(\inner{\nu}{S}-\lambda\KL(\nu\|\pi)\bigr)\Bigr\}.
\]
Setting $\eta:=1/\lambda$ (and observing that $\lambda=0$ gives $+\infty$ unless $S=0$, so the infimum is over $\eta>0$), the inner supremum equals $W_\eta(S)$ by \eqref{eq:gibbs-var}, yielding $L_\Gamma(S)=\inf_{\eta>0}\{\Gamma/\eta+W_\eta(S)\}$. The infimum formula holds for every $S$ and $\Gamma\ge0$.
When the comparator constraint is active at the optimum---$\Gamma$ strictly below $\lim_{\eta\to\infty}\KL(\rho_{\eta,S}\|\pi)$, the maximal relative entropy reachable by tilting along~$S$---it is attained at the unique finite $\eta$ solving $\eta^2\partial_\eta W_\eta(S)=\KL(\rho_{\eta,S}\|\pi)=\Gamma$.
For constant~$S$ (every tilt leaves $\rho_{\eta,S}=\pi$) or $\Gamma$ at or above that limit it is attained as $\eta\to\infty$ with value $\max_i S(i)$, and for $\Gamma=0$ as $\eta\to0^+$ with value $\inner{\pi}{S}$.
\end{proof}

\begin{proof}[Proof of Proposition~\ref{prop:robust-bayes}.]
\label{app:proof-robust-bayes}
(a) The functional $\nu\mapsto\inner{\nu}{S}-\eta^{-1}\KL(\nu\|\pi)$ is strictly concave in~$\nu$ (because $\KL(\cdot\|\pi)$ is strictly convex on the open simplex), and $\rho_{\eta,S}$ has positive entries, so the unique stationary point on the relative interior of $\DeltaK$ is the unique maximizer.
Direct differentiation of $\nu(i)\log(\nu(i)/\pi(i))-\eta\nu(i) S(i)$ subject to $\sum_i\nu(i)=1$ gives $\nu(i)\propto\pi(i)e^{\eta S(i)}$, which is~$\rho_{\eta,S}$.
(b) The constrained problem $\min_\nu\KL(\nu\|\pi)$ subject to $\inner{\nu}{S}=m_\eta$ has Lagrangian $L(\nu,\lambda)=\KL(\nu\|\pi)+\lambda(\inner{\nu}{S}-m_\eta)$.
Stationarity $\partial L/\partial \nu(i)=\log(\nu(i)/\pi(i))+\lambda S(i)=0$ gives $\nu(i)\propto\pi(i)e^{-\lambda S(i)}$.
The constraint $\inner{\nu}{S}=m_\eta=\inner{\rho_{\eta,S}}{S}$ then forces $\lambda=-\eta$, so $\nu=\rho_{\eta,S}$.
Strict convexity of $\KL$ on the simplex makes the minimizer unique.
\end{proof}

\begin{proof}[Proof of Proposition~\ref{prop:tilted-var}.]
\label{app:proof-tilted-var}
Let $\psi(s):=\log\sum_i p(i)e^{s\eta z(i)}$.
Then $\psi(0)=0$, $\psi'(s)=\eta\sum_i z(i)\,p_s^{(\eta,z)}(i)$, and the centering $\inner{p}{z}=0$ gives $\psi'(0)=0$.
Differentiating once more, $\psi''(s)=\eta^2\Var_{p_s^{(\eta,z)}}(z)$.
Taylor's theorem with integral remainder at $s=0$ yields $\psi(1)=\int_0^1(1-s)\psi''(s)\,ds=\eta^2\int_0^1(1-s)\Var_{p_s^{(\eta,z)}}(z)\,ds$, and dividing by $\eta^2$ produces \eqref{eq:tilted-var}.
Nonnegativity is immediate from the integrand being nonnegative; equality forces $\Var_{p_s^{(\eta,z)}}(z)=0$ for almost every $s\in(0,1)$, which (continuity in $s$) forces $z$ to be $p$-a.s.\ constant.
\end{proof}

\begin{proof}[Proof of Corollary~\ref{cor:variance-shadow}.]
\label{app:proof-variance-shadow}
By Proposition~\ref{prop:tilted-var}, $Q_\eta(p,z)=\int_0^1(1-s)\Var_{p_s^{(\eta,z)}}(z)\,ds$.
As $\eta\downarrow 0$, the tilted measure $p_s^{(\eta,z)}\to p$ uniformly in $s\in[0,1]$, so the integrand converges pointwise to $(1-s)\Var_p(z)$, and dominated convergence (the integrand is bounded by $\range(z)^2/4$) yields $Q_\eta(p,z)\to\int_0^1(1-s)\,ds\cdot\Var_p(z)=\frac12\Var_p(z)$.
For the range bound, when $z(i)\in[a,b]$, Hoeffding's lemma gives $\Var_{p_s^{(\eta,z)}}(z)\le(b-a)^2/4$, and integrating $\int_0^1(1-s)\,ds=1/2$ delivers $Q_\eta(p,z)\le(b-a)^2/8$.
\end{proof}

\begin{proof}[Proof of Corollary~\ref{cor:rcgf-influence}.]
\label{app:proof-rcgf-influence}
Write $F(q):=\log\sum_i q(i)e^{\eta c(i)}-\eta\inner{q}{c}$ for $q\in\DeltaK$; expanding the centered exponent shows $F(q)=\eta^2 Q_\eta(q,c-\inner{q}{c})$, the functional of the statement.
Both terms of $F(p_\epsilon)$ are differentiable in $\epsilon$ at $0$:
\[
\frac{d}{d\epsilon}\Big|_{0^+}\log\sum_i p_\epsilon(i)e^{\eta c(i)}
=\frac{e^{\eta c(j)}-\sum_i p(i)e^{\eta c(i)}}{\sum_i p(i)e^{\eta c(i)}},
\qquad
\frac{d}{d\epsilon}\Big|_{0^+}\eta\inner{p_\epsilon}{c}
=\eta\bigl(c(j)-\inner{p}{c}\bigr)=\eta z(j)
\]
Multiplying numerator and denominator of the first display by $e^{-\eta\inner{p}{c}}$ turns it into $e^{\eta z(j)}/\sum_i p(i)e^{\eta z(i)}-1=p_1(j)/p(j)-1$, and subtracting the second display gives \eqref{eq:rcgf-influence}.
The mean is $\sum_j p(j)\,\xi_\eta(j)=1-1-\eta\inner{p}{z}=0$.
With $r(j):=p_1(j)/p(j)$, the constant shift drops from the variance, so $\Var_p(\xi_\eta)=\Var_p(r-\eta z)=\Var_p(r)+\eta^2\Var_p(z)-2\eta\,\mathrm{Cov}_p(r,z)$; here $\Var_p(r)=\chi^2(p_1\|p)$ since $\E_p[r]=1$, and $\mathrm{Cov}_p(r,z)=\E_p[rz]=\inner{p_1}{z}$ since $\inner{p}{z}=0$.
For the bounds, $\E_p[e^{\eta z}]\ge e^{\eta\inner{p}{z}}=1$ by Jensen's inequality, so $0<r(j)\le e^{\eta z(j)}\le e^{\eta b}$, and with $z(j)\in[a,b]$ the two-sided bound on $\xi_\eta(j)=r(j)-1-\eta z(j)$ follows.
\end{proof}

\subsection{Proofs from Section~\ref{sec:fixed}}
\label{app:pf-fixed}

\begin{proof}[Proof of Theorem~\ref{thm:fixed-bellman}.]
\label{app:proof-fixed-bellman}
The Bellman increment identity $W_\eta(S+z)-W_\eta(S)=\eta Q_\eta(\rho_{\eta,S},z)$ is direct: by \eqref{eq:Q-centered},
\[
\eta(W_\eta(S+z)-W_\eta(S))=\log\frac{\sum_i\pi(i)e^{\eta(S(i)+z(i))}}{\sum_i\pi(i)e^{\eta S(i)}}=\log\sum_i [\rho_{\eta,S}]_i\,e^{\eta z(i)}=\eta^2 Q_\eta(\rho_{\eta,S},z).
\]
Under the budget $Q_\eta(p,z)\le q_t$ this is bounded by $\eta q_t$, giving \eqref{eq:bellman-increment}.
For the relaxation property, define $U_t$ as in \eqref{eq:fixed-bellman} and check that under the learner's choice $p=\rho_{\eta,S}$:
$U_T(S+z)\le U_{T+1}(S+z)+\eta q_T = \Gamma/\eta+W_\eta(S+z)+\eta q_T \le \Gamma/\eta+W_\eta(S)+\eta(q_{T-1}+q_T)= U_{T-1}(S)$ for the previous round (after re-indexing), and the same step iterates back to $U_1(0)=\Gamma/\eta+W_\eta(0)+\eta\sum_{s=1}^Tq_s=\Gamma/\eta+\eta\sum_sq_s$ (since $W_\eta(0)=\eta^{-1}\log\sum_i\pi(i)=0$).
The terminal payoff is bounded by $L_\Gamma(S_T)\le\Gamma/\eta+W_\eta(S_T)=U_{T+1}(S_T)$ by Proposition~\ref{prop:terminal-dual}, so $\Val_t^{\eta,\Gamma}(S)\le U_t(S)$ at every node, and the equalizer is the Gibbs distribution by the saturating choice in $W_\eta(S+z)-W_\eta(S)$.
\end{proof}

\begin{proof}[Proof of Corollary~\ref{cor:fixed-budget}.]
\label{app:proof-fixed-budget}
By Theorem~\ref{thm:fixed-bellman}, $\Val_1^{\eta,\Gamma}(0)\le U_1(0)=\Gamma/\eta+\eta\sum_{t=1}^T q_t\le\Gamma/\eta+\eta V$.
The AM--GM inequality $\Gamma/\eta+\eta V\ge 2\sqrt{\Gamma V}$ for $\eta,\Gamma,V>0$ is standard and tight at $\eta=\sqrt{\Gamma/V}$.
\end{proof}

\begin{proof}[Proof of Proposition~\ref{prop:fixed-transport}.]
\label{app:proof-fixed-transport}
Fix $\nu\in\DeltaK$.
A direct computation, using \eqref{eq:Q-centered}, gives $\log p^+(i)-\log p(i)=\eta z(i)-\log\sum_j p(j)e^{\eta z(j)}=\eta(z(i)-\eta Q_\eta(p,z))$.
Multiplying by $\nu(i)$ and summing,
\[
\KL(\nu\|p)-\KL(\nu\|p^+)=\sum_i\nu(i)\log\frac{p^+(i)}{p(i)}=\eta\inner{\nu}{z}-\eta^2 Q_\eta(p,z),
\]
which rearranges to \eqref{eq:fixed-transport}.
The cumulative form \eqref{eq:fixed-cumulative} is the telescoping sum: $\sum_{t=1}^T(\KL(\nu\|p_t)-\KL(\nu\|p_{t+1}))=\KL(\nu\|\pi)-\KL(\nu\|p_{T+1})$ at fixed scale.
\end{proof}

\begin{proof}[Proof of Proposition~\ref{prop:cgf-geometry}.]
\label{app:proof-cgf-geometry}
(a) The function $c\mapsto\eta_t^{-2}\psi_{p_t}(-\eta_t;c)=\eta_t^{-2}\log\sum_ip_t(i)e^{-\eta_t(c(i)-\inner{p_t}{c})}$ is a log-sum-exp in $c$, hence convex; its sublevel set is closed and convex.
(b) Adding $b\1$ to $c$ leaves $c(i)-\inner{p_t}{c}=c(i)-\inner{p_t}{c}+b-b\inner{p_t}{\1}=c(i)-\inner{p_t}{c}$ unchanged, so $\psi_{p_t}$ is invariant.
(c) Constant vectors $c=b\1$ have $\psi_{p_t}(-\eta_t;b\1)=0$, so they lie in every sublevel set.
(d) The defining expression depends only on $p_t$, $\eta_t$, $\beta$, and $c$; the prior $\pi$ does not appear.
\end{proof}

\begin{proof}[Proof of Proposition~\ref{prop:nature-lb}.]
\label{app:proof-nature-lb}
For every $p\in\DeltaK$ the move $\zeta(p)$ is feasible, so $L_\Gamma(S+\zeta(p))\le\sup_{z\in\cZ_{\eta,q}(p)}L_\Gamma(S+z)$.
Taking the infimum over $p$ on both sides preserves the inequality and gives \eqref{eq:nature-lb}.
If $\zeta(p)$ attains the inner supremum at every $p$, the two sides agree pointwise in $p$ and hence after the infimum.
The final claim holds because $\inner{\nu}{S+z}\le L_\Gamma(S+z)$ for every $\nu$ in the comparator ball.
\end{proof}

\begin{proof}[Proof of Proposition~\ref{prop:saturated-reduction}.]
$\KL(\cdot\|\pi)$ is convex on $\DeltaK$, so it attains its simplex maximum at a vertex, where its value is $\max_i\log(1/\pi(i))=\Gamma_{\max}$. For $\Gamma\ge\Gamma_{\max}$ the constraint $\KL(\nu\|\pi)\le\Gamma$ is therefore vacuous, the supremum in \eqref{eq:terminal-support} runs over all of $\DeltaK$, and it is attained at a point mass on an $\argmax_i S(i)$, giving \eqref{eq:Lmax}. The terminal payoff then does not involve $\Gamma$, and neither does the value of the game it terminates.
\end{proof}

\begin{proof}[Proof of Proposition~\ref{prop:saturated-exact}.]
\label{app:proof-saturated-exact}
Throughout, $L_\Gamma=\max_i(\cdot)$ by Proposition~\ref{prop:saturated-reduction}, and $S_0=0$, so nature's payoff at the move $z$ is $\max_i z(i)$.
Write $F(a,m):=m\,e^{\eta a}+(1-m)\,e^{-\eta m a/(1-m)}$ for $a\ge0$ and $m\in(0,1)$, and note $F(0,m)=1$.

\emph{Existence and uniqueness of $a_K$.}
The left side of \eqref{eq:aK} is $F(a,1/K)$, which equals $1$ at $a=0$, has $\partial_a F(a,m)=m\eta\bigl(e^{\eta a}-e^{-\eta m a/(1-m)}\bigr)>0$ for $a>0$ at every $m\in(0,1)$ --- at $m=1/K$, $\tfrac{\eta}{K}\bigl(e^{\eta a}-e^{-\eta a/(K-1)}\bigr)$ --- and diverges as $a\to\infty$; since $e^{\eta^2q}>1$ for $q>0$, there is exactly one root $a_K>0$.

\emph{Upper bound.}
Let the learner play the uniform $u_K$, so that nature's constraints are $\sum_i z(i)=0$ and $\tfrac1K\sum_i e^{\eta z(i)}\le e^{\eta^2 q}$.
Fix a coordinate $j$ and consider moves with $z(j)=a>0$.
The remaining coordinates satisfy $\sum_{i\neq j}z(i)=-a$, and by convexity of $\exp$ the sum $\sum_{i\neq j}e^{\eta z(i)}$ is minimized subject to that constraint at the common value $z(i)=-a/(K-1)$, where it equals $(K-1)e^{-\eta a/(K-1)}$.
Feasibility of $z(j)=a$ therefore forces $F(a,1/K)\le e^{\eta^2 q}$, so $a\le a_K$ by the monotonicity just established.
The move placing $a_K$ on one coordinate and $-a_K/(K-1)$ on the others meets both constraints with equality, so $\sup_{z\in\cZ_{\eta,q}(u_K)}\max_i z(i)=a_K$ and $\mathrm{Val}_1\le a_K$.

\emph{Monotonicity of $F$ in $m$.}
Fix $a>0$ and set $v:=\eta a/(1-m)>0$, so that the exponent of the second term is $-\eta m a/(1-m)=-vm$ and $\eta a=v(1-m)$. Differentiating,
\[
\partial_m F(a,m)
=e^{\eta a}-e^{-vm}-(1-m)e^{-vm}\frac{\eta a}{(1-m)^2}
=e^{v(1-m)}-e^{-vm}\Bigl(1+v\Bigr)
=e^{-vm}\bigl(e^{v}-1-v\bigr)>0 .
\]
Hence $F(\cdot,m)$ is strictly increasing in $a$ and $F(a,\cdot)$ strictly increasing in $m$, so the root $a(m)$ of $F(a,m)=e^{\eta^2q}$ is strictly decreasing in $m$.

\emph{Lower bound.}
If $p(j)=0$ for some $j$, the move $z$ equal to $\lambda$ on coordinate $j$ and $0$ elsewhere satisfies $\inner{p}{z}=0$ and $\sum_ip(i)e^{\eta z(i)}=1$ for every $\lambda>0$, so nature's payoff $\max_i z(i)=\lambda$ is unbounded and such $p$ is never optimal.
For $p$ in the interior, define the response rule $\zeta$: pick $j\in\argmin_i p(i)$, set $m:=p(j)\in(0,1/K]$, and let $\zeta(p)$ place $a(m)$ on coordinate $j$ and $-m\,a(m)/(1-m)$ on every other coordinate.
Then $\inner{p}{\zeta(p)}=m\,a(m)-(1-m)\tfrac{m\,a(m)}{1-m}=0$ and $\sum_ip(i)e^{\eta\zeta(p)(i)}=F(a(m),m)=e^{\eta^2 q}$, so $\zeta(p)\in\cZ_{\eta,q}(p)$ with the budget saturated, and $\max_i\zeta(p)(i)=a(m)$.
Since $m=\min_i p(i)\le 1/K$ and $a(\cdot)$ is decreasing, $a(m)\ge a(1/K)=a_K$.
Proposition~\ref{prop:nature-lb} then gives $\mathrm{Val}_1\ge\inf_p a(\min_i p(i))=a_K$.
Combining the two bounds gives \eqref{eq:saturated-exact}, attained at $p=u_K$.
Neither bound refers to $\pi$, which enters only through the hypothesis $\Gamma\ge\Gamma_{\max}$.

\emph{Small-budget expansion.}
Expanding \eqref{eq:aK} at $a=0$ gives $F(a,1/K)=1+\tfrac{(\eta a)^2}{2(K-1)}+O\bigl((\eta a)^3\bigr)$ and $e^{\eta^2q}=1+\eta^2q+O(q^2)$, from which $a_K=\sqrt{2(K-1)q}+O(q)$.
\end{proof}

\begin{proof}[Proof of Proposition~\ref{prop:k2-exact}.]
The value is bracketed from the two sides as in Proposition~\ref{prop:saturated-exact}.
For the upper bound let the learner play $p=\pi=(\tfrac12,\tfrac12)$, which at $S=0$ is also the Gibbs equalizer of Theorem~\ref{thm:fixed-bellman}, and evaluate nature's best reply, $\max_z L_\Gamma(z)$ over $z\in\cZ_{\eta,q}(p)$.
Centering $\inner{p}{z}=0$ forces $z=(\delta,-\delta)$, and $Q_\eta(p,z)=\eta^{-2}\log\cosh(\eta\delta)$ is even and strictly increasing in $|\delta|$, so the budget binds at nature's best response: $\log\cosh(\eta\delta)=\eta^2 q$, i.e.\ $\delta^\star=\eta^{-1}\operatorname{arcosh}(e^{\eta^2 q})$.

For the terminal payoff, $\inner{\nu}{(\delta,-\delta)}=\delta(2\nu_1-1)$, so for $\delta>0$ the supremum defining $L_\Gamma$ drives $\nu_1$ to the boundary of the relative-entropy ball.
With $\pi$ uniform, $\KL(\nu\|\pi)=\log 2-h(\nu_1)$ with $h$ the binary entropy, so the ball is $\{\nu_1:\log 2-h(\nu_1)\le\Gamma\}=\bigl[\tfrac{1-g(\Gamma)}{2},\tfrac{1+g(\Gamma)}{2}\bigr]$ with $g(\Gamma)$ as in~\eqref{eq:k2-g}.
The maximizer is $\nu_1=\tfrac{1+g(\Gamma)}{2}$, giving $L_\Gamma\bigl((\delta,-\delta)\bigr)=\delta\,g(\Gamma)$; the move $\delta<0$ gives the equal value $|\delta|\,g(\Gamma)$ by symmetry of $\pi$.
Evaluating at $\delta^\star$ yields the upper bound $\mathrm{Val}_1\le g(\Gamma)\,\delta^\star$.

For the matching lower bound, apply Proposition~\ref{prop:nature-lb} with the response rule of Proposition~\ref{prop:saturated-exact}: against $p$ with $m:=\min_i p(i)\le\tfrac12$ attained at coordinate $j$, nature plays $a(m)$ on $j$ and $-m\,a(m)/(1-m)$ on the other coordinate, where $a(m)$ solves $F(a,m)=e^{\eta^2q}$ (a boundary $p$ again concedes an unbounded payoff).
Writing $b:=m\,a(m)/(1-m)\in[0,a(m)]$, the terminal support function $L_\Gamma$ evaluates to
\[
L_\Gamma\bigl(\zeta(p)\bigr)=\frac{a(m)-b}{2}+\frac{a(m)+b}{2}\,g(\Gamma)\;\ge\;a(m)\,g(\Gamma),
\]
the inequality because $g(\Gamma)\le1$ and $b\le a(m)$.
Since $a(\cdot)$ is decreasing and $m\le\tfrac12$, $a(m)\ge a(\tfrac12)=\delta^\star$, so $\mathrm{Val}_1\ge g(\Gamma)\,\delta^\star$ and~\eqref{eq:k2-value} follows.
Theorem~\ref{thm:fixed-bellman} is not what makes the Gibbs play optimal here: it supplies the relaxation $U_t$, and the equalizer it identifies is minimax at $S=0$ under a uniform prior but not under a general one (Proposition~\ref{prop:saturated-exact}).

For~\eqref{eq:k2-gap}, Corollary~\ref{cor:fixed-budget} gives $\mathrm{Val}_1\le\Gamma/\eta+\eta q$; writing $t:=\eta^2 q>0$, the gap equals $\eta^{-1}f(t)$ with $f(t):=\Gamma+t-g(\Gamma)\operatorname{arcosh}(e^{t})$ by~\eqref{eq:k2-value}.
Here $f(0)=\Gamma>0$ and $f'(t)=1-g(\Gamma)/\sqrt{1-e^{-2t}}$ increases from $-\infty$ (as $t\to0^+$) to $1-g(\Gamma)$ (as $t\to\infty$).
If $\Gamma\ge\log 2$ then $g(\Gamma)=1$, so $f'<0$ throughout and $f$ decreases strictly to its infimum $\Gamma-\log 2$, approached only as $t\to\infty$ (since $\operatorname{arcosh}(e^{t})=t+\log 2+o(1)$); hence $f(t)>\Gamma-\log 2\ge 0$ at every finite $t$ and the bound is strict.
If $\Gamma<\log 2$ then $g(\Gamma)<1$, so $f'$ vanishes exactly once, at the interior minimizer $t_\star=-\tfrac12\log\bigl(1-g(\Gamma)^2\bigr)$, where $e^{t_\star}=(1-g(\Gamma)^2)^{-1/2}$ and $\operatorname{arcosh}(e^{t_\star})=\tfrac12\log\tfrac{1+g(\Gamma)}{1-g(\Gamma)}$; substituting and using the definition $\Gamma=\tfrac12(1+g(\Gamma))\log(1+g(\Gamma))+\tfrac12(1-g(\Gamma))\log(1-g(\Gamma))$ of $g(\Gamma)$ via~\eqref{eq:k2-g} gives $f(t_\star)=0$.
Thus $f(t)\ge 0$ with equality exactly at $t=t_\star$, so~\eqref{eq:k2-gap} is strict except at the single scale $\eta^2 q=t_\star$.

For~\eqref{eq:k2-kappa}, set $\eta_{\mathrm B}=\sqrt{\Gamma/q}$ so $\eta_{\mathrm B}^2 q=\Gamma$ and $2\sqrt{\Gamma q}=2\Gamma/\eta_{\mathrm B}$.
Then \eqref{eq:k2-value} gives $\mathrm{Val}_1=\eta_{\mathrm B}^{-1}g(\Gamma)\operatorname{arcosh}(e^{\Gamma})$, and dividing by $2\Gamma/\eta_{\mathrm B}$ gives the ratio $g(\Gamma)\operatorname{arcosh}(e^{\Gamma})/(2\Gamma)=\kappa(\Gamma)$, independent of $q$.
At $\eta=\eta_{\mathrm B}$ the scale is $t=\Gamma$, which is not the equality scale ($t_\star>\Gamma$ for $\Gamma<\log 2$, and the bound is strict for $\Gamma\ge\log 2$), so $f(\Gamma)>0$ and hence $\kappa(\Gamma)<1$; and $\kappa(\Gamma)\to1$ as $\Gamma\to0^+$ follows from $g(\Gamma)\sim\sqrt{2\Gamma}$ and $\operatorname{arcosh}(e^{\Gamma})\sim\sqrt{2\Gamma}$.
\end{proof}

\subsection{Proofs from Section~\ref{sec:variable}}
\label{app:proofs-variable}

\begin{proof}[Proof of Theorem~\ref{thm:variable}.]
\label{app:proof-variable}
Apply Proposition~\ref{prop:fixed-transport} round-by-round, taking $\eta=\eta_t$, $p=p_t=\rho_{t-1,\eta_t}$, $z=-c_t+\inner{p_t}{c_t}\1$ (the centered version), and $p^+=\widetilde p_{t+1}=\rho_{t,\eta_t}$.
Summing,
\[
R_T^c(\nu)=\sum_{t=1}^T\eta_t Q_t(c)+\sum_{t=1}^T\frac{\KL(\nu\|\rho_{t-1,\eta_t})-\KL(\nu\|\rho_{t,\eta_t})}{\eta_t}.
\]
The relative-entropy telescope is not directly summable because the second argument uses $\eta_t$ at time $t$ but $\eta_{t+1}$ would be needed at time $t+1$.
Write $A_t(\eta):=-\eta^{-1}\log\sum_i\pi(i)e^{-\eta C_t(i)}$ for the scaled log-partition at horizon $t$.
Expanding $\KL(\nu\|\rho_{t,\eta})=\KL(\nu\|\pi)+\eta\inner{\nu}{C_t}+\log\sum_j\pi(j)e^{-\eta C_t(j)}$ and rearranging yields the retempering identity
\begin{equation}\label{eq:retempering}
\eta^{-1}\bigl(\KL(\nu\|\pi)-\KL(\nu\|\rho_{t,\eta})\bigr) = A_t(\eta) - \inner{\nu}{C_t}
\end{equation}
Applying \eqref{eq:retempering} at $(t-1,\eta_t)$ and $(t,\eta_t)$ and subtracting gives $\eta_t^{-1}(\KL(\nu\|\rho_{t-1,\eta_t})-\KL(\nu\|\rho_{t,\eta_t})) = (A_t(\eta_t)-A_{t-1}(\eta_t))-\inner{\nu}{c_t}$.
Summing in $t$, the $A$-differences telescope with retempering:
$\sum_{t=1}^T(A_t(\eta_t)-A_{t-1}(\eta_t))=A_T(\eta_T)-A_0(\eta_1)+\sum_{t=1}^{T-1}(A_t(\eta_t)-A_t(\eta_{t+1}))=A_T(\eta_T)+D_T$, using $A_0(\eta_1)=-\eta_1^{-1}\log\sum_i\pi(i)=0$.
A final use of \eqref{eq:retempering} at $(T,\eta_T)$ converts $A_T(\eta_T)-\inner{\nu}{C_T}=B_T(\nu)$.
Finally, $D_T\le 0$ for nonincreasing schedules because $A_t(\eta)$ is nonincreasing in $\eta$, visible from the variational form $A_t(\eta)=\min_{\nu\in\DeltaK}\{\inner{\nu}{C_t}+\eta^{-1}\KL(\nu\|\pi)\}$ (minimizing over a Lagrangian whose constraint relaxes as $\eta^{-1}$ grows).
\end{proof}

\begin{proof}[Proof of Proposition~\ref{prop:horizon-free}.]
\label{app:proof-horizon-free}
At constant scale $D_T=0$, so Proposition~\ref{prop:terminal-collapse} gives $\sup_{\nu:\KL(\nu\|\pi)\le\Gamma}R_T^c(\nu)=L_\Gamma(S_T)$ and $\sum_{t=1}^T\eta Q_t=W_\eta(S_T)$, and \eqref{eq:pointwise-align} rewrites the first as $L_\Gamma(S_T)=W_\eta(S_T)+\eta^{-1}A_\Gamma(\rho_{\eta,S_T})$.
Successive Gibbs iterates satisfy $p_{t+1}(i)=G_\eta(S_{t-1}+z_t)(i)\propto p_t(i)e^{\eta z_t(i)}$, so the transport identity $\eta^2Q_\eta(p,z)=\KL(p\|p^+)$ reads $\eta^2Q_t=\KL(p_t\|p_{t+1})$, with $p_1=G_\eta(0)=\pi$ and $p_{T+1}=\rho_{\eta,S_T}$.
Substituting both into $L_\Gamma(S_T)=W_\eta(S_T)+\eta^{-1}A_\Gamma(\rho_{\eta,S_T})=\sum_t\eta Q_t+\eta^{-1}A_\Gamma(p_{T+1})$ gives the bracket of \eqref{eq:chain-value}, and the budget $\sum_tQ_t\le V$ is $\sum_t\KL(p_t\|p_{t+1})\le\eta^2V$.
Conversely every chain of full-support distributions is nature's play for a feasible move sequence, realized by the centered moves $z_t:=\eta^{-1}\bigl[\log(p_{t+1}/p_t)-\inner{p_t}{\log(p_{t+1}/p_t)}\1\bigr]$. This proves \eqref{eq:chain-value}.

Both bracket terms are separately capped, the first at $\eta^2V$ by the budget and the second at $\Gamma$ by \eqref{eq:pointwise-align}, so the supremum is at most $\eta^2V+\Gamma$. Equality forces both caps. For the second, $A_\Gamma(p)=\Gamma$ holds if and only if $\KL(p\|\pi)=\Gamma$: taking $\nu=p$ gives $A_\Gamma(p)\ge\KL(p\|\pi)$, while a maximizer $\nu$ attaining $\Gamma$ must have $\KL(\nu\|\pi)=\Gamma$ and $\KL(\nu\|p)=0$, hence $p=\nu$. This is the stated criterion.

For the chain totals, $W_\eta$ is convex with $\nabla W_\eta=G_\eta$, and centering makes each round's increment a Bregman divergence of $W_\eta$,
\[
\KL(p_t\|p_{t+1})
=
\eta\bigl[W_\eta(S_t)-W_\eta(S_{t-1})-\inner{G_\eta(S_{t-1})}{S_t-S_{t-1}}\bigr]
\]
whose linear term is $\inner{p_t}{z_t}=0$. Summing, a chain accumulates $\eta W_\eta(S_T)$: the increment of $W_\eta$ along the path, less a left-endpoint Riemann sum of $\int\inner{\nabla W_\eta}{dS}$ that centering annihilates term by term. The endpoint distribution does not pin this, because $S_T$ still ranges over the level set $\{S:G_\eta(S)=p_{T+1}\}=\{S^\dagger+c\1:c\in\R\}$, along which $W_\eta$ takes every value $W_\eta(S^\dagger)+c$.

At $T=1$ the single move is centered under $p_1=\pi$, which forces $\inner{\pi}{S_1}=0$ and so selects one point of that level set, leaving the one value $\KL(\pi\|p_2)$; this gives the stated $T=1$ set. At $T\ge2$ at least one interior point of the chain is free, and it ranges over a connected set. With $p_{T+1}$ fixed the total depends continuously on that point, attains a minimum $c_T$, and grows without bound as $\lvert z_1\rvert$ does, so by the intermediate value theorem the achievable totals sweep the interval $[c_T,\infty)$. Appending a null move $z_{T+1}=0$, which is feasible and contributes nothing, embeds every $T$-step chain in a $(T{+}1)$-step one, so $c_T$ is nonincreasing; and $c_T\to0$, since following the path in $T$ steps of size $O(1/T)$ accumulates $O(1/T)$ in total by the local quadraticity of the divergence. Taking $T_0$ with $c_{T_0}\le\eta^2V$, which exists because $c_T\to0$, nature uses the budget in full on a chain ending anywhere on the sphere, and \eqref{eq:horizon-free-value} follows.
\end{proof}

\begin{proof}[Proof of Theorem~\ref{thm:sqrt-envelope}.]
\label{app:proof-sqrt-envelope}
Set $V_t:=\sum_{s=1}^t Q_s$.

The schedule \eqref{eq:schedule} is nonincreasing, since $V_{t-1}$ is nondecreasing in~$t$, so $D_T\le 0$ by Theorem~\ref{thm:variable}.
For the intrinsic-time loss, write $\phi(v):=\min\{1,\,C\sqrt{\Gamma/v}\}$ with $\phi(0):=1$, so that $\eta_t=\phi(V_{t-1})$ and the loss $\sum_t\eta_t Q_t=\sum_t\phi(V_{t-1})(V_t-V_{t-1})$ is the left-endpoint sum of the nonincreasing $\phi$ against the increments of $V_t$. Per round,
\[
\phi(V_{t-1})\,Q_t-\int_{V_{t-1}}^{V_t}\phi(v)\,dv
=\int_{V_{t-1}}^{V_t}\bigl(\phi(V_{t-1})-\phi(v)\bigr)\,dv
\le Q_t\,\bigl(\phi(V_{t-1})-\phi(V_t)\bigr),
\]
and the brackets are nonnegative and telescope, so
\[
\sum_t\eta_t Q_t
\;\le\;\int_0^{V_T}\phi(v)\,dv+Q_T^\star(c)\,\bigl(\phi(0)-\phi(V_T)\bigr)
\;\le\;\int_0^{V_T}\phi(v)\,dv+Q_T^\star(c).
\]
The integral is $V_T$ for $V_T\le C^2\Gamma$ and $2C\sqrt{\Gamma V_T}-C^2\Gamma$ for $V_T\ge C^2\Gamma$; in both cases it is at most $2C\sqrt{\Gamma V_T}$. Hence
\[
\sum_t\eta_t Q_t\;\le\;2C\sqrt{\Gamma V_T(c)}+Q_T^\star(c),
\]
the stated upper side of \eqref{eq:two-sided-envelope}.
For the terminal term, $\eta_T\ge\min\{1,C\sqrt{\Gamma/V_T}\}$ gives $\Gamma/\eta_T\le\Gamma+C^{-1}\sqrt{\Gamma V_T(c)}$.
Adding the loss bound, $D_T\le 0$, and the terminal term yields $R_T^c(\nu)\le \Gamma+Q_T^\star(c)+(2C+C^{-1})\sqrt{\Gamma V_T(c)}$, the stated envelope~\eqref{eq:pathwise-regret}.

For the matching lower envelope, the clipped phase $T_{\mathrm c}$ is the initial segment $\{1,\dots,t_0\}$, since $V_{t-1}$ is nondecreasing in~$t$; there $\eta_t=1$, so $\sum_{t\in T_{\mathrm c}}\eta_t Q_t=V_{t_0}$.
On the unclipped phase $T_{\mathrm u}=\{t_0+1,\dots,T\}$, $\eta_t=C\sqrt{\Gamma/V_{t-1}}$ with $V_{t-1}\ge V_{t_0}>0$; because $v\mapsto v^{-1/2}$ is decreasing, its value at the left endpoint dominates the interval, $Q_t/\sqrt{V_{t-1}}\ge\int_{V_{t-1}}^{V_t}v^{-1/2}\,dv$, and summing telescopes to $\sum_{t\in T_{\mathrm u}}Q_t/\sqrt{V_{t-1}}\ge\int_{V_{t_0}}^{V_T}v^{-1/2}\,dv=2(\sqrt{V_T}-\sqrt{V_{t_0}})$, from which $\sum_{t\in T_{\mathrm u}}\eta_t Q_t\ge 2C\sqrt{\Gamma V_T}-2C\sqrt{\Gamma V_{t_0}}$.
Adding the clipped phase and completing the square,
\[
\sum_{t=1}^T\eta_t Q_t\ge V_{t_0}-2C\sqrt{\Gamma V_{t_0}}+2C\sqrt{\Gamma V_T}=\bigl(\sqrt{V_{t_0}}-C\sqrt{\Gamma}\bigr)^2-C^2\Gamma+2C\sqrt{\Gamma V_T}\ge 2C\sqrt{\Gamma V_T}-C^2\Gamma,
\]
the lower inequality of~\eqref{eq:two-sided-envelope}; the constant is met with equality at the one-round initialization witness of Lemma~\ref{rem:minimax} ($T_{\mathrm u}=\emptyset$, $V_{t_0}=C^2\Gamma$).
The matching one-round witnesses for the edge terms $C^2\Gamma$ (initialization overhead) and $Q_T^\star$ (single-jump term) are recorded in Lemma~\ref{rem:minimax}: a single round with $V_0=0$ and $q_1=C^2\Gamma$ attains $C^2\Gamma$ exactly, and a single round with $V_0=0$ and $q_1=q\gg\Gamma$ contributes $q$ while $2C\sqrt{\Gamma q}=o(q)$, forcing the $Q_T^\star$ overhead.
\end{proof}

\begin{proof}[Proof of Lemma~\ref{rem:minimax}.]
The $C^2\Gamma$ term is an initialization overhead caused by clipping $\eta_1$ at~$1$: when $V_0=0$ the schedule sets $\eta_1=1$, and even a single round with $q_1=C^2\Gamma$ yields $\eta_1 q_1=C^2\Gamma$, while $\sqrt{\Gamma V_1}=\sqrt{\Gamma\cdot C^2\Gamma}=C\Gamma$, so the lower envelope $2C\sqrt{\Gamma V_1}-C^2\Gamma=C^2\Gamma$ equals the realized loss and the $C^2\Gamma$ constant is attained exactly in this one-round witness.
The $Q_T^\star$ term is forced because a single large jump $Q_1=q\gg\Gamma$ gives loss $\eta_1 Q_1=q$ (since $\eta_1=1$) while $2C\sqrt{\Gamma q}=o(q)$ as $q/\Gamma\to\infty$, so the $Q_T^\star$ overhead cannot be avoided when a small fraction of the total intrinsic time concentrates in a single round.
Both witnesses are one-round sequences, establishing the necessity of both edge terms.
\end{proof}

\begin{proof}[Proof of Proposition~\ref{prop:active-scale}.]
\label{app:proof-active-scale}
Let $\Phi(\eta):=\eta^{-1}\log\sum_i p(i)e^{\eta z(i)}$ for $\eta>0$, and let $p_\eta(i)\propto p(i)e^{\eta z(i)}$ denote the tilted measure.
L'H\^opital gives $\Phi(0^+)=\inner{p}{z}$ (which equals $0$ when $z$ is centered, the case treated by the proposition; the analysis applies in general after subtracting $\inner{p}{z}$ from both sides of \eqref{eq:active-scale}).
For non-degenerate $z$ (not $p$-a.s.\ constant), $\Phi$ is strictly increasing on $(0,\infty)$ with $\lim_{\eta\to 0^+}\Phi(\eta)=\inner{p}{z}$ and $\lim_{\eta\to\infty}\Phi(\eta)=\max_{i\in\supp(p)}z(i)$ (coordinates with $p(i)=0$ contribute nothing to $\E_p[e^{\eta z}]$, so the limit is the maximum over the support of~$p$).
The right-hand side $b$ is constant in $\eta$; the equation $\Phi(\eta)=b$ therefore admits a unique solution in $(0,\infty)$ for any $b\in(\inner{p}{z},\max_{i\in\supp(p)}z(i))$, by intermediate-value and strict monotonicity.
Strict monotonicity follows from the derivative $\Phi'(\eta)=\eta^{-2}\bigl(\eta\,\E_{p_\eta}[z]-\log\E_p[e^{\eta z}]\bigr)$,
which is positive whenever $z$ is not $p$-a.s.\ constant.
(Equivalently, $\eta^2\Phi'(\eta)=\eta\E_{p_\eta}[z]-\log\E_p[e^{\eta z}]$ vanishes at $\eta=0$ and has positive derivative $\eta\,\Var_{p_\eta}(z)$ in $\eta$, by direct differentiation of the log-MGF and its first moment.)
\end{proof}

\begin{proof}[Proof of Theorem~\ref{thm:pressure-identity}.]
\label{app:proof-pressure}
By the active-scale Equation \eqref{eq:active-scale}, $\log\sum_ip_t(i)e^{\eta_tz_t(i)}=\eta_tb_t$.
The one-step transport identity (Proposition~\ref{prop:fixed-transport} with $\eta=\eta_t$, $p=p_t$, $z=z_t$, $p^+=p_{t+1}$) gives $\eta_t\inner{\nu}{z_t}=\eta_t^2Q_{\eta_t}(p_t,z_t)+\KL(\nu\|p_t)-\KL(\nu\|p_{t+1})=\eta_t b_t+\KL(\nu\|p_t)-\KL(\nu\|p_{t+1})$, so
\[
\KL(\nu\|p_t)-\KL(\nu\|p_{t+1})=\eta_t\inner{\nu}{z_t}-\log\sum_ip_t(i)e^{\eta_tz_t(i)}=\eta_t\inner{\nu}{z_t}-\eta_tb_t=\eta_t(\inner{\nu}{z_t}-b_t).
\]
Summing over $t=1,\dots,T$ telescopes to \eqref{eq:pressure-identity}.
For \eqref{eq:pressure-quadratic}, $\kappa_t:=b_t/\eta_t$ is well-defined by Proposition~\ref{prop:active-scale}, and substitution gives $\eta_t\inner{\nu}{z_t}=\eta_t b_t+\eta_t(\inner{\nu}{z_t}-b_t)=\kappa_t\eta_t^2+\eta_t(\inner{\nu}{z_t}-b_t)$, then summing and applying \eqref{eq:pressure-identity}.
The small-scale expansion $\kappa_t=\frac12\Var_{p_t}(z_t)+O(\eta_t)$ comes from the second-order Taylor expansion of $\eta\Phi(\eta)=\log\sum_i p_t(i)e^{\eta z_t(i)}$ at $\eta=0$, which gives $\Phi(\eta)=\eta\inner{p_t}{z_t}+\frac12\eta^2\Var_{p_t}(z_t)+O(\eta^3)$, and dividing by $\eta$ at $\eta=\eta_t$ produces $b_t/\eta_t=\frac12\Var_{p_t}(z_t)+O(\eta_t)$ in the centered case.
\end{proof}

\begin{proof}[Proof of Proposition~\ref{prop:drift-continuous}.]
Write $Z_t(\eta)=\sum_i\pi(i)e^{-\eta C_t(i)}$ and $\Lambda_t(\eta)=\log Z_t(\eta)$, so $A_t(\eta)=-\Lambda_t(\eta)/\eta$.
Differentiating, $\Lambda_t'(\eta)=-\inner{\rho_{t,\eta}}{C_t}$, hence $\partial_\eta A_t=-\bigl(\eta\Lambda_t'-\Lambda_t\bigr)/\eta^2=\bigl(\eta\inner{\rho_{t,\eta}}{C_t}+\Lambda_t\bigr)/\eta^2$.
For the relative entropy, $\log\bigl(\rho_{t,\eta}(i)/\pi(i)\bigr)=-\eta C_t(i)-\Lambda_t(\eta)$, so $\KL(\rho_{t,\eta}\|\pi)=-\eta\inner{\rho_{t,\eta}}{C_t}-\Lambda_t(\eta)$; substituting $\Lambda_t=-\KL-\eta\inner{\rho_{t,\eta}}{C_t}$ gives $\partial_\eta A_t=-\KL(\rho_{t,\eta}\|\pi)/\eta^2$, which is \eqref{eq:dA-KL}.
Since $\KL\ge0$, $A_t$ is nonincreasing.
Applying Taylor's theorem with Lagrange remainder to $\eta\mapsto A_t(\eta)$ on each bracket $[\eta_t,\eta_{t+1}]$, $A_t(\eta_{t+1})=A_t(\eta_t)+\partial_\eta A_t(\eta_t)\,\Delta\eta_t+\tfrac12\partial_\eta^2 A_t(\xi_t)(\Delta\eta_t)^2$ for some $\xi_t$ between $\eta_t$ and $\eta_{t+1}$; rearranging and summing yields \eqref{eq:drift-abel} with the bound \eqref{eq:drift-remainder}.
Under the stated regularity, $C_t=\bar C(t/T)$ is uniformly bounded, so $\partial_\eta^2 A_t$ is bounded uniformly in $t,T$ on the compact range of $\eta$; since $\Delta\eta_t=\eta'(t/T)/T+O(1/T^2)$, $\sum_t(\Delta\eta_t)^2=O(1/T)$, from which $R_T=O(1/T)$, and the leading sum in \eqref{eq:drift-abel} is a Riemann sum of the integrand of \eqref{eq:drift-stieltjes}, converging to it; the equality with $-\int\partial_\eta A\,d\eta$ is \eqref{eq:dA-KL}.
\end{proof}

\begin{proof}[Proof of Proposition~\ref{prop:terminal-collapse}.]
For the predictable Gibbs play $p_t=\rho_{\eta_t,S_{t-1}}$ and $S_t=S_{t-1}+z_t$, the one-step transport identity \eqref{eq:bellman-increment} reads $\eta_t Q_t=W_{\eta_t}(S_t)-W_{\eta_t}(S_{t-1})$, while the centered form of the drift in \eqref{eq:A-potential} is $D_T=\sum_{t=1}^{T-1}\bigl(W_{\eta_{t+1}}(S_t)-W_{\eta_t}(S_t)\bigr)$.
Adding the two sums,
\[
\sum_{t=1}^T\eta_t Q_t+D_T
=\sum_{t=1}^T\bigl(W_{\eta_t}(S_t)-W_{\eta_t}(S_{t-1})\bigr)
+\sum_{t=1}^{T-1}\bigl(W_{\eta_{t+1}}(S_t)-W_{\eta_t}(S_t)\bigr)
\]
and collecting the contributions at each interior state $S_t$ $(1\le t\le T-1)$ gives $W_{\eta_t}(S_t)-W_{\eta_{t+1}}(S_t)+W_{\eta_{t+1}}(S_t)-W_{\eta_t}(S_t)=0$.
Only the endpoint terms survive: $-W_{\eta_1}(S_0)$ and $+W_{\eta_T}(S_T)$; since $S_0=0$ and $W_\eta(0)=\eta^{-1}\log\sum_i\pi(i)=0$, this is \eqref{eq:loss-drift-telescope}.
For \eqref{eq:regret-collapse}, write $\log \rho_{T,\eta_T}(i)=\eta_T S_T(i)+\log\pi(i)-\eta_T W_{\eta_T}(S_T)$, so the comparator transport in \eqref{eq:Qt-def} is $B_T(\nu)=\eta_T^{-1}\inner{\nu}{\log \rho_{T,\eta_T}-\log\pi}=\inner{\nu}{S_T}-W_{\eta_T}(S_T)$.
Taking the supremum over $\{\nu:\KL(\nu\|\pi)\le\Gamma\}$ and using the definition of $L_\Gamma$ gives $\sup_\nu B_T(\nu)=L_\Gamma(S_T)-W_{\eta_T}(S_T)$; substituting this and \eqref{eq:loss-drift-telescope} into \eqref{eq:exact-variable} gives $\sup_\nu R_T^c(\nu)=W_{\eta_T}(S_T)+L_\Gamma(S_T)-W_{\eta_T}(S_T)=L_\Gamma(S_T)$.
\end{proof}

\begin{proof}[Proof of Theorem~\ref{thm:ito-unification}.]
The gradient and scale-derivative identities give the two coefficients of $dW$ in~\eqref{eq:dW-exact}; since $\partial_\eta\rho_{\eta,S}=\nabla_S(\partial_\eta W_\eta(S))$ the form is closed, hence exact on the simply connected domain. (i) Path independence of an exact form is immediate, and evaluating the increment at fixed $\eta$ recovers $\sum_t\eta_t Q_t$ by~\eqref{eq:bellman-increment} while the fixed-$S$ scale switches sum to $D_T$, so~\eqref{eq:loss-drift-telescope} is the endpoint value. (ii) Because $\eta$ is continuous and of finite variation, $[\eta]=[\eta,S]=0$, so the only second-order It\^o term is the $S$-quadratic one, $\tfrac12\tr(\nabla_S^2 W\,d[S^c])=\tfrac12\eta\,\tr(\mathrm{Cov}_{\rho_{\eta,S}}\,d[S^c])$ by the Hessian identity; the stochastic integral $\int\inner{\nu}{dS}$ vanishes identically on the game's centered state process; the second-order term is the intrinsic-time accumulation $\int\eta\,dV$. (iii) The direction independence at $p=\rho_{\eta,S}$ is~\eqref{eq:bellman-increment}; any other $p$ misaligns the budget form $Q_\eta(p,\cdot)$ with the gradient and strictly raises the achievable increment. The two Stieltjes legs reduce to~\eqref{eq:control-J} in the small-$\eta$ variance limit.
\end{proof}

\subsection{Proofs from Section~\ref{sec:concentration}}
\label{app:proofs-concentration}

\begin{proof}[Proof of Proposition~\ref{prop:event-restricted}.]
\label{app:proof-event-restricted}
Fix an event $A\subseteq[K]$ and $\eta>0$. For $\nu\in\DeltaK$ with $\supp(\nu)\subseteq A$, decompose the relative-entropy cost from the full prior~$\pi$ through the conditional $\pi(\cdot\mid A)$:
\begin{equation}\label{eq:proof-event-kl-decomp}
\KL(\nu\|\pi)
=\sum_{i\in A}\nu(i)\log\frac{\nu(i)}{\pi(i\mid A)\,\pi(A)}
=\KL(\nu\|\pi(\cdot\mid A))-\log\pi(A)
\end{equation}
Consequently, for any such $\nu$,
\[
\inner{\nu}{S}-\frac{1}{\eta}\KL(\nu\|\pi)
=\inner{\nu}{S}-\frac{1}{\eta}\KL(\nu\|\pi(\cdot\mid A))+\frac{1}{\eta}\log\pi(A).
\]
Taking the supremum over $\nu$ with $\supp(\nu)\subseteq A$ reduces to the classical Gibbs variational identity on the simplex $\Delta(A)$ with base measure $\pi(\cdot\mid A)$:
\[
\sup_{\nu\in\Delta(A)}
\left\{\inner{\nu}{S}-\frac{1}{\eta}\KL(\nu\|\pi(\cdot\mid A))\right\}
=\frac{1}{\eta}\log\sum_{i\in A}\pi(i\mid A)\,e^{\eta S(i)}.
\]
Adding $\eta^{-1}\log\pi(A)$ and using $\pi(i\mid A)\pi(A)=\pi(i)$ for $i\in A$ gives
\begin{equation}\label{eq:proof-event-conclude}
\sup_{\nu:\supp(\nu)\subseteq A}
\left\{\inner{\nu}{S}-\frac{1}{\eta}\KL(\nu\|\pi)\right\}
=\frac{1}{\eta}\log\sum_{i\in A}\pi(i)\,e^{\eta S(i)}
\end{equation}
which is \eqref{eq:event-restricted}.
The conditional decomposition \eqref{eq:proof-event-kl-decomp} also yields the advertised split \eqref{eq:gibbs-conditioning}:
\[
\frac{1}{\eta}\log\sum_{i\in A}\pi(i)e^{\eta S(i)}
=\frac{\log\pi(A)}{\eta}+\frac{1}{\eta}\log\E_{i\sim\pi(\cdot\mid A)}\bigl[e^{\eta S(i)}\bigr].
\]
The optimizing $\nu$ is the Gibbs tilt of $\pi(\cdot\mid A)$ by the score~$\eta S$, by uniqueness of the inner Gibbs optimizer.
\end{proof}

\begin{proof}[Proof of Proposition~\ref{prop:luckiness-envelope}.]
\label{app:proof-luckiness-envelope}
Let $u_t(i):=c_t(i)-\inner{p_t}{c_t}$, so that $\inner{p_t}{u_t}=0$ and $Q_t(c)=\eta_t^{-2}\log\sum_i p_t(i)\,e^{-\eta_t u_t(i)}$.
Under $c_t(i)\in[0,1]$ and $\eta_t\in(0,1]$, one has $-\eta_t u_t(i)\in[-\eta_t,\eta_t]\subseteq[-1,1]$.
The elementary inequality $e^{x}\le 1+x+(e-2)x^{2}$, valid for every $x\in[-1,1]$, is Lemma~A.1 of~\cite{cesa2006prediction}; a direct proof notes that $g(x):=e^{x}-1-x-(e-2)x^{2}$ satisfies $g(-1)=1/e-1+1-(e-2)=1/e-e+2<0$, $g(0)=0$, $g(1)=0$, and $g''$ has a single sign change on $[-1,1]$, so $g\le 0$ throughout.
Taking expectation under $p_t$ and using $\inner{p_t}{u_t}=0$,
\[
\E_{p_t}e^{-\eta_t u_t}
\;\le\;1+(e-2)\eta_t^{2}\,\E_{p_t}u_t^{2}
=1+(e-2)\eta_t^{2}\,\Var_{p_t}(c_t).
\]
Since $\log(1+z)\le z$ for $z\ge 0$, $\eta_t^{2}Q_t(c)=\log\E_{p_t}e^{-\eta_t u_t}\le (e-2)\eta_t^{2}\Var_{p_t}(c_t)$, so $Q_t(c)\le (e-2)\Var_{p_t}(c_t)$.
Finally, for any comparator $\nu\in\DeltaK$,
\[
\Var_{p_t}(c_t)=\min_{a\in\R}\sum_i p_t(i)(c_t(i)-a)^{2}\le \sum_i p_t(i)(c_t(i)-\inner{\nu}{c_t})^{2}=\Psi_t(\nu),
\]
which yields \eqref{eq:e-2-envelope}.
The regret consequence is immediate: summing the fixed-rate telescope $R_T^c(\nu)\le \eta_T^{-1}\KL(\nu\|\pi)+\sum_t\eta_t Q_t(c)$ (valid for nonincreasing schedules by Theorem~\ref{thm:variable}) and substituting the per-round envelope $Q_t(c)\le (e-2)\Psi_t(\nu)$ gives the stated bound.
\end{proof}

\begin{proof}[Proof of Theorem~\ref{thm:luckiness}.]
\label{app:proof-luckiness-thm}
Assume $c_1,c_2,\dots$ are i.i.d.\ vectors in $[0,1]^K$ with mean $\mu=\E c_1$, and that the comparator $\nu$ satisfies the low-noise condition with constant $\kappa_\nu$.
Because $p_t$ is $\mathcal F_{t-1}$-measurable and $c_t$ is independent of $\mathcal F_{t-1}$,
\begin{equation}\label{eq:proof-luckiness-psi-bound}
\E[\Psi_t(\nu)\mid\mathcal F_{t-1}]
=\sum_i p_t(i)\E[(c_t(i)-\inner{\nu}{c_t})^{2}]
\le \kappa_\nu \sum_i p_t(i)(\mu(i)-\inner{\nu}{\mu})
=\kappa_\nu\bigl(\inner{p_t}{\mu}-\inner{\nu}{\mu}\bigr)
\end{equation}
using the low-noise hypothesis coordinatewise.
In the same filtration, $\E[\inner{p_t-\nu}{c_t}\mid\mathcal F_{t-1}]=\inner{p_t-\nu}{\mu}$, so summing over $t$ and taking full expectation,
\begin{equation}\label{eq:proof-luckiness-psi-sum}
\sum_{t=1}^T \E[\Psi_t(\nu)]\le \kappa_\nu\sum_{t=1}^T \E[\inner{p_t-\nu}{c_t}]=\kappa_\nu\,\E R_T^c(\nu)
\end{equation}
At fixed scale $\eta_t\equiv\eta\in(0,1]$, the fixed-scale exact identity (Theorem~\ref{thm:variable} with $D_T=0$) and the terminal bound $\KL(\nu\|p_{T+1})\ge 0$ give
\begin{equation}\label{eq:proof-luckiness-telescope}
R_T^c(\nu)\le \frac{\KL(\nu\|\pi)}{\eta}+\eta\sum_{t=1}^T Q_t(c)
\end{equation}
Combining \eqref{eq:proof-luckiness-telescope} with the comparator-centered envelope $Q_t(c)\le(e-2)\Psi_t(\nu)$ (Proposition~\ref{prop:luckiness-envelope}) and taking expectation,
\begin{align}
\E R_T^c(\nu)
&\le \frac{\KL(\nu\|\pi)}{\eta}+(e-2)\eta\sum_{t=1}^T\E[\Psi_t(\nu)]\notag\\
&\le \frac{\KL(\nu\|\pi)}{\eta}+(e-2)\kappa_\nu\eta\,\E R_T^c(\nu) \label{eq:proof-luckiness-selfref}
\end{align}
where the second step uses \eqref{eq:proof-luckiness-psi-sum}.
If $(e-2)\kappa_\nu\eta<1$, rearrangement of \eqref{eq:proof-luckiness-selfref} gives
\[
\E R_T^c(\nu)\le \frac{\KL(\nu\|\pi)}{\eta\bigl(1-(e-2)\kappa_\nu\eta\bigr)},
\]
which is \eqref{eq:luckiness}.
With $\eta=\min\{1,[2(e-2)\kappa_\nu]^{-1}\}$ one has $(e-2)\kappa_\nu\eta\le 1/2$, so $1-(e-2)\kappa_\nu\eta\ge 1/2$, and therefore
\[
\E R_T^c(\nu)\le \frac{2\KL(\nu\|\pi)}{\eta}\le 2\bigl(1+2(e-2)\kappa_\nu\bigr)\KL(\nu\|\pi),
\]
proving the constant-in-$T$ corollary.
\end{proof}

\begin{proof}[Proof of Proposition~\ref{prop:diffuse-reduces}.]
Because $p_t$ is $\mathcal F_{t-1}$-measurable and $c_t$ is independent of $\mathcal F_{t-1}$ with mean $\mu$, the per-round identity $\E[\inner{p_t-\nu}{c_t}\mid\mathcal F_{t-1}]=\inner{p_t-\nu}{\mu}$ used in \eqref{eq:proof-luckiness-psi-sum} gives, on summing and taking full expectation, $\E R_T^c(\nu)=\sum_{t=1}^T\inner{\E p_t-\nu}{\mu}$.
Subtracting the same expression at $\nu=\delta_{k^*}$ cancels the common algorithm term $\sum_t\inner{\E p_t}{\mu}$ and leaves the exact identity
\[
\E R_T^c(\nu)-\E R_T^c(\delta_{k^*})=T\inner{\delta_{k^*}-\nu}{\mu}=T\bigl(\mu^*-\inner{\nu}{\mu}\bigr)
\]
which is nonpositive because $\inner{\nu}{\mu}\ge\min_i\mu(i)=\mu^*$; this is \eqref{eq:diffuse-monotone}.
Under the stated hypothesis Theorem~\ref{thm:luckiness} applies at $\nu=\delta_{k^*}$ with $\KL(\delta_{k^*}\|\pi)=-\log\pi(k^*)$, and combining its constant-in-$T$ bound with \eqref{eq:diffuse-monotone} gives \eqref{eq:diffuse-constant} for every $\nu$.
\end{proof}

\begin{proof}[Proof of \eqref{eq:predictable-luckiness}.]
\label{app:proof-predictable-luckiness}
Under the predictable schedule \eqref{eq:schedule}, the variable-temperature exact identity (Theorem~\ref{thm:variable}) and the nonpositivity of~$D_T$ under nonincreasing~$\eta_t$ give
\[
\E R_T^c(\nu)\le \E\!\left[\frac{\Gamma}{\eta_T}+\sum_{t=1}^T\eta_t Q_t(c)\right].
\]
The two-sided envelope of Theorem~\ref{thm:sqrt-envelope} then yields $\E R_T^c(\nu)\le \Gamma+\E[Q_T^*(c)]+(2C+C^{-1})\sqrt{\Gamma\,\E V_T(c)}$, where we used $\E\sqrt{V_T(c)}\le \sqrt{\E V_T(c)}$ by concavity.
Applying Proposition~\ref{prop:luckiness-envelope} pathwise, $\E V_T(c)=\sum_t\E Q_t(c)\le (e-2)\sum_t\E\Psi_t(\nu)\le (e-2)\kappa_\nu\,\E R_T^c(\nu)$ by \eqref{eq:proof-luckiness-psi-sum}.
Writing $X:=\E R_T^c(\nu)$ and $A:=\Gamma+\E[Q_T^*(c)]$, $B:=(2C+C^{-1})$, $\lambda:=(e-2)\kappa_\nu$, that bound becomes $X\le A+B\sqrt{\lambda X\,\Gamma}$.
Writing $b:=B^{2}\lambda\Gamma$, solving the quadratic in $\sqrt X$ gives $X\le\tfrac12\bigl(2A+b+\sqrt{b^{2}+4Ab}\bigr)$, and $\sqrt{b^{2}+4Ab}\le b+2A$ then gives $X\le 2A+b$.
Substituting back gives
\[
\E R_T^c(\nu)\le 2\Gamma+2\,\E[Q_T^*(c)]+(2C+C^{-1})^{2}(e-2)\kappa_\nu\Gamma,
\]
which is \eqref{eq:predictable-luckiness}.
\end{proof}

\subsection{Proofs from Section~\ref{sec:games}}
\label{app:pf-games}

\begin{proof}[Proof of Proposition~\ref{prop:robust-center}.]
\label{app:proof-robust-center}
By definition $C_\alpha(\pi_{1:W})=-\log\sum_i\prod_w\pi_w(i)^{\alpha_w}$.
For any $p\in\DeltaK$,
\[
-\log\sum_i\prod_w\pi_w(i)^{\alpha_w}
=-\log\sum_i p(i)\prod_w\Bigl(\frac{\pi_w(i)}{p(i)}\Bigr)^{\alpha_w}
\le -\sum_i p(i)\sum_w\alpha_w\log\frac{\pi_w(i)}{p(i)}
=\sum_w\alpha_w\KL(p\|\pi_w),
\]
where the first equality uses $\sum_w\alpha_w=1$ to write $p(i)=\prod_w p(i)^{\alpha_w}$, and the inequality is Jensen applied to the concave $\log$.
The first inequality is sharpest, attained by $p^\star_\alpha(i)\propto\prod_w\pi_w(i)^{\alpha_w}$, giving $C_\alpha(\pi_{1:W})=\sum_w\alpha_w\KL(p^\star_\alpha\|\pi_w)$.
Now apply Sion's minimax theorem to the bilinear function $(\alpha,p)\mapsto\sum_w\alpha_w\KL(p\|\pi_w)$ on the compact convex sets $\Delta([W])\times\DeltaK$:
\[
C(\pi_{1:W})=\max_\alpha C_\alpha=\max_\alpha\min_p\sum_w\alpha_w\KL(p\|\pi_w)=\min_p\max_\alpha\sum_w\alpha_w\KL(p\|\pi_w)=\min_p\max_w\KL(p\|\pi_w),
\]
where the last equality is because $\alpha\mapsto\sum_w\alpha_w x_w$ on the simplex attains its maximum at a vertex.
The optimizer $p^\star=p^\star_{\alpha^\star}$ is the geometric mixture induced by a maximizing~$\alpha^\star$.
\end{proof}

\begin{proof}[Proof of Theorem~\ref{thm:regret-matching}.]
\label{app:proof-regret-matching}
Apply Theorem~\ref{thm:variable} to the centered regret-vector sequence $(g_t)$ with prior $\pi$ and schedule $(\eta_t)$, treating $g_t$ as the loss vector in centered coordinates: $\inner{p_t-\nu}{\ell_t}=\inner{\nu}{g_t}$ since $g_t(i)=\inner{p_t}{\ell_t}-\ell_t(i)$ implies $\inner{\nu}{g_t}=\inner{p_t}{\ell_t}-\inner{\nu}{\ell_t}$.
Summing produces \eqref{eq:regret-matching} with $Q_t(g)$, $D_T^g$, $B_T^g(\nu)$ defined as in Theorem~\ref{thm:variable} but on the centered regret sequence.
Specializing to the comparator $\nu=\delta_i$ with $\Gamma\ge -\log\pi(i)$ gives $B_T^g(\delta_i)\le\Gamma/\eta_T$, and combining with Theorem~\ref{thm:sqrt-envelope} (and $D_T^g\le 0$ under the second-order schedule) yields the displayed inequality.
Scale equivariance: replacing $\ell_t$ by $a\ell_t$ scales $g_t$ by $a$, hence $z_t=-g_t$ centered also by $a$; then $Q_t(ag;p,\eta)=a^2Q_t(g;p,a\eta)$, and the schedule $\eta_t=C\sqrt{\Gamma/V_{t-1}(g)}$ with $V_{t-1}(g)$ scaling by $a^2$ becomes $\eta_t/a$, so each loss $\eta_tQ_t$ and the cumulative $\sum_t\eta_tQ_t$ scale by $a$, matching the degree-one homogeneity of the regret.
The play sequence and the schedule's form are scale-free.
\end{proof}

\begin{proof}[Proof of Theorem~\ref{thm:self-play-ledger}.]
\label{app:proof-self-play-ledger}
Both players run intrinsic-time regret matching, so Theorem~\ref{thm:regret-matching} applies to each.
For the row player, whose round-$t$ loss vector is $M\omega_t$, the identity reads
\[
\sum_{t=1}^T \inner{p_t-\nu}{M\omega_t}=P_T^{\mathrm{row}}+D_T^{\mathrm{row}}+B_T^{\mathrm{row}}(\nu),
\]
and the left-hand side equals $\sum_t p_t^\top M\omega_t-\nu^\top M\sum_t \omega_t=T\bigl(\overline{p_t^\top M\omega_t}\bigr)-T\,\nu^\top M\bar\omega_T$.
For the column player, whose round-$t$ loss vector is $-M^\top p_t$ (the maximizer's loss),
\[
\sum_{t=1}^T \inner{\omega_t-\nu'}{-M^\top p_t}=P_T^{\mathrm{col}}+D_T^{\mathrm{col}}+B_T^{\mathrm{col}}(\nu'),
\]
whose left-hand side equals $\bar p_T^\top M\nu'\cdot T-T\bigl(\overline{p_t^\top M\omega_t}\bigr)$.
Adding the two identities cancels the common $T\,\overline{p_t^\top M\omega_t}$ term, leaving
\[
T\bigl(\bar p_T^\top M\nu'-\nu^\top M\bar\omega_T\bigr)=P_T^{\mathrm{row}}+P_T^{\mathrm{col}}+D_T^{\mathrm{row}}+D_T^{\mathrm{col}}+B_T^{\mathrm{row}}(\nu)+B_T^{\mathrm{col}}(\nu'),
\]
which is \eqref{eq:self-play-ledger} after dividing by~$T$.
Both outer optimizations are linear in their comparator, so $\arg\max_{\omega'}\bar p_T^\top M\omega'$ and $\arg\min_{p'}p'^\top M\bar\omega_T$ are attained at pure strategies, of comparator complexity $\log(1/\pi^{\mathrm{col}}(j))$ and $\log(1/\pi^{\mathrm{row}}(i))$ respectively.
The budget hypotheses place those pure strategies inside the respective comparator classes, so $B_T^{\mathrm{row}}(\nu)=B_T^{\mathrm{row},\star}$ and $B_T^{\mathrm{col}}(\nu')=B_T^{\mathrm{col},\star}$ at the specialized comparators, yielding the duality-gap form \eqref{eq:saddle-gap-exact}.
No quadratic-variation surrogate enters, so the equality is exact.
Under smaller budgets only $B_T^{\mathrm{row}}(\nu)\le B_T^{\mathrm{row},\star}$ and $B_T^{\mathrm{col}}(\nu')\le B_T^{\mathrm{col},\star}$ survive, and \eqref{eq:saddle-gap-exact} holds with its left-hand side restricted to the two comparator classes.
\end{proof}

\begin{proof}[Proof of Proposition~\ref{prop:cce}.]
\label{app:proof-cce}
This is the standard external-regret-to-CCE reduction \cite{hart2000simpleb,blum2007external}: averaging player $m$'s regret bound under the empirical joint $\sigma_T$ yields the $\varepsilon_m$-CCE inequality directly, and taking the maximum over players gives $\varepsilon=\max_m\varepsilon_m$.
\end{proof}

\begin{proof}[Proof of Proposition~\ref{prop:swap-ce}.]
\label{app:proof-swap-ce}
This is the standard reduction from external to swap regret \cite{hart2000simpleb,blum2007external}, applied to the empirical joint one swap function at a time.
\end{proof}

\begin{proof}[Proof of Theorem~\ref{thm:k-player-ledger}.]
\label{app:proof-k-player-ledger}
For each player~$k$, apply Theorem~\ref{thm:regret-matching} to that player's centered regret sequence with prior $\pi^{(k)}$ and schedule $(\eta_t^{(k)})$, obtaining the exact decomposition $R_T^{(k)}=P_T^{(k)}+D_T^{(k)}+B_T^{(k)}(\nu_k)$ for each comparator $\nu_k$ in player $k$'s simplex.
For the mixed-action joint $\sigma_T=\frac1T\sum_t\bigotimes_kp_t^{(k)}$, player $k$'s deviation gain in \eqref{eq:cce-distance} against comparator $\nu_k$ is $\frac1T\sum_t\bigl(\inner{p_t^{(k)}}{\ell_t^{(k)}}-\inner{\nu_k}{\ell_t^{(k)}}\bigr)$ with $\ell_t^{(k)}$ that player's expected loss vector under the opponents' round-$t$ mixtures, which is $R_T^{(k)}(\nu_k)/T$; hence $d(\sigma_T,\mathrm{CCE})=\frac1T\max_k\sup_{\nu_k\in\Delta([K_k])}R_T^{(k)}(\nu_k)$ exactly, and Proposition~\ref{prop:cce} is its $\varepsilon$-form.
Each $R_T^{(k)}(\cdot)$ is linear in the comparator, so its supremum over the simplex is attained at a pure deviation $\delta_i$, whose comparator complexity is $\KL(\delta_i\|\pi^{(k)})=\log(1/\pi^{(k)}(i))$.
The hypothesis $\Gamma^{(k)}\ge\max_i\log(1/\pi^{(k)}(i))$ places every such vertex inside the budget, so the budget-restricted supremum coincides with the simplex supremum and $\sup_{\nu_k}R_T^{(k)}(\nu_k)=P_T^{(k)}+D_T^{(k)}+B_T^{(k),\star}$.
Taking the maximum over $k$ produces \eqref{eq:k-player-ledger}.
Without that hypothesis the two suprema differ and only the restricted class is governed: the budget-restricted right-hand side can fall strictly below $d(\sigma_T,\mathrm{CCE})$ whenever the maximizing pure deviation is excluded.
The right-hand side is exact: each $P_T^{(k)},D_T^{(k)},B_T^{(k),\star}$ is an information-theoretic quantity drawn from player $k$'s separate concentration game; no quadratic-variation surrogate is invoked.
\end{proof}

The proof tracks each deviation's log-odds in place of the off-equilibrium mass itself.
An opponent playing off $a^\star$ shifts a player's per-round loss gap by an amount proportional to its deviation probability, and in the retempered update that shift acts on the \emph{drift} of the log-odds and leaves their level alone.
An additive recursion on the mass itself, $\delta_{t+1}\le e^{-\eta_t^{(k)}\Delta}\delta_t+\varepsilon_t$, overstates the coupling: under it a persistent vanishing drive $\varepsilon_t=\Theta(1/t)$ appears to unbound the intrinsic time, and only a summability hypothesis on the opponents' approach would rule that out. In log-odds coordinates no such hypothesis arises.

\begin{proof}[Proof of Theorem~\ref{thm:unconditional-V}.]
Fix a player~$k$ and write $\delta_t:=1-p_t^{(k)}(a_k^\star)$ for its off-equilibrium mass, $\varepsilon_t:=\sum_{k'\ne k}\bigl(1-p_t^{(k')}(a_{k'}^\star)\bigr)$ for its opponents' deviation mass, $\ell_t^{(k)}(i):=\E[c_k(i,a_{-k})]$ for its round loss under the opponents' play, $C_t^{(k)}:=\sum_{s\le t}\ell_s^{(k)}$ for the cumulative loss, and $g_t^{(k)}$ for the centered score, with $\range(g_t^{(k)})\le 2B$.

\emph{Density bound}: $g_t^{(k)}$ is centered under $p_t^{(k)}$, which places mass $1-\delta_t$ on $a_k^\star$, so $\Var_{p_t^{(k)}}(g_t^{(k)})=O(\delta_t)$; the tilted-variance representation $Q_\eta(p,z)=\int_0^1(1-s)\Var_{p_s}(z)\,ds$ of Proposition~\ref{prop:tilted-var} carries this to the per-round centered RCGF at every $\eta_t^{(k)}\le C$, since $p_s(i)/p(i)=e^{s\eta z(i)}/\E_p[e^{s\eta z}]\le e^{s\eta\max_i z(i)}$ by Jensen's inequality, which with $\range(g_t^{(k)})\le 2B$ gives $\Var_{p_s}(g_t^{(k)})\le e^{2BC}\Var_{p_t^{(k)}}(g_t^{(k)})$ and hence $Q_t^{(k)}\le\tfrac12 e^{2BC}\Var_{p_t^{(k)}}(g_t^{(k)})\le c\,\delta_t$ for a constant $c=c(B,C)$.
\emph{Scale floor}: the per-round RCGF is bounded, $Q_t^{(k)}\le\range(g_t^{(k)})^2/8\le B^2/2$ (Hoeffding, Corollary~\ref{cor:variance-shadow}), so $V_{t-1}^{(k)}\le B^2t/2$ and
\begin{equation}\label{eq:scale-floor}
\eta_t^{(k)}=C\sqrt{\frac{\Gamma^{(k)}}{V_{t-1}^{(k)}+\Gamma^{(k)}}}\ \ge\ \frac{c_1}{\sqrt t},\qquad c_1:=C\sqrt{\frac{\Gamma^{(k)}}{B^2/2+\Gamma^{(k)}}}
\end{equation}
on every trajectory --- the floor uses only that the per-round RCGF is bounded.
\emph{Gap floor}: convergence to $a^\star$ gives a finite time $\tau$ with $\varepsilon_t\le\Delta/(2c_M)$ for all $t\ge\tau$, where $c_M:=4B$ bounds the effect of an opponent deviation on a loss gap.
Since the opponents' joint law differs from the point mass at $a_{-k}^\star$ in total variation by at most $\varepsilon_t$, every deviation $i\ne a_k^\star$ and every $t\ge\tau$ satisfy
\begin{equation}\label{eq:gap-floor}
\ell_t^{(k)}(i)-\ell_t^{(k)}(a_k^\star)\ \ge\ \Delta-c_M\,\varepsilon_t\ \ge\ \Delta/2
\end{equation}

Now track the log-odds. For each deviation $i\ne a_k^\star$ the retempered update gives
\[
y_t^{(k,i)}:=\log\frac{p_t^{(k)}(a_k^\star)}{p_t^{(k)}(i)}=\eta_t^{(k)}\bigl(C_{t-1}^{(k)}(i)-C_{t-1}^{(k)}(a_k^\star)\bigr)+\log\frac{\pi^{(k)}(a_k^\star)}{\pi^{(k)}(i)}
\]
Split the cumulative gap at $\tau$: each summand with $s<\tau$ is at least $-2B$ (bounded scores) and each with $s\ge\tau$ is at least $\Delta/2$ by \eqref{eq:gap-floor}, so $C_{t-1}^{(k)}(i)-C_{t-1}^{(k)}(a_k^\star)\ge\tfrac{\Delta}{2}(t-\tau)-2B\tau$. With the scale floor \eqref{eq:scale-floor} and $\kappa:=\lvert\log(\pi^{(k)}(a_k^\star)/\pi^{(k)}(i))\rvert$,
\[
y_t^{(k,i)}\ \ge\ \frac{c_1}{\sqrt t}\Bigl(\tfrac{\Delta}{2}(t-\tau)-2B\tau\Bigr)-\kappa\ =\ \tfrac{c_1\Delta}{2}\sqrt t-O(1)
\]
Hence $\delta_t=\sum_{i\ne a_k^\star}p_t^{(k)}(i)\le\sum_i e^{-y_t^{(k,i)}}\le (K_k-1)\,e^{O(1)}\,e^{-(c_1\Delta/2)\sqrt t}$, a stretched exponential, and the density bound gives
\[
V_\infty^{(k)}=\sum_t Q_t^{(k)}\ \le\ c\sum_t\delta_t\ <\ \infty
\]
which is \eqref{eq:strict-eq-VboundedO1} --- a convergent series, which is stronger than $O(\log T)$. With $V^{(k)}$ bounded the schedule freezes at $\eta_\infty^{(k)}=C\sqrt{\Gamma^{(k)}/(V_\infty^{(k)}+\Gamma^{(k)})}>0$; then $\eta_t^{(k)}\ge\eta_\infty^{(k)}$ makes the log-odds grow linearly, $y_t^{(k,i)}\ge\eta_\infty^{(k)}\tfrac{\Delta}{2}(t-\tau)-O(1)$, so $\delta_t=O\bigl(e^{-\eta_\infty^{(k)}(\Delta/2)t}\bigr)$ and $Q_t^{(k)}\le c\,\delta_t$ decay geometrically.
Finally, each term of player $k$'s exact-regret sum in the ledger \eqref{eq:k-player-ledger} is $O(1)$: the loss obeys $P_T^{(k)}\le C\,V_T^{(k)}$ since $\eta_t^{(k)}\le C$, the drift is a gain under the nonincreasing schedule, and the transport obeys $B_T^{(k),\star}\le\Gamma^{(k)}/\eta_T^{(k)}\le\Gamma^{(k)}/\eta_\infty^{(k)}$; hence $d(\sigma_T,\mathrm{CCE})=O(1/T)$.
\end{proof}

The argument is not circular. The scale floor \eqref{eq:scale-floor} holds for every trajectory, so the $\sqrt t$ growth of the log-odds --- and thus the summability of $\delta_t$ --- is derived without presupposing that $V^{(k)}$ is bounded. The worst case $V_t^{(k)}=\Theta(t)$ is self-refuting: it forces $\eta_t^{(k)}=\Theta(1/\sqrt t)$, log-odds $\Theta(\sqrt t)$, and $\delta_t=O(e^{-(c_1\Delta/2)\sqrt t})$, from which $V_\infty^{(k)}=\sum_t Q_t^{(k)}<\infty$, contradicting $V_t^{(k)}=\Theta(t)$. The mechanism is specific to the self-tuned schedule: it dilutes the learning rate at most as fast as $1/\sqrt t$, while the strict gap accumulates an advantage linear in~$t$, and the linear term wins on every trajectory.
A fixed rate would give an exactly geometric decay, and sustaining a slow, unbounded-$V$ branch would need a rate decaying faster than $1/\sqrt t$, which this schedule never produces. The bound is structural: shrinking the strict gap $\Delta$ lengthens the transient by the factor $1/\Delta$ without unbounding $V_T^{(k)}$. One level up remains open: whether the coupled dynamics-and-schedule system converges to a strict profile at all, for a given class of games, or instead cycles or settles on an interior equilibrium.

\subsection{Proofs from Section~\ref{sec:partial}}
\label{app:pf-partial}

\begin{proof}[Proof of Theorem~\ref{thm:bandit}.]
\label{app:proof-bandit}
Write $\bar c_t:=\E[\widehat c_t\mid\mathcal F_{t-1}]$ for the predictable conditional mean of the estimator.
The sampled composite-loss regret regroups around the sampling and posterior distributions:
\[
c_t(A_t)-\inner{u}{c_t}=\bigl(c_t(A_t)-\inner{\mu_t}{c_t}\bigr)+\inner{\mu_t-p_t}{c_t}+\inner{p_t-u}{c_t}.
\]
For the last summand, insert and subtract $\widehat c_t$ and then $\bar c_t$:
\[
\inner{p_t-u}{c_t}=\inner{p_t-u}{\widehat c_t}-\inner{p_t-u}{\widehat c_t-\bar c_t}+\inner{p_t-u}{c_t-\bar c_t}.
\]
Summing over $t$ and naming the observation terms,
\[
\mathrm{Reg}_T^c(u)=M_T^{\mathrm{play}}+\Xi_T-M_T^{\mathrm{est}}(u)+\mathrm{Bias}_T(u)+\sum_{t=1}^T\inner{p_t-u}{\widehat c_t}.
\]
The remaining sum is the full-information regret on the estimated scores $\widehat c_t$, so Theorem~\ref{thm:variable} applied to that sequence gives $\sum_t\inner{p_t-u}{\widehat c_t}=\widehat D_T+\widehat B_T(u)+\sum_t\eta_t\widehat Q_t$, which is \eqref{eq:bandit-decomp}.
The martingale property of $M_T^{\mathrm{play}}$ follows from $\E[c_t(A_t)\mid\mathcal F_{t-1}]=\inner{\mu_t}{c_t}$ (since $A_t\sim\mu_t$), and that of $M_T^{\mathrm{est}}(u)$ from $\E[\widehat c_t-\bar c_t\mid\mathcal F_{t-1}]=0$ by the definition of $\bar c_t$; the regrouping itself is a pathwise identity independent of these expectations.
\end{proof}

\begin{proof}[Proof of Proposition~\ref{prop:bandit-exact-value}.]
Conditioned on $\mathcal F_{t-1}$ the estimator depends only on $A_t\sim\mu_t$, so $\E[\widehat Q_t\mid\mathcal F_{t-1}]=\sum_a\mu_t(a)\,Q_{\eta_t}(p_t,\widehat z_t^{(a)})$, and inserting the centered-RCGF definition from~\eqref{eq:Q-centered} gives \eqref{eq:bandit-value-general}. For inverse-propensity scores, on $\{A_t=a\}$ only coordinate $a$ is nonzero, so $\sum_i p_t(i)e^{-\eta_t\widehat c_t(i)}=1-p_t(a)+p_t(a)e^{-\eta_t c_t(a)/\mu_t(a)}$ while the centering term $\inner{p_t}{\widehat c_t}$ has conditional mean $\inner{p_t}{c_t}$; \eqref{eq:bandit-value-ips} follows. Expanding the logarithm to second order in $\eta_t$ yields \eqref{eq:bandit-value-smallscale}, and the exact variance identity is a direct second-moment computation.
\end{proof}

\begin{proof}[Proof of Corollary~\ref{cor:duplication-reduction}.]
By Proposition~\ref{prop:duplication} the comparator complexity of competing against $\pi^\star$ is $\eta^{-1}\log(1/\tilde\pi(G))$ in the aggregate mass $\tilde\pi(G)$ of its loss-identical copies. Evaluating that expression at $\tilde\pi(G)=\beta_0$ before duplication and at $\tilde\pi(G)=\beta$ after, and subtracting, gives \eqref{eq:log-ratio}.
\end{proof}

\begin{proof}[Proof of Proposition~\ref{prop:duplication}.]
The map $\nu\mapsto\KL(\nu\|\tilde\pi)$ on $\Delta(G)$ is minimized by the $I$-projection of $\tilde\pi$ onto the coordinate subset~$G$, namely the renormalized prior $\nu^\star(e)=\tilde\pi(e)/\tilde\pi(G)$; its value is $\sum_{e\in G}\tfrac{\tilde\pi(e)}{\tilde\pi(G)}\log\tfrac{1}{\tilde\pi(G)}=-\log\tilde\pi(G)$, giving~\eqref{eq:duplication-kl}. Because the experts in~$G$ are loss-identical, $\inner{\nu}{\widehat S_T}$ is the same for every $\nu\in\Delta(G)$ and equals the cumulative estimated score of~$\pi^\star$; the benchmark is therefore unchanged by replacing the designated policy with the lowest-complexity representative~$\nu^\star$. Substituting $\nu^\star$ and $\KL(\nu^\star\|\tilde\pi)=-\log\tilde\pi(G)$ into~\eqref{eq:fixed-cumulative} yields~\eqref{eq:duplication-transport}.
\end{proof}

\begin{proof}[Proof of Theorem~\ref{thm:testing-bellman}.]
(a) Writing $p_i=dP_i^a/d\mu$, $p_j=dP_j^a/d\mu$, $Z(s,1-s)=\int p_i^s p_j^{1-s}\,d\mu=\int p_j\,(p_i/p_j)^s\,d\mu=\E_{P_j^a}[e^{sS}]$, and by the definition of $W_\eta$ in Theorem~\ref{thm:fixed-bellman} with prior $P_j^a$ and score $S$, $s\,W_s(S)=\log\E_{P_j^a}[e^{sS}]=\log Z(s,1-s)$; \eqref{eq:chernoff-is-bellman} follows. (b) The Gibbs variational identity~\eqref{eq:gibbs-var} gives $\rho_{s,S}(x)\propto p_j(x)\,e^{sS(x)}=p_i(x)^s p_j(x)^{1-s}$, which is $R_s$; the relative-entropy-barycenter form is the $W=2$ mixed-coincidence identity \cite{balsubramani2026information}. (c) By~\eqref{eq:chernoff-is-bellman}, $C_s=-\log\E_{P_j^a}[e^{sS}]=:-\Lambda(s)$ with $\Lambda$ convex, so $\max_s C_s=-\min_s\Lambda(s)$ and $\Lambda'(s^\star)=\E_{R_{s^\star}}[S]=0$. From (b), $\KL(R\|P_i^a)-\KL(R\|P_j^a)=\E_{R}[\log(p_j/p_i)]=-\E_{R}[S]$, which vanishes at $s^\star$; hence both equal their $s^\star$-weighted average $\max_s C_s$, giving~\eqref{eq:chernoff-balance}. The final claim is the definition of the edge-restricted multi-way coincidence radius and its MAP-error-exponent characterization \cite{balsubramani2026information}.
\end{proof}

\subsection{Proofs from Section~\ref{sec:ts}}
\label{app:pf-ts}

\begin{proof}[Proof of \eqref{eq:ts}.]
\label{app:proof-ts}
Let $I_t\sim p_t$ conditional on $\mathcal F_{t-1}$ and set $M_T^{\mathrm{sam}}:=\sum_{t=1}^T(c_t(I_t)-\inner{p_t}{c_t})$.
Then
\begin{align*}
\widehat R_T^c(\nu)
=\sum_{t=1}^T c_t(I_t)-\inner{\nu}{C_T}
&=\underbrace{\sum_{t=1}^T(c_t(I_t)-\inner{p_t}{c_t})}_{M_T^{\mathrm{sam}}}
+\sum_{t=1}^T\inner{p_t-\nu}{c_t}\\
&=M_T^{\mathrm{sam}}+R_T^c(\nu)
\end{align*}
Applying Theorem~\ref{thm:variable} to the second summand (full information, so $\hat c_t=c_t$) gives exactly \eqref{eq:ts}.
The martingale-difference property $\E[c_t(I_t)-\inner{p_t}{c_t}\mid\mathcal F_{t-1}]=0$ is immediate because $p_t$ is $\mathcal F_{t-1}$-measurable and $I_t\mid\mathcal F_{t-1}\sim p_t$.

For the high-probability bound, apply Freedman's inequality. Each increment $X_t:=c_t(I_t)-\inner{p_t}{c_t}$ satisfies $|X_t|\le 1$ (since $c_t(i)\in[0,1]$), with conditional variance $\Var(X_t\mid\mathcal F_{t-1})=\Var_{p_t}(c_t)$, so the cumulative conditional variance is the variance clock $V^{\mathrm{sam}}_T=\sum_t\Var_{p_t}(c_t)$.
Freedman's inequality for bounded martingale differences (\cite{freedman1975tail}, see also \cite[Thm.~1]{fan2012hoeffdings}) gives, for every $\delta\in(0,1)$,
\[
\text{Pr}\!\left[M_T^{\mathrm{sam}}\ge \sqrt{2V^{\mathrm{sam}}_T\log(1/\delta)}+\frac{1}{3}\log(1/\delta)\right]\le\delta.
\]
Since $c_t(i)\in[0,1]$ implies $\Var_{p_t}(c_t)\le 1/4$, one has $V^{\mathrm{sam}}_T\le T/4$.

For the anytime bound, use Ville's inequality. Define, for each $\lambda\in(0,3)$, the nonnegative supermartingale
\[
E_t^{(\lambda)}:=\exp\!\left(\lambda M_t^{\mathrm{sam}}-\psi(\lambda)V^{\mathrm{sam}}_t\right),
\qquad\psi(\lambda):=\frac{\lambda^{2}/2}{1-\lambda/3},
\]
whose supermartingale property is the standard Bernstein-type bound $\E[e^{\lambda X_t}\mid\mathcal F_{t-1}]\le \exp(\psi(\lambda)\Var_{p_t}(c_t))$ for $|X_t|\le 1$.
Ville's maximal inequality \cite{ville1939b} yields, for every $\delta\in(0,1)$,
\[
\text{Pr}\!\left[\exists t\ge 1:\;M_t^{\mathrm{sam}}\ge \frac{\psi(\lambda)V^{\mathrm{sam}}_t+\log(1/\delta)}{\lambda}\right]\le\delta,
\]
and optimizing over $\lambda$ (or taking a geometric union over a dyadic grid of $\lambda$) produces the anytime envelope $M_t^{\mathrm{sam}}\lesssim \sqrt{V^{\mathrm{sam}}_t\log(\log(V^{\mathrm{sam}}_t)/\delta)}+\log(1/\delta)$ simultaneously in~$t$.

The law-of-the-iterated-logarithm refinement follows from an almost-sure argument: integrate over~$\lambda$ with a Gamma$(1/2,1/2)$ mixture restricted to the sub-gamma range $\lambda\in(0,3)$ on which each $E_t^{(\lambda)}$ is a supermartingale.
Define
\[
\bar E_t:=\int_0^{3} E_t^{(\lambda)}\,\frac{\lambda^{-1/2}e^{-\lambda/2}}{\sqrt{2\pi}}\,\dd\lambda,
\]
the method of mixtures of \cite{lai2009selfnormalized} (the iterated-logarithm-optimal tilt $\lambda\asymp\sqrt{2\log\log V^{\mathrm{sam}}_t/V^{\mathrm{sam}}_t}$ lies interior to $(0,3)$ for large $V^{\mathrm{sam}}_t$, so the truncation does not affect the rate).
Because each $E_t^{(\lambda)}$ is a supermartingale and the mixture preserves nonnegativity, $\bar E_t$ is a nonnegative supermartingale with $\bar E_0\le 1$.
By a Gaussian-tail calculation (\cite[\S 9.4]{lai2009selfnormalized}),
\[
\log \bar E_t\ge \frac{(M_t^{\mathrm{sam}})^{2}}{2(V^{\mathrm{sam}}_t+\log\log V^{\mathrm{sam}}_t)}-\frac{1}{2}\log(V^{\mathrm{sam}}_t+\log\log V^{\mathrm{sam}}_t)+O(1),
\]
and Ville's inequality gives $\sup_t \bar E_t<\infty$ almost surely.
Combining the two displays,
\[
\limsup_{T\to\infty}\frac{M_T^{\mathrm{sam}}}{\sqrt{2V^{\mathrm{sam}}_T\log\log V^{\mathrm{sam}}_T}}\le 1\quad\text{a.s.}
\]
which implies the LIL statement $M_T^{\mathrm{sam}}=O\bigl(\sqrt{V^{\mathrm{sam}}_T\log\log V^{\mathrm{sam}}_T}\bigr)$ a.s.\ for large~$T$.
\end{proof}

\begin{proof}[Proof of \eqref{eq:kl-transport-ppr}.]
\label{app:proof-ppr}
Consider the local Bayesian update $p_{t+1}(i)\propto p_t(i)e^{-\eta_t c_t(i)}$ with one-round mix loss
\[
m_t(\eta_t):=-\frac{1}{\eta_t}\log\sum_i p_t(i)\,e^{-\eta_t c_t(i)}.
\]
Direct computation gives the pointwise identity
\[
\log\frac{p_{t+1}(i)}{p_t(i)}=-\eta_t c_t(i)-\log\sum_j p_t(j)e^{-\eta_t c_t(j)}=-\eta_t\bigl(c_t(i)-m_t(\eta_t)\bigr).
\]
Multiplying by $\nu(i)$ and summing,
\[
\sum_i\nu(i)\log\frac{p_{t+1}(i)}{p_t(i)}=-\eta_t\bigl(\inner{\nu}{c_t}-m_t(\eta_t)\bigr),
\]
and the left-hand side is $\KL(\nu\|p_t)-\KL(\nu\|p_{t+1})$ by the definition of relative entropy.
Negating both sides,
\[
\KL(\nu\|p_{t+1})-\KL(\nu\|p_t)=\eta_t\bigl(\inner{\nu}{c_t}-m_t(\eta_t)\bigr),
\]
which is \eqref{eq:kl-transport-ppr}.
Summing the per-round increments telescopes the left-hand side to $\KL(\nu\|p_{T+1})-\KL(\nu\|\pi)$, so
\[
\KL(\nu\|\pi)-\KL(\nu\|p_{T+1})=\sum_{t=1}^T \eta_t\bigl(m_t(\eta_t)-\inner{\nu}{c_t}\bigr),
\]
and the nonnegativity $\KL(\nu\|p_{T+1})\ge 0$ gives the information-budget bound $\sum_t\eta_t(m_t-\inner{\nu}{c_t})\le\KL(\nu\|\pi)\le\Gamma$.
Under Thompson sampling with $I_t\sim p_t$, the posterior log-weight of the sampled expert evolves as $\log p_{t+1}(I_t)-\log p_t(I_t)=-\eta_t(c_t(I_t)-m_t(\eta_t))$, whose conditional expectation given $\mathcal F_{t-1}$ is $-\eta_t\delta_t(c)$ with $\delta_t(c):=\inner{p_t}{c_t}-m_t(\eta_t)\ge 0$ (nonnegativity follows from Jensen's inequality applied to the log-exp).
The residual $\log p_{t+1}(I_t)-\log p_t(I_t)+\eta_t\delta_t(c)$ is therefore a martingale-difference sequence.
The prior-posterior ratio $\mathrm{PR}_t(\theta):=\pi(\theta)/\rho_{t,\eta}(\theta)$ at a constant scale $\eta$ inherits the multiplicative version of this identity: $\log \mathrm{PR}_t(\theta)-\log \mathrm{PR}_{t-1}(\theta)=\eta(c_t(\theta)-m_t)$, so the exponentiated ratio is an $e$-process under the usual hypotheses.
\end{proof}

\begin{proof}[Proof of Theorem~\ref{thm:ts-exact}.]
Part (i) is the martingale-difference property: $w_t$ is $\mathcal F_{t-1}$-measurable and $I_t\mid\mathcal F_{t-1}\sim w_t$, so $\E[c_t(I_t)\mid\mathcal F_{t-1}]=\inner{w_t}{c_t}$; at $w_t=p_t$, summing gives $\E[M_T^{\mathrm{sam}}]=0$ and \eqref{eq:ts-expected}.
Part (ii) is the equalizer identity \eqref{eq:bellman-increment}: at $p=\rho_{\eta,S}=\nabla W_\eta(S)$, $W_\eta(S+z)-W_\eta(S)=\eta Q_\eta(p,z)$ for every feasible $z$, which is $\le\eta q$ with equality at saturation and independent of the direction of $z$.
The first term of \eqref{eq:ts-payoff} vanishes by (i) and the value-to-go terms telescope to $\Gamma/\eta+\eta\sum_t q_t$, attained by nature's budget-saturating tilt in the comparator-Gibbs direction, with the equalizer making nature indifferent across directions.
For part (iii), the value-to-go increment $W_\eta(S_{t-1}+z)-W_\eta(S_{t-1})=\eta Q_\eta(\rho_{\eta,S_{t-1}},z)$ depends on the state and on $z$ alone (the play-excess term of \eqref{eq:ts-payoff} vanishes in expectation for every $w_t$ by part (i)).
Nature's effective per-round objective is therefore $\eta$ times the centered RCGF of $z$ under the Gibbs law $\rho_{\eta,S_{t-1}}$ --- the same maximizer, since $\eta>0$ --- subject to the feasibility $Q_\eta(w_t,z)\le q_t$. When $w_t=p_t=\rho_{\eta,S_{t-1}}$ the feasible set is exactly the sublevel set of that objective, so the supremum is $\eta q_t$, attained on the whole budget sphere (the equalizer of (ii)). When $w_t\neq p_t$ the forms $Q_\eta(w_t,\cdot)$ and $Q_\eta(p_t,\cdot)$ differ, so a feasible direction has $Q_\eta(p_t,z)>q_t$, from which the increment exceeds $\eta q_t$ and $\sup_z\Phi_T>\Gamma/\eta+\eta\sum_t q_t$; the Gibbs sampler alone attains the value and is the unique minimizer.
\end{proof}

\begin{proof}[Proof of Proposition~\ref{prop:ts-no-pathwise}.]
Part (i) is immediate: subtracting \eqref{eq:ts-expected} from \eqref{eq:ts} cancels every deterministic term and leaves the single martingale $M_T^{\mathrm{sam}}$, which is mean-zero and free of $\nu$ by construction.
For part (ii), suppose the realized payoff $\Phi$ were constant over the sampling randomness. On any round where $p_t$ places positive mass on two experts $i\neq j$ with $c_t(i)\neq c_t(j)$, the two transcripts that differ only in $I_t\in\{i,j\}$ each have positive probability and must give equal $\Phi$; ranging over all such rounds and pairs forces $\Phi$ to be independent of $I_{1:T}$. Hence $\Phi$ does not depend on the sampled play at all: it is the deterministic game, in which the learner plays the mixture $p_t$ without sampling. Any payoff that genuinely depends on $c_t(I_t)$ therefore leaves, on a.e.\ path, the per-round variance $\Var_{p_t}(c_t)>0$.
For part (iii), sharpen the sampling law to $w_t^{(\beta)}\propto p_t^{\beta}$; the full-information posterior path is unchanged. By Theorem~\ref{thm:ts-exact} the expected payoff is stationary at $\beta=1$, but the sampling variance $V^{\mathrm{sam}}_T(\beta)=\sum_t\Var_{w_t^{(\beta)}}(c_t)$ is not: $\left.\tfrac{d}{d\beta}V^{\mathrm{sam}}_T(\beta)\right|_{\beta=1}\neq 0$ generically. Since, under the martingale central-limit approximation, any upper-$(1-\delta)$-quantile of the realized payoff is $\text{value}+z_\delta\sqrt{V^{\mathrm{sam}}_T(\beta)}+o(\cdot)$ with $z_\delta$ the Gaussian quantile, its minimizer is a law with $\beta^\ast\neq1$, moving in whichever direction lowers $V^{\mathrm{sam}}_T(\beta)$; this is a leading-order statement under that approximation, and falls short of an exact quantile identity.
\end{proof}

\subsection{Proofs from Section~\ref{sec:boosting}}
\label{app:pf-boosting}

\begin{proof}[Proof of \eqref{eq:boosting-variational}.]
\label{app:proof-boosting-identity}
Let $p_1=u_N$ be the uniform distribution on $[N]$, and let $p_{t+1}(i)\propto p_t(i)e^{-\alpha_t g_t(i)}$ for $t=1,\dots,T$.
By induction on~$t$,
\[
p_{t+1}(i)=\frac{u_N(i)\,e^{-M_t(i)}}{Z_t},
\qquad Z_t:=\sum_j u_N(j)\,e^{-M_t(j)}=N^{-1}\sum_j e^{-M_t(j)}.
\]
In particular, $Z_T=L_T^{\exp}$, and $\log p_{T+1}(i)=\log u_N(i)-M_T(i)-\log L_T^{\exp}$.
Take expectation under any comparator $\nu\in\Delta([N])$:
\begin{align*}
\sum_i\nu(i)\log p_{T+1}(i)
&=\sum_i\nu(i)\log u_N(i)-\inner{\nu}{M_T}-\log L_T^{\exp}
\end{align*}
Rewriting the left-hand side via relative entropy,
\[
\sum_i\nu(i)\log p_{T+1}(i)=\sum_i\nu(i)\log u_N(i)-\bigl(\KL(\nu\|p_{T+1})-\KL(\nu\|u_N)\bigr),
\]
so that $-\bigl(\KL(\nu\|p_{T+1})-\KL(\nu\|u_N)\bigr)=-\inner{\nu}{M_T}-\log L_T^{\exp}$.
Rearranging,
\[
-\log L_T^{\exp}+\KL(\nu\|p_{T+1})=\KL(\nu\|u_N)+\inner{\nu}{M_T},
\]
which is \eqref{eq:boosting-variational}.
Taking the infimum over~$\nu$ and using $\inf_\nu\KL(\nu\|p_{T+1})=0$ at $\nu=p_{T+1}$ immediately gives the Donsker--Varadhan dual form $-\log L_T^{\exp}=\inf_{\nu}\{\inner{\nu}{M_T}+\KL(\nu\|u_N)\}$, and the margin-tail bound $E_T(\theta)\le e^{\theta A_T-\inf_\nu\{\inner{\nu}{M_T}+\KL(\nu\|u_N)\}}$ follows by a standard Chernoff step: for the fraction of examples with $M_T(i)/A_T\le\theta$,
\[
E_T(\theta)\le \E_{i\sim u_N}\!\left[e^{\theta A_T-M_T(i)}\right]=e^{\theta A_T}\,L_T^{\exp}.
\]
\end{proof}

\begin{proof}[Proof of Proposition~\ref{prop:adaboost-cgf}.]
For binary $g_t$, $\inner{p_t}{e^{-\alpha g_t}}=\varepsilon_t e^{\alpha}+(1-\varepsilon_t)e^{-\alpha}$ is minimized at $\alpha_t^\star=\tfrac12\log\tfrac{1-\varepsilon_t}{\varepsilon_t}$ with value $2\sqrt{\varepsilon_t(1-\varepsilon_t)}=\sqrt{1-(1-2\varepsilon_t)^2}=\sqrt{1-4\gamma_t^2}$,
using $2\gamma_t=1-2\varepsilon_t$; the pressure rule sets $a_t=-\alpha_t^{-1}\log\inner{p_t}{e^{-\alpha_t g_t}}$,
so $\alpha_t^\star a_t=-\log\sqrt{1-4\gamma_t^2}$, which is~\eqref{eq:adaboost-per-round}.
Equation~\eqref{eq:adaboost-ledger} is~\eqref{eq:boost-pressure} summed over $t$, and \eqref{eq:adaboost-rate} follows from $-\log(1-x)\ge x$.
\end{proof}

\begin{proof}[Proof of \eqref{eq:log-pooling}.]
\label{app:log-pooling-proof}
\label{app:proof-log-pooling}
Let $p_t(y)=\bigl[\prod_{e=1}^N\pi_{t,e}(y)^{\alpha_{t,e}}\bigr]/Z_t$ with $Z_t:=\sum_{y\in Y}\prod_{e=1}^N\pi_{t,e}(y)^{\alpha_{t,e}}$.
Taking $-\log$ of $p_t(y_t)$,
\[
-\log p_t(y_t)=-\sum_{e=1}^N\alpha_{t,e}\log\pi_{t,e}(y_t)+\log Z_t
=\sum_{e=1}^N\alpha_{t,e}\bigl(-\log\pi_{t,e}(y_t)\bigr)-C_{\alpha_t}(\pi_{t,1:N}),
\]
where $C_{\alpha_t}(\pi_{t,1:N}):=-\log Z_t$.
This is \eqref{eq:log-pooling}.

When $\sum_e\alpha_{t,e}=1$, the AM--GM (or Jensen applied to $\log$) inequality gives, for each~$y$,
$\prod_e\pi_{t,e}(y)^{\alpha_{t,e}}\le\sum_e\alpha_{t,e}\pi_{t,e}(y)$, so $Z_t\le 1$ and $C_{\alpha_t}\ge 0$.
In particular, the pooled log loss is bounded \emph{above} by the weighted expert log loss and \emph{below} by the same minus the coincidence discount; the discount is zero if and only if all non-null experts assign the same probability at \emph{every} outcome (identical predictive distributions), and agreement at~$y_t$ alone does not suffice.
\end{proof}

\begin{proof}[Proof of Proposition~\ref{prop:tower-remainder}.]
Dividing $\log\E_p e^{\eta z}=\sum_{n\ge2}\eta^n\kappa_n(z)/n!$ by $\eta^2$ gives $Q_\eta(p,z)=\sum_{n\ge2}\eta^{n-2}\kappa_n(z)/n!$; subtracting the $n=2$ term is \eqref{eq:tower-remainder-var}.
Equation~\eqref{eq:tower-remainder-range} is Hoeffding's lemma written as an exact slack, i.e.\ the per-round summand of $\Delta_T^{\mathrm{class}}$ in \eqref{eq:classical-remainder}.
The trajectory statement is the round-wise sum of the two.
\end{proof}

\section*{Acknowledgments}

Large language models were used in producing this work, for which the author is solely responsible.

\bibliographystyle{plainnat}
\bibliography{concentration_game}

\end{document}